\def\supp{\mathop{\text{supp}}}
\def\rank{\mathrm{rank}}
\def\tr{\mathop{\text{Tr}}}
\def\cS{{\mathcal{S}}}
\def\oper{\mathop{\text{op}}}
\newcommand{\bel}{\begin{eqnarray}\label}
\newcommand{\eel}{\end{eqnarray}}
\newcommand{\bes}{\begin{eqnarray*}}
\newcommand{\ees}{\end{eqnarray*}}
\newcommand{\la}{\langle}
\newcommand{\ra}{\rangle}
\newcommand{\pess}{\texttt{Pess}}
\newcommand{\linpess}{\texttt{Pess}}
\let\hat\widehat
\let\tilde\widetilde
\def\E{{\mathbb E}}
\def\supp{\mathop{\text{supp}\kern.2ex}}
\def\argmin{\mathop{\text{\rm arg\,min}}}
\def\argmax{\mathop{\text{\rm arg\,max}}}
\def\given{{\,|\,}}
\def\tr{{\rm{Tr}}}
\def\supp{\mathop{\text{supp}}}
\def\rank{\mathrm{rank}}
\def\tr{\mathrm{Tr}}
\theoremstyle{plain}
\def \iidtext {\textrm{i.i.d.}}
\def\##1\#{\begin{align}#1\end{align}}
\def\$#1\${\begin{align*}#1\end{align*}}
\newcommand{\revise}[1]{{\color{black} #1}}
\title{\huge Is Pessimism Provably Efficient for Offline RL?}
\author{Ying Jin\thanks{Equal contribution. This work is done when Z. Yang and Z. Wang are  participating in the Theory of Reinforcement Learning Program  at the Simons Institute for the Theory of Computing.}~\thanks{Stanford University. Email: \texttt{ying531@stanford.edu}.}  \qquad  \qquad Zhuoran Yang$^*$\thanks{Princeton University. Email: \texttt{zy6@princeton.edu}.}  \qquad \qquad Zhaoran Wang$^*$\thanks{Northwestern University. Email: \texttt{zhaoranwang@gmail.com}.}}
\date{}
\begin{document}
\maketitle

\begin{abstract}
We study offline reinforcement learning (RL), which aims to learn an optimal policy based on a dataset collected a priori. Due to the lack of further interactions with the environment,  offline RL suffers from the insufficient coverage of the dataset, which eludes most existing theoretical analysis. In this paper, we propose a \underline{pe}ssimistic variant of the \underline{v}alue \underline{i}teration algorithm (PEVI), which incorporates an uncertainty quantifier as the penalty function. Such a penalty function simply flips the sign of the bonus function for promoting exploration in online RL, which makes it easily implementable and compatible with general function approximators. 

Without assuming the sufficient coverage of the dataset (e.g., finite concentrability coefficients or uniformly lower bounded densities of visitation measures), we establish a data-dependent upper bound on the suboptimality of PEVI for general Markov decision processes (MDPs). When specialized to linear MDPs, it matches the information-theoretic lower bound up to multiplicative factors of the dimension and horizon. In other words, pessimism is not only provably efficient but also minimax optimal. In particular, given the dataset, the learned policy serves as the ``best effort'' among all policies, as no other policies can do better. Our theoretical analysis identifies the critical role of pessimism in eliminating a notion of spurious correlation, which arises from the ``irrelevant'' trajectories that are less covered by the dataset and not informative for the optimal policy. 

\end{abstract}

\section{Introduction}
The empirical success of online (deep) reinforcement learning (RL) \citep{mnih2015human,silver2016mastering,silver2017mastering,vinyals2017starcraft} relies on two ingredients: (i) expressive function approximators, e.g., deep neural networks \citep{lecun2015deep}, which approximate policies and values, and (ii) efficient data generators, e.g., game engines \citep{bellemare2013arcade} and physics simulators \citep{todorov2012mujoco}, which serve as environments. In particular, learning the deep neural network in an online manner often necessitates millions to billions of interactions with the environment. Due to such a barrier of sample complexity, it remains notably more challenging to apply online RL in critical domains, e.g., precision medicine \citep{gottesman2019guidelines} and autonomous driving \citep{shalev2016safe}, where interactive data collecting processes can be costly and risky. To this end, we study offline RL in this paper, which aims to learn an optimal policy based on a dataset collected a priori without further interactions with the environment. Such datasets are abundantly available in various domains, e.g., electronic health records for precision medicine \citep{chakraborty2014dynamic} and human driving trajectories for autonomous driving \citep{sun2020scalability}. 

In comparison with online RL \citep{lattimore2020bandit, agarwal2019reinforcement}, offline RL remains even less understood in theory \citep{lange2012batch, levine2020offline}, which hinders principled developments of trustworthy algorithms in practice. In particular, as active interactions with the environment are infeasible, it remains unclear how to maximally exploit the dataset without further exploration. Due to such a lack of continuing exploration, which plays a key role in online RL, any algorithm for offline RL possibly suffers from the insufficient coverage of the dataset \citep{wang2020statistical}. Specifically, as illustrated in Section \ref{sec:demo}, two challenges arise: 
\begin{itemize}
\item[(i)] the intrinsic uncertainty, that is, the dataset possibly fails to cover the trajectory induced by the optimal policy, which however carries the essential information, and 
\item[(ii)] the spurious correlation, that is, the dataset possibly happens to cover a trajectory unrelated to the optimal policy, which by chance induces a large cumulative reward and hence misleads the learned policy. 
\end{itemize}
See Figures \ref{fig:w01} and \ref{fig:w02} for illustrations. As the dataset is collected a priori, which is often beyond the control of the learner, any assumption on the sufficient coverage of the dataset possibly fails to hold in practice \citep{fujimoto2019off, agarwal2020optimistic, fu2020d4rl, gulcehre2020rl}. 

In this paper, we aim to answer the following question: 
 \begin{center}
{\it Is it possible to design a provably efficient algorithm for offline RL\\ under minimal assumptions on the dataset?}
\end{center}

To this end, we propose a pessimistic value iteration algorithm (PEVI), which incorporates a penalty function (pessimism) into the value iteration algorithm \citep{sutton2018reinforcement, szepesvari2010algorithms}. Here the penalty function simply flips the sign of the bonus function (optimism) for promoting exploration in online RL \citep{jaksch2010near, azar2017minimax}, which enables a straightforward implementation of PEVI in practice. Specifically, we study the episodic setting of the Markov decision process (MDP). Our theoretical contribution is fourfold: 
\begin{enumerate}[(i)]
\item  We decompose the suboptimality of any algorithm for offline RL into three sources, namely the intrinsic uncertainty, spurious correlation, and optimization error. In particular, we identify the key role of the spurious correlation, even in the multi-armed bandit (MAB), a special case of the MDP. 
\item  For any general MDP, we establish the suboptimality of PEVI under a sufficient condition on the penalty function. In particular, we prove as long as the penalty function is an uncertainty quantifier, which is defined in Section \ref{sec:pes_general}, pessimism allows PEVI to eliminate the spurious correlation from its suboptimality. 
\item  For the linear MDP \citep{yang2019sample, jin2020provably}, we instantiate PEVI by specifying the penalty function. In particular, we prove such a penalty function is an uncertainty quantifier, which verifies the sufficient condition imposed in (ii). Correspondingly, we establish the suboptimality of PEVI for the linear MDP. 
\item  We prove PEVI is minimax optimal for the linear MDP up to multiplicative factors of the dimension and horizon. In particular, we prove the intrinsic uncertainty identified in (i) is impossible to eliminate, as it arises from the information-theoretic lower bound. Moreover, such a fundamental limit certifies an oracle property of PEVI, which is defined in Section \ref{sec:pes_linear}. Specifically, the suboptimality of PEVI only depends on how well the dataset covers the trajectory induced by the optimal policy, which carries the essential information, rather than any trajectory unrelated to the optimal policy, which causes the spurious correlation. 
\end{enumerate}
Throughout our theory, we only require an assumption on the compliance of the dataset, that is, the data collecting process is carried out in the underlying MDP of interest. Such an assumption is minimal. In comparison with existing literature, we require no assumptions on the sufficient coverage of the dataset, e.g., finite concentrability coefficients \citep{chen2019information} and uniformly lower bounded densities of visitation measures \citep{yin2020near}, which often fail to hold in practice. Meanwhile, we impose no restrictions on the affinity between the learned policy and behavior policy (for collecting data) \citep{liu2020provably}, which is often employed as a regularizer (or equivalently, a constraint) in existing literature. See Section \ref{sec:rw} for a detailed discussion. 


\subsection{Related Works}\label{sec:rw}


Our work adds to the vast body of existing literature on offline RL (also known as 
batch RL) \citep{lange2012batch, levine2020offline}, 
where a learner only has access to a dataset collected a priori. Existing literature studies two tasks: (i) offline policy evaluation, which estimates the expected cumulative reward or (action- and state-) value functions of a target policy, and (ii) offline policy optimization, which learns an optimal policy that maximizes the expected cumulative reward. Note that (i) is also known as off-policy policy evaluation, which can be adapted to handle the online setting. Also, note that the target policy in (i) is known, while the optimal policy in (ii) is unknown. As (ii) is more challenging than (i), various algorithms for solving (ii), especially the value-based approaches, can be adapted to solve (i). Although we focus on (ii), we discuss the existing works on (i) and (ii) together.  


A key challenge of offline RL is the insufficient coverage of the dataset \citep{wang2020statistical}, which arises from the lack of continuing exploration \citep{szepesvari2010algorithms}. In particular, the trajectories given in the dataset and those induced by the optimal policy (or the target policy) possibly have different distributions, which is also known as distribution shift \citep{levine2020offline}. As a result, intertwined with overparameterized function approximators, e.g., deep neural networks, offline RL possibly suffers from the extrapolation error \citep{fujimoto2019off}, which is large on the states and actions that are less covered by the dataset. Such an extrapolation error further propagates through each iteration of the algorithm for offline RL, as it often relies on bootstrapping \citep{sutton2018reinforcement}. 

To address such a challenge, the recent works \citep{fujimoto2019off, laroche2019safe, jaques2019way, wu2019behavior, kumar2019stabilizing, kumar2020conservative, agarwal2020optimistic, yu2020mopo, kidambi2020morel, wang2020critic, siegel2020keep, nair2020accelerating, liu2020provably} demonstrate the empirical success of various algorithms, which fall into two (possibly overlapping) categories: (i) regularized policy-based approaches and (ii) pessimistic value-based approaches. Specifically, (i) regularizes (or equivalently, constrains) the policy to avoid visiting the states and actions that are less covered by the dataset, while (ii) penalizes the (action- or state-) value function on such states and actions. 
 
On the other hand, the empirical success of offline RL mostly eludes existing theory. Specifically, the existing works require various assumptions on the sufficient coverage of the dataset, which is also known as data diversity \citep{levine2020offline}. For example, offline policy evaluation often requires the visitation measure of the behavior policy to be lower bounded uniformly over the state-action space. An alternative assumption requires the ratio between the visitation measure of the target policy and that of the behavior policy to be upper bounded uniformly over the state-action space. See, e.g., \cite{jiang2016doubly, thomas2016data, farajtabar2018more, liu2018breaking, xie2019towards, nachum2019dualdice, nachum2019algaedice, tang2019doubly, kallus2019efficiently, kallus2020doubly, jiang2020minimax, uehara2020minimax, duan2020minimaxoptimal, yin2020asymptotic, yin2020near, nachum2020reinforcement, yang2020off, zhang2019gendice} and the references therein. As another example, offline policy optimization often requires the concentrability coefficient to be upper bounded, whose definition mostly involves taking the supremum of a similarly defined ratio over the state-action space. See, e.g., \cite{antos2007fitted, antos2008learning, munos2008finite, farahmand2010error, farahmand2016regularized, scherrer2015approximate, chen2019information, liu2019neural, wang2019neural, fu2020single, fan2020theoretical, xie2020batch, xie2020q, liao2020batch, zhang2020variational} and the references therein.

In practice, such assumptions on the sufficient coverage of the dataset often fail to hold \citep{fujimoto2019off, agarwal2020optimistic, fu2020d4rl, gulcehre2020rl}, which possibly invalidates existing theory. For example, even for the MAB, a special case of the MDP, it remains unclear how to maximally exploit the dataset without such assumptions, e.g.,  when each action (arm) is taken a different number of times. As illustrated in Section \ref{sec:demo}, assuming there exists a suboptimal action that is less covered by the dataset, it possibly interferes with the learned policy via the spurious correlation. As a result, it remains unclear how to learn a policy whose suboptimality only depends on how well the dataset covers the optimal action instead of the suboptimal ones. In contrast, our work proves that pessimism resolves such a challenge by eliminating the spurious correlation, which enables exploiting the essential information, e.g., the observations of the optimal action in the dataset, in a minimax optimal manner. Although the optimal action is unknown, our algorithm adapts to identify the essential information in the dataset via the oracle property. See Section \ref{sec:pess} for a detailed discussion. 

Our work adds to the recent works on pessimism \citep{yu2020mopo, kidambi2020morel, kumar2020conservative, liu2020provably, buckman2020importance}. Specifically, \cite{yu2020mopo, kidambi2020morel} propose a pessimistic model-based approach, while \cite{kumar2020conservative} propose a  pessimistic value-based approach, both of which demonstrate  empirical successes. From a theoretical perspective, \cite{liu2020provably} propose a regularized (and pessimistic) variant of the fitted Q-iteration algorithm \citep{antos2007fitted, antos2008learning, munos2008finite}, which attains the optimal policy within a restricted class of policies without assuming the sufficient coverage of the dataset. In contrast, our work imposes no restrictions on the affinity between the learned policy and behavior policy. In particular, our algorithm attains the information-theoretic lower bound for the linear MDP \citep{yang2019sample, jin2020provably} (up to multiplicative factors of the dimension and horizon), which implies that given the dataset, the learned policy serves as the ``best effort'' among all policies since no other can do better. From another theoretical perspective, \cite{buckman2020importance} characterize the importance of pessimism, especially when the assumption on the sufficient coverage of the dataset fails to hold. In contrast, we propose a principled framework for achieving pessimism via the notion of uncertainty quantifier, which serves as a sufficient condition for general function approximators. See Section \ref{sec:pess} for a detailed discussion. Moreover, we instantiate such a framework for the linear MDP and establish its minimax optimality via the information-theoretic lower bound.~In other words, our work complements \cite{buckman2020importance} by proving that pessimism is not only ``important'' but also optimal in the sense of information theory.

\section{Preliminaries}

In this section, we first introduce the episodic Markov decision process (MDP) and the corresponding performance metric. Then we introduce the offline setting and the corresponding data collecting process.


\subsection{Episodic MDP and Performance Metric} 
We consider an episodic MDP $(\cS,\cA,H,\cP,r)$ with the state space $\cS$, action space $\cA$, horizon $H$, transition kernel $\cP = \{\cP_h\}_{h=1}^H$, and reward function $r = \{r_h\}_{h=1}^H$. We assume the reward function is bounded, that is, $r_h\in [0,1]$ for all $h\in[H]$. For any policy $\pi=\{\pi_h\}_{h=1}^H$, we define the (state-)value function $V_h^\pi:\cS\to \RR$ at each step $h\in[H]$ as
\begin{equation}
V_h^\pi(x) = \EE_{\pi}\Big[ \sum_{i=h}^H r_i(s_i, a_i)\Biggiven s_h=x  \Big]
\label{eq:def_value_fct}
\end{equation}
and the action-value function (Q-function) $Q_h^\pi:\cS\times \cA\to \RR$ at each step $h\in[H]$ as
\begin{equation}
Q_h^\pi(x,a) = \EE_\pi\Big[\sum_{i=h}^H r_i(s_i, a_i)\Biggiven s_h=x, a_h=a  \Big].
\label{eq:def_q_fct}
\end{equation}
Here the expectation  $\EE_{\pi}$ in Equations  \eqref{eq:def_value_fct} and \eqref{eq:def_q_fct} is   taken with respect to the randomness of the trajectory induced by $\pi$, which is obtained by  taking the action $a_i\sim \pi_i(\cdot\given s_i)$ at the state $s_i$ and observing the next state $s_{i+1} \sim \cP_i(\cdot \given s_i, a_i)$ at each step $i\in[H]$. 
Meanwhile, we fix $s_h = x \in \cS $ in Equation \eqref{eq:def_value_fct} and $(s_h, a_h) = (x, a) \in \cS\times \cA$ in Equation \eqref{eq:def_q_fct}. 
By the definition in Equations \eqref{eq:def_value_fct} and \eqref{eq:def_q_fct}, we have the Bellman equation
\begin{equation*}
V_h^\pi(x) = \langle Q_h^\pi(x, \cdot),\pi_h(\cdot\given x)\rangle_{\cA},\quad Q_h^\pi(x,a) = \EE\bigl[r_h(s_h, a_h) + V_{h+1}^\pi(s_{h+1})\biggiven s_h=x,a_h=a\bigr],
\end{equation*}
where $\langle \cdot,\cdot\rangle_{\cA}$ is the inner product over $\cA$, while $\EE$ is taken with respect to the randomness of the immediate  reward $r_h(s_h, a_h)$ and next state $s_{h+1}$. For any function $f:\cS\to \RR$, we define the transition operator at each step $h\in[H]$ as
\begin{equation}
(\PP_h f)(x,a) = \EE\bigl[ f(s_{h+1})\biggiven s_h=x,a_h=a\bigr]
\label{eq:def_transition_op}
\end{equation}
and the Bellman operator at each step $h\in[H]$ as
\begin{align}
(\BB_h f)(x,a) &= \EE\bigl[r_h(s_h, a_h) + f(s_{h+1})\biggiven s_h=x,a_h=a\bigr]\notag\\
&= \EE\bigl[r_h(s_h, a_h) \biggiven s_h=x,a_h=a\bigr] + (\PP_h f)(x,a).
\label{eq:def_bellman_op}
\end{align}
For the episodic  MDP $(\cS,\cA,H,\cP,r)$, we use $\pi^*$, $Q_h^*$, and $V_h^*$ to denote the optimal policy, optimal Q-function, and optimal value function, respectively. We have $V_{H+1}^*= 0$ and the Bellman optimality equation
\begin{equation}
 V_{h}^*(x) = \max_{a\in \cA}Q_h^*(x,a),\quad Q_h^*(x,a) = (\BB_h V_{h+1}^*) (x,a).
\label{eq:dp_optimal_values}
\end{equation}
Meanwhile, the optimal policy $\pi^*$ is specified by 
$$
\pi^*_h (\cdot \given x)=\argmax_{\pi_h}\langle Q_h^*(x, \cdot),\pi_h(\cdot\given x)\rangle_{\cA},\quad V_h^*(x)= \langle Q_h^*(x, \cdot),\pi_h^*(\cdot\given x)\rangle_{\cA},
$$
where the maximum is taken over all functions mapping from $\cS$ to distributions over $\cA$. 
We aim to learn a policy that maximizes the expected cumulative reward. Correspondingly, we define the performance metric as   
\begin{equation}
\text{SubOpt}(\pi;x) = V_1^{\pi^*}(x) - V_1^{\pi}(x),
\label{eq:def_regret}
\end{equation}
which is the suboptimality of the policy $\pi$ given the initial state $s_1 = x$.

\subsection{Offline Data Collecting Process}
We consider the offline setting, that is, a learner only has access to a dataset $\cD$ consisting of $K$ trajectories $\{(x_h^\tau,a_h^\tau,r_h^\tau) \}_{\tau , h= 1}^{K, H}$, which is collected  a priori  by an experimenter. In other words, at each step $h\in[H]$ of each trajectory $\tau \in [K]$, the experimenter takes the action $a_h^\tau$ at the state $x_h^\tau$, receives the reward $r_h^\tau = r_h(x_h^\tau, a_h^\tau)$, and observes the next state $x_{h+1}^\tau \sim \cP_h(\cdot \given s_h=x_h^\tau ,a_h=a_h^\tau)$. Here $a_h^\tau$ can be arbitrarily chosen, while $r_h$ and $\cP_h$ are the reward function and transition kernel of an underlying MDP. We define the compliance of such a dataset with the underlying MDP as follows.

\begin{definition}[Compliance]\label{def:comp}
For a dataset $\cD=\{(x_h^\tau,a_h^\tau,r_h^\tau)\}_{\tau, h=1}^{K, H}$, let $\PP_{\cD}$ be the joint distribution of the data collecting process. We say $\cD$ is compliant with an underlying MDP $(\cS,\cA,H,\cP,r)$ if 
\#
&\PP_{\cD}\big(r_h^\tau = r' ,x_{h+1}^\tau = x' \biggiven \{(x_h^j,a_h^j)\}_{j=1}^\tau ,\{(r_h^j,x_{h+1}^j)\}_{j=1}^{\tau -1 } \big)\notag\\
&\quad= \PP\bigl(r_h(s_h, a_h) = r' , s_{h+1} = x'\biggiven s_h=x_h^\tau,a_h=a_h^\tau\bigr) 
\label{eq:assump_data_generate}
\#
for all $r'\in [0,1]$ and $x'\in \cS$ at each step $h\in[H]$ of each trajectory $\tau\in[K]$. Here $\PP$ on   the right-hand side of Equation \eqref{eq:assump_data_generate} is taken with respect to the underlying MDP.  
\label{def:compliant}
\end{definition}


Equation \eqref{eq:assump_data_generate} implies the following two conditions on $\PP_\cD$ hold simultaneously: (i) at each step $h\in[H]$ of each trajectory $\tau\in[K]$, 
$ (r_h^\tau,x_{h+1}^\tau) $ only
depends on $ \{(x_h^j,a_h^j)\}_{j=1}^\tau \cup\{(r_h^j,x_{h+1}^j)\}_{j=1}^{\tau -1 }$ via $(x_h^\tau, a_h^\tau)$, and (ii) conditioning on $(x_h^\tau,a_h^\tau)$, $(r_h^\tau,x_{h+1}^\tau)$ is generated by the reward function and transition kernel of the underlying MDP. 
Intuitively, (i) ensures $\cD$ possesses the Markov property. Specifically, (i) allows the $K$ trajectories to  be interdependent, that is, at each step $h\in[H]$, $\{ (x_h^\tau, a_h^\tau, r_h^\tau, x_{h+1}^\tau )\}_{\tau = 1}^K$ are interdependent across each trajectory $\tau\in[K]$. Meanwhile, (i) requires the randomness of $\{ (x_h^j, a_h^j, r_h^j,  x_{h+1}^j)\}_{j=1}^{\tau -1 }$ to be fully captured by $(x_h^\tau, a_h^\tau)$ when we examine the randomness of $(r_h^\tau,x_{h+1}^\tau)$.



\begin{assumption}[Data Collecting Process]
The dataset $\cD$ that the learner has access to is compliant with the underlying MDP $(\cS,\cA,H,\cP,r)$.
\label{assump:data_generate}
\end{assumption}

As a special case, Assumption \ref{assump:data_generate} holds if the experimenter follows a fixed behavior policy. More generally, Assumption \ref{assump:data_generate} allows $a_h^\tau$ to be arbitrarily chosen, even in an adaptive or adversarial manner, in the sense that the experimenter does not necessarily follow a fixed behavior policy. In particular, $a_h^\tau$ can be interdependent across each trajectory $\tau\in[K]$. For example, the experimenter can sequentially improve the behavior policy using any algorithm for online RL. Furthermore, Assumption \ref{assump:data_generate} does not require the data collecting process to well explore the state space and action space. 





\section{What Causes Suboptimality?}\label{sec:demo}
In this section, we decompose the suboptimality of any policy into three sources, namely the spurious correlation, intrinsic uncertainty, and optimization error. We first analyze the MDP and then specialize the general analysis to the multi-armed bandit (MAB) for illustration. 


\subsection{Spurious Correlation Versus Intrinsic Uncertainty}

We consider a meta-algorithm, which constructs an  estimated Q-function $\hat{Q}_h:\cS\times \cA\to \RR$ and an estimated value function $\hat{V}_h:\cS\to \RR$ based on the dataset $\cD$. We define the model evaluation error at each step $h\in[H]$ as 
\begin{equation}
\iota_h(x,a) = (\BB_h \hat{V}_{h+1})(x,a) - \hat{Q}_h(x,a).
\label{eq:def_iota}
\end{equation}
In other words, $\iota_h$ is the error that arises from estimating the Bellman operator $\BB_h$ defined in Equation \eqref{eq:def_bellman_op}, especially the transition operator $\PP_h$ therein, based on $\cD$. Note that $\iota_h$ in Equation \eqref{eq:def_iota} is defined in a pointwise manner for all $(x,a)\in \cS\times \cA$, where $\hat{V}_{h+1}$ and $\hat{Q}_h$ depend on $\cD$. The suboptimality of the policy $\hat\pi$ corresponding to $\hat{V}_{h}$ and $\hat{Q}_h$ (in the sense that $\hat{V}_h(x)=\langle \hat{Q}_h(x,\cdot),\hat\pi_h(\cdot\given x)\rangle_{\cA}$), which is defined in Equation \eqref{eq:def_regret}, admits the following decomposition.


\begin{lemma}[Decomposition of Suboptimality]\label{lem:dec}
Let $\hat\pi=\{\hat\pi_h\}_{h=1}^H$ be the policy such that $\hat{V}_h(x)=\langle \hat{Q}_h(x,\cdot),\hat\pi_h(\cdot\given x)\rangle_{\cA}$. For any $\hat\pi$ and $x \in \cS$, we have 
\#\normalfont
\text{SubOpt}(\hat\pi;x)
&= \underbrace{-\sum_{h=1}^H \EE_{\hat\pi}\big[ \iota_h(s_h,a_h) \biggiven s_1=x\big]}_{\displaystyle\normalfont\textrm{(i): Spurious Correlation}} + \underbrace{\sum_{h=1}^H \EE_{\pi^*}\big[ \iota_h(s_h,a_h) \biggiven s_1=x \big]}_{\displaystyle \normalfont \textrm{(ii): Intrinsic Uncertainty}}\notag\\
&\qquad + \underbrace{\sum_{h=1}^H \EE_{\pi^*}\big[  \langle \hat{Q}_h(s_h,\cdot), \pi_h^*(\cdot\given s_h) - \hat\pi_h(\cdot\given s_h)\rangle_{\cA} \biggiven s_1=x\big]}_{\displaystyle \normalfont\textrm{(iii): Optimization Error}}.
\label{eq:reg_decomp}			
\#
Here $\EE_{\hat\pi}$ and $\EE_{\pi^*}$ are taken with respect to the trajectories induced by $\hat\pi$ and $\pi^*$ in the underlying MDP given the fixed functions $\hat{V}_{h+1}$ and $\hat{Q}_h$, which determine $\iota_h$. 
\label{lem:reg_decomp}
\end{lemma}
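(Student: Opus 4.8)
The plan is to prove the identity by a telescoping argument applied to the value functions along the trajectory induced by $\hat\pi$, mirroring the standard ``value difference'' or ``performance difference'' lemma but keeping the Bellman error $\iota_h$ explicit. First I would write $\text{SubOpt}(\hat\pi;x) = V_1^{\pi^*}(x) - V_1^{\hat\pi}(x)$ and insert the estimated value function $\hat V_1(x)$ as an intermediary, splitting into $\big(V_1^{\pi^*}(x) - \hat V_1(x)\big) + \big(\hat V_1(x) - V_1^{\hat\pi}(x)\big)$. The second bracket is handled by a self-contained sub-lemma: for any policy $\pi$ and any sequence of functions $\{\hat V_h, \hat Q_h\}$ with $\hat V_h(x) = \langle \hat Q_h(x,\cdot),\pi_h(\cdot\given x)\rangle_{\cA}$, one has $\hat V_1(x) - V_1^{\pi}(x) = -\sum_{h=1}^H \EE_{\pi}\big[\iota_h(s_h,a_h)\given s_1=x\big]$, which is proved by induction on $h$ using the Bellman equation $Q_h^{\pi}(x,a) = (\BB_h V_{h+1}^{\pi})(x,a)$ together with the definition $\iota_h = \BB_h \hat V_{h+1} - \hat Q_h$. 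Applying this with $\pi = \hat\pi$ yields exactly term (i).

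For the first bracket, $V_1^{\pi^*}(x) - \hat V_1(x)$, I would again telescope along the trajectory induced by $\pi^*$. At each step write $V_h^{\pi^*}(s_h) - \hat V_h(s_h) = \langle Q_h^{\pi^*}(s_h,\cdot),\pi_h^*(\cdot\given s_h)\rangle_{\cA} - \langle \hat Q_h(s_h,\cdot),\hat\pi_h(\cdot\given s_h)\rangle_{\cA}$, and split this by adding and subtracting $\langle \hat Q_h(s_h,\cdot),\pi_h^*(\cdot\given s_h)\rangle_{\cA}$. The difference $\langle \hat Q_h(s_h,\cdot),\pi_h^*(\cdot\given s_h) - \hat\pi_h(\cdot\given s_h)\rangle_{\cA}$ accumulates into term (iii). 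The remaining piece $\langle Q_h^{\pi^*}(s_h,\cdot) - \hat Q_h(s_h,\cdot),\pi_h^*(\cdot\given s_h)\rangle_{\cA}$ is rewritten, using $Q_h^{\pi^*} = \BB_h V_{h+1}^{\pi^*}$ and $\hat Q_h = \BB_h \hat V_{h+1} - \iota_h$, as $\iota_h(s_h,a_h) + \big((\PP_h (V_{h+1}^{\pi^*} - \hat V_{h+1}))(s_h,a_h)\big)$ in expectation under $\pi^*$; taking $\EE_{\pi^*}$ and recursing in $h$ produces $\sum_{h=1}^H \EE_{\pi^*}[\iota_h(s_h,a_h)\given s_1=x]$, i.e.\ term (ii), because the $\PP_h$ terms chain the induction from step $h$ to step $h+1$ and the boundary terms $V_{H+1}^{\pi^*} = \hat V_{H+1} = 0$ vanish.

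Combining the two brackets gives the claimed decomposition. The bookkeeping to watch is consistency of conditioning: all expectations $\EE_{\hat\pi}$ and $\EE_{\pi^*}$ are over trajectories in the \emph{true} MDP while $\hat V_h, \hat Q_h$, and hence $\iota_h$, are fixed (data-dependent) functions, so the tower property and the definition of the transition operator $\PP_h$ in Equation \eqref{eq:def_transition_op} apply cleanly without measurability subtleties. I expect the main obstacle — really the only delicate point — to be organizing the two interleaved inductions so that the $\PP_h$ propagation terms telescope correctly and the $\pi_h^* - \hat\pi_h$ mismatch is isolated into exactly one clean sum rather than leaking into the $\iota_h$ sums; once the recursion is set up with the right intermediary $\langle \hat Q_h, \pi_h^*\rangle_{\cA}$, everything else is a routine application of the Bellman equations and linearity of expectation.
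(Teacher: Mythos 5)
Your proposal is correct and takes essentially the same route as the paper: split $\text{SubOpt}(\hat\pi;x)$ into $\bigl(V_1^{\pi^*}(x)-\hat V_1(x)\bigr)+\bigl(\hat V_1(x)-V_1^{\hat\pi}(x)\bigr)$ and telescope each bracket through the Bellman equations and the definition of $\iota_h$, with the $\pi_h^*-\hat\pi_h$ mismatch isolated by the intermediary $\langle \hat Q_h,\pi_h^*\rangle_{\cA}$. The only difference is that the paper handles both brackets by invoking the extended value difference lemma of \cite{cai2020provably} (Lemma \ref{lem:value_diff}), once with $(\pi,\pi')=(\hat\pi,\pi^*)$ and once with $(\pi,\pi')=(\hat\pi,\hat\pi)$, whereas you re-derive the same telescoping identities inline.
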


\begin{proof}[Proof of Lemma \ref{lem:reg_decomp}]
See Section \ref{sec:appendix_proof_decomp} for a detailed proof.
\end{proof}

In Equation \eqref{eq:reg_decomp}, term (i) is more challenging to control, as $\hat\pi$ and $\iota_h$ simultaneously depend on $\cD$ and hence spuriously correlate with each other. In Section \ref{sec:mab}, we show  such a spurious correlation can ``mislead'' $\hat{\pi}$, which incurs a significant suboptimality, even in the MAB. Specifically, assuming hypothetically $\hat\pi$ and $\iota_h$ are independent, term (i) is mean zero with respect to $\PP_\cD$ as long as $\iota_h$ is mean zero for all $(x,a)\in \cS\times \cA$, which only necessitates an unbiased estimator of $\BB_h$ in Equation \eqref{eq:def_iota}, e.g., the sample average estimator in the MAB. However, as $\hat\pi$ and $\iota_h$ are spuriously correlated, term (i) can be rather large in expectation. 

In contrast, term (ii) is less challenging to control, as $\pi^*$ is intrinsic to the underlying MDP and hence does not depend on $\cD$, especially the corresponding $\iota_h$, which quantifies the uncertainty that arises from approximating $\BB_h \hat{V}_{h+1}$. In Section \ref{sec:lower}, we show such an intrinsic uncertainty is impossible to eliminate, as it arises from the information-theoretic lower bound. In addition, as the optimization error, term (iii) is nonpositive as long as $\hat\pi$ is greedy with respect to $\hat{Q}_h$, that is, $\hat{\pi}_h (\cdot \given x)=\argmax_{\pi_h}\langle \hat{Q}_h(x, \cdot),\pi_h(\cdot\given x)\rangle_{\cA}$ (although Equation \eqref{eq:reg_decomp} holds for any $\hat{\pi}$ such that $\hat{V}_h(x)=\langle \hat{Q}_h(x,\cdot),\hat\pi_h(\cdot\given x)\rangle_{\cA}$). 



\subsection{Illustration via a  Special Case: MAB}\label{sec:mab}

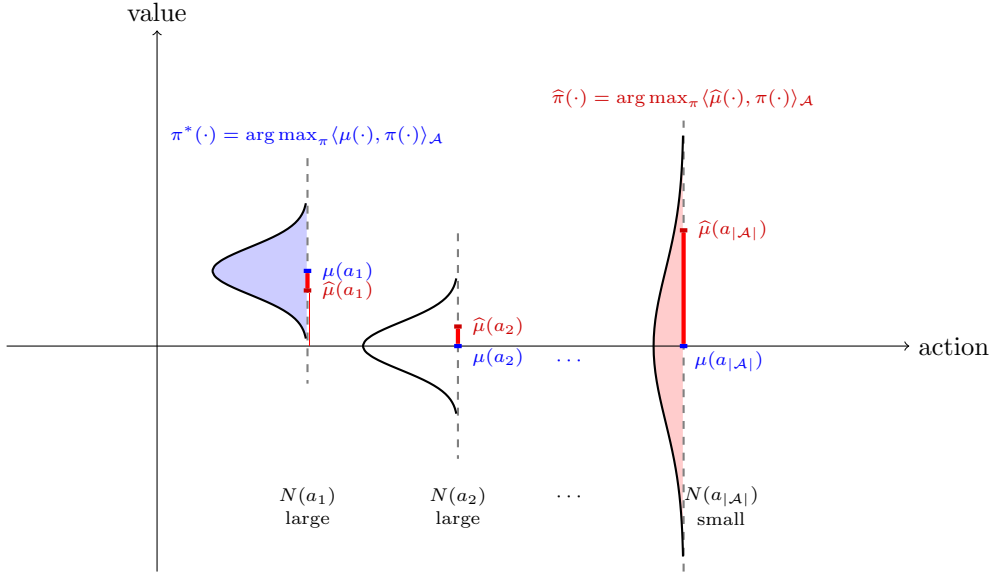
\begin{figure}
\centering
\begin{tikzpicture}
\fill [blue!20, domain=0.1:1.9, variable=\y]
      (0.99, 0.1)
      -- plot ({-exp(-(\y-1)*(\y-1)*5) * sqrt(10/(2*pi)) + 1},{\y})
      -- (0.99, 1.9)
      -- cycle;

\fill [red!20, domain=-2.8:2.8, variable=\y]
      (5.99, -2.8)
      -- plot ({-exp(-(\y)*(\y)/2) * sqrt(1/(2*pi))+ 6}, {\y})
      -- (5.99, 2.8)
      -- cycle;

\draw[->] (-3, 0) -- (9, 0) node[right] {action};
\draw[->] (-1, -3) -- (-1, 4.2) node[above] {value};
\draw [gray, dashed, thick] (1,2.5) --  (1,-0.5) ;
\draw[domain=0.1:1.9, smooth, variable=\y, black, thick]  plot ({-exp(-(\y-1)*(\y-1)*5) * sqrt(10/(2*pi)) + 1}, {\y}); 
\draw [gray, dashed, thick] (3,1.5) -- (3,-1.5);
\draw[domain=-0.9:0.9, smooth, variable=\y, black, thick]  plot ({-exp(-(\y)*(\y)*5)* sqrt(10/(2*pi)) + 3}, {\y});
\draw [gray, dashed, thick] (6,-3) -- (6,3);
\draw[domain=-2.8:2.8, smooth, variable=\y, black, thick]  plot ({-exp(-(\y)*(\y)/2) * sqrt(1/(2*pi))+ 6}, {\y});

\draw [blue, ultra thick] (0.95,1) -- (1.05,1) node[right]{\scriptsize $\mu(a_1)$};
\draw [red!80!black, ultra thick] (0.95,0.74) -- (1.05,0.74) node[right]{\scriptsize $\hat\mu(a_1)$};
\draw [blue, ultra thick] (2.95,0) -- (3.05,0) ;
\draw [blue, ultra thick] (3.05,-0.15) node[right]{\scriptsize $\mu(a_2)$};
\draw [red!80!black, ultra thick] (2.95,0.26) -- (3.05,0.26) node[right]{\scriptsize $\hat\mu(a_2)$};
\draw [blue, ultra thick] (5.95,0) -- (6.05,0) ;
\draw [blue, ultra thick] (6,-0.2) node[right]{\scriptsize $\mu(a_{|\cA|})$};
\draw [red!80!black, ultra thick] (5.95,1.54) -- (6.05,1.54) node[right]{\scriptsize $\hat\mu(a_{|\cA|})$};

\draw [red, ultra thick] (1,0.77) -- (1,0.97);
\draw [red, ultra thick] (3,0.03) -- (3,0.23);
\draw [red, ultra thick] (6,0.03) -- (6,1.51);

\fill [red]
      (1.02, 0.01) -- (1.03,0.01) -- (1.03, 0.71) -- (1.02, 0.71)
      -- cycle;

\node[text=black] at (1,-2) {\scriptsize $N(a_1)$};
\node[text=black] at (1,-2.3) {\scriptsize large};

\node[text=black] at (3,-2) {\scriptsize $N(a_2)$};
\node[text=black] at (3,-2.3) {\scriptsize large};

\node[text=black] at (6.5,-2) {\scriptsize $N(a_{|\cA|})$};
\node[text=black] at (6.5,-2.3) {\scriptsize small};

\node[text=blue] at (4.5,-0.2) {\scriptsize $\cdots$};
\node[text=black] at (4.5,-2) {\scriptsize $\cdots$};

\node[text=red!80!black] at (6,3.3) {\scriptsize $\hat\pi(\cdot) = \argmax_{\pi}\langle \hat\mu(\cdot),\pi(\cdot)\rangle_{\cA}$};
\node[text=blue] at (1,2.8) {\scriptsize $\pi^*(\cdot) = \argmax_{\pi}\langle \mu(\cdot),\pi(\cdot)\rangle_{\cA}$};
\end{tikzpicture}
\caption{An illustration of the spurious correlation in the MAB, a special case of the MDP, where $\cS$ is a singleton, $\cA$ is discrete, and $H = 1$. Here $\mu(a)$ is the expected reward of each action $a \in \cA$ and $\hat{\mu}(a)$ is its sample average estimator, which follows the Gaussian distribution in Equation \eqref{eq:whatmu}. Correspondingly, $\iota(a) = \mu(a) - \hat{\mu}(a)$ is the model evaluation error. As the greedy policy with respect to $\hat{\mu}$, $\hat{\pi}$ wrongly takes the action $a_{|\cA|} = \argmax_{a \in \cA} \hat{\mu}(a)$ with probability one only because $N(a_{|\cA|})$ is relatively small, which allows $\hat{\mu}(a_{|\cA|})$ to be rather large, even though $\mu(a_{|\cA|}) = 0$. Due to such a spurious correlation, $\hat{\pi}$ incurs a significant suboptimality in comparison with $\pi^*$, which takes the action $a_1 = \argmax_{a \in \cA} \mu(a)$ with probability one.
}
\label{fig:w01}
\end{figure}

We consider the MAB, a special case of the MDP, where $\cS$ is a singleton, $\cA$ is discrete, and $H = 1$. To simplify the subsequent discussion, we assume without loss of generality
\$
r(a) = \mu(a) + \epsilon, \quad \text{where~~} \epsilon \sim \text{N}(0, 1).
\$
Here $\mu(a)$ is the expected reward of each action $a \in \cA$ and $\epsilon$ is independently drawn. For notational simplicity, we omit the dependency on $h \in [H]$ and $x\in \cS$, as $H = 1$ and $\cS$ is a singleton. Based on the dataset $\cD = \{(a^\tau, r^\tau)\}_{\tau=1}^K$, where $ r^\tau = r(a^\tau)$, we consider the sample average estimator
\$
\hat{\mu}(a) = \frac{1}{N(a)}\sum_{\tau=1}^K r^\tau \cdot \ind {\{a^\tau=a\}},\quad \text{where~~} N(a) = \sum_{\tau=1}^K  \ind {\{a^\tau=a\}}.
\$
Note that $\hat{\mu}$ serves as the estimated Q-function. Under Assumption \ref{assump:data_generate}, we have 
\#\label{eq:whatmu}
\hat{\mu}(a) \sim \text{N}\bigl(\mu(a), 1/N(a)\bigr).
\#
In particular, $\{\hat{\mu}(a)\}_{a \in \cA}$ are independent across each action $a \in \cA$ conditioning on $\{a^\tau\}_{\tau=1}^K$. 
We consider the policy 
\#\label{eq:w01}
\hat{\pi} (\cdot) = \argmax_{\pi} \langle \hat{\mu}(\cdot),\pi(\cdot)\rangle_{\cA},
\#
which is greedy with respect to $\hat{\mu}$, as it takes the action $\argmax_{a \in \cA} \hat{\mu}(a)$ with probability one. 


By Equation \eqref{eq:def_iota}, Lemma \ref{lem:dec}, and Equation \eqref{eq:w01}, we have 
 \$
 \text{SubOpt}(\hat\pi;x) \leq \underbrace{- \EE_{\hat\pi}\big[ \iota(a) \big]}_{\displaystyle\text{(i)}} +  \underbrace{\EE_{\pi^*}\big[ \iota(a) \big]}_{\displaystyle \text{(ii)}}, \quad \text{where~~} \iota(a) = \mu(a) - \hat{\mu}(a).
 \$
 Note that $\iota(a)$ is mean zero with respect to $\PP_{\cD}$ for each action $a \in \cA$. Therefore, assuming hypothetically $\hat\pi$ and $\iota$ are independent, term (i) is mean zero with respect to $\PP_\cD$. Meanwhile, as $\pi^*$ and $\iota$ are independent, term (ii) is also mean zero with respect to $\PP_\cD$. However, as $\hat\pi$ and $\iota$ are spuriously correlated due to their dependency on $\cD$, term (i) can be rather large in expectation. See Figure \ref{fig:w01} for an illustration. Specifically, we have 
 \#\label{eq:w022}
 - \EE_{\hat\pi}\bigl[ \iota(a) \bigr] = \langle\hat{\mu}(\cdot) - \mu(\cdot), \hat{\pi}(\cdot) \rangle_{\cA} = \bigl\langle\hat{\mu}(\cdot) - \mu(\cdot), \argmax_{\pi} \langle \hat{\mu}(\cdot),\pi(\cdot)\rangle_{\cA} \bigr\rangle_{\cA}.
 \#
For example, assuming $\mu(a) = 0$ for each action $a \in \cA$, term (i) is the maximum of $|\cA|$ Gaussians $\{\text{N}(0, 1/N(a))\}_{a \in \cA}$, which can be rather large in expectation, especially when $N(a^\sharp)$ is relatively small for a certain action $a^\sharp \in \cA$, e.g., $N(a^\sharp) = 1$. More generally, it is quite possible that $\hat\pi$ takes a certain action $a^\natural \in \cA$ with probability one only because $N(a^\natural)$ is relatively small, which allows $\hat{\mu}(a^\natural)$ to be rather large, even when $\mu(a^\natural)$ is relatively small. Due to such a spurious correlation, $\langle\hat{\mu}(\cdot) - \mu(\cdot), \hat{\pi}(\cdot) \rangle_{\cA} = \hat{\mu}(a^\natural) - \mu(a^\natural)$ in Equation \eqref{eq:w022} can be rather large in expectation, which incurs a significant suboptimality. More importantly, such an undesired situation can be quite common in practice, as $\cD$ does not necessarily have a ``uniform coverage'' over each action $a\in \cA$. In other words, $N(a^\sharp)$ is often relatively small for at least a certain action $a^\sharp \in \cA$.

 Going beyond the MAB, that is, $H \geq 1$, such a spurious correlation is further exacerbated, as it is more challenging to ensure each state $x\in \cS$ and each action $a \in \cA$ are visited sufficiently many times in $\cD$. To this end, existing literature \citep{antos2007fitted, antos2008learning, munos2008finite, farahmand2010error, farahmand2016regularized, scherrer2015approximate, liu2018breaking, nachum2019dualdice, nachum2019algaedice, chen2019information, tang2019doubly, kallus2019efficiently, kallus2020doubly, fan2020theoretical, xie2020batch, xie2020q, jiang2020minimax, uehara2020minimax, duan2020minimaxoptimal, yin2020near, qu2020finite, li2020sample,  liao2020batch, nachum2020reinforcement, yang2020off, zhang2020variational, zhang2019gendice} relies on various assumptions on the ``uniform coverage'' of $\cD$, e.g., finite concentrability coefficients and uniformly lower bounded densities of visitation measures, which however often fail to hold in practice.

\section{Pessimism is Provably Efficient}\label{sec:pess}
In this section, we present the algorithm and theory. Specifically, we introduce a penalty function to develop a pessimistic value iteration algorithm (PEVI), which simply flips the sign of the bonus function for promoting exploration in online RL \citep{jaksch2010near, abbasi2011improved,russo2013eluder, osband2014model, chowdhury2017kernelized, azar2017minimax, jin2018q, jin2020provably, cai2020provably, yang2020bridging, ayoub2020model, wang2020reinforcement}. In Section \ref{sec:pes_general}, we provide a sufficient condition for eliminating the spurious correlation from the suboptimality for any general MDP. In Section \ref{sec:pes_linear}, we characterize the suboptimality for the linear MDP \citep{yang2019sample, jin2020provably} by verifying the sufficient condition in Section \ref{sec:pes_general}. In Section \ref{sec:lower}, we establish the minimax optimality of PEVI via the information-theoretic lower bound.



\subsection{Pessimistic Value Iteration: General MDP}\label{sec:pes_general}
We consider a meta-algorithm, namely PEVI, which constructs an estimated Bellman operator $\hat\BB_h$ based on the dataset $\cD$ so that $\hat\BB_h \hat{V}_{h+1}:\cS\times \cA\to \RR$ approximates $\BB_h \hat{V}_{h+1}: \cS\times \cA\to \RR$. Here $\hat{V}_{h+1}:\cS\to \RR$ is an estimated value function constructed by the meta-algorithm based on $\cD$. Note that such a construction of $\hat\BB_h$ can be implicit in the sense that the meta-algorithm only relies on $\hat\BB_h \hat{V}_{h+1}$ instead of $\hat\BB_h$ itself. We define an uncertainty quantifier with the  confidence parameter $\xi \in (0, 1)$ as follows. Recall that $\PP_{\cD}$ is the joint distribution of the data collecting process.


\begin{definition}[$\xi$-Uncertainty Quantifier] 
We say $\{\Gamma_h\}_{h=1}^H\, (\Gamma_h:\cS\times\cA\to \RR)$ is a $\xi$-uncertainty quantifier with respect to $\PP_\cD$ if the event
\begin{equation}
\cE = \Big\{ \big|(\hat\BB_h\hat{V}_{h+1})(x,a) - (\BB_h\hat{V}_{h+1})(x,a)\big|\leq \Gamma_h(x,a)~ \text{for all}~(x,a)\in \cS\times \cA, h\in [H]    \Big\}
\label{eq:def_event_eval_err_general}
\end{equation}
satisfies $\PP_{\cD}(\cE)\geq 1-\xi$.
\label{def:uncertainty_quantifier}
\end{definition}

By Equation \eqref{eq:def_event_eval_err_general}, $\Gamma_h$ quantifies the uncertainty that arises from approximating $\BB_h \hat{V}_{h+1}$ using $\hat\BB_h \hat{V}_{h+1}$, which allows us to develop the meta-algorithm (Algorithm \ref{alg:pess_greedy_general}).


\begin{algorithm}[H]
\caption{Pessimistic Value Iteration (PEVI): General MDP}\label{alg:pess_greedy_general}
\begin{algorithmic}[1]
\STATE Input: Dataset $\cD=\{(x_h^\tau,a_h^\tau,r_h^\tau)\}_{\tau, h=1}^{K, H}$.
\STATE Initialization: Set $\hat{V}_{H+1}(\cdot) \leftarrow 0$.
\FOR{step $h=H,H-1,\ldots,1$}
\STATE Construct $(\hat\BB_h \hat{V}_{h+1})(\cdot,\cdot)$ and $\Gamma_h(\cdot, \cdot)$ based on $\cD$.\hfill {//Estimation \& Uncertainty}
\STATE Set $\overline{Q}_h(\cdot,\cdot) \leftarrow  (\hat\BB_h \hat{V}_{h+1})(\cdot,\cdot)- \Gamma_h(\cdot,\cdot)$.\hfill {//Pessimism}\label{alg:general_Qbar}
\STATE Set $\hat{Q}_h(\cdot,\cdot) \leftarrow \min\{\overline{Q}_h(\cdot,\cdot),H-h+1\}^+$.\hfill {//Truncation}\label{alg:general_Qhat}
\STATE Set $\hat{\pi}_h (\cdot \given \cdot) \leftarrow \argmax_{\pi_h}\langle \hat{Q}_h(\cdot, \cdot),\pi_h(\cdot\given \cdot)\rangle_{\cA}$.\hfill {//Optimization}
\STATE Set $\hat{V}_h(\cdot) \leftarrow \langle \hat{Q}_h(\cdot,\cdot),\hat\pi_h(\cdot \given \cdot)\rangle_{\cA}$.\hfill {//Evaluation} \label{alg:general_Vhat}
\ENDFOR 
\STATE Output: $\pess(\cD) = \{\hat{\pi}_h\}_{h=1}^H$.
\end{algorithmic}
\end{algorithm}

The following theorem characterizes the suboptimality of Algorithm \ref{alg:pess_greedy_general}, which is defined in Equation \eqref{eq:def_regret}. 


\begin{theorem}[Suboptimality for General MDP]
Suppose $\{\Gamma_h\}_{h=1}^H$ in Algorithm \ref{alg:pess_greedy_general} is a $\xi$-uncertainty quantifier. Under $\cE$ defined in Equation \eqref{eq:def_event_eval_err_general}, which satisfies $\PP_{\cD}(\cE)\geq 1-\xi$, for any $x \in \cS$, $\pess (\cD)$ in  Algorithm \ref{alg:pess_greedy_general} satisfies 
\begin{equation}\normalfont
\text{SubOpt}\big(\pess (\cD);x \big) \leq 2\sum_{h=1}^H\EE_{\pi^*}\big[ \Gamma_h(s_h,a_h) \biggiven s_1=x\big].
\label{eq:regret_upper_general}
\end{equation}
Here $\EE_{\pi^*}$ is with respect to the trajectory induced by $\pi^*$ in the underlying MDP given the fixed function $\Gamma_h$. 
\label{thm:regret_upper_bound_general}
\end{theorem}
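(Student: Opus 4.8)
The plan is to combine the suboptimality decomposition in Lemma~\ref{lem:reg_decomp} with the fact that, under the event $\cE$, the penalty $\Gamma_h$ simultaneously dominates the model evaluation error $\iota_h$ and makes it \emph{one-sided}. Concretely, recall $\iota_h(x,a) = (\BB_h \hat V_{h+1})(x,a) - \hat Q_h(x,a)$, and from the construction in Algorithm~\ref{alg:pess_greedy_general} we have $\hat Q_h = \min\{(\hat\BB_h\hat V_{h+1}) - \Gamma_h,\, H-h+1\}^+$. The first step is to show that, on $\cE$,
\#
0 \le \iota_h(x,a) \le 2\Gamma_h(x,a) \qquad \text{for all } (x,a)\in\cS\times\cA,\ h\in[H].
\label{eq:plan_iota_bound}
\#
The lower bound $\iota_h \ge 0$ should follow by a short case analysis on the truncation: since $\BB_h\hat V_{h+1} \ge 0$ and $\BB_h\hat V_{h+1}\le H-h+1$ (because $\hat V_{h+1}\in[0,H-h]$ and rewards are in $[0,1]$), whenever the truncation is active we still have $\hat Q_h \le \BB_h\hat V_{h+1}$, and when it is not active, $\hat Q_h = (\hat\BB_h\hat V_{h+1}) - \Gamma_h \le \BB_h\hat V_{h+1}$ directly from the defining inequality of $\cE$. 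The upper bound $\iota_h \le 2\Gamma_h$ similarly uses $\hat Q_h \ge (\hat\BB_h\hat V_{h+1}) - \Gamma_h \ge \BB_h\hat V_{h+1} - 2\Gamma_h$, again after checking the truncation only increases $\hat Q_h$ relative to this lower envelope (which requires noting that $(\hat\BB_h\hat V_{h+1})-\Gamma_h$ could a priori exceed $H-h+1$, but $\BB_h\hat V_{h+1}-2\Gamma_h \le H-h+1$ always, so the inequality survives).

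The second step plugs \eqref{eq:plan_iota_bound} into Lemma~\ref{lem:reg_decomp}. Term~(iii), the optimization error, is nonpositive because $\hat\pi_h$ is exactly greedy with respect to $\hat Q_h$ (line~7 of Algorithm~\ref{alg:pess_greedy_general}), so $\langle \hat Q_h(s_h,\cdot), \pi_h^*(\cdot\mid s_h) - \hat\pi_h(\cdot\mid s_h)\rangle_\cA \le 0$ pointwise; hence term~(iii) $\le 0$ and we may drop it. Term~(i), the spurious correlation $-\sum_h \EE_{\hat\pi}[\iota_h(s_h,a_h)\mid s_1=x]$, is nonpositive by the lower bound $\iota_h\ge 0$ in \eqref{eq:plan_iota_bound}, so it too can be dropped — this is the crucial point where pessimism kills the spurious correlation regardless of how $\hat\pi$ and $\iota_h$ correlate. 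Finally term~(ii), the intrinsic uncertainty $\sum_h \EE_{\pi^*}[\iota_h(s_h,a_h)\mid s_1=x]$, is bounded above by $2\sum_h \EE_{\pi^*}[\Gamma_h(s_h,a_h)\mid s_1=x]$ using the upper bound in \eqref{eq:plan_iota_bound}. Combining these three observations gives exactly \eqref{eq:regret_upper_general}, and the probabilistic statement follows since everything holds on $\cE$, which has $\PP_\cD$-probability at least $1-\xi$.

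The main obstacle I anticipate is the careful handling of the truncation operator $\min\{\cdot,\,H-h+1\}^+$ in establishing \eqref{eq:plan_iota_bound}: one must verify both inequalities survive truncation, which hinges on the a priori bounds $0 \le (\BB_h\hat V_{h+1})(x,a) \le H-h+1$ for the Bellman backup of the (inductively) truncated value function $\hat V_{h+1}\in[0,H-h]$. This is a clean induction — $\hat V_{H+1}=0$, and $\hat Q_h\in[0,H-h+1]$ by the explicit truncation implies $\hat V_h\in[0,H-h+1]$ — but it must be stated explicitly, since without the correct range for $\hat V_{h+1}$ the interaction between the penalty subtraction and the clipping is not automatically sign-definite. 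Everything else is a direct substitution into Lemma~\ref{lem:reg_decomp}.
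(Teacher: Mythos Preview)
Your proposal is correct and follows essentially the same approach as the paper: the paper packages your claim \eqref{eq:plan_iota_bound} as Lemma~\ref{lem:model_eval_err} (proved by exactly the case analysis on the truncation that you outline, noting in particular that on $\cE$ the upper clip at $H-h+1$ is never active since $\overline Q_h \le \BB_h\hat V_{h+1}\le H-h+1$), and then plugs it into Lemma~\ref{lem:reg_decomp} to kill terms~(i) and~(iii) and bound term~(ii), just as you describe.
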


\begin{proof}[Proof of Theorem \ref{thm:regret_upper_bound_general}]
See Section \ref{sec:upper_sketch_general} for a proof sketch.
\end{proof}

Theorem \ref{thm:regret_upper_bound_general} establishes a sufficient condition for eliminating the spurious correlation, which corresponds to term (i) in Equation \eqref{eq:reg_decomp}, from the suboptimality for any general MDP. Specifically, $-\Gamma_h$ in Algorithm \ref{alg:pess_greedy_general} serves as the penalty function, which ensures $-\iota_h$ in Equation \eqref{eq:reg_decomp} is nonpositive under $\cE$ defined in Equation \eqref{eq:def_event_eval_err_general}, that is, 
\#\label{eq:w02}
-\iota_h(x,a) &= \hat{Q}_h(x,a) - (\BB_h \hat{V}_{h+1})(x,a) \leq \overline{Q}_h(x,a) - (\BB_h \hat{V}_{h+1})(x,a) \notag \\
&=  (\hat{\BB}_h \hat{V}_{h+1})(x,a) - (\BB_h \hat{V}_{h+1})(x,a) - \Gamma_h(x,a) \leq 0.
\#
Note that Equation \eqref{eq:w02} holds in a pointwise manner for all $(x,a)\in \cS\times \cA$. In other words, as long as $\Gamma_h$ is a $\xi$-uncertainty quantifier, the suboptimality in Equation \eqref{eq:regret_upper_general} only corresponds to term (ii) in Equation \eqref{eq:reg_decomp}, which characterizes the intrinsic uncertainty. In any concrete setting, e.g., the linear MDP, it only remains to specify $\Gamma_h$ and prove it is a $\xi$-uncertainty quantifier under Assumption \ref{assump:data_generate}. In particular, we aim to find a $\xi$-uncertainty quantifier that is sufficiently small to establish an adequately tight upper bound of the suboptimality in Equation \eqref{eq:regret_upper_general}. In the sequel, we show it suffices to employ the bonus function for promoting exploration in online RL. 


\subsection{Pessimistic Value Iteration: Linear MDP}\label{sec:pes_linear}
As a concrete setting, we study the instantiation of PEVI for the linear MDP. We define the linear MDP \citep{yang2019sample, jin2020provably} as follows, where the transition kernel and expected reward function are linear in a feature map. 

\begin{definition}[Linear MDP]
We say an episodic MDP $(\cS,\cA,H,\cP,r)$ is a linear MDP with a known feature map $\phi:\cS\times \cA\to \RR^d$ if there exist $d$ unknown (signed) measures ${\mu}_h=(\mu_h^{(1)},\ldots,\mu_h^{(d)})$ over $\cS$ and an unknown vector $\theta_h\in \RR^d$ such that
\#\label{eq:w07}
\cP_h(x'\given x,a) = \langle \phi(x,a),\mu_h(x')\rangle,\quad \EE\bigl[r_h(s_h, a_h) \biggiven s_h=x,a_h=a\bigr] = \langle \phi(x,a),\theta_h\rangle
\#
for all $(x,a,x')\in \cS\times \cA\times \cS$ at each step $h\in[H]$. Here we assume $\|\phi(x,a)\|\leq 1$ for all $(x,a)\in \cS\times \cA$ and $\max\{\|\mu_h(\cS) \| ,\|\theta_h\|\}\leq \sqrt{d}$ at each step $h\in[H]$, where with an abuse of notation, we define  $\| \mu_h (\cS) \| = \int_{\cS } \| \mu_h (x) \| \,\ud x$.
\label{assump:linear_mdp}
\end{definition}

We specialize the meta-algorithm (Algorithm \ref{alg:pess_greedy_general}) by constructing $\hat\BB_h\hat{V}_{h+1}$, $\Gamma_h$, and $\hat{V}_{h}$ based on $\cD$, which leads to the algorithm for the linear MDP (Algorithm \ref{alg:pess_greedy}). Specifically, we construct $\hat\BB_h\hat{V}_{h+1}$ based on $\cD$ as follows. Recall that $\hat\BB_h\hat{V}_{h+1}$ approximates $\BB_h\hat{V}_{h+1}$, where $\BB_h$ is the Bellman operator defined in Equation \eqref{eq:def_bellman_op}, and $\cD=\{(x_h^\tau,a_h^\tau,r_h^\tau)\}_{\tau, h=1}^{K, H}$ is the dataset. We define the empirical mean squared Bellman error (MSBE) as
\begin{equation*}
M_h(w) = \sum_{\tau=1}^K \bigl(r_h^\tau + \hat{V}_{h+1}(x_{h+1}^\tau) - \phi (x_h^\tau,a_h^\tau)^\top w\bigr)^2
\end{equation*}
at each step $h \in [H]$. Correspondingly, we set
\#\label{eq:wlin}
(\hat\BB_h\hat{V}_{h+1})(x, a) = \phi(x, a)^\top \hat{w}_h, \quad \text{where~~} \hat{w}_h =  \argmin_{w\in \RR^d} M_h(w) + \lambda \cdot \|w\|_2^2
\#
at each step $h\in[H]$. Here $\lambda>0$ is the regularization parameter. Note that $\hat{w}_h$ has the closed form
\#\label{eq:w18}
&\hat{w}_h =  \Lambda_h ^{-1} \Big( \sum_{\tau=1}^{K} \phi(x_h^\tau,a_h^\tau) \cdot \bigl(r_h^\tau + \hat{V}_{h+1}(x_{h+1}^\tau)\bigr) \Bigr ) , \notag\\
&\text{where~~} \Lambda_h = \sum_{\tau=1}^K \phi(x_h^\tau,a_h^\tau)  \phi(x_h^\tau,a_h^\tau) ^\top + \lambda\cdot I. 
\#
Meanwhile, we construct $\Gamma_h$ based on $\cD$ as 
\#\label{eq:w05}
\Gamma_h(x, a) = \beta\cdot \big( \phi(x, a)^\top  \Lambda_h ^{-1} \phi(x, a)  \big)^{1/2}
\#
at each step $h\in[H]$. Here $\beta>0$ is the scaling parameter. In addition, we construct $\hat{V}_h$ based on $\cD$ as
\$
\hat{Q}_h(x,a) &= \min\{ \overline{Q}_h(x,a), H-h+1\}^+, \quad \text{where~~} \overline{Q}_h(x,a) = (\hat\BB_h \hat{V}_{h+1})(x,a)- \Gamma_h(x,a),\\
\hat{V}_h(x) &= \langle \hat{Q}_h(x,\cdot),\hat\pi_h(\cdot \given x)\rangle_{\cA},\quad \text{where~~} \hat{\pi}_h (\cdot \given x) = \argmax_{\pi_h}\langle \hat{Q}_h(x, \cdot),\pi_h(\cdot\given x)\rangle_{\cA}.
\$

\begin{algorithm}[H]
\caption{Pessimistic Value Iteration (PEVI): Linear MDP}\label{alg:pess_greedy}
\begin{algorithmic}[1]
\STATE Input: Dataset $\cD=\{(x_h^\tau,a_h^\tau,r_h^\tau)\}_{\tau, h=1}^{K, H}$.
\STATE Initialization: Set $\hat{V}_{H+1}(\cdot) \leftarrow 0$.
\FOR{step $h=H,H-1,\ldots,1$}
\STATE Set $\Lambda_h \leftarrow \sum_{\tau=1}^K \phi(x_h^\tau,a_h^\tau)  \phi(x_h^\tau,a_h^\tau) ^\top + \lambda\cdot I$. 
\STATE Set $\hat{w}_h\leftarrow  \Lambda_h ^{-1}( \sum_{\tau=1}^{K} \phi(x_h^\tau,a_h^\tau) \cdot (r_h^\tau + \hat{V}_{h+1}(x_{h+1}^\tau)) ) $. \hfill  {//Estimation}
\STATE Set $\Gamma_h(\cdot,\cdot) \leftarrow \beta\cdot ( \phi(\cdot,\cdot)^\top  \Lambda_h ^{-1} \phi(\cdot,\cdot) )^{1/2}$. \hfill  {//Uncertainty}
\STATE Set $\overline{Q}_h(\cdot,\cdot) \leftarrow  \phi(\cdot,\cdot)^\top \hat{w}_h - \Gamma_h(\cdot,\cdot)$. \hfill  {//Pessimism} 
\STATE Set $\hat{Q}_h(\cdot,\cdot) \leftarrow \min\{\overline{Q}_h(\cdot,\cdot),H-h+1\}^+$. \hfill  {//Truncation}
\STATE Set $\hat{\pi}_h (\cdot \given \cdot) \leftarrow \argmax_{\pi_h}\langle \hat{Q}_h(\cdot, \cdot),\pi_h(\cdot\given \cdot)\rangle_{\cA}$.\hfill {//Optimization}
\STATE Set $\hat{V}_h(\cdot) \leftarrow \langle \hat{Q}_h(\cdot,\cdot),\hat\pi_h(\cdot \given \cdot)\rangle_{\cA}$.\hfill {//Evaluation} \label{alg:linear_Vhat}
\ENDFOR 
\STATE Output: $\pess(\cD) = \{\hat{\pi}_h\}_{h=1}^H$.
\end{algorithmic}
\end{algorithm}

The following theorem characterizes the suboptimality of Algorithm \ref{alg:pess_greedy}, which is defined in Equation \eqref{eq:def_regret}. 

\begin{theorem}[Suboptimality for Linear MDP]
Suppose Assumption \ref{assump:data_generate} holds and the underlying MDP is a linear MDP. In Algorithm \ref{alg:pess_greedy}, we set\footnote{As a side note, 
the factor $d$ in $B$ can be improved with a sample splitting trick; 
we apply this trick 
and defer the corresponding discussion to the kernel setting in Section~\ref{sec:rkhs}.}
\$
\lambda=1,\quad \beta = c\cdot dH\sqrt{\zeta}, \quad \text{where~~}\zeta= \log(2dHK/\xi).
\$
Here $c>0$ is an absolute constant and $\xi \in (0,1)$ is the confidence parameter. The following statements hold: (i) $\{\Gamma_h\}_{h=1}^H$ in Algorithm \ref{alg:pess_greedy}, which is specified in Equation \eqref{eq:w05}, is a $\xi$-uncertainty quantifier, and hence (ii) under $\cE$ defined in Equation \eqref{eq:def_event_eval_err_general}, which satisfies $\PP_{\cD}(\cE)\geq 1-\xi$, for any $x \in \cS$, $\pess (\cD)$ in  Algorithm \ref{alg:pess_greedy} satisfies
\begin{equation}\normalfont
\textrm{SubOpt}\big(\linpess(\cD);x \big) \leq 2 \beta \sum_{h=1}^H\EE_{\pi^*}\Bigl[ \bigl(\phi(s_h,a_h)^\top \Lambda_h^{-1}\phi(s_h,a_h)\bigr)^{1/2} \Biggiven s_1=x\Bigr].
\label{eq:regret_upper_linear}
\end{equation}
Here $\EE_{\pi^*}$ is with respect to the trajectory induced by $\pi^*$ in the underlying MDP given the fixed matrix $\Lambda_h$. 
\label{thm:regret_upper_linear}
\end{theorem}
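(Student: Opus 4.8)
The plan is to establish statement (i) — that $\{\Gamma_h\}_{h=1}^H$ specified in \eqref{eq:w05} is a $\xi$-uncertainty quantifier — after which statement (ii) is immediate: Theorem~\ref{thm:regret_upper_bound_general} applies verbatim, and substituting the explicit form of $\Gamma_h$ into \eqref{eq:regret_upper_general} yields \eqref{eq:regret_upper_linear}. Thus the whole argument reduces to showing that, on an event of probability at least $1-\xi$,
\[
\bigl|(\hat\BB_h\hat V_{h+1})(x,a)-(\BB_h\hat V_{h+1})(x,a)\bigr| \;=\; \bigl|\phi(x,a)^\top(\hat w_h - w_h)\bigr| \;\le\; \Gamma_h(x,a)\quad\text{for all }(x,a)\in\cS\times\cA,\ h\in[H],
\]
where $\hat w_h$ is as in \eqref{eq:w18} and $w_h$ is defined next.

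First I would use the linear-MDP structure (Definition~\ref{assump:linear_mdp}) to observe that the Bellman backup of any fixed function is \emph{exactly} linear: $(\BB_h\hat V_{h+1})(x,a)=\phi(x,a)^\top w_h$ with $w_h=\theta_h+\int_\cS \hat V_{h+1}(x')\,\mu_h(\mathrm dx')$, and since $0\le\hat V_{h+1}\le H$ and $\max\{\|\theta_h\|,\|\mu_h(\cS)\|\}\le\sqrt d$, we have $\|w_h\|\le 2H\sqrt d$. Plugging in \eqref{eq:w18} and using $\Lambda_h w_h=\sum_{\tau}\phi_h^\tau(\phi_h^\tau)^\top w_h+\lambda w_h$ with $\phi_h^\tau:=\phi(x_h^\tau,a_h^\tau)$ gives the decomposition
\[
\phi(x,a)^\top(\hat w_h-w_h)=\underbrace{-\lambda\,\phi(x,a)^\top\Lambda_h^{-1}w_h}_{\text{(a)}}\;+\;\underbrace{\phi(x,a)^\top\Lambda_h^{-1}\textstyle\sum_{\tau=1}^K\phi_h^\tau\,\eps_h^\tau}_{\text{(b)}},\qquad \eps_h^\tau:=r_h^\tau+\hat V_{h+1}(x_{h+1}^\tau)-(\BB_h\hat V_{h+1})(x_h^\tau,a_h^\tau).
\]
By Cauchy–Schwarz in the $\Lambda_h^{-1}$-norm, $|\text{(a)}|\le\sqrt\lambda\,\|w_h\|\cdot(\phi(x,a)^\top\Lambda_h^{-1}\phi(x,a))^{1/2}$ and $|\text{(b)}|\le\|\sum_\tau\phi_h^\tau\eps_h^\tau\|_{\Lambda_h^{-1}}\cdot(\phi(x,a)^\top\Lambda_h^{-1}\phi(x,a))^{1/2}$; with $\lambda=1$ term (a) is $O(H\sqrt d)\cdot(\phi(x,a)^\top\Lambda_h^{-1}\phi(x,a))^{1/2}$, which is lower order, so everything hinges on bounding $\|\sum_\tau\phi_h^\tau\eps_h^\tau\|_{\Lambda_h^{-1}}$ by $O(Hd\sqrt\zeta)$.

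This concentration is the crux and the main obstacle: $\hat V_{h+1}$ is itself built from $\cD$ — it depends on $\Lambda_{h+1}$ and $\hat w_{h+1}$, hence on the very states $x_{h+1}^\tau$ that also appear as arguments of $\hat V_{h+1}$ in $\eps_h^\tau$ — so $\{\eps_h^\tau\}_\tau$ is \emph{not} a martingale-difference sequence in the natural filtration and a direct self-normalized bound fails. I would resolve this by a covering argument. For a \emph{fixed} bounded $V:\cS\to[0,H]$, Assumption~\ref{assump:data_generate} (compliance) makes $\eps_h^\tau(V):=r_h^\tau+V(x_{h+1}^\tau)-(\BB_h V)(x_h^\tau,a_h^\tau)$ a martingale-difference sequence with $|\eps_h^\tau(V)|\le H+1$, so the self-normalized concentration inequality \citep{abbasi2011improved} gives $\|\sum_\tau\phi_h^\tau\eps_h^\tau(V)\|_{\Lambda_h^{-1}}^2\lesssim H^2\bigl(d\log(1+K/\lambda)+\log(1/\delta)\bigr)$ with probability at least $1-\delta$. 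One then notes that $\hat V_{h+1}$ lies in the parametric class $\cV=\{\,x\mapsto\max_a\{\min\{\phi(x,a)^\top w-\beta(\phi(x,a)^\top A\,\phi(x,a))^{1/2},\,H\}\}^+:\|w\|\le 2H\sqrt{dK},\ 0\preceq A\preceq\lambda^{-1}I\,\}$, builds a sup-norm $\varepsilon$-net $\cN_\varepsilon$ of $\cV$ with $\log|\cN_\varepsilon|\lesssim d^2\log(\mathrm{poly}(d,H,K)/\varepsilon)$, applies the fixed-$V$ bound with $\delta=\xi/(H|\cN_\varepsilon|)$ simultaneously over $\cN_\varepsilon$ and $h\in[H]$, and controls the discretization error $\|\sum_\tau\phi_h^\tau(\eps_h^\tau(\hat V_{h+1})-\eps_h^\tau(V))\|_{\Lambda_h^{-1}}\le 2\varepsilon K/\sqrt\lambda$ for $V\in\cN_\varepsilon$ nearest to $\hat V_{h+1}$. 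Choosing $\varepsilon$ polynomially small in $1/K$ makes the discretization term $O(1)$ and leaves $\log|\cN_\varepsilon|\lesssim d^2\zeta$; since this enters under a square root, one obtains $\|\sum_\tau\phi_h^\tau\eps_h^\tau\|_{\Lambda_h^{-1}}\lesssim Hd\sqrt\zeta$ on an event of probability at least $1-\xi$, uniformly in $h$.

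Finally I would assemble the pieces: on that event, combining the two Cauchy–Schwarz bounds gives $|\phi(x,a)^\top(\hat w_h-w_h)|\le\bigl(c_1 H\sqrt d+c_2 Hd\sqrt\zeta\bigr)\bigl(\phi(x,a)^\top\Lambda_h^{-1}\phi(x,a)\bigr)^{1/2}\le c\,dH\sqrt\zeta\cdot\bigl(\phi(x,a)^\top\Lambda_h^{-1}\phi(x,a)\bigr)^{1/2}=\Gamma_h(x,a)$ for a suitable absolute constant $c$ with $\lambda=1$, for all $(x,a)\in\cS\times\cA$ and $h\in[H]$ simultaneously. This is precisely the event $\cE$ of Definition~\ref{def:uncertainty_quantifier}, so $\{\Gamma_h\}_{h=1}^H$ is a $\xi$-uncertainty quantifier, establishing (i); then (ii) follows from Theorem~\ref{thm:regret_upper_bound_general} as noted above. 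I expect the only delicate part to be the bookkeeping in the covering step — tracking that the $d^2$ covering exponent collapses to the advertised $d$ after the square root, and that the dependence on $K$ inside the logarithm is absorbed into $\zeta=\log(2dHK/\xi)$ — with the rest being linear algebra.
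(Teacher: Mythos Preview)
Your proposal is correct and follows essentially the same route as the paper: decompose $\phi(x,a)^\top(\hat w_h-w_h)$ into a regularization bias term (bounded by $\sqrt\lambda\,\|w_h\|\cdot\|\phi(x,a)\|_{\Lambda_h^{-1}}$) and a self-normalized noise term, then handle the data-dependence of $\hat V_{h+1}$ by a sup-norm covering of the parametric value class combined with the Abbasi-Yadkori et al.\ self-normalized bound for each fixed $V$ in the net and a union bound over $h\in[H]$. The paper's Lemma~\ref{lemma:linear_MDP_uncertainty} carries out exactly this argument with the same decomposition, the same covering-number estimate $\log|\cN_\varepsilon|\lesssim d^2\log(\mathrm{poly}(d,H,K)/\varepsilon)$, and the same choice $\varepsilon\asymp dH/K$, after which Theorem~\ref{thm:regret_upper_bound_general} gives (ii).
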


\begin{proof}[Proof of Theorem \ref{thm:regret_upper_linear}]
See Section \ref{sec:upper_sketch_linear} for a proof sketch.
\end{proof}

We highlight the following aspects of Theorem \ref{thm:regret_upper_linear}:
\vskip4pt
\noindent{\bf ``Assumption-Free'' Guarantee:} Theorem \ref{thm:regret_upper_linear} only relies on the compliance of $\cD$ with the linear MDP. In comparison with existing literature \citep{antos2007fitted, antos2008learning, munos2008finite, farahmand2010error, farahmand2016regularized, scherrer2015approximate, liu2018breaking, nachum2019dualdice, nachum2019algaedice, chen2019information, tang2019doubly, kallus2019efficiently, kallus2020doubly, fan2020theoretical, xie2020batch, xie2020q, jiang2020minimax, uehara2020minimax, duan2020minimaxoptimal, yin2020near, qu2020finite, li2020sample,  liao2020batch, nachum2020reinforcement, yang2020off, zhang2020variational, zhang2019gendice}, we require no assumptions on the ``uniform coverage'' of $\cD$, e.g., finite concentrability coefficients and uniformly lower bounded densities of visitation measures, which often fail to hold in practice. Meanwhile, we impose no restrictions on the affinity between $\linpess(\cD)$ and a fixed behavior policy that induces $\cD$, which is often employed as a regularizer (or equivalently, a constraint) in existing literature \citep{fujimoto2019off, laroche2019safe, jaques2019way, wu2019behavior, kumar2019stabilizing, wang2020critic, siegel2020keep, nair2020accelerating, liu2020provably}. 


\vskip4pt
\noindent{\bf Intrinsic Uncertainty Versus Spurious Correlation:} The suboptimality in Equation \eqref{eq:regret_upper_linear} only corresponds to term (ii) in Equation \eqref{eq:reg_decomp}, which characterizes the intrinsic uncertainty. Note that $\Lambda_h$ depends on $\cD$ but acts as a fixed matrix in the expectation, that is, $\EE_{\pi^*}$ is only taken with respect to $(s_h, a_h)$, which lies on the trajectory induced by $\pi^*$. In other words, as $\pi^*$ is intrinsic to the underlying MDP and hence does not depend on $\cD$, the suboptimality in Equation \eqref{eq:regret_upper_linear} does not suffer from the spurious correlation, that is, term (i) in Equation \eqref{eq:reg_decomp}, which arises from the dependency of $\hat{\pi} = \pess(\cD)$ on $\cD$.

The following corollary proves as long as the trajectory induced by $\pi^*$ is ``covered'' by $\cD$ sufficiently well, the suboptimality of Algorithm \ref{alg:pess_greedy} decays at a $K^{-1/2}$ rate.

\begin{corollary}[Sufficient ``Coverage'']\label{cor:well_explore_optimal}
Suppose there exists an absolute constant $c^\dagger >0$ such that the event
\#\label{eq:event_opt_explore}
\cE^\dagger = \Big\{ \Lambda_h \geq I + c^\dagger \cdot K \cdot \EE_{\pi^*}\bigl[\phi(s_h,a_h)\phi(s_h,a_h)^\top\biggiven s_1=x\bigr]~ \text{for all }x\in \cS, h\in [H] \Big\}
\#
satisfies $\PP_{\cD}(\cE^\dagger)\geq 1-\xi/2$.
Here $\Lambda_h$ is defined in Equation \eqref{eq:w18} and $\EE_{\pi^*}$ is taken with respect to the trajectory induced by $\pi^*$ in the underlying MDP. In Algorithm \ref{alg:pess_greedy}, we set 
\$
\lambda=1,\quad \beta = c\cdot dH\sqrt{\zeta}, \quad \text{where~~}\zeta= \log(4dHK/\xi).
\$
Here $c>0$ is an absolute constant and $\xi \in (0,1)$ is the confidence parameter. For $\linpess(\cD)$ in Algorithm \ref{alg:pess_greedy}, the event
\#\label{eq:event_opt_explore_d}\normalfont
\cE' = \Big\{ \textrm{SubOpt}\big(\linpess(\cD);x \big) 
\leq c' \cdot d^{3/2} H^2 K^{-1/2} \sqrt{\zeta}~\text{for all }x\in \cS  \Big\}
\#
satisfies $\PP_{\cD}(\cE')\geq 1-\xi$, where $c'>0$ is an absolute constant that only depends on $c^\dagger$ and $c$.
In particular, if $\rank(\Sigma_h(x))\leq r$ for all $x\in \cS$ at each step $h\in [H]$, where 
\$
\Sigma_h(x)=\EE_{\pi^*}\bigl[\phi(s_h,a_h)\phi(s_h,a_h)^\top\biggiven s_1=x\bigr],
\$ 
for $\linpess(\cD)$ in Algorithm \ref{alg:pess_greedy}, the event
\#\label{eq:event_opt_explore_r}\normalfont
\cE'' = \Big\{ \textrm{SubOpt}\big(\linpess(\cD);x \big) 
\leq c''\cdot  d   H^2 K^{-1/2}\sqrt{\zeta}~\text{for all }x\in \cS  \Big\}
\#
satisfies $\PP_{\cD}(\cE'')\geq 1-\xi$, where $c''>0$ is an absolute constant that only depends on $c^\dagger$, $c$, and $r$.


\end{corollary}

\begin{proof}[Proof of Corollary \ref{cor:well_explore_optimal}]
See Appendix \ref{sec:appendix:optimal-explore} for a detailed proof.
\end{proof}

\vskip4pt
\noindent{\bf Intrinsic Uncertainty as Information Gain:} To understand Equation \eqref{eq:regret_upper_linear}, we interpret the intrinsic uncertainty in the suboptimality, which corresponds to term (ii) in Equation \eqref{eq:reg_decomp}, from a Bayesian perspective. Recall that constructing $\hat{\BB}_h \hat{V}_{h+1}$ based on $\cD$ at each step $h \in [H]$ involves solving the linear regression problem in Equation \eqref{eq:wlin}, where $\phi(x_h^\tau,a_h^\tau)$ is the covariate, $r_h^\tau + \hat{V}_{h+1}(x_{h+1}^\tau)$ is the response, and $\hat{w}_h$ is the estimated regression coefficient. Here $\hat{V}_{h+1}$ acts as a fixed function. We consider the Bayesian counterpart of such a linear regression problem. 
Note that the estimator $\hat{w}_h$ specified in Equation \eqref{eq:w18} is the Bayesian estimator of $w_h$ under the prior $w_h \sim \text{N}(0, \lambda\cdot I)$ and Gaussian distribution of the response with variance one (conditioning on the covariate). 
Under the Bayesian framework, the posterior has the closed form 
\#\label{eq:wposterior}
w_h \given \cD \sim \text{N}(\hat{w}_h, \Lambda_h^{-1}),
\# 
where $\hat{w}_h$ and $\Lambda_h$ are defined in Equation \eqref{eq:w18}. Correspondingly, we have 
\$
\text{I}\bigl(w_h; \phi(s_h, a_h)\biggiven \cD\bigr)&= \text{H}(w_h \given \cD) - \text{H}\bigl(w_h \biggiven \cD, \phi(s_h, a_h)\bigr) \\
&= 1/2\cdot\log\frac{\det(\Lambda_h^\dagger)}{\det(\Lambda_h)},\quad \text{where~~} \Lambda_h^\dagger = \Lambda_h + \phi(s_h, a_h)\phi(s_h, a_h)^\top.
\$
Here $\text{I}$ is the (conditional) mutual information and $\text{H}$ is the (conditional) differential entropy. Meanwhile, we have 
\$
\log\frac{\det(\Lambda_h^\dagger)}{\det(\Lambda_h)} &= \log \det\bigl(I + \Lambda_h^{-1/2}\phi(s_h,a_h)\phi(s_h,a_h)^\top \Lambda_h^{-1/2}\bigr) \\
&= \log \bigl(1+\phi(s_h,a_h)^\top\Lambda_h^{-1}\phi(s_h,a_h)\bigr)
\approx\phi(s_h,a_h)^\top \Lambda_h^{-1} \phi(s_h,a_h),
\$
where the second equality follows from the matrix determinant lemma and the last equality holds when $\phi(s_h,a_h)^\top \Lambda_h^{-1} \phi(s_h,a_h)$ is close to zero. Therefore, in Equation \eqref{eq:regret_upper_linear}, we have 
\$
\EE_{\pi^*}\Bigl[ \bigl( \phi(s_h,a_h)^\top \Lambda_h^{-1}\phi(s_h,a_h)\bigr)^{1/2} \Biggiven s_1=x\Bigr] \approx \sqrt{2}\cdot \EE_{\pi^*}\Bigl[ \text{I}\bigl(w_h; \phi(s_h, a_h)\biggiven \cD\bigr)^{1/2} \Biggiven s_1=x\Bigr].
\$
In other words, the suboptimality in Equation \eqref{eq:regret_upper_linear}, which corresponds to the intrinsic uncertainty, can be cast as the mutual information between $w_h \given \cD$ in Equation \eqref{eq:wposterior} and $\phi(s_h, a_h)$ on the trajectory induced by $\pi^*$ in the underlying MDP. In particular, such a mutual information can be cast as the information gain \citep{schmidhuber1991curious, schmidhuber2010formal,sun2011planning,  still2012information, houthooft2016vime, russo2016information, russo2018learning} for estimating $w_h$, which is induced by observing $\phi(s_h, a_h)$ in addition to $\cD$. In other words, such a mutual information characterizes how much uncertainty in $w_h \given \cD$ can be eliminated when we additionally condition on $\phi(s_h, a_h)$.

\vskip4pt
\noindent{\bf Illustration via a Special Case: Tabular MDP:}
To understand Equation \eqref{eq:regret_upper_linear}, we consider the tabular MDP, a special case of the linear MDP, where $\cS$ and $\cA$ are discrete. Correspondingly, we set $\phi$ in Equation \eqref{eq:w07} as the canonical basis of $\RR^{|\cS| |\cA|}$. When $\cS$ is a singleton and $H = 1$, the tabular MDP reduces to the MAB, which is discussed in Section \ref{sec:mab}. Specifically, in the tabular MDP, we have
\$
\Lambda_h &= \sum_{\tau=1}^K \phi(x_h^\tau,a_h^\tau)  \phi(x_h^\tau,a_h^\tau) ^\top + \lambda\cdot I \\
&= \diag\bigl(\{N_h(x, a) + \lambda\}_{(x, a)\in \cS \times \cA}\bigr) \in \RR^{|\cS| |\cA|\times |\cS| |\cA|}, \quad \text{where~~} N_h(x, a) = \sum_{\tau=1}^K\ind{\{(x_h^\tau,a_h^\tau)=(x,a)\}}.
\$
To simplify the subsequent discussion, we assume $\cP_h$ is deterministic at each step $h \in [H]$. Let $\{(s_h^*, a_h^*)\}_{h=1}^H$ be the trajectory induced by $\pi^*$, which is also deterministic. In Equation \eqref{eq:regret_upper_linear}, we have 
\$
\EE_{\pi^*}\Bigl[ \bigl( \phi(s_h,a_h)^\top \Lambda_h^{-1}\phi(s_h,a_h)\bigr) ^{1/2} \Biggiven s_1=x\Bigr] = \bigl(N_h(s_h^*, a_h^*) + \lambda\bigr)^{-1/2}.
\$
In other words, the suboptimality in Equation \eqref{eq:regret_upper_linear} only depends on how well $\cD$ ``covers'' the trajectory induced by $\pi^*$ instead of its ``uniform coverage'' over $\cS$ and $\cA$. In particular, as long as $(s_h^\diamond, a_h^\diamond)$ lies off the trajectory induced by $\pi^*$, how well $\cD$  ``covers'' $(s_h^\diamond, a_h^\diamond)$, that is, $N_h(s_h^\diamond, a_h^\diamond)$, does not affect the suboptimality in Equation \eqref{eq:regret_upper_linear}. See Figure \ref{fig:w02} for an illustration.

\vskip4pt
\noindent{\bf Oracle Property:}
Following existing literature \citep{donoho1994ideal, fan2001variable, zou2006adaptive}, we refer to such a phenomenon as the oracle property, that is, the algorithm incurs an ``oracle'' suboptimality that automatically ``adapts'' to the support of the trajectory induced by $\pi^*$, even though $\pi^*$ is unknown a priori. From another perspective, assuming hypothetically $\pi^*$ is known a priori, the error that arises from estimating the transition kernel and expected reward function at $(s_h^*, a_h^*)$ scales as $N_h(s_h^*, a_h^*)^{-1/2}$, which can not be improved due to the information-theoretic lower bound. 

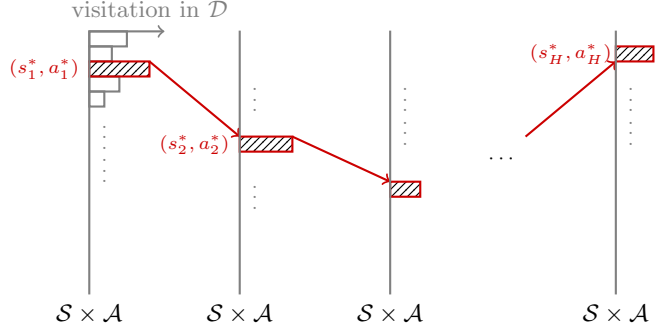
\begin{figure}
\centering
\begin{tikzpicture}
\draw [gray, thick] (-1, 2.5) -- (-0.5,2.5) -- (-0.5,2.3) -- (-1,2.3) -- cycle;
\draw [gray, thick] (-1, 2.3) -- (-0.7,2.3) -- (-0.7,2.1) -- (-1,2.1) -- cycle;
\draw [gray, thick] (-1, 1.9) -- (-0.6,1.9) -- (-0.6,1.7) -- (-1,1.7) -- cycle;
\draw [gray, thick] (-1, 1.7) -- (-0.8,1.7) -- (-0.8,1.5) -- (-1,1.5) -- cycle;
\fill [color = gray, pattern=north east lines] (-1, 2.1) -- (-0.2,2.1) -- (-0.2,1.9) -- (-1,1.9) -- cycle;
\draw [red!80!black, thick] (-1, 2.1) -- (-0.2,2.1) -- (-0.2,1.9) -- (-1,1.9) -- cycle;
\node [text=gray] at  (-0.8,1.2) {\scriptsize $\vdots$};
\node [text=gray] at  (-0.8,0.75) {\scriptsize $\vdots$};

\node[text=red!80!black] at  (-1.6,2.0) {\scriptsize $(s_1^*,a_1^*)$};

\fill [color = gray, pattern=north east lines] (1, 1.1) -- (1.7,1.1) -- (1.7,0.9) -- (1,0.9) -- cycle;
\draw [red!80!black, thick] (1, 1.1) -- (1.7,1.1) -- (1.7,0.9) -- (1,0.9) -- cycle;
\node [text=gray] at  (1.2,1.7) {\scriptsize $\vdots$};
\node [text=gray] at  (1.2,0.4) {\scriptsize $\vdots$};

\node[text=red!80!black] at  (0.4,1.0) {\scriptsize $(s_2^*,a_2^*)$};

\fill [color = gray, pattern=north east lines] (3,0.5) -- (3.4,0.5) -- (3.4,0.3) -- (3,0.3) -- cycle;
\draw [red!80!black, thick] (3,0.5) -- (3.4,0.5) -- (3.4,0.3) -- (3,0.3) -- cycle;
\node [text=gray] at  (3.2,1.7) {\scriptsize $\vdots$};
\node [text=gray] at  (3.2,1.25) {\scriptsize $\vdots$};

\fill [color = gray, pattern=north east lines] (6,2.3) -- (6.5,2.3) -- (6.5,2.1) -- (6,2.1) -- cycle;
\draw [red!80!black, thick] (6,2.3) -- (6.5,2.3) -- (6.5,2.1) -- (6,2.1) -- cycle;

\node[text=red!80!black] at  (5.4,2.2) {\scriptsize $(s_H^*,a_H^*)$};
\node [text=gray] at  (6.2,1.7) {\scriptsize $\vdots$};
\node [text=gray] at  (6.2,1.25) {\scriptsize $\vdots$};

\node[text=black] at  (4.5,0.8) {\scriptsize $\cdots$};

\draw[red!80!black, thick, ->] (-0.2,2.1) -- (1, 1.1);
\draw[red!80!black, thick, ->] (1.7,1.1) -- (3,0.5);
\draw[red!80!black, thick, ->] (4.8, 1.1) -- (6,2.1);

\draw[gray, thick, ->] (-1, 2.5) -- (0, 2.5);
\node[text=gray] at  (-0.2,2.8) {\small visitation in $\cD$};
\draw [gray, thick, text=black] (-1, 2.51) --(-1, -1) node[below] {\small $\cS\times\cA$};
\draw [gray, thick, text=black] (1, 2.51) --(1, -1) node[below] {\small $\cS\times\cA$};
\draw [gray, thick, text=black] (3, 2.51) --(3, -1) node[below] {\small $\cS\times\cA$};
\draw [gray, thick, text=black] (6, 2.51) --(6, -1) node[below] {\small $\cS\times\cA$};
\end{tikzpicture}
\caption{An illustration of the oracle property in the tabular MDP, where the transition kernel is deterministic. The histogram depicts the number of times $(s_h, a_h)$ is visited in $\cD$. The suboptimality in Equation \eqref{eq:regret_upper_linear} only depends on the number of times $(s_h^*, a_h^*)$, which lies on the trajectory induced by $\pi^*$, is visited in $\cD$, even though $\pi^*$ is unknown a priori.}
\label{fig:w02}
\end{figure}

\vskip4pt
\noindent{\bf Outperforming Demonstration:} Assuming hypothetically $\cD$ is induced by a fixed behavior policy $\bar\pi$ (namely the demonstration), such an oracle property allows $\linpess(\cD)$ to outperform $\bar\pi$ in terms of the suboptimality, which is defined in Equation \eqref{eq:def_regret}. Specifically, it is quite possible that $r_h(s_h^\diamond, a_h^\diamond)$ is relatively small and $N_h(s_h^\diamond, a_h^\diamond)$ is rather large for a certain $(s_h^\diamond, a_h^\diamond)$, which is ``covered'' by $\cD$ but lies off the trajectory induced by $\pi^*$. Correspondingly, the suboptimality of $\bar\pi$ can be rather large. On the other hand, as discussed above, $r_h(s_h^\diamond, a_h^\diamond)$ and $N_h(s_h^\diamond, a_h^\diamond)$ do not affect the suboptimality of $\linpess(\cD)$, which can be relatively small as long as $N_h(s_h^*, a_h^*)$ is sufficiently large. Here $(s_h^*, a_h^*)$ is ``covered'' by $\cD$ and lies on the trajectory induced by $\pi^*$.

\vskip4pt
\noindent{\bf Well-Explored Dataset:}
To connect existing literature \citep{duan2020minimaxoptimal}, the following corollary specializes Theorem \ref{thm:regret_upper_linear} under the additional assumption that the data collecting process well explores $\cS$ and $\cA$.
\begin{corollary}[Well-Explored Dataset]
Suppose $\cD$ consists of $K$ trajectories $\{(x_h^\tau,a_h^\tau,r_h^\tau)\}_{\tau, h=1}^{K, H}$ independently and identically induced by a fixed behavior policy $\bar\pi$ in the linear MDP. Meanwhile, suppose there exists an absolute constant $\underline{c} > 0$ such that 
\$
\lambda_{\min}(\Sigma_h)\geq \underline{c}/d, \quad \text{where~~} \Sigma_h = \EE_{\bar\pi}\bigl[\phi(s_h,a_h)\phi(s_h,a_h)^\top\bigr]
\$
at each step $h \in [H]$. Here $\EE_{\bar{\pi}}$ is taken with respect to the trajectory induced by $\bar{\pi}$ in the underlying MDP. In Algorithm \ref{alg:pess_greedy}, we set
\$
\lambda=1,\quad \beta = c\cdot d H\sqrt{\zeta}, \quad \text{where~~}\zeta= \log(4dHK/\xi).
\$
Here $c>0$ is an absolute constant and $\xi \in (0,1)$ is the confidence parameter. 
Suppose we have 
$K \geq C\cdot d \log (4 dH/ \xi)$, where $C > 0$ is a sufficiently large absolute constant that depends on $\underline{c}$. For $\pess (\cD)$ in Algorithm \ref{alg:pess_greedy}, the event 
\#\label{eq:w10}\normalfont
\cE^* = \Bigl\{ \textrm{SubOpt}\bigl(\linpess(\cD);x\bigr)\leq c'\cdot d^{3/2}H^2K^{-1/2}\sqrt{\zeta}~ \text{for all}~x \in \cS \Bigr\}
\#
satisfies $\PP_{\cD}(\cE^*)\geq 1-\xi$. Here $c'>0$ is an absolute constant that only depends on $\underline {c}$ and $c$.
%
\label{cor:well_explore}
\end{corollary}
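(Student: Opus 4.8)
The plan is to derive Corollary~\ref{cor:well_explore} directly from Theorem~\ref{thm:regret_upper_linear} by replacing the data-dependent matrices $\Lambda_h$ with a deterministic lower bound that holds on a high-probability event. Since $\cD$ is i.i.d.\ from a fixed behavior policy $\bar\pi$, Assumption~\ref{assump:data_generate} is in force, so Theorem~\ref{thm:regret_upper_linear} applies. First I would invoke that theorem with confidence parameter $\xi/2$ in place of $\xi$: its $\zeta$ then becomes $\log(2dHK/(\xi/2))=\log(4dHK/\xi)$, which matches the $\zeta$ in the corollary, so the choices $\lambda=1$ and $\beta=c\cdot dH\sqrt\zeta$ are consistent with what Theorem~\ref{thm:regret_upper_linear} prescribes. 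On the event $\cE$ of Equation~\eqref{eq:def_event_eval_err_general}, which now has probability at least $1-\xi/2$, we obtain
\[
\text{SubOpt}\big(\linpess(\cD);x\big)\le 2\beta\sum_{h=1}^H\EE_{\pi^*}\Bigl[\bigl(\phi(s_h,a_h)^\top\Lambda_h^{-1}\phi(s_h,a_h)\bigr)^{1/2}\,\Big|\,s_1=x\Bigr].
\]
Because $\|\phi(x,a)\|\le1$ for all $(x,a)$, the integrand obeys $\phi(x,a)^\top\Lambda_h^{-1}\phi(x,a)\le\lambda_{\max}(\Lambda_h^{-1})=1/\lambda_{\min}(\Lambda_h)$ pointwise, so the whole task reduces to lower bounding $\lambda_{\min}(\Lambda_h)$ uniformly over $h\in[H]$.

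Second I would carry out the matrix concentration. Fix $h\in[H]$; under the i.i.d.\ assumption the matrices $\phi(x_h^\tau,a_h^\tau)\phi(x_h^\tau,a_h^\tau)^\top$, $\tau\in[K]$, are i.i.d.\ and positive semidefinite with operator norm at most $\|\phi(x_h^\tau,a_h^\tau)\|^2\le1$ and common mean $\Sigma_h$ with $\lambda_{\min}(\Sigma_h)\ge\underline c$. (Only independence \emph{across} trajectories is needed; the dependence among the different steps within one trajectory plays no role, since $\Lambda_h$ involves step $h$ alone.) A matrix Chernoff inequality for sums of i.i.d.\ bounded positive semidefinite matrices then gives $\lambda_{\min}\bigl(\sum_{\tau=1}^K\phi(x_h^\tau,a_h^\tau)\phi(x_h^\tau,a_h^\tau)^\top\bigr)\ge K\underline c/2$ with probability at least $1-d\exp(-K\underline c/8)$; and since $\Lambda_h\succeq\sum_{\tau=1}^K\phi(x_h^\tau,a_h^\tau)\phi(x_h^\tau,a_h^\tau)^\top$, the same lower bound holds for $\lambda_{\min}(\Lambda_h)$. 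A union bound over $h\in[H]$ shows that, as long as $K\ge C\log(4dH/\xi)$ for a sufficiently large $C$ depending only on $\underline c$ (e.g.\ $C$ a constant multiple of $1/\underline c$), the event $\cE_\Lambda=\{\lambda_{\min}(\Lambda_h)\ge K\underline c/2 \text{ for all } h\in[H]\}$ has probability at least $1-\xi/2$.

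Finally I would combine the two ingredients on $\cE\cap\cE_\Lambda$, which has probability at least $1-\xi$ by a union bound. There, for every $h$ and every point on the trajectory induced by $\pi^*$, $\bigl(\phi(s_h,a_h)^\top\Lambda_h^{-1}\phi(s_h,a_h)\bigr)^{1/2}\le\bigl(2/(K\underline c)\bigr)^{1/2}$, so each conditional expectation is bounded by that deterministic constant and
\[
\text{SubOpt}\big(\linpess(\cD);x\big)\le 2\beta H\bigl(2/(K\underline c)\bigr)^{1/2}=\bigl(2\sqrt2\,c/\sqrt{\underline c}\,\bigr)\cdot dH^2K^{-1/2}\sqrt\zeta=:c'\cdot dH^2K^{-1/2}\sqrt\zeta.
\]
The right-hand side no longer depends on $x$, so this bound holds simultaneously for all $x\in\cS$; this is exactly the event $\cE^*$ of Equation~\eqref{eq:w10}, and $c'$ depends only on $\underline c$ and $c$. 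The one step that requires genuine care is the matrix concentration: one should apply the minimum-eigenvalue matrix Chernoff bound directly (rather than centering and using matrix Bernstein, which would inflate the variance and dimension factors), and the confidence budget must be split as $\xi/2+\xi/2$ so that the surviving logarithmic factor is precisely $\zeta=\log(4dHK/\xi)$ as stated. Everything else is the one-line estimate $\phi^\top\Lambda_h^{-1}\phi\le1/\lambda_{\min}(\Lambda_h)$ together with the closed form of $\Lambda_h$.
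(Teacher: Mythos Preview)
Your proposal is correct and follows essentially the same two-step structure as the paper: (i) invoke Theorem~\ref{thm:regret_upper_linear} with confidence $\xi/2$, and (ii) use a matrix concentration argument to show $\lambda_{\min}(\Lambda_h)\ge K\underline{c}/2$ for all $h$ with probability at least $1-\xi/2$, then combine via a union bound. The only difference is that the paper uses the matrix Bernstein inequality on the centered matrices $A_\tau=\phi(x_h^\tau,a_h^\tau)\phi(x_h^\tau,a_h^\tau)^\top-\Sigma_h$ rather than the matrix Chernoff bound you suggest; both routes yield the same $\lambda_{\min}(\Lambda_h)\ge K\underline{c}/2$ conclusion (and indeed the same final constant $c'=2\sqrt{2}\,c/\sqrt{\underline{c}}$), so your parenthetical remark that Bernstein ``would inflate the variance and dimension factors'' is not borne out here.
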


\begin{proof}[Proof of Corollary \ref{cor:well_explore}]
See Appendix \ref{sec:appendix:well-explored} for a detailed proof.
\end{proof}

The suboptimality in Equation \eqref{eq:w10} parallels the policy evaluation error established in \cite{duan2020minimaxoptimal}, which also scales as $H^2 K^{-1/2}$ and attains the information-theoretic lower bound for offline policy evaluation. In contrast, we focus on offline policy optimization, which is more challenging. As $K\rightarrow\infty$, the suboptimality in Equation \eqref{eq:w10} goes to zero.

\subsection{Minimax Optimality: Information-Theoretic Lower Bound}\label{sec:lower}

We establish the minimax optimality of Theorems \ref{thm:regret_upper_bound_general} and \ref{thm:regret_upper_linear} via the following information-theoretic lower bound. Recall that $\PP_{\cD}$ is the joint distribution of the data collecting process.

\begin{theorem}[Information-Theoretic Lower Bound]
For the output $\texttt{Algo}(\cD)$ of any algorithm,
there exist a linear MDP $\cM=(\cS,\cA,H,\cP,r)$, an initial state $x \in \cS$, and a dataset $\cD$, which is compliant with $\cM$, such that 
\#\normalfont
\EE_{\cD}\Biggl[ \frac{\text{SubOpt}\big(\texttt{Algo}(\cD);x\big)}{\sum_{h=1}^H\EE_{\pi^*}\Bigl[ \bigl( \phi(s_h,a_h)^\top \Lambda_h^{-1}\phi(s_h,a_h)\bigr) ^{1/2} \Biggiven s_1=x\Bigr]}  \Biggr] \geq {c},
\label{eq:minimax_lower}
\#
where $c>0$ is an absolute constant. Here $\EE_{\pi^*}$ is taken with respect to the trajectory induced by $\pi^*$ in the underlying MDP given the fixed matrix $\Lambda_h$. Meanwhile, $\EE_{\cD}$ is taken with respect to $\PP_\cD$, where $\texttt{Algo}(\cD)$ and $\Lambda_h$ depend on $\cD$.
\label{thm:lower}
\end{theorem}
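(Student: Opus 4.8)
The plan is to establish \eqref{eq:minimax_lower} by a two-point (Le Cam) argument on a minimal linear-bandit instance. Since the theorem only asks for the \emph{existence} of some linear MDP --- with $H$, $\cS$, $\cA$, and the feature dimension of our choosing --- together with an absolute constant, I would work with $H=1$, a singleton state space $\cS=\{x_0\}$, two actions $\cA=\{a_1,a_2\}$, feature dimension $d=2$, and features $\phi(x_0,a_i)=e_i$. Crucially, I would take the data-collecting process to be \emph{deterministic}: the logging policy plays $a_1$ and $a_2$ each exactly once (any fixed equal counts would also do). Then the Gram matrix $\Lambda_1=\diag(1+\lambda,\,1+\lambda)$ is a constant, not a function of $\cD$, so the denominator $\sum_{h}\EE_{\pi^*}\bigl[(\phi(s_h,a_h)^\top\Lambda_h^{-1}\phi(s_h,a_h))^{1/2}\bigr]$ in \eqref{eq:minimax_lower} equals the fixed scalar $(1+\lambda)^{-1/2}$ for \emph{both} of the competing MDPs below; this is what makes the ratio inside $\EE_\cD$ easy to control.

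Next I would set up the two hypotheses. For $b\in\{0,1\}$ let $\cM_b$ be the linear MDP whose reward vector is $\theta_1^{(b)}=\bigl(\tfrac12+(-1)^b\epsilon,\ \tfrac12-(-1)^b\epsilon\bigr)$, with rewards drawn $\mathrm{Bernoulli}$ around these means (Gaussian rewards, then truncated, work equally well), and the trivial transition $\cP_1(x_0\mid x_0,a)\equiv1$ realized by $\mu_1(x_0)=(1,1)$. For a small absolute constant $\epsilon$ one checks the normalization requirements of Definition~\ref{assump:linear_mdp} --- $\|\phi\|\le1$, $\max\{\|\mu_1(\cS)\|,\|\theta_1^{(b)}\|\}\le\sqrt d$, and $r_h\in[0,1]$ --- hold (adding a dummy coordinate gives strict slack if desired). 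In $\cM_0$ the optimal policy deterministically plays $a_1$ and in $\cM_1$ it plays $a_2$; in either case a policy that puts mass $p$ on the optimal arm has suboptimality exactly $2\epsilon(1-p)$. Because $H=1$ and $\cS$ is a singleton, only the two logged rewards distinguish $\cM_0$ from $\cM_1$, so $\mathrm{KL}\bigl(\PP_\cD^{(0)}\,\|\,\PP_\cD^{(1)}\bigr)=O(\epsilon^2)$; taking $\epsilon$ a sufficiently small absolute constant, Pinsker's inequality together with subadditivity of total variation over the two independent coordinates yields $\mathrm{TV}\bigl(\PP_\cD^{(0)},\PP_\cD^{(1)}\bigr)\le \tfrac14$.

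Then comes the two-point conclusion. Let $g(\cD)=\hat\pi_1(a_1\mid x_0)\in[0,1]$ denote the output mass on $a_1$ (if $\texttt{Algo}$ is randomized, take the conditional expectation over its internal randomness), and let $q_b$ be the $\PP_\cD^{(b)}$-expectation of the output mass on the arm optimal in $\cM_b$, so $q_0=\EE_0[g]$ and $q_1=\EE_1[1-g]$. If both $q_0>\tfrac34$ and $q_1>\tfrac34$, then $\EE_0[g]-\EE_1[g]>\tfrac12$, contradicting $|\EE_0[g]-\EE_1[g]|\le\mathrm{TV}\bigl(\PP_\cD^{(0)},\PP_\cD^{(1)}\bigr)\le\tfrac14$; hence $\min_b q_b\le\tfrac34$. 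For that $b$, since $\EE_\cD[\text{SubOpt}]=2\epsilon(1-q_b)\ge \epsilon/2$ while the denominator is the fixed constant $(1+\lambda)^{-1/2}$, we obtain $\EE_\cD\bigl[\text{SubOpt}(\texttt{Algo}(\cD);x_0)\,/\,(1+\lambda)^{-1/2}\bigr]\ge c$ for an absolute constant $c$ depending only on $\epsilon$ and $\lambda$, which is exactly \eqref{eq:minimax_lower} with $\cM=\cM_b$ and initial state $x_0$.

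The information-theoretic core here is a textbook Le Cam bound, so the step I expect to require the most care is the bookkeeping around the \emph{$\cD$-dependent denominator} in \eqref{eq:minimax_lower}: one must ensure the ratio inside $\EE_\cD$ is well defined and its value pinned down, which is precisely why I choose a deterministic logging policy so that $\Lambda_h$ --- and hence the denominator --- is a fixed scalar; with a randomized behavior policy one would instead restrict attention to a high-probability event on which $\Lambda_h$ is well conditioned and argue there. The only other thing to watch is that a single choice of $\epsilon$ simultaneously keeps $\mathrm{KL}(\PP_\cD^{(0)}\|\PP_\cD^{(1)})$ small (so the hypotheses are indistinguishable) and the reward gap $2\epsilon$ large (so the induced suboptimality is $\Omega(1)$ relative to the $\Theta(1)$ denominator) --- and these demands are compatible precisely because, with $O(1)$ logged samples per arm, the information-theoretically optimal separation $\epsilon$ is itself $\Omega(1)$.
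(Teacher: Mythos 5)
Your construction does establish the theorem as it is literally written: the two-arm instance is a bona fide linear MDP in the sense of Definition \ref{assump:linear_mdp}, the deterministic logging (one sample per arm) makes $\Lambda_1$, and hence the denominator in \eqref{eq:minimax_lower}, a fixed scalar, and the two-point step (KL of order $\epsilon^2$, Pinsker, the $[0,1]$-valued test $g$) correctly forces $\min_b \EE_b[\text{SubOpt}] \geq \epsilon/2$ for one of the two reward vectors, which gives the ratio bound with an absolute constant. The skeleton is the same as the paper's: a Le Cam reduction from the output policy to a binary test (compare Lemma \ref{lem:minimax_lower_split} and the test $\psi_{\texttt{Algo}}$ in \eqref{eq:define_algo_test}), followed by Pinsker's inequality.

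The substantive difference is what the two arguments deliver. The paper's hard family $\mathfrak{M}$ has arbitrary horizon $H \geq 2$, arbitrary action number $A$ (so $d = A+2$), and, crucially, arbitrary sample size $K$: the gap is calibrated to the data as $p^* - p \asymp (n_1+n_2)^{-1/2}$ (Equation \eqref{eq:set_diff_p}) so the two hypotheses stay indistinguishable however large $K$ is, and the $\cD$-dependent denominator is then pinned down on the event $m_1, m_2 \geq K/8$ (Equation \eqref{eq:lower_bound_event}) via Hoeffding, so that numerator and denominator both scale as $(H-1)/\sqrt{n_\ell}$. Your instance fixes $H=1$ and $K=2$, so the hardness exhibited is only that two samples cannot identify the better of two arms; the statement you prove says nothing about the regime $K \to \infty$ (or about $H$ and $d$), which is exactly the regime in which Theorem \ref{thm:lower} is invoked to certify that Theorem \ref{thm:regret_upper_linear} is minimax optimal and that the intrinsic-uncertainty term cannot be removed with more data. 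Your parenthetical ``any fixed equal counts would also do'' is not right as stated with $\epsilon$ an absolute constant (the KL grows linearly in the per-arm count); to cover general $K$ you would have to shrink $\epsilon \asymp K^{-1/2}$ and redo the bookkeeping, at which point the Gram matrix is no longer the only thing keeping the denominator under control and you are essentially reproducing the paper's calibration and concentration steps. So: valid for the statement as literally written, and a genuinely more elementary route, but substantially weaker than the paper's proof and not sufficient for the minimax-optimality claims the theorem is meant to support.
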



\begin{proof}[Proof of Theorem \ref{thm:lower}]
See Section \ref{sec:minimax_sketch} for a proof sketch and Appendix \ref{sec:appendix:lower} for a detailed proof.
\end{proof}

Theorem \ref{thm:lower} matches Theorem \ref{thm:regret_upper_linear} up to $\beta$ and absolute constants. Although Theorem \ref{thm:lower} only establishes the minimax optimality, Proposition \ref{prop:minimax_upper} further certifies the local optimality on the constructed set of worst-case MDPs via a more refined instantiation of the meta-algorithm (Algorithm  \ref{alg:pess_greedy_general}). See Appendix \ref{sec:minimax_refined_upper} for a detailed discussion.

\subsection{Pessimistic Value Iteration: Reproducing Kernel Hilbert Spaces}
\label{sec:rkhs}

\revise{In this section, we study the Pessimistic Value Iteration 
in greater generality 
with
kernel function approximation, 
covering the linear setting of Algorithm \ref{alg:pess_greedy} as a 
special case. 
The algorithm we develop in this part 
slightly modifies 
the generic Algorithm~\ref{alg:pess_greedy_general} 
with a data splitting trick:  
we use distinct (and reverse-ordered) subsets 
of the offline dataset for the value iteration at each time step.  
Despite a (limited) reduction in the size of available sample, 
this modification allows us to remove the dependence on the covering number 
of the kernel function classes in the analysis of suboptimality upper bounds, 
thereby being particularly favorable if the covering number is large.}

\subsubsection{Basics of Reproducing Kernel Hilbert Space}
\label{subsec:rkhs_intro}
To simplify the notations, we let $z =(s ,a )$ denote the state-action pair 
and denote $\cZ=\cS\times \cA$ for any $h\in [H]$. 
Without loss of generality, we view $\cZ$
as a compact subset of $\RR^d$ where the dimension $d$ is fixed. 
Let $\cH$ be an RKHS of functions on $\cZ$ 
with kernel function $K\colon \cZ\times \cZ\to \RR$, 
innder product $\langle \cdot, \cdot\rangle \colon \cH\times\cH\to \RR$ 
and RKHS norm $\|\cdot\|_\cH\colon \cH\to \RR$. 
By definition of RKHS, there exists a \emph{feature mapping} 
$\phi\colon \cZ\to \cH$ such that $f(z) = \langle f,\phi(z)\rangle_\cH$
for all $f\in \cH$ and all $z\in \cZ$. 
Also, the kernel function admits the feature representation  
$K(x,y) = \langle \phi(x),\phi(y)\rangle_\cH$ for any $x,y\in \cH$. 
Throughout the section, we assume that the kernel function is uniformly bounded as
$\sup_{z\in \cZ}K(z,z)<\infty$. Without loss of generality, 
we assume $\sup_{z\in \cZ}K(z,z)\leq 1$ hence $\|\phi(z)\|_\cH\leq 1$ for all $z\in \cZ$. 

Let $\cL^2(\cZ)$ be the space of square-integrable functions on $\cZ$ 
with respect to Lebesgue measure and let $\langle \cdot,\cdot\rangle_{\cL^2}$ 
be the inner product for $\cL^2(\cZ)$. 
The kernel function $K$ induces an integral operator $T_K\colon \cL^2(\cZ) \to \cL^2(\cZ)$
defined by 
\#\label{eq:def_itg_op}
T_K f(z) = \int_{\cZ} K(z,z') \cdot f(z') \ud z',\quad \forall f\in \cL^2(\cZ). 
\#
Mercer's Theorem \citep{steinwart2008support} implies that there 
exists a countable and non-increasing sequence 
of nonnegative eigenvalues $\{\sigma_i\}_{i\geq 1}$
for the integral operator $T_K$, and the associated eigenfunctions $\{\psi_i\}_{i\geq 1}$
form an orthogonal basis of $\cL^2(\cZ)$. 
Moreover, the kernel function admits a spectral representation 
$K(z,z') = \sum_{i=1}^\infty \sigma_i \cdot \psi_i(z)\cdot \psi_i(z')$ 
for all $z,z'\in \cZ$.  
The eigenfunctions $\{\psi_i\}_{i\geq 1}$ 
also enables us to write the RKHS $\cH$ as a subset of $\cL^2(\cZ)$:
\$
\cH = \bigg\{  f\in \cL^2(\cZ)\colon \sum_{i=1}^\infty \frac{\langle f, \psi_i\rangle_{\cL^2}^2}{\sigma_i} <\infty   \bigg\},
\$
such that the $\cH$-inner product of any $f,g\in \cH$ can be represented as 
\$
\langle f, g\rangle_{\cH} = \sum_{i=1}^n \sigma_i^{-1} \cdot 
\langle f, \psi_i\rangle_{\cL^2} \cdot \langle g, \psi_i\rangle_\cL^2.
\$
Then the scaled eigenfunctions $\{\sqrt{\sigma_i}\psi_i\}_{i\geq 1}$ 
form an orthogonal basis for $\cH$, and the feature mappling $\phi$ can be 
written as 
$
\phi(z) = \sum_{i=1}^\infty \sigma_i \psi_i(z) \cdot \psi_i
$ 
for any $z\in \cZ$.  
\revise{In particular, 
assuming $\sigma_j = 0$ once 
$j>\gamma$ for some $\gamma\in \NN$,  
there exists a feature mapping $\phi\colon \cX\to \RR^\gamma$,  
and $\cH$ is a $\gamma$-dimensional RKHS. 
In this case, 
the kernel function approximation approach 
we introduce here 
recovers the linear setting in Section~\ref{sec:pes_linear} with $d=\gamma$.} 

\subsubsection{Pessimistic Value Iteration for Kernel Function Approximation with Data Splitting}

\revise{We propose a slight modification of 
Algorithm~\ref{alg:pess_greedy_general} where 
we use a distinct fold of trajectories
for value iteration at each step $h\in [H]$. 
The goal of such a  data-splitting trick is to remove the   dependency across $h \in [H]$ in Algorithm~\ref{alg:pess_greedy_general}. In specific, recall that $\hat\BB_h\hat{V}_{h+1}$ in Line 4 of Algorithm~\ref{alg:pess_greedy_general} is obtained via the ridge regression given in Equation \eqref{eq:wlin}, which uses the whole dataset $\cD$. As a result, for all $h \in [H]$,  $\{ x_h^\tau, a_h^\tau \}_{\tau \in [K]}$ are $\hat V_{h+1}$ are statistically dependent.  
When quantifying the uncertainty of $\hat\BB_h\hat{V}_{h+1}$, to handle such a statistical  dependency, in the proof of  Theorem \ref{thm:regret_upper_linear}, we additionally seek uniform concentration over a value function class that contains $\hat V_{h+1}$. 
The covering number of such a function class partly determines the scaling parameter $\beta$ in Theorem  \ref{thm:regret_upper_linear}. 
When the function class has a large covering number, the uniform convergence approach for handling the  dependency between $\{ x_h^\tau, a_h^\tau \}_{\tau \in [K]}$ and $\hat V_{h+1}$ would result in an excessively large $\beta$ such that the uncertainty quantifier is loose. 
To resolve this issue, in the following, we split the dataset $\cD$ into $H$ parts $\{ \cD_h \}_{h\in[H]}$ and use each $\cD_h$ in the construction of $\hat\BB_h\hat{V}_{h+1}$. Such a data splitting mechanism removes the undesirable statistical  dependency. As we will show in Proposition \ref{prop:rkhs_decay}, for linear function approximation, with data splitting, it suffices to choose $\beta = \tilde \cO(\sqrt{d} H)$ instead of $\beta = \tilde \cO(dH)$ in Theorem  \ref{thm:regret_upper_linear}, where $\tilde \cO(\cdot )$ hides logarithmic factors.}

Specifically, in the sequel, we first split the trajectories $\tau\in[K]$ into $H$ disjoint 
and equally-sized folds 
$\cD_{h}$ for all $h\in [H]$.   
Without loss of generality, we assume that $K/H\in \ZZ$.  
In specific, trajectories with  $\tau \in \cI_H  = \{1,\dots, K_0 \}$ are used for the construction of $\hat\BB_h \hat{V}_{h+1}$ at time step $h=H$. Here we define $K_0 = K/H$ for simplicity. 
For all $h \in [H]$, let $\cI_h = \{ (H-h)\cdot K_0 +1, \ldots, (H-h + 1) \cdot K_0\}$. Then trajectories with $\tau \in \cI_h$ is used to construct $\hat\BB_h \hat{V}_{h+1}$.
An illustration of the data-splitting step is in Figure~\ref{fig:split}. 

\begin{figure}[htbp]
\centering
\begin{tikzpicture}
  \node{\includegraphics[width=4in]{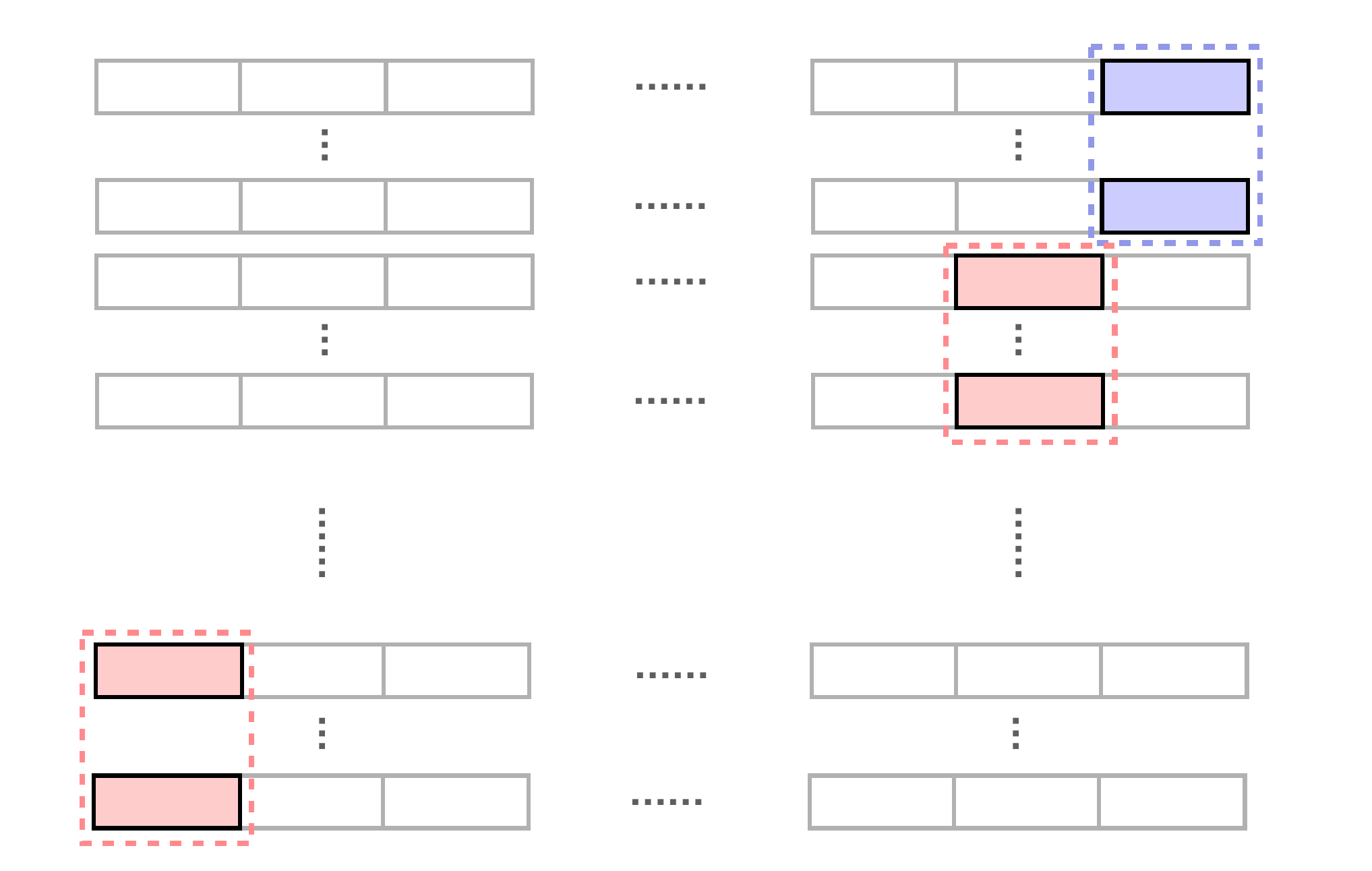}};
  \node [text=black!50] at  (-3.8, 3.3) {\footnotesize Time step };
  \node [text=black!50] at  (-3.8, 3) {\footnotesize  $h=1$};
  \node [text=black!50] at  (-2.8, 3) {\footnotesize  $h=2$};
  \node [text=black!50] at  (-1.8, 3) {\footnotesize  $h=3$};
  \node [text=black!50] at  (3.6, 3.1) {\footnotesize  $h=H$}; 
  \node [text=gray] at  (-5.8,2.55) {\footnotesize Trajectory $\tau=1$};
  \node [text=gray] at  (-5.35,1.65) {\footnotesize  $\tau=K/H$};
  \node [text=gray] at  (-5.6,1.1) {\footnotesize  $\tau=K/H+1$};
  \node [text=gray] at  (-5.4,0.22) {\footnotesize  $\tau=2K/H$};
  \node [text=blue!50] at  (5,2.1) {\small Fold $\cI_{H}$};
  \node [text=red!50] at  (2.8,-0.35) {\small Fold $\cI_{H-1}$};
  \node [text=red!50] at  (-3.7,-3.3) {\small Fold $\cI_{1}$};
  \node [text=gray] at  (-5.8,-1.75) {\footnotesize  $\tau=\frac{(H-1)K}{H} +1$};
  \node [text=gray] at  (-5.2,-2.75) {\footnotesize  $\tau=K$};
 \end{tikzpicture}
\caption{Illustration of the data-splitting step with a reverse ordering for all trajectories 
in the offline dataset. }
\label{fig:split}
\end{figure}
 
We then instantiate the pessimistic value iterations by constructing 
$\hat\BB_h\hat{V}_{h+1}$, $\Gamma_h$ and $\hat{V}_h$. 
To be specific,  
we define the empirical mean squared error (MSBE) as 
\$
M_h(f) = \sum_{\tau \in \cI_h}  \big( r_h^\tau + \hat{V}_{h+1}(x_{h+1}^\tau) - f(x_h^\tau,a_h^\tau) \big)^2
\$
at each step $h\in[H]$ for all $f\in \cH$, 
where only trajectories in fold $\cD_h$ are involved. 
The empirical Bellman update is obtained from 
a kernel ridge regression so that 
\$
(\hat\BB_h \hat{V}_{h+1})(z) = \hat{f}_h(z),\quad \text{where}~~ 
\hat{f}_h = \argmin_{f\in \cH} ~M_h(f) + \lambda \cdot \|f\|_\cH^2
\$
for some regularization parameter $\lambda>0$. 
Following the same arguments as in~\cite{yang2020function}, we note that $\hat{f}_h$
admits a closed-form solution
\#\label{eq:hatf_form}
\hat{f}_h(z) = k_h(z)^\top  (K_h + \lambda \cdot I)^{-1} y_h, 
\#
where we define the Gram matrix $K_h\in \RR^{K/H\times K/H}$ 
and the function $k_h\colon \cZ\to \RR^{K/H}$ as 
\#\label{eq:def_Kkh}
K_h = \big[K(z_h^\tau, z_h^{\tau'})\big]_{\tau,\tau'\in \cI_h } \in \RR^{K/H\times K/H},\quad 
k_h(z) = \big[ K(z_h^{\tau},z)\big]_{\tau\in \cI_h}^\top \in \RR^{K/H},
\#
and the entry of the response vector $y_h\in \RR^{K/H}$ corresponding to $\tau\in \cI_h$ is 
\#\label{eq:def_yh}
[y_h]_\tau = r_h^\tau + \hat{V}_{h+1}(x_{h+1}^\tau). 
\#
Moreover, we construct $\Gamma_h$ via
\#\label{eq:gamma_rkhs}
\Gamma_h(z) = \beta \cdot \lambda^{-1/2} \cdot \big( K(z,z) - k_h(z)^\top (K_h + \lambda I)^{-1} k_h(z) \big)^{1/2},
\#
where $\beta>0$ is a scaling parameter. 
Finally, we construct the pessimistic Q-function by 
\$
\hat{Q}_h(z) &= \min \big\{ \bar{Q}_h(z), H-h+1   \big\}^+,
\quad \text{where}~~
\bar{Q}_h(z) = \hat{f}_h(z) - \Gamma_h(z),\\
\hat{V}_h(x) &= \big\langle \hat{Q}_h(x,\cdot) , \hat\pi_h(\cdot\given x)\big\rangle_\cA,
\quad \text{where}~~
\hat\pi_h(\cdot\given x) = \argmax_{\pi_h}  \big\langle \hat{Q}_h(x,\cdot) , \hat\pi_h(\cdot\given x)\big\rangle_\cA.
\$

\begin{algorithm}[H]
\caption{Pessimistic Value Iteration (PEVI): RKHS Approximation with data splitting}\label{alg:pess_rkhs}
\begin{algorithmic}[1]
\STATE Input: Dataset $\cD=\{(x_h^\tau,a_h^\tau,r_h^\tau)\}_{\tau, h=1}^{K, H}$.
\STATE Initialization: Set $\hat{V}_{H+1}(\cdot) \leftarrow 0$.
\STATE Data splitting: Randomly split $[K]$ into disjoint and equal-sized folds $\{\cI_h\}_{h=1}^H$, $|\cI_h|=K/H$. 
\FOR{step $h=H,H-1,\ldots,1$}
\STATE Compute the Gram matrix $K_h\in \RR^{K/H}$, function $k_h\colon \cZ\to \RR^{K/H}$ and response $y_h\in \RR^{K/H}$ as Equations~\eqref{eq:def_Kkh} and~\eqref{eq:def_yh}. 
\hfill  {//Estimation}
\STATE Set $\Gamma_h(\cdot,\cdot) \leftarrow \beta \cdot \lambda^{-1/2} \cdot ( K(\cdot,\cdot\,;\cdot,\cdot) - k_h(\cdot,\cdot)^\top (K_h + \lambda I)^{-1} k_h(\cdot,\cdot)  )^{1/2} $. \hfill  {//Uncertainty}
\STATE Set $\overline{Q}_h(\cdot,\cdot) \leftarrow  k_h(\cdot,\cdot)^\top (K_h+\lambda I)^{-1}y_h$. \hfill  {//Pessimism} 
\STATE Set $\hat{Q}_h(\cdot,\cdot) \leftarrow \min\{\overline{Q}_h(\cdot,\cdot),H-h+1\}^+$. \hfill  {//Truncation}
\STATE Set $\hat{\pi}_h (\cdot \given \cdot) \leftarrow \argmax_{\pi_h}\langle \hat{Q}_h(\cdot, \cdot),\pi_h(\cdot\given \cdot)\rangle_{\cA}$.\hfill {//Optimization}
\STATE Set $\hat{V}_h(\cdot) \leftarrow \langle \hat{Q}_h(\cdot,\cdot),\hat\pi_h(\cdot \given \cdot)\rangle_{\cA}$.\hfill {//Evaluation} \label{alg:linear_Vhat}
\ENDFOR 
\STATE Output: $\pess(\cD) = \{\hat{\pi}_h\}_{h=1}^H$.
\end{algorithmic}
\end{algorithm}

\revise{Recall that we define the Bellman operator $\BB_h$
on any value function $V\colon \cS \to \RR$ 
in Equation~\eqref{eq:def_bellman_op}.  
We impose the following structural assumption for the kernel setting.}   

\begin{assumption}\label{assump:rkhs_bound}
Let $R_Q>0$ be some fixed constant and we define function class
$\cQ^* = \{f\in \cH\colon \|f\|_\cH \leq R_Q H\}$. Then
for any $h\in[H]$ and any $Q\colon \cS\times \cA\to [0,H]$, it holds that 
$\BB_h V\in \cQ^*$ for $V(x) = \max_{a\in \cA}Q(x,a)$. 
\end{assumption}

\revise{The above assumption states that 
the Bellman operator maps any bounded function into a bounded 
RKHS-norm ball.
When the RKHS has finite spectrum (i.e., $\sigma_j =0$ once $j>\gamma$ 
for some $\gamma\in \NN$), a sufficient condition for Assumption~\ref{assump:rkhs_bound} 
to hold 
is the linear MDP defined in Definition~\ref{assump:linear_mdp}.}

Besides the closeness assumption on the Bellman operator, 
we also define the maximal information gain~\citep{srinivas2009gaussian}
as a characterization of the complexity of $\cH$:
\#\label{eq:def_Gklambda}
G(n,\lambda) = \sup \big\{ 1/2 \cdot \log\det (I + K_\cC/\lambda) \colon 
\cC\subset \cZ,~|\cC|\leq n  \big\}.
\#
Here $K_\cC$ is the Gram matrix for the set $\cC$, defined 
similarly as Equation~\eqref{eq:def_Kkh}. 
\revise{In particular, when $\cH$ has $\gamma$-finite spectrum, $G(n,\lambda) = \cO(\gamma \cdot \log n)$ recovers the dimensionality of the linear space up to a logarithmic factor.  More importantly, information gain defined in \eqref{eq:def_Gklambda} offers a characterization of the effective dimension of   $\cH$ especially when $\cH$ is infinite-dimensional.}

The suboptimality of the output 
of Algorithm~\ref{alg:pess_rkhs}
is characterized by the following theorem. 

\begin{theorem}\label{thm:rkhs}
Suppose Assumption~\ref{assump:rkhs_bound} holds, and 
there exists some $\lambda\geq 1+1/K$ and $B>0$ satisfying 
\#\label{eq:rkhs_B}
2\lambda R_Q^2   + 
2   G(K/H,1+1/K) + 2 K/H  \cdot \log (1+1/K)+ 4   \log\big(H /\xi\big) 
 \leq (B/H)^2.
\#
We set $\beta=B$ in Algorithm~\ref{alg:pess_rkhs}.
Then with probability at least $1-\xi$ with repsect to $\PP_\cD$, 
it holds simultaneously 
for all $x\in \cS$ and all $h\in[H]$ that
{\normalfont  \$ 
&\text{SubOpt}\big( \pess (\cD); x \big) 
\leq 
2\sqrt{2}B \cdot \sum_{h=1}^H \EE_{\pi^*}
\Big[  \big\{\log\det\big( I+K_h(s_h,a_h)/\lambda \big) - 
\log\det(I+K_h/\lambda) \big\}^{1/2}
\Biggiven s_1=x\Big].  
\$}
Here $\EE_{\pi^*}$ is taken with respect to the trajectory induced by $\pi^*$
in the underlying MDP given the fixed Gram matrix $K_h\in \RR^{K\times K}$ 
and the fixed operator $k_h\colon \cZ\to \RR^K$ defined 
in Equation~\eqref{eq:def_Kkh}, and for any $z\in \cZ$, we define 
$K_h(z)\in \RR^{(K+1)\times(K_1)}$ as the Gram matrix for $\{z_h^\tau\}_{\tau\in[H]}\cup\{z\}$. 
\end{theorem}

\begin{proof}[Proof of Theorem~\ref{thm:rkhs}]
See Appendix~\ref{app:proof_thm_rkhs} for a detailed proof. 
\end{proof}

\revise{
Theorem~\ref{thm:rkhs} expresses the suboptimality upper bound 
in a generic form consisting of two parts: 
(i) a parameter $B>0$ 
that depends on the kernel function class, as well as 
(ii) an information quantity 
\#\label{eq:rkhs_id}
I_\cD =  \sum_{h=1}^H \EE_{\pi^*}\Big[ \big\{\log\det\big( I+K_h(s_h,a_h)/\lambda \big) - \log\det(I+K_h/\lambda) \big\}^{1/2}\Biggiven s_1=x\Big].
\#
that only depends on the optimal policy $\pi^*$ and the offline dataset. 
In the same spirit of our preceding results, 
pessimism eliminates 
the spurious correlation (c.f.~Equation~\eqref{eq:reg_decomp})
with a properly constructed uncertainty quantifier $\{\Gamma_h\}_{h=1}^H$. 
When the RKHS has $\gamma$-finite spectrum with $\sigma_j =0$ for all $j>\gamma$, 
$\cI_\cD$ reduces to the one in Theorem~\ref{thm:regret_upper_linear} 
for the linear setting (with data splitting). 
 
}

\revise{ 
In what follows, 
we interpret 
the generic bound in Theorem~\ref{thm:rkhs} 
under specific conditions on $\cH$, 
focusing on the resulting forms of the two components. 
We would see the effect of sample splitting in our discussion. 
Firstly, the parameter $B>0$ depends on the information gain $G(K/H,1+1/K)$, 
which can be viewed as a characterization of the complexity of $\cH$. 
We provide explicit choices of 
$B$  
under various eigenvalue decay conditions of $\cH$ that 
decide such complexity. 
}

\begin{assumption}[Eigenvalue Decay of $\cH$]\label{assump:rkhs_decay}
Let $\{\sigma_j\}_{j\geq 1}$ be the  
eigenvalues induced by the integral opretaor 
$T_K$ defined in Equation~\eqref{eq:def_itg_op}
and $\{\psi_j\}_{j\geq 1}$ be the associated eigenfunctions. 
We assume that $\{\sigma_j\}_{j\geq 1}$ satisfies one of the following conditions 
for some constant $\gamma>0$. 
\begin{enumerate}[(i)]
  \item $\gamma$-finite spectrum: $\sigma_j = 0$ for all $j>\gamma$, where $\gamma$ is a positive integer.
  \item $\gamma$-exponential decay: there exists some constants $C_1,C_2>0$, $\tau\in[0,1/2)$ and $C_\psi>0$ such that $\sigma_j \leq C_1 \cdot \exp(-C_2 \cdot j^\gamma)$ and $\sup_{z\in \cZ} \sigma_j^\tau \cdot |\psi_j(z)|\leq C_\psi$ for all $j\geq 1$. 
  \item $\gamma$-polynomial decay: there exists some constants $C_1>0$, $\tau\in[0,1/2)$ and $C_\psi>0$  such that $\sigma_j \leq C_1\cdot j^{-\gamma}$ and $\sup_{z\in \cZ} \sigma_j^\tau \cdot |\psi_j(z)|\leq C_\psi$ for all $j\geq 1$, where $\gamma>1$. 
\end{enumerate} 
\end{assumption}

The $\gamma$-finite spectrum condition is satisfied by the linear MDP~\citep{jin2020provably}
with feature dimension $\gamma$, 
and Algorithm~\ref{alg:pess_rkhs} reduces to the algorithm for linear MDP 
established in preceding sections \revise{(with data splitting)}.  
Also, the exponential and polynomial decay are 
relatively mild conditions compared to those in the literature. 
We 
refer the readers to Section 4.1 of~\cite{yang2020function} 
for a detailed discussion on the eigenvalue decay conditions.  
Under the conditions in Assumption~\ref{assump:rkhs_decay}, 
Proposition~\ref{prop:rkhs_decay} 
establishes the concrete choices of $B$ 
for Theorem~\ref{thm:rkhs}.

\begin{prop}\label{prop:rkhs_decay}
Under Assumptions~\ref{assump:rkhs_bound} and~\ref{assump:rkhs_decay}, 
we set $\lambda \geq  1+1/K$ and $\beta=B$ in Algorithm~\ref{alg:pess_rkhs}, where 
$$
  B = 
\begin{cases}
C \cdot H\cdot \big\{ \sqrt{\lambda} +  \sqrt{ \gamma \cdot \log(  KH/\xi)}\big\} & \qquad \gamma\text{-finite spectrum},\\
C \cdot H\cdot \big\{ \sqrt{\lambda} + \sqrt{(\log (KH/\xi))^{ 1+1/\gamma } }\big\} & \qquad \gamma\text{-exponential decay},\\
C  \cdot   H\cdot\big\{\sqrt{\lambda} + K^{\frac{d+1}{2(\gamma+d)}} H^{-\frac{d+1}{2(\gamma+d)}} \cdot \sqrt{\log(KH/\xi)} \big\} & \qquad \gamma\text{-polynomial decay}.
\end{cases}
$$
Here $C>0$ is an absolute constant that does not depend on $K$ or $H$. 
Then with probability at least $1-\xi$ with respect to $\cD$, it holds 
that 
$$
{\normalfont \textrm{SubOpt}\big( \pess (\cD); x \big)} \leq 
\begin{cases}
  C \cdot I_\cD \cdot H \cdot \big\{\sqrt{\lambda} + \sqrt{\gamma \cdot \log( KH/\xi)} \big\} & \qquad \gamma\text{-finite spectrum},\\
  C \cdot I_\cD \cdot H \cdot \big\{\sqrt{\lambda} + \sqrt{(\log KH/\xi)^{ 1+1/\gamma }  } \big\} & \qquad \gamma\text{-exponential decay},\\
  C \cdot I_\cD \cdot  H\cdot \big\{ \sqrt{\lambda } + K^{\frac{d+1}{2(\gamma+d)}}H^{-\frac{d+1}{2(\gamma+d)}}\cdot   \sqrt{\log(KH/\xi)}\big\} & \qquad \gamma\text{-polynomial decay},
  \end{cases}
$$
for some absolute constant $C>0$ that does not depend on $K$ or $H$. 
\end{prop}

\begin{proof}[Proof of Proposition~\ref{prop:rkhs_decay}]
  See Appendix~\ref{app:proof_prop_rkhs_decay} for a detailed proof.
\end{proof}

\revise{
Under $\gamma$-finite spectrum condition, 
taking $\lambda=1+1/K$ in Algorithm~\ref{alg:pess_rkhs} 
leads to a variant of Algorithm~\ref{alg:pess_greedy} with 
sample splitting;   
setting $\lambda=1+1/K$ 
instead of $\lambda=1$ does not 
change the order of upper bounds in 
Theorem~\ref{thm:regret_upper_linear} for linear MDP. 
Firstly, 
comparing 
$B = \tilde \cO(\sqrt{\gamma}  H)$ 
in Proposition~\ref{prop:rkhs_decay} 
to
$B = \tilde \cO(\gamma H)$ for linear MDP  
where $d=\gamma$ in Theorem~\ref{thm:regret_upper_linear},  
data splitting improves the upper bound by a factor of $\sqrt{\gamma}$ 
since it removes the dependence of $B$ 
on the covering number 
of the (linear) function class. 
Here $\tilde \cO(\cdot )$ hides logarithmic factors. 
On the other hand, 
in ideal settings such as the well-explored case of 
Corollary~\ref{cor:well_explore}, $\cI_\cD$ 
is of order $\tilde \cO(\sqrt{d}H\cdot |\cI_h|^{-1/2})$, 
where the reduction in sample size incurs an additional factor of $\sqrt{H}$. 
Thus, 
the data splitting approach is favorable 
if the horizon $H$ is of a smaller order than $d=\gamma$. 
In general, 
the data splitting approach 
improves sample efficiency if the kernel function class
has a covering number that is larger than $\exp(H)$. 
}

\revise{
To further understand the behavior of $\cI_\cD$ 
beyond the $\gamma$-finite spectrum setting,  
we now consider a special case where 
the offline dataset consists of 
i.i.d.~trajectories induced by some behavior policy. 
This offers a more clear illustration 
of the learning performance by certain 
population quantities that characterzie  how close the 
behavior policy is to $\pi^*$.

}

\paragraph{Offline dataset from i.i.d.~sampling.}
We study a special case where the offline dataset 
consist of i.i.d.~trajectories from some fixed 
behavior policy $\pi^b$;  
this enables us to translate $\cI_\cD$ 
into population quantities with specific choices of $\lambda$. 
The learning performance would depend on  
the ``coverage'' 
of $\pi^b$ for 
the optimal policy $\pi^*$, communicated by 
the following notion of ``effective dimension''. 

\begin{definition}[Effective dimension]
Let $K_h\in \RR^{K/H\times K/H}$ be defined in~\eqref{eq:def_Kkh} 
for all $h\in[H]$. 
Denote $\Sigma_h = \EE_{\pi^b}[\phi(z_h)\phi(z_h)^\top\given s_1=x]$, $\Sigma_h^* = \EE_{\pi^*}[\phi(z_h)\phi(z_h)^\top\given s_1=x]$, 
where $\EE_{\pi^*}$ is taken with respect to $(x_h,a_h)$ induced by 
the optimal policy $\pi^*$, and $\EE_{\pi^b}$ is 
similarly induced by the behavior policy $\pi^b$. 
We define the 
(sample) effective dimension as 
\#\label{eq:def_eff_dim_sample}
d_{\textrm{eff}}^{\text{sample}} = 
\sum_{h=1}^H \tr\big( (K_h + \lambda \cI_\cH)^{-1} \Sigma_h^*  \big)^{1/2}. 
\#
Moreover, we define the population effective dimension under $\pi^b$ as
\#\label{eq:def_eff_dim_pop}
d_{\textrm{eff}}^{\text{pop}} 
&= \sum_{h=1}^H \tr\big( (K/H\cdot \Sigma_h + \lambda  \cI_\cH)^{-1} \Sigma_h^*  \big)^{1/2}. 
\# 
\end{definition}

\begin{corollary}\label{cor:rkhs_iid}
Suppose $\cD$ consists of i.i.d.~trajectories sampled from 
behavior policy $\pi^b$, 
and Assumption~\ref{assump:rkhs_decay} holds; 
in case (iii) $\gamma$-polynomial decay, we additionally assume 
$\gamma(1-2\tau)>1$. 
In Algorithm~\ref{alg:pess_rkhs}, we set 
$B>0$ as in Proposition~\ref{prop:rkhs_decay} 
and 
$$
\lambda = 
\begin{cases}
C \cdot \gamma \cdot \log( K /\xi) & \qquad \gamma\text{-finite spectrum},\\
C \cdot \big[\log(K /\xi )\big]^{1+1/\gamma} & \qquad \gamma\text{-exponential decay},\\
C  \cdot  (K/H)^{\frac{2}{ \gamma(1-2\tau)-1}} \cdot \log(K /\xi) & \qquad \gamma\text{-polynomial decay},
\end{cases}
$$
where $C>0$ is a sufficiently large absolute constant that 
does not depend on $K$ or $H$. 
Then 
with probability at least $1-\xi$ with respect to $\cD$, 
it holds that  
\begin{equation} \label{eq:iid_rkhs_upp}
  {\normalfont \textrm{SubOpt}\big( \pess (\cD); x \big)} \leq 
\begin{cases}
C' \cdot d_{\text{eff}}^{\text{pop}} \cdot H \cdot \sqrt{ \gamma \cdot \log(KH/\xi)}) & \qquad \gamma\text{-finite spectrum},\\
C \cdot d_{\text{eff}}^{\text{pop}} \cdot H\cdot \sqrt{(\log(KH/\xi))^{1  + 1 \gamma }} & \qquad \gamma\text{-exponential decay},\\
C' \cdot d_{\text{eff}}^{\text{pop}} \cdot K^{\kappa^*} H^{\nu^*} \cdot \sqrt{\log(KH/\xi)} & \qquad \gamma\text{-polynomial decay}.
\end{cases}
\end{equation}
where $C'>0$ is an absolute constant  that does not 
depend on $K$ or $H$, and 
\$
\kappa^* = \frac{d+1}{2(\gamma +d)} + \frac{1}{ \gamma(1-2\tau)-1},\quad 
\nu^* = 1 - \frac{d+1}{2(\gamma+d)} - \frac{1}{ \gamma(1-2\tau)-1}.
\$
The same results also apply to $d_{\text{eff}}^{\text{sample}}$. 
\end{corollary}

\begin{proof}[Proof of Corollary~\ref{cor:rkhs_iid}]
See Appendix~\ref{app:subsec_cor_rkhs_iid} for a detailed proof. 
\end{proof}

\revise{  
Parallel to the linear setting, Corollary~\ref{cor:rkhs_iid} 
demonstrates the performance of our method in terms of 
$d_{\text{eff}}^{\text{pop}}$ that depends on the relationship 
between $\Sigma_h$ (from $\pi^b$) and $\Sigma_h^*$ (from $\pi^*)$. 
When $\Sigma_h$ and $\Sigma_h^*$ are 
close (i.e., $\pi^b$ covers $\pi^*$ well), 
we have $d_{\text{eff}}^{\text{pop}} \approx \tilde\cO(H^{3/2}K^{1/2})$; 
in this case, 
the suboptimality is of order $K^{-1/2}$ 
under $\gamma$-finite spectrum and $\gamma$-exponential decay, 
while for $\gamma$-polynomial decay we obtain a sublinear rate of 
$K^{\kappa^*-1/2}$. 

}



\section{Proof Sketch}

In this section, we sketch the proofs of the main results in Section \ref{sec:pess}. In Section \ref{sec:upper_sketch_general}, we sketch the proof of Theorem \ref{thm:regret_upper_bound_general}, which handles any general MDP. In Section \ref{sec:upper_sketch_linear}, we specialize it to the linear MDP, which is handled by Theorem \ref{thm:regret_upper_linear}. In Section \ref{sec:minimax_sketch}, we sketch the proof of Theorem \ref{thm:lower}, which establishes the information-theoretic lower bound.


\subsection{Suboptimality of PEVI: General MDP} \label{sec:upper_sketch_general}

Recall that we define the model evaluation errors $\{ \iota_h \} _{h=1}^H$ in Equation  \eqref{eq:def_iota}, which are based on the (action- and state-)value functions $\{ (\hat Q_{h}, \hat V_h) \}_{h=1}^H$ constructed by PEVI.
Also, recall that we define the $\xi$-uncertainty quantifiers $\{ \Gamma_h \}_{h=1}^H$ in Definition \ref{def:uncertainty_quantifier}. 
The key to the proof of Theorem \ref{thm:regret_upper_bound_general} is to show that for all $h \in [H]$, the constructed Q-function $\hat{Q}_h $ in Algorithm \ref{alg:pess_greedy_general} is a pessimistic estimator of the optimal Q-function $Q_h^*$. To this end, in the following lemma,  
we prove that under the event $\cE$ defined in Equation \eqref{eq:def_event_eval_err_general}, $\iota_h$ lies within $[0,2 \Gamma_h]$ in a pointwise manner for all $h \in [H]$. Recall that $\PP_{\cD}$ is the joint distribution of the data collecting process.

\begin{lemma}[Pessimism for General MDP]
Suppose that $\{\Gamma_h\}_{h=1}^H$ in Algorithm \ref{alg:pess_greedy_general} are $\xi$-uncertainty quantifiers. Under $\cE$ defined in Equation \eqref{eq:def_event_eval_err_general}, which satisfies $\PP_{\cD}(\cE)\geq 1-\xi$, we have
\label{lem:model_eval_err}
\#\label{eq:model_eval_err_bound}
0\leq \iota_h(x,a) \leq 2\Gamma_h(x,a),\quad \text{for all}~~(x,a)\in \cS\times \cA,~ h\in [H]. 
\#
\end{lemma}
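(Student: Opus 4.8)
The plan is to unfold the definition $\iota_h(x,a) = (\BB_h\hat V_{h+1})(x,a) - \hat Q_h(x,a)$ from Equation \eqref{eq:def_iota} and bound it from both sides under the event $\cE$, arguing separately according to whether the truncation in Algorithm \ref{alg:pess_greedy_general} is active. Throughout, I work pointwise in $(x,a)$ and $h$, so the whole argument is purely deterministic once we condition on $\cE$; the probabilistic content ($\PP_\cD(\cE)\geq 1-\xi$) is supplied verbatim by the hypothesis that $\{\Gamma_h\}_{h=1}^H$ is a $\xi$-uncertainty quantifier.

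First I would establish the lower bound $\iota_h\geq 0$. Since $\hat Q_h = \min\{\overline Q_h, H-h+1\}^+$ and $\overline Q_h = (\hat\BB_h\hat V_{h+1}) - \Gamma_h$, we have $\hat Q_h \leq \max\{\overline Q_h,0\}$ in general; the case $\overline Q_h \le 0$ gives $\hat Q_h = 0 \le (\BB_h\hat V_{h+1})(x,a)$ directly because $\BB_h$ maps nonnegative functions to nonnegative functions and $\hat V_{h+1}\ge 0$ (by induction on the truncation and $\hat V_{H+1}=0$) while $r_h\ge 0$, so $\iota_h\ge 0$. In the case $\overline Q_h>0$ we get $\hat Q_h \le \overline Q_h$, hence
\[
\iota_h(x,a) \;=\; (\BB_h\hat V_{h+1})(x,a) - \hat Q_h(x,a) \;\geq\; (\BB_h\hat V_{h+1})(x,a) - \overline Q_h(x,a) \;=\; (\BB_h\hat V_{h+1})(x,a) - (\hat\BB_h\hat V_{h+1})(x,a) + \Gamma_h(x,a) \;\geq\; 0,
\]
where the last inequality is exactly the event $\cE$ in Equation \eqref{eq:def_event_eval_err_general} (it yields $(\hat\BB_h\hat V_{h+1}) - (\BB_h\hat V_{h+1}) \ge -\Gamma_h$). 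This is essentially the computation already displayed in Equation \eqref{eq:w02}.

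Next I would prove the upper bound $\iota_h\leq 2\Gamma_h$. Here the truncation helps rather than hurts: since $\hat Q_h(x,a) \ge \min\{\overline Q_h(x,a), H-h+1\}^+ \ge \min\{\overline Q_h(x,a),\, (\BB_h\hat V_{h+1})(x,a)\}$ — using that $(\BB_h\hat V_{h+1})(x,a) \le H-h+1$ because $\hat V_{h+1}\le H-h$ (another easy induction using the truncation cap $H-(h{+}1)+1 = H-h$) and $r_h\le 1$, and that $(\BB_h\hat V_{h+1})(x,a)\ge 0$ — we consider the two sub-cases. If $\overline Q_h(x,a) \ge (\BB_h\hat V_{h+1})(x,a)$, then $\hat Q_h \ge (\BB_h\hat V_{h+1})$, so $\iota_h\le 0\le 2\Gamma_h$. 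Otherwise $\hat Q_h(x,a)\ge \overline Q_h(x,a) = (\hat\BB_h\hat V_{h+1})(x,a) - \Gamma_h(x,a)$, whence
\[
\iota_h(x,a) \;\leq\; (\BB_h\hat V_{h+1})(x,a) - (\hat\BB_h\hat V_{h+1})(x,a) + \Gamma_h(x,a) \;\leq\; \Gamma_h(x,a) + \Gamma_h(x,a) \;=\; 2\Gamma_h(x,a),
\]
again invoking $\cE$ for the bound $(\BB_h\hat V_{h+1}) - (\hat\BB_h\hat V_{h+1}) \le \Gamma_h$. Combining the two bounds gives Equation \eqref{eq:model_eval_err_bound} on $\cE$, and $\PP_\cD(\cE)\ge 1-\xi$ by Definition \ref{def:uncertainty_quantifier}.

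The only genuinely delicate point — the one I would be most careful about — is the bookkeeping of the uniform bounds $0\le \hat V_{h+1}\le H-h$ and consequently $0\le (\BB_h\hat V_{h+1})\le H-h+1$, which must be set up as a downward induction on $h$ (base case $\hat V_{H+1}\equiv 0$) and which is what makes the truncation step $\min\{\cdot,H-h+1\}^+$ harmless for the lower bound and actively useful for the upper bound. Everything else is a short case split on the sign/size of $\overline Q_h$ relative to $(\BB_h\hat V_{h+1})$, driven entirely by the defining inequality of $\cE$. No concentration argument is needed here; that burden is discharged elsewhere when one verifies that the particular $\Gamma_h$ in Equation \eqref{eq:w05} is a $\xi$-uncertainty quantifier.
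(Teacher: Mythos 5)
Your proof is correct and follows essentially the same route as the paper: a pointwise case analysis driven by the defining inequality of $\cE$, together with the bounds $0\le \hat V_{h+1}\le H-h$ and $r_h\in[0,1]$ to handle the truncation. The only cosmetic difference is in the upper bound, where the paper observes that on $\cE$ one has $\overline Q_h \le (\BB_h\hat V_{h+1})\le H-h+1$, so the cap at $H-h+1$ is never active and $\hat Q_h=\max\{\overline Q_h,0\}\ge \overline Q_h$, whereas you reach the same conclusion via a case split comparing $\overline Q_h$ with $(\BB_h\hat V_{h+1})$ (whose first case is in fact vacuous under $\cE$); both arguments are sound.
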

\begin{proof}[Proof of Lemma \ref{lem:model_eval_err}]
See Appendix \ref{sec:app:proof_lem_model_eval} for a detailed proof.
\end{proof}

In Equation \eqref{eq:model_eval_err_bound}, the nonnegativity of $\{ \iota_h \} _{h=1}^H$ implies the pessimism of $\{ \hat{Q}_h \} _{h=1}^H$, that is, $\hat Q_h \leq Q_h^*$ in a pointwise manner for all $h\in [H] $. 
To see this, note that the definition of $\{ \iota_h \} _{h=1}^H$ in Equation \eqref{eq:def_iota} gives
\# 
Q_h^* (x,a) - \hat Q_h (x,a)  \geq (\BB_h V_{h+1} ^* ) (x,a) -  (\BB _h \hat V_{h+1} ) (x,a) = (\PP_h V_{h+1}^*) (x,a) - ( \PP_h \hat V_{h+1}) (x,a)  \label{eq:pess_recursion1},
\#
which together with Equations \eqref{eq:def_transition_op} and \eqref{eq:dp_optimal_values} further implies
\#
Q_h^* (x,a) - \hat Q_h (x,a)& \geq \EE  \bigl [  \max_{a
' \in \cA} Q_{h+1}^*(s_{h+1}, a')  -  \la  \hat Q_{h+1} (s_{h+1}, \cdot ), \hat \pi_{h+1} (\cdot \given s_{h+1}) \ra_{\cA}  \biggiven s_h = x, a_h = a \bigr ] 
\notag \\
& \geq \EE  \bigl [    \la Q_{h+1}^*(s_{h+1}, \cdot )  - \hat Q_{h+1} (s_{h+1}, \cdot ), \hat \pi_{h+1} (\cdot \given s_{h+1})  \ra_{\cA}  \biggiven s_h = x, a_h = a \bigr ]\label{eq:pess_recursion2}
\#
for all $(x, a) \in \cS \times \cA$ and $h \in [H]$. Also, note that $\hat V_{H+1} = V_{H+1}^* = 0$. 
Therefore, Equation \eqref{eq:pess_recursion1} implies $Q_{H}^* \geq \hat Q_{H} $ in a pointwise manner. 
Moreover, by recursively applying Equation \eqref{eq:pess_recursion2}, we have $Q_h^* \geq \hat Q_h$ in a pointwise manner for all $h \in [H]$. In other words, Lemma \ref{lem:model_eval_err} implies that the pessimism of $\{ \hat{Q}_h \} _{h=1}^H$ holds with probability at least $1-\xi$ as long as $\{\Gamma_h \}_{h = 1}^ H $ in Algorithm \ref{alg:pess_greedy_general} are $\xi$-uncertainty quantifiers, which serves as a sufficient condition that can be verified. Meanwhile, the upper bound of $\{ \iota_h \} _{h=1}^H$ in Equation \eqref{eq:model_eval_err_bound} controls the underestimation bias of $\{ \hat{Q}_h \} _{h=1}^H$, which arises from pessimism. 


Based on Lemma \ref{lem:model_eval_err},
we are ready to prove Theorem \ref{thm:regret_upper_bound_general}.

\begin{proof}[Proof of Theorem \ref{thm:regret_upper_bound_general}]
    We upper bound the three terms on the right-hand side of Equation \eqref{eq:reg_decomp} respectively. Specifically, we apply Lemma \ref{lem:dec} by setting $\hat\pi=\{\hat\pi_h\}_{h=1}^H$ as the output of Algorithm \ref{alg:pess_greedy_general}, that is, $\hat \pi = \pess(\cD)$. 
As $\hat \pi_h$ is greedy with respect to $\hat Q_h $ for all $h \in [H]$, term (iii) in Equation \eqref{eq:reg_decomp} is nonpositive. 
Therefore, we have 
\# \label{eq:decomp_pess_reg}
    \text{SubOpt}\big(\pess(\cD);x \big) & \leq \underbrace{-\sum_{h=1}^H \EE_{\hat\pi}\big[ \iota_h(s_h,a_h) \biggiven s_1=x\big]}_{\displaystyle \rm{(i)}} + \underbrace{\sum_{h=1}^H \EE_{\pi^*}\big[ \iota_h(s_h,a_h) \biggiven s_1=x \big]}_{\displaystyle\rm{(ii)}}
\#
for all $x \in \cS$, where terms (i) and (ii) characterize the spurious correlation and intrinsic uncertainty, respectively. 
To upper bound such two terms, we invoke Lemma \ref{lem:model_eval_err}, which implies  
\#\label{eq:bound_term2_decomp}
{\displaystyle\rm{(i)}} \leq 0,\quad {\displaystyle\rm{(ii)}} \leq  2\sum_{h=1}^H \EE_{\pi^*}\big[ \Gamma_h(s_h,a_h) \biggiven s_1=x \big]
\#
for all $x \in \cS$. Combining Equations  \eqref{eq:decomp_pess_reg} and \eqref{eq:bound_term2_decomp}, we obtain Equation \eqref{eq:regret_upper_general} under $\cE$ defined in Equation \eqref{eq:def_event_eval_err_general}. 
Meanwhile, by Definition \ref{def:uncertainty_quantifier}, we have $\PP_{\cD}(\cE)\geq 1-\xi$. 
Therefore, we conclude the proof of Theorem~\ref{thm:regret_upper_bound_general}. 
\end{proof}

\subsection{Suboptimality of PEVI: Linear MDP}\label{sec:upper_sketch_linear}
Based on Theorem \ref{thm:regret_upper_bound_general}, we are ready to prove Theorem \ref{thm:regret_upper_linear}, which is specialized to the linear MDP defined in Definition \ref{assump:linear_mdp}.

\begin{proof}[Proof of Theorem \ref{thm:regret_upper_linear}] 
It suffices 
to show that $\{\Gamma_h \} _{h = 1}^H $ specified in   Equation \eqref{eq:w05} are $\xi$-uncertainty quantifiers, which are defined in Definition \ref{def:uncertainty_quantifier}. In the following lemma, we prove that such a statement holds when the regularization parameter $\lambda > 0$ and scaling parameter $\beta > 0$ in Algorithm \ref{alg:pess_greedy} are  properly chosen.
 
\begin{lemma}[$\xi$-Uncertainty Quantifier  for Linear MDP] 
Suppose that Assumption \ref{assump:data_generate} holds and the underlying MDP is a linear MDP. In Algorithm \ref{alg:pess_greedy}, we set
\$
\lambda=1,\quad \beta = c\cdot dH\sqrt{\zeta}, \quad \text{where~~}\zeta= \log(2dHK/\xi).
\$
Here $c>0$ is an absolute constant and $\xi \in (0,1)$ is the confidence parameter. It holds that $\{\Gamma_h\}_{h=1}^H$ specified in Equation \eqref{eq:w05} are $\xi$-uncertainty quantifiers, where  $\{\hat V_{h+1}\}_{h=1}^H$ used in Equation \eqref{eq:def_event_eval_err_general} are obtained by Algorithm \ref{alg:pess_greedy}.
\label{lemma:linear_MDP_uncertainty}
\end{lemma}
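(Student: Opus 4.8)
The plan is to verify Definition~\ref{def:uncertainty_quantifier} head-on: I would bound $|(\hat\BB_h\hat V_{h+1})(x,a)-(\BB_h\hat V_{h+1})(x,a)|$ by $\Gamma_h(x,a)=\beta(\phi(x,a)^\top\Lambda_h^{-1}\phi(x,a))^{1/2}$ for all $(x,a)\in\cS\times\cA$ and all $h\in[H]$ simultaneously, on an event of $\PP_{\cD}$-probability at least $1-\xi$. The first move is to exploit the linear structure of Definition~\ref{assump:linear_mdp}: since $\EE[r_h(s_h,a_h)\mid s_h=x,a_h=a]=\phi(x,a)^\top\theta_h$ and $(\PP_h f)(x,a)=\phi(x,a)^\top\int_{\cS}f(x')\,\mathrm{d}\mu_h(x')$ for any $f$, the map $\BB_h\hat V_{h+1}$ is exactly linear in $\phi$, namely $(\BB_h\hat V_{h+1})(x,a)=\phi(x,a)^\top w_h$ with $w_h=\theta_h+\int_{\cS}\hat V_{h+1}(x')\,\mathrm{d}\mu_h(x')$ and $\|w_h\|\le(H+1)\sqrt d\le2H\sqrt d$ (using $0\le\hat V_{h+1}\le H$ and $\max\{\|\theta_h\|,\|\mu_h(\cS)\|\}\le\sqrt d$). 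Plugging in the closed form of $\hat w_h$ from Equation~\eqref{eq:w18} and the identity $\Lambda_hw_h=\sum_{\tau=1}^K\phi(x_h^\tau,a_h^\tau)\phi(x_h^\tau,a_h^\tau)^\top w_h+\lambda w_h$, I get the decomposition
\[
\phi(x,a)^\top(\hat w_h-w_h)=\underbrace{\phi(x,a)^\top\Lambda_h^{-1}\textstyle\sum_{\tau=1}^K\phi(x_h^\tau,a_h^\tau)\,\varepsilon_h^\tau}_{\text{(a): stochastic error}}\;-\;\underbrace{\lambda\,\phi(x,a)^\top\Lambda_h^{-1}w_h}_{\text{(b): regularization bias}},
\]
where $\varepsilon_h^\tau=r_h^\tau+\hat V_{h+1}(x_{h+1}^\tau)-(\BB_h\hat V_{h+1})(x_h^\tau,a_h^\tau)$.

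Term (b) is harmless: by Cauchy--Schwarz with respect to $\Lambda_h^{-1}$ and the bound $\Lambda_h^{-1}\preceq\lambda^{-1}I$, it is at most $\sqrt\lambda\,\|w_h\|\cdot\|\phi(x,a)\|_{\Lambda_h^{-1}}\le2H\sqrt d\,\|\phi(x,a)\|_{\Lambda_h^{-1}}$ for $\lambda=1$. For term (a), Cauchy--Schwarz gives $|\text{(a)}|\le\|\phi(x,a)\|_{\Lambda_h^{-1}}\cdot\big\|\sum_{\tau=1}^K\phi(x_h^\tau,a_h^\tau)\,\varepsilon_h^\tau\big\|_{\Lambda_h^{-1}}$, so everything reduces to proving $\big\|\sum_{\tau=1}^K\phi(x_h^\tau,a_h^\tau)\,\varepsilon_h^\tau\big\|_{\Lambda_h^{-1}}^2\lesssim d^2H^2\zeta$ for all $h\in[H]$ on a $1-\xi$ event. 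This is the crux and the main obstacle: $\hat V_{h+1}$ depends on the \emph{entire} dataset $\cD$, so $\{\varepsilon_h^\tau\}_{\tau=1}^K$ is not a martingale-difference sequence for the natural filtration of the compliant data, and a self-normalized concentration bound cannot be invoked directly at $\hat V_{h+1}$.

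I would resolve this by a uniform-concentration (covering) argument. By construction $\hat V_{h+1}\in\mathcal V$, the parametric class of functions $x\mapsto\max_{a\in\cA}\{\min\{\phi(x,a)^\top w-\beta(\phi(x,a)^\top A^{-1}\phi(x,a))^{1/2},\,H\}^+\}$ with $\|w\|\le2H\sqrt{dK}$ (a crude bound following from the closed form of $\hat w_{h+1}$, $\|\phi\|\le1$, and $\lambda=1$) and $A\succeq\lambda I$. For each \emph{fixed} $V\in\mathcal V$, Definition~\ref{def:comp} makes the summands $r_h^\tau+V(x_{h+1}^\tau)-(\BB_hV)(x_h^\tau,a_h^\tau)$ conditionally mean zero given $\{(x_h^j,a_h^j)\}_{j\le\tau}\cup\{(r_h^j,x_{h+1}^j)\}_{j<\tau}$ and bounded by $H+1$, so a standard self-normalized tail inequality for vector-valued martingales (as in \cite{jin2020provably}) gives, with probability at least $1-\delta$, $\big\|\sum_{\tau}\phi(x_h^\tau,a_h^\tau)(r_h^\tau+V(x_{h+1}^\tau)-(\BB_hV)(x_h^\tau,a_h^\tau))\big\|_{\Lambda_h^{-1}}^2\lesssim H^2\big(\log(\det\Lambda_h/\det(\lambda I))+\log(1/\delta)\big)\lesssim H^2\big(d\log(1+K)+\log(1/\delta)\big)$. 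I then pick a minimal $\epsilon$-net $\mathcal N_\epsilon$ of $\mathcal V$ in the sup-norm, whose log-cardinality is $O\big(d\log(1+HK/\epsilon)+d^2\log(1+d^{1/2}\beta^2/(\lambda\epsilon^2))\big)$, union-bound over $\mathcal N_\epsilon$ and over $h\in[H]$ with $\delta=\xi/(H|\mathcal N_\epsilon|)$, and control the discretization error: replacing $\hat V_{h+1}$ by its nearest net point perturbs each $\varepsilon_h^\tau$ by at most $2\epsilon$, hence perturbs $\big\|\sum_\tau\phi(x_h^\tau,a_h^\tau)\varepsilon_h^\tau\big\|_{\Lambda_h^{-1}}$ by at most $2K\epsilon/\sqrt\lambda$. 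Choosing $\epsilon=dH/K$ makes this perturbation $O(dH)$, and with $\zeta=\log(2dHK/\xi)$ one obtains $\big\|\sum_\tau\phi(x_h^\tau,a_h^\tau)\varepsilon_h^\tau\big\|_{\Lambda_h^{-1}}^2\le C\,d^2H^2\zeta$ simultaneously for all $h\in[H]$ on an event of probability at least $1-\xi$. Combining with term (b), $|(\hat\BB_h\hat V_{h+1})(x,a)-(\BB_h\hat V_{h+1})(x,a)|\le(C'dH\sqrt\zeta+2H\sqrt d)\,\|\phi(x,a)\|_{\Lambda_h^{-1}}\le\beta\,(\phi(x,a)^\top\Lambda_h^{-1}\phi(x,a))^{1/2}=\Gamma_h(x,a)$ once $c$ in $\beta=c\,dH\sqrt\zeta$ is chosen large enough, which is exactly the event $\cE$ of Equation~\eqref{eq:def_event_eval_err_general}; thus $\{\Gamma_h\}_{h=1}^H$ is a $\xi$-uncertainty quantifier.
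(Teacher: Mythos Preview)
Your proposal is correct and follows essentially the same approach as the paper's proof: the decomposition into a regularization-bias term and a stochastic-error term, the uniform-concentration (covering) argument over the parametric class of value functions to handle the data-dependence of $\hat V_{h+1}$, the self-normalized martingale bound for each fixed $V$ via Assumption~\ref{assump:data_generate}, the $O(d^2)$ log-covering-number estimate, and the choice $\epsilon=dH/K$ all match the paper's argument. The only differences are cosmetic (constants such as $2H\sqrt d$ versus $H\sqrt d$, and your bound on the net perturbation in norm rather than squared norm), and none of them affects the validity or the final $O(dH\sqrt\zeta)$ scale.
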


\begin{proof}[Proof of Lemma \ref{lemma:linear_MDP_uncertainty}]
See Appendix \ref{sec:app:proof_prop_linear} for a detailed proof.
\end{proof}

As Lemma \ref{lemma:linear_MDP_uncertainty} proves that $\{ \Gamma_h \} _{h=1}^H $ specified in Equation \eqref{eq:w05} are $\xi$-uncertainty quantifiers, $\cE$ defined in Equation \eqref{eq:def_event_eval_err_general} satisfies $\PP_{\cD}(\cE)\geq 1-\xi$. Recall that $\PP_{\cD}$ is the joint distribution of the data collecting process. By specializing Theorem \ref{thm:regret_upper_bound_general} to the linear MDP, we have  
\$
\text{SubOpt}\big(\pess (\cD);x \big) &  \leq 2\sum_{h=1}^H\EE_{\pi^*}\big[ \Gamma_h(s_h,a_h) \biggiven s_1=x\big]   \notag \\
& =2 \beta \sum_{h=1}^H\EE_{\pi^*}\Bigl[ \bigl( \phi(s_h,a_h)^\top \Lambda_h^{-1}\phi(s_h,a_h)\bigr) ^{1/2} \Biggiven s_1=x\Bigr]
\$
for all $x \in \cS$ under $\cE$ defined in Equation \eqref{eq:def_event_eval_err_general}. Here the last equality follows from Equation \eqref{eq:w05}. Therefore, we conclude the proof of Theorem \ref{thm:regret_upper_linear}. 
\end{proof}

\subsection{Minimax Optimality of PEVI} \label{sec:minimax_sketch}

In this section, we sketch the proof of Theorem \ref{thm:lower}, which 
establishes the minimax optimality of Theorem \ref{thm:regret_upper_linear}
for the linear MDP. 
 Specifically, in Section \ref{sec:lower_minimax_elements}, we construct a class $\mathfrak{M}$ of linear MDPs and a worst-case dataset $\cD$, while in Section \ref{sec:lower_minimax_sketch}, we prove Theorem \ref{thm:lower} via the information-theoretic lower bound. 
 


\subsubsection{Construction of a Hard Instance} \label{sec:lower_minimax_elements}

In the sequel, we construct a class $\mathfrak{M}$ of linear MDPs and a worst-case dataset $\cD$, which is compliant with 
the underlying MDP as defined in Definition \ref{def:comp}. 
 
\vskip4pt
\noindent{\bf Linear MDP:}
We define the following class of linear MDPs
\#\label{eq:define_hard_class}
\mathfrak{M}=\big\{M(p_1,p_2,p_3):p_1,p_2,p_3\in [1/4, 3/4] ~\text{with} ~ p_3 = \min\{p_1, p_2\} \big\},
\#
where $M(p_1,p_2,p_3)$ is an episodic MDP with the horizon $H\geq 2$, state space $\cS=\{x_0,x_1,x_2\}$, and action space $\cA = \{b_j\}_{j=1}^A$ with $|\cA|=A \geq 3$. In particular, we fix the initial state as $s_1 = x_0$. For the transition kernel, at the first step $h = 1$, we set 
\#\label{eq:w666}
&\cP_1(x_1\given x_0,b_1) = p_1, \quad \cP_1(x_2\given x_0,b_1) = 1-p_1 ,\notag\\
&\cP_1(x_1\given x_0,b_2) = p_2,\quad \cP_1(x_2\given x_0,b_2) = 1-p_2, \notag\\
&\cP_1(x_1\given x_0,b_j) = p_3,\quad \cP_1(x_2\given x_0,b_j) = 1-p_3,\quad \text{for all}~~ j\in \{ 3, \ldots, A\}.
\#
Meanwhile, at any subsequent step $h \in \{2, \ldots, H\}$, we set
$$
\cP_h(x_1\given x_1,a)= \cP_h(x_2\given x_2,a) = 1,\quad  \text{for all}~~ a\in \cA.
$$
In other words, $x_1, x_2 \in \cS$ are the absorbing states. Here $\cP_1(x_1\given x_0,b_1)$ abbreviates $\cP_1(s_2 = x_1\given s_1 = x_0, a_1 = b_1)$. For the reward function, we set 
\#\label{eq:w888}
&r_1 (x_0, a) = 0, \quad \text{for all}~~a\in \cA,\notag\\
&r_h(x_1, a) = 1, \quad r_h(x_2, a) = 0,\quad \text{for all}~~a\in \cA,~ h \in \{2, \ldots, H\}.
\#
See Figure \ref{fig:mdp_illu} for an illustration. Note that $M(p_1,p_2,p_3)$ is a linear MDP, which is defined in Definition \ref{assump:linear_mdp} with the dimension $d = A+2$. To see this, we set the corresponding feature map $\phi \colon \cS \times \cA  \rightarrow \RR^{d}$ as 
\#\label{eq:lower_phi}
& \phi(x_0,b_j)=(\be_j,0, 0 )\in \RR^{A+2},\quad   \text{for all}~~ j \in [A], \notag \\
& \phi(x_1,a) = (\mathbf{0}_A,1, 0)\in \RR^{A+2},\quad \text{for all}~~ a\in \cA,\notag \\
 & \phi(x_2,a) = (\mathbf{0}_A,  0, 1)\in \RR^{A+2},\quad \text{for all}~~ a\in \cA,
\#
where $\{\be_j\}_{j=1}^A$ and ${\bf 0}_A $ are the canonical basis and zero vector in $\RR^A$, respectively.  

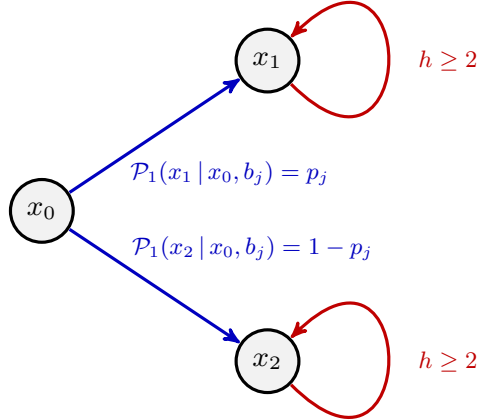
\begin{figure} [hpb]
\centering
\begin{tikzpicture}[->,>=stealth', very thick, main node/.style={circle,draw}]

\node[main node, text=black, circle, draw=black, fill=black!5, scale=1.2] (1) at  (0,0) {\small $x_0$};
\node[main node, text=black, circle, draw=black, fill=black!5, scale=1.2] (2) at  (3,2) {\small $x_1$};
\node[main node, text=black, circle, draw=black, fill=black!5, scale=1.2] (3) at  (3,-2) {\small $x_2$};

\draw[->] (1) edge [draw=blue!75!black] (2);
\draw[->] (1) edge [draw=blue!75!black] (3);
\draw[->] (2) edge [looseness=10, out= 315, in=45, draw=red!75!black] (2);
\draw[->] (3) edge [looseness=10, out= 315, in=45, draw=red!75!black] (3);

\node[text=blue!75!black] at  (2.5,0.5) {\small $\cP_1(x_1\given x_0,b_j)=p_j$};
\node[text=blue!75!black] at  (2.8,-0.5) {\small $\cP_1(x_2\given x_0,b_j)=1-p_j$};

\node[text=red!75!black] at  (5.4,2) {\small $h\geq 2$};
\node[text=red!75!black] at  (5.4,-2) {\small $h\geq 2$};

\end{tikzpicture}
\vspace*{-10mm}
\caption{An illustration of the episodic MDP $\cM = M(p_1,p_2,p_3)\in \mathfrak{M}$ with the state space~$\cS = \{ x_0, x_1, x_2\}$ and action space $\cA= \{ b_j \}_{j=1}^A$. Here we fix the initial state as $s_1 = x_0$, where the agent takes the action $a \in \cA$ and transits into the second state 
$s_2\in \{x_1,x_2\}$. 
At the first step $h=1$, the transition probability satisfies $\cP_1(x_1\given x_0,b_j)=p_j$ and $\cP_1(x_2\given x_0,b_j)=1-p_j$ for  all $j \in [A]$, 
where for notational simplicity, we define $p_j = p_3$ for all $j\in \{ 3, \ldots, A\}$. 
Also, $x_1, x_2 \in \cS$ are the absorbing states. Meanwhile, the reward function satisfies    $r_h (x_0, a) = 0$, $r_h(x_1,a)=1$,  and $r_h(x_2,a)=0$ for all $a\in \cA$ and $h \in [H]$.}
\label{fig:mdp_illu}
\end{figure}


As $x_1, x_2 \in \cS$ are the absorbing states, the optimal policy $\pi_1^*$ at the first step $h=1$ is a deterministic policy, which by Equation \eqref{eq:w888} selects the action $a \in \cA$ that induces the largest transition probability into the desired state $x_1$. In other words, at the first step $h=1$, we have
\#\label{eq:w996}
\pi_1^*(b_{j^*} \given x_0) = 1, \quad\text{where~~} j^* = \argmax _{ j \in \{1,2\}} p_j .
\#
Here we assume without loss of generality $p_1 \neq p_2$ in Equation \eqref{eq:w666}. Meanwhile, at any subsequent step $h \in \{2, \ldots, H\}$, an arbitrary policy $\pi_h$ is optimal, as the action $a \in \cA$ selected by $\pi_h$ does not affect the transition probability. Therefore, for any policy $\pi = \{\pi_h\}_{h=1}^H$,  the suboptimality of  $\pi$ for the linear MDP $\cM=M(p_1,p_2,p_3)$ takes the form  
\# \label{eq:suboptimality_hard_instance}
\text{SubOpt}(\cM,\pi;x_0) =  p_{j^*} \cdot (H-1) - \sum_{j=1}^A p_j\cdot   \pi_1(b_j\given x_0)\cdot (H-1),
\#
where for notational simplicity, we define $p_j = p_3 $ for all $j\in \{ 3, \ldots, A\}$. Here with an abuse of notation, we incorporate the explicit dependency on the underlying MDP $\cM\in \mathfrak{M}$ into the suboptimality $\text{SubOpt}(\pi;x_0)$.

\vskip4pt
\noindent{\bf Dataset:} We specify the worst-case data collecting process $\PP_{\cD}$ as follows. Given a linear MDP $\cM\in\mathfrak{M}$, the dataset $\cD=\{(x_h^\tau, a_h^\tau  , r_h^\tau )\}_{\tau, h = 1}^{K, H}$ consists of $K$ trajectories starting from the same initial state $x_0$, that is, $x_1 ^\tau = x_0$ for all $\tau \in [K]$. 
The initial  actions $\{a_1^\tau \}_{\tau = 1}^K$ 
are predetermined. The subsequent states $\{ x_h^\tau  \}_{\tau \in [K], h \geq 2} $
are sampled from the underlying MDP $\cM = M(p_1,p_2,p_3)$, while 
the subsequent actions $\{ a_h^\tau  \}_{\tau \in [K], h \geq 2}   
$ 
are arbitrarily chosen, as they do not affect the state transitions. The state transitions in the different trajectories are independent. 
The immediate reward $r_h^\tau$ satisfies 
$r_h^\tau = r_h (x_h^\tau, a_h^\tau) $. 
Note that 
such a dataset $\cD$ satisfies Assumption \ref{assump:data_generate}, that is, $\cD$ is compliant with the linear MDP $\cM \in \mathfrak{M}$.  


We define 
\#\label{eq:w88888}
n_j = \sum_{\tau =1}^K \ind \{ a_1^\tau = b_j\},\quad\{ \kappa _j^{i}\}_{i = 1}^{n_j} = \bigl\{ r_2^\tau \colon a_1^\tau = b_j~\text{with}~ \tau \in [K] \bigr\},\quad \text{for all}~~j \in [A].
\# 
In other words, assuming that $ 1\leq \tau_1< \tau_2< \cdots < \tau_{n_j} \leq K $ are the episode indices such that $a_1^{\tau_{i}} = b_j$ for all $i \in [n_j]$, we define $\kappa_{j}^i = r_2 ^{\tau_i}$ for all $j \in [A]$.
By such a construction,  $\{ \kappa _j^{i}\}_{i,j = 1}^{n_j, A}$   are the realizations of $K$ independent Bernoulli random variables, which satisfy
\#\label{eq:bernoulli_kappa}
\EE_{\cD} [ \kappa _{j}^i ] = p_j, \quad \text{for all}~~i \in [n_j],~j \in [A].
\#
Note that knowing the value of the immediate reward $r_2^\tau $ is sufficient for determining the value of the second state $x_{2}^\tau$.
Meanwhile, recall that $x_1, x_2 \in \cS$ are the absorbing states. 
Therefore, for learning  the optimal policy $\pi^*$, the original dataset
$\cD$ contains the same information as the reduced dataset
$\cD_1 = \{(x_1^\tau, a_1^\tau, x_2^\tau, r_2^\tau) \}_{\tau=1}^K $,
where the randomness only comes from the state transition at the first step $h=1$ of each trajectory $\tau \in [K]$. 
Correspondingly, the probability of observing the dataset $\cD_1$ takes the form 
\#\label{eq:data_likelihood}
\PP_{\cD \sim \cM   }  ( \cD_1  )
& = \prod  _{\tau =1}^K \PP _{\cM} \bigl (r_2(s_2, a_2) = r_2^\tau \biggiven s_1 = x_1^\tau, a_1 = a_1^\tau \big ) \notag \\
& = \prod_{j=1}^A \Big(\prod_{i=1}^{n_j} p_j^{\kappa_{j}^i } \cdot (1-p_j)^{1-\kappa_{j}^{i} }\Big) = \prod_{j=1}^A\Big(p_j^{\sum_{i=1}^{n_j} \kappa _{j}^i} \cdot (1-p_j)^{n_j-\sum_{i=1}^{n_j}\kappa _{j}^i }\Big).
\#
Here $\PP_{\cD\sim \cM}$ denotes the randomness of the dataset $\cD$, which is compliant with the underlying MDP $\cM = M(p_1,p_2,p_3)$, while 
 $\PP_{\cM} $  denotes the randomness of the immediate rewards and state transitions. In the   
      second equality, we apply the definition of $\{ \kappa _j^{i}\}_{i = 1}^{n_j}$ in Equation \eqref{eq:w88888}.  
By such a definition, $\PP_{\cD \sim \cM} (\cD_1)$ in Equation \eqref{eq:data_likelihood} is the likelihood of the linear MDP $\cM \in \mathfrak{M}$ given the reduced dataset $\cD_1$ (or equivalently, the original dataset $\cD$, assuming that the subsequent actions $\{ a_h^\tau  \}_{\tau \in [K], h \geq 2}$ are predetermined). 


\subsubsection{Information-Theoretic Lower Bound} \label{sec:lower_minimax_sketch}

The proof of Theorem \ref{thm:lower} is based on the Le Cam method \citep{le2012asymptotic,yu1997assouad}.
Specifically, we construct two linear MDPs $\cM_1, \cM_2 \in \mathfrak{M}$, where the class $\mathfrak{M}$  of linear MDPs is defined in Equation \eqref{eq:define_hard_class}. Such a construction ensures that (i) the distribution of the dataset $\cD$, which is compliant with the underlying MDP, is similar across $\cM_1, \cM_2 \in \mathfrak{M}$, and (ii) the suboptimality of any policy $\pi$, which is constructed based on the dataset, is different across $\cM_1, \cM_2 \in \mathfrak{M}$. In other words, it is hard to distinguish $\cM_1, \cM_2 \in \mathfrak{M}$ based on $\cD$, while $\pi$ obtained from $\cD$ can not achieve a desired suboptimality for $\cM_1, \cM_2 \in \mathfrak{M}$ simultaneously. Such a construction captures the fundamental hardness of offline RL for the linear MDP. 

For any $p, p^* \in [1/4, 3/4]$, where $p < p^*$, we set 
\#\label{eq:w999}
\cM_1 = M(p^*, p , p),\quad \cM_2 = M(p, p^*, p).
\#
Based on $\cD$, whose likelihood is specified in Equation \eqref{eq:data_likelihood}, we aim to test whether the underlying MDP is $\cM_1$ or $\cM_2$. The following lemma establishes a reduction from learning the optimal policy $\pi^*$ to testing the underlying MDP $\cM \in \mathfrak{M}$. Recall that for any $\ell \in \{1,2\}$, $n_\ell$ is defined in Equation \eqref{eq:w88888}.

\begin{lemma}[Reduction to Testing]
  For the dataset $\cD$ specified in Section \ref{sec:lower_minimax_elements}, the output $ \texttt{Algo}(\cD)$ of any algorithm  satisfies 
\$
&\max_{\ell \in \{ 1,2\} } \sqrt{n_\ell}\cdot \EE_{\cD\sim \cM_\ell}\Big [\text{SubOpt}\big (\cM_\ell,\texttt{Algo}(\cD);x_0 \big ) \Bigr ] \\
&\qquad \geq  \frac{\sqrt{n_1n_2}}{\sqrt{n_1}+\sqrt{n_2}} \cdot (p^*-p) \cdot (H-1) \cdot  \Big(\EE_{\cD\sim \cM_1}\bigl[1-{\pi}_1(b_1\given x_0)\bigr] + \EE_{\cD\sim \cM_2}\bigl[ {\pi}_1(b_1\given x_0)\bigr]\Big), 
\$
where $\pi=\{\pi_h\}_{h=1}^H = \texttt{Algo}(\cD)$. For any $\ell \in \{1,2\}$, $\EE_{\cD\sim \cM_\ell}$ is taken with respect to the randomness of  $\cD$, which is compliant with the underlying MDP $\cM_\ell$.
\label{lem:minimax_lower_split}
\end{lemma}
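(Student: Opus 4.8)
The plan is to run a two-point (Le Cam) comparison over the pair $\cM_1 = M(p^*,p,p)$ and $\cM_2 = M(p,p^*,p)$. First I would plug these two instances into the closed form for the suboptimality in Equation \eqref{eq:suboptimality_hard_instance}. Since $p^* > p$, the unique optimal first action is $b_1$ in $\cM_1$ (so $j^* = 1$) and $b_2$ in $\cM_2$ (so $j^* = 2$); moreover, by the convention $p_j = p_3 = p$ for all $j \in \{3,\dots,A\}$, every action other than the optimal one carries transition probability exactly $p$ into the good absorbing state $x_1$ in both instances. Hence Equation \eqref{eq:suboptimality_hard_instance} collapses to
\$
\text{SubOpt}(\cM_1,\pi;x_0) = (p^*-p)(H-1)\bigl(1-\pi_1(b_1\given x_0)\bigr), \qquad \text{SubOpt}(\cM_2,\pi;x_0) = (p^*-p)(H-1)\bigl(1-\pi_1(b_2\given x_0)\bigr)
\$
for $\pi = \{\pi_h\}_{h=1}^H = \texttt{Algo}(\cD)$.

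Next, using that $\pi_1(\cdot\given x_0)$ is a probability distribution on $\cA$, I would bound $1-\pi_1(b_2\given x_0) = \pi_1(b_1\given x_0) + \sum_{j\ge 3}\pi_1(b_j\given x_0) \ge \pi_1(b_1\given x_0)$, so that, after taking $\EE_{\cD\sim\cM_\ell}$, both expected suboptimalities are controlled from below in terms of the single scalar quantities $\EE_{\cD\sim\cM_1}[1-\pi_1(b_1\given x_0)]$ and $\EE_{\cD\sim\cM_2}[\pi_1(b_1\given x_0)]$, each scaled by $(p^*-p)(H-1)$.

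Finally I would invoke the elementary fact that for nonnegative reals $w_1,w_2,a_1,a_2$,
\$
\max\{w_1 a_1,\, w_2 a_2\} \;\ge\; \frac{w_1 w_2}{w_1 + w_2}\,(a_1 + a_2),
\$
which follows by taking the convex combination of $w_1 a_1$ and $w_2 a_2$ with weight $w_2/(w_1+w_2)$ on the first term (this choice equalizes the two coefficients). Applying it with $w_\ell = \sqrt{n_\ell}$ and $a_\ell = \EE_{\cD\sim\cM_\ell}\bigl[\text{SubOpt}(\cM_\ell,\texttt{Algo}(\cD);x_0)\bigr] \ge 0$, and then substituting the two displays above to lower bound $a_1 + a_2$, gives exactly the claimed inequality.

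The computation is short, so there is no serious obstacle; the only points needing care are structural. One must check that placing probability on the auxiliary actions $b_3,\dots,b_A$ is exactly as costly as placing it on the genuinely suboptimal arm (this is where $p_3 = \min\{p_1,p_2\} = p$ in every member of $\mathfrak{M}$ is used), which is precisely why Equation \eqref{eq:suboptimality_hard_instance} reduces to a function of $\pi_1(b_1\given x_0)$ alone in each of $\cM_1,\cM_2$; and one should verify that the inequality $1-\pi_1(b_2\given x_0)\ge \pi_1(b_1\given x_0)$ is used in the harmless direction, which it is, since we only need a lower bound on $\text{SubOpt}(\cM_2,\pi;x_0)$. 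Everything else is bookkeeping.
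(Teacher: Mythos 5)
Your proposal is correct and follows essentially the same route as the paper: reduce Equation \eqref{eq:suboptimality_hard_instance} to $(p^*-p)(H-1)\bigl(1-\pi_1(b_{j^*}\given x_0)\bigr)$ for each of $\cM_1,\cM_2$, bound $1-\pi_1(b_2\given x_0)\geq \pi_1(b_1\given x_0)$, and combine via the weighted-max inequality $\max\{w_1a_1,w_2a_2\}\geq \tfrac{w_1w_2}{w_1+w_2}(a_1+a_2)$ with $w_\ell=\sqrt{n_\ell}$. The paper's proof performs exactly these steps (merely applying the convex-combination bound before the $1-\pi_1(b_2\given x_0)\geq\pi_1(b_1\given x_0)$ step), so no further comment is needed.
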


\begin{proof}[Proof of Lemma \ref{lem:minimax_lower_split}]
See Appendix \ref{sec:appendix:proof_minimax_lower_split} for a detailed proof.
\end{proof}

As specified in Equation \eqref{eq:w996}, for the underlying MDP $\cM_1$, the optimal policy $\pi_1^*$ takes the initial action $b_1$ with probability one at the initial state $x_0$, while for $\cM_2$, $\pi_1^*$ takes $b_2$ with probability one at $x_0$. We consider the following hypothesis testing problem
\#\label{eq:testing}
H_0 \colon \cM = \cM_1 \quad \text{versus} \quad H_1 \colon \cM = \cM_2
\#
based on the dataset $\cD$.
For such a problem, any test function 
$\psi $ is a binary map such that $\psi(\cD) = 0$ means the null hypothesis $H_0$ is accepted, while $\psi(\cD) = 1$ means $H_0$ is rejected. 
For the output $\pi=\{\pi_h\}_{h=1}^H = \texttt{Algo}(\cD)$ of any algorithm, we define 
\#\label{eq:define_algo_test}
\psi_{\texttt{Algo} } (\cD) =  \ind \{ a\neq  b_1 \},\quad \text{where~~}a \sim \pi_1 (\cdot \given x_0).
\# 
Correspondingly, the risk of the (randomized) test function $\psi_{\texttt{Algo}}$ takes the form 
\#\label{eq:risk_test_algo}
\textrm{Risk}(\psi_{\texttt{Algo}}) &  = \EE_{\cD \sim \cM_1} \bigl [ \ind\{ \psi_{\texttt{Algo}} (\cD) = 1 \} \bigr ] + \EE_{\cD \sim \cM_2} \bigl [ \ind\{ \psi_{\texttt{Algo}} (\cD) = 0 \}  \bigr ] \notag \\
& = \EE_{\cD\sim \cM_1}\bigl[1-{\pi}_1(b_1\given x_0)\bigr] + \EE_{\cD\sim \cM_2}\bigl[ {\pi}_1(b_1\given x_0)\bigr].
\#
Therefore, Lemma \ref{lem:minimax_lower_split} lower bounds the suboptimality of any policy $\pi=\{\pi_h\}_{h=1}^H = \texttt{Algo}(\cD)$ by the  risk of a (randomized)  test function, which is induced by $\pi$, for the corresponding hypothesis  testing problem defined in Equation 
\eqref{eq:testing}. Such an approach mirrors the Le Cam method  \citep{le2012asymptotic,yu1997assouad} for establishing the minimax optimality in statistical estimation. In particular, a careful choice of $p,p^* \in [1/4, 3/4]$ leads to the information-theoretic lower bound established in Theorem \ref{thm:lower}. See Appendix \ref{sec:appendix:lower} for a detailed proof.






\newpage
\bibliographystyle{ims}
\bibliography{reference}

\begin{thebibliography}{100}
\expandafter\ifx\csname natexlab\endcsname\relax\def\natexlab#1{#1}\fi
\expandafter\ifx\csname url\endcsname\relax
  \def\url#1{\texttt{#1}}\fi
\expandafter\ifx\csname urlprefix\endcsname\relax\def\urlprefix{}\fi

\bibitem[{Abbasi-Yadkori et~al.(2011)Abbasi-Yadkori, P{\'a}l and
  Szepesv{\'a}ri}]{abbasi2011improved}
\text{Abbasi-Yadkori, Y.}, \text{P{\'a}l, D.} and \text{Szepesv{\'a}ri, C.}
  (2011).
\newblock Improved algorithms for linear stochastic bandits.
\newblock In \textit{Advances in Neural Information Processing Systems}.

\bibitem[{Agarwal et~al.(2020{\natexlab{a}})Agarwal, Jiang and
  Kakade}]{agarwal2019reinforcement}
\text{Agarwal, A.}, \text{Jiang, N.} and \text{Kakade, S.~M.}
  (2020{\natexlab{a}}).
\newblock \textit{Reinforcement learning: {T}heory and algorithms}.
\newblock MIT.

\bibitem[{Agarwal et~al.(2020{\natexlab{b}})Agarwal, Schuurmans and
  Norouzi}]{agarwal2020optimistic}
\text{Agarwal, R.}, \text{Schuurmans, D.} and \text{Norouzi, M.}
  (2020{\natexlab{b}}).
\newblock An optimistic perspective on offline reinforcement learning.
\newblock In \textit{International Conference on Machine Learning}.

\bibitem[{Antos et~al.(2007)Antos, Szepesv{\'a}ri and Munos}]{antos2007fitted}
\text{Antos, A.}, \text{Szepesv{\'a}ri, C.} and \text{Munos, R.} (2007).
\newblock Fitted {Q}-iteration in continuous action-space {MDP}s.
\newblock In \textit{Advances in Neural Information Processing Systems}.

\bibitem[{Antos et~al.(2008)Antos, Szepesv{\'a}ri and
  Munos}]{antos2008learning}
\text{Antos, A.}, \text{Szepesv{\'a}ri, C.} and \text{Munos, R.} (2008).
\newblock Learning near-optimal policies with {B}ellman-residual minimization
  based fitted policy iteration and a single sample path.
\newblock \textit{Machine Learning}, \textbf{71} 89--129.

\bibitem[{Ayoub et~al.(2020)Ayoub, Jia, Szepesvari, Wang and
  Yang}]{ayoub2020model}
\text{Ayoub, A.}, \text{Jia, Z.}, \text{Szepesvari, C.}, \text{Wang, M.} and
  \text{Yang, L.~F.} (2020).
\newblock Model-based reinforcement learning with value-targeted regression.
\newblock \textit{arXiv preprint arXiv:2006.01107}.

\bibitem[{Azar et~al.(2017)Azar, Osband and Munos}]{azar2017minimax}
\text{Azar, M.~G.}, \text{Osband, I.} and \text{Munos, R.} (2017).
\newblock Minimax regret bounds for reinforcement learning.
\newblock \textit{arXiv preprint arXiv:1703.05449}.

\bibitem[{Bellemare et~al.(2013)Bellemare, Naddaf, Veness and
  Bowling}]{bellemare2013arcade}
\text{Bellemare, M.~G.}, \text{Naddaf, Y.}, \text{Veness, J.} and
  \text{Bowling, M.} (2013).
\newblock The {A}rcade {L}earning {E}nvironment: An evaluation platform for
  general agents.
\newblock \textit{Journal of Artificial Intelligence Research}, \textbf{47}
  253--279.

\bibitem[{Buckman et~al.(2020)Buckman, Gelada and
  Bellemare}]{buckman2020importance}
\text{Buckman, J.}, \text{Gelada, C.} and \text{Bellemare, M.~G.} (2020).
\newblock The importance of pessimism in fixed-dataset policy optimization.
\newblock \textit{arXiv preprint arXiv:2009.06799}.

\bibitem[{Cai et~al.(2020)Cai, Yang, Jin and Wang}]{cai2020provably}
\text{Cai, Q.}, \text{Yang, Z.}, \text{Jin, C.} and \text{Wang, Z.} (2020).
\newblock Provably efficient exploration in policy optimization.
\newblock In \textit{International Conference on Machine Learning}.

\bibitem[{Chakraborty and Murphy(2014)}]{chakraborty2014dynamic}
\text{Chakraborty, B.} and \text{Murphy, S.~A.} (2014).
\newblock Dynamic treatment regimes.
\newblock \textit{Annual Review of Statistics and Its Application}, \textbf{1}
  447--464.

\bibitem[{Chen and Jiang(2019)}]{chen2019information}
\text{Chen, J.} and \text{Jiang, N.} (2019).
\newblock Information-theoretic considerations in batch reinforcement learning.
\newblock \textit{arXiv preprint arXiv:1905.00360}.

\bibitem[{Chowdhury and Gopalan(2017)}]{chowdhury2017kernelized}
\text{Chowdhury, S.~R.} and \text{Gopalan, A.} (2017).
\newblock On kernelized multi-armed bandits.
\newblock \textit{arXiv preprint arXiv:1704.00445}.

\bibitem[{Donoho and Johnstone(1994)}]{donoho1994ideal}
\text{Donoho, D.~L.} and \text{Johnstone, J.~M.} (1994).
\newblock Ideal spatial adaptation by wavelet shrinkage.
\newblock \textit{Biometrika}, \textbf{81} 425--455.

\bibitem[{Duan et~al.(2020)Duan, Jia and Wang}]{duan2020minimaxoptimal}
\text{Duan, Y.}, \text{Jia, Z.} and \text{Wang, M.} (2020).
\newblock Minimax-optimal off-policy evaluation with linear function
  approximation.
\newblock In \textit{International Conference on Machine Learning}.

\bibitem[{Fan and Li(2001)}]{fan2001variable}
\text{Fan, J.} and \text{Li, R.} (2001).
\newblock Variable selection via nonconcave penalized likelihood and its oracle
  properties.
\newblock \textit{Journal of the American Statistical Association}, \textbf{96}
  1348--1360.

\bibitem[{Fan et~al.(2020)Fan, Wang, Xie and Yang}]{fan2020theoretical}
\text{Fan, J.}, \text{Wang, Z.}, \text{Xie, Y.} and \text{Yang, Z.} (2020).
\newblock A theoretical analysis of deep {Q}-learning.
\newblock In \textit{Learning for Dynamics and Control}.

\bibitem[{Farahmand et~al.(2016)Farahmand, Ghavamzadeh, Szepesv{\'a}ri and
  Mannor}]{farahmand2016regularized}
\text{Farahmand, A.-m.}, \text{Ghavamzadeh, M.}, \text{Szepesv{\'a}ri, C.} and
  \text{Mannor, S.} (2016).
\newblock Regularized policy iteration with nonparametric function spaces.
\newblock \textit{Journal of Machine Learning Research}, \textbf{17}
  4809--4874.

\bibitem[{Farahmand et~al.(2010)Farahmand, Szepesv{\'a}ri and
  Munos}]{farahmand2010error}
\text{Farahmand, A.-m.}, \text{Szepesv{\'a}ri, C.} and \text{Munos, R.} (2010).
\newblock Error propagation for approximate policy and value iteration.
\newblock In \textit{Advances in Neural Information Processing Systems}.

\bibitem[{Farajtabar et~al.(2018)Farajtabar, Chow and
  Ghavamzadeh}]{farajtabar2018more}
\text{Farajtabar, M.}, \text{Chow, Y.} and \text{Ghavamzadeh, M.} (2018).
\newblock More robust doubly robust off-policy evaluation.
\newblock In \textit{International Conference on Machine Learning}.

\bibitem[{Fu et~al.(2020{\natexlab{a}})Fu, Kumar, Nachum, Tucker and
  Levine}]{fu2020d4rl}
\text{Fu, J.}, \text{Kumar, A.}, \text{Nachum, O.}, \text{Tucker, G.} and
  \text{Levine, S.} (2020{\natexlab{a}}).
\newblock {D4RL}: Datasets for deep data-driven reinforcement learning.
\newblock \textit{arXiv preprint arXiv:2004.07219}.

\bibitem[{Fu et~al.(2020{\natexlab{b}})Fu, Yang and Wang}]{fu2020single}
\text{Fu, Z.}, \text{Yang, Z.} and \text{Wang, Z.} (2020{\natexlab{b}}).
\newblock Single-timescale actor-critic provably finds globally optimal policy.
\newblock \textit{arXiv preprint arXiv:2008.00483}.

\bibitem[{Fujimoto et~al.(2019)Fujimoto, Meger and Precup}]{fujimoto2019off}
\text{Fujimoto, S.}, \text{Meger, D.} and \text{Precup, D.} (2019).
\newblock Off-policy deep reinforcement learning without exploration.
\newblock In \textit{International Conference on Machine Learning}.

\bibitem[{Gottesman et~al.(2019)Gottesman, Johansson, Komorowski, Faisal,
  Sontag, Doshi-Velez and Celi}]{gottesman2019guidelines}
\text{Gottesman, O.}, \text{Johansson, F.}, \text{Komorowski, M.},
  \text{Faisal, A.}, \text{Sontag, D.}, \text{Doshi-Velez, F.} and \text{Celi,
  L.} (2019).
\newblock Guidelines for reinforcement learning in healthcare.
\newblock \textit{Nature Medicine}, \textbf{25} 16--32.

\bibitem[{Gulcehre et~al.(2020)Gulcehre, Wang, Novikov, Paine, Colmenarejo,
  Zolna, Agarwal, Merel, Mankowitz, Paduraru et~al.}]{gulcehre2020rl}
\text{Gulcehre, C.}, \text{Wang, Z.}, \text{Novikov, A.}, \text{Paine, T.~L.},
  \text{Colmenarejo, S.~G.}, \text{Zolna, K.}, \text{Agarwal, R.}, \text{Merel,
  J.}, \text{Mankowitz, D.}, \text{Paduraru, C.} \text{et~al.} (2020).
\newblock {RL} {U}nplugged: Benchmarks for offline reinforcement learning.
\newblock \textit{arXiv preprint arXiv:2006.13888}.

\bibitem[{Houthooft et~al.(2016)Houthooft, Chen, Duan, Schulman, De~Turck and
  Abbeel}]{houthooft2016vime}
\text{Houthooft, R.}, \text{Chen, X.}, \text{Duan, Y.}, \text{Schulman, J.},
  \text{De~Turck, F.} and \text{Abbeel, P.} (2016).
\newblock {VIME}: Variational information maximizing exploration.
\newblock In \textit{Advances in Neural Information Processing Systems}.

\bibitem[{Jaksch et~al.(2010)Jaksch, Ortner and Auer}]{jaksch2010near}
\text{Jaksch, T.}, \text{Ortner, R.} and \text{Auer, P.} (2010).
\newblock Near-optimal regret bounds for reinforcement learning.
\newblock \textit{Journal of Machine Learning Research}, \textbf{11} 8--36.

\bibitem[{Jaques et~al.(2019)Jaques, Ghandeharioun, Shen, Ferguson, Lapedriza,
  Jones, Gu and Picard}]{jaques2019way}
\text{Jaques, N.}, \text{Ghandeharioun, A.}, \text{Shen, J.~H.},
  \text{Ferguson, C.}, \text{Lapedriza, A.}, \text{Jones, N.}, \text{Gu, S.}
  and \text{Picard, R.} (2019).
\newblock Way off-policy batch deep reinforcement learning of implicit human
  preferences in dialog.
\newblock \textit{arXiv preprint arXiv:1907.00456}.

\bibitem[{Jiang and Huang(2020)}]{jiang2020minimax}
\text{Jiang, N.} and \text{Huang, J.} (2020).
\newblock Minimax value interval for off-policy evaluation and policy
  optimization.
\newblock In \textit{Advances in Neural Information Processing Systems}.

\bibitem[{Jiang and Li(2016)}]{jiang2016doubly}
\text{Jiang, N.} and \text{Li, L.} (2016).
\newblock Doubly robust off-policy value evaluation for reinforcement learning.
\newblock In \textit{International Conference on Machine Learning}.

\bibitem[{Jin et~al.(2018)Jin, Allen-Zhu, Bubeck and Jordan}]{jin2018q}
\text{Jin, C.}, \text{Allen-Zhu, Z.}, \text{Bubeck, S.} and \text{Jordan,
  M.~I.} (2018).
\newblock Is {Q}-learning provably efficient?
\newblock In \textit{Advances in Neural Information Processing Systems}.

\bibitem[{Jin et~al.(2020)Jin, Yang, Wang and Jordan}]{jin2020provably}
\text{Jin, C.}, \text{Yang, Z.}, \text{Wang, Z.} and \text{Jordan, M.~I.}
  (2020).
\newblock Provably efficient reinforcement learning with linear function
  approximation.
\newblock In \textit{Conference on Learning Theory}.

\bibitem[{Kallus and Uehara(2019)}]{kallus2019efficiently}
\text{Kallus, N.} and \text{Uehara, M.} (2019).
\newblock Efficiently breaking the curse of horizon in off-policy evaluation
  with double reinforcement learning.
\newblock \textit{arXiv preprint arXiv:1909.05850}.

\bibitem[{Kallus and Uehara(2020)}]{kallus2020doubly}
\text{Kallus, N.} and \text{Uehara, M.} (2020).
\newblock Doubly robust off-policy value and gradient estimation for
  deterministic policies.
\newblock \textit{arXiv preprint arXiv:2006.03900}.

\bibitem[{Kidambi et~al.(2020)Kidambi, Rajeswaran, Netrapalli and
  Joachims}]{kidambi2020morel}
\text{Kidambi, R.}, \text{Rajeswaran, A.}, \text{Netrapalli, P.} and
  \text{Joachims, T.} (2020).
\newblock {MOReL}: Model-based offline reinforcement learning.
\newblock \textit{arXiv preprint arXiv:2005.05951}.

\bibitem[{Kumar et~al.(2019)Kumar, Fu, Soh, Tucker and
  Levine}]{kumar2019stabilizing}
\text{Kumar, A.}, \text{Fu, J.}, \text{Soh, M.}, \text{Tucker, G.} and
  \text{Levine, S.} (2019).
\newblock Stabilizing off-policy {Q}-learning via bootstrapping error
  reduction.
\newblock In \textit{Advances in Neural Information Processing Systems}.

\bibitem[{Kumar et~al.(2020)Kumar, Zhou, Tucker and
  Levine}]{kumar2020conservative}
\text{Kumar, A.}, \text{Zhou, A.}, \text{Tucker, G.} and \text{Levine, S.}
  (2020).
\newblock Conservative {Q}-learning for offline reinforcement learning.
\newblock \textit{arXiv preprint arXiv:2006.04779}.

\bibitem[{Lange et~al.(2012)Lange, Gabel and Riedmiller}]{lange2012batch}
\text{Lange, S.}, \text{Gabel, T.} and \text{Riedmiller, M.} (2012).
\newblock Batch reinforcement learning.
\newblock In \textit{Reinforcement learning}. Springer, 45--73.

\bibitem[{Laroche et~al.(2019)Laroche, Trichelair and
  Des~Combes}]{laroche2019safe}
\text{Laroche, R.}, \text{Trichelair, P.} and \text{Des~Combes, R.~T.} (2019).
\newblock Safe policy improvement with baseline bootstrapping.
\newblock In \textit{International Conference on Machine Learning}.

\bibitem[{Lattimore and Szepesv{\'a}ri(2020)}]{lattimore2020bandit}
\text{Lattimore, T.} and \text{Szepesv{\'a}ri, C.} (2020).
\newblock \textit{Bandit algorithms}.
\newblock Cambridge.

\bibitem[{Le~Cam(2012)}]{le2012asymptotic}
\text{Le~Cam, L.} (2012).
\newblock \textit{Asymptotic methods in statistical decision theory}.
\newblock Springer.

\bibitem[{LeCun et~al.(2015)LeCun, Bengio and Hinton}]{lecun2015deep}
\text{LeCun, Y.}, \text{Bengio, Y.} and \text{Hinton, G.} (2015).
\newblock Deep learning.
\newblock \textit{Nature}, \textbf{521} 436--444.

\bibitem[{Levine et~al.(2020)Levine, Kumar, Tucker and Fu}]{levine2020offline}
\text{Levine, S.}, \text{Kumar, A.}, \text{Tucker, G.} and \text{Fu, J.}
  (2020).
\newblock Offline reinforcement learning: Tutorial, review, and perspectives on
  open problems.
\newblock \textit{arXiv preprint arXiv:2005.01643}.

\bibitem[{Li et~al.(2020)Li, Wei, Chi, Gu and Chen}]{li2020sample}
\text{Li, G.}, \text{Wei, Y.}, \text{Chi, Y.}, \text{Gu, Y.} and \text{Chen,
  Y.} (2020).
\newblock Sample complexity of asynchronous {Q}-learning: Sharper analysis and
  variance reduction.
\newblock \textit{arXiv preprint arXiv:2006.03041}.

\bibitem[{Liao et~al.(2020)Liao, Qi and Murphy}]{liao2020batch}
\text{Liao, P.}, \text{Qi, Z.} and \text{Murphy, S.} (2020).
\newblock Batch policy learning in average reward {M}arkov decision processes.
\newblock \textit{arXiv preprint arXiv:2007.11771}.

\bibitem[{Liu et~al.(2019)Liu, Cai, Yang and Wang}]{liu2019neural}
\text{Liu, B.}, \text{Cai, Q.}, \text{Yang, Z.} and \text{Wang, Z.} (2019).
\newblock Neural trust region/proximal policy optimization attains globally
  optimal policy.
\newblock In \textit{Advances in Neural Information Processing Systems}.

\bibitem[{Liu et~al.(2018)Liu, Li, Tang and Zhou}]{liu2018breaking}
\text{Liu, Q.}, \text{Li, L.}, \text{Tang, Z.} and \text{Zhou, D.} (2018).
\newblock Breaking the curse of horizon: Infinite-horizon off-policy
  estimation.
\newblock In \textit{Advances in Neural Information Processing Systems}.

\bibitem[{Liu et~al.(2020)Liu, Swaminathan, Agarwal and
  Brunskill}]{liu2020provably}
\text{Liu, Y.}, \text{Swaminathan, A.}, \text{Agarwal, A.} and \text{Brunskill,
  E.} (2020).
\newblock Provably good batch reinforcement learning without great exploration.
\newblock \textit{arXiv preprint arXiv:2007.08202}.

\bibitem[{Mnih et~al.(2015)Mnih, Kavukcuoglu, Silver, Rusu, Veness, Bellemare,
  Graves, Riedmiller, Fidjeland, Ostrovski et~al.}]{mnih2015human}
\text{Mnih, V.}, \text{Kavukcuoglu, K.}, \text{Silver, D.}, \text{Rusu, A.~A.},
  \text{Veness, J.}, \text{Bellemare, M.~G.}, \text{Graves, A.},
  \text{Riedmiller, M.}, \text{Fidjeland, A.~K.}, \text{Ostrovski, G.}
  \text{et~al.} (2015).
\newblock Human-level control through deep reinforcement learning.
\newblock \textit{Nature}, \textbf{518} 529--533.

\bibitem[{Munos and Szepesv{\'a}ri(2008)}]{munos2008finite}
\text{Munos, R.} and \text{Szepesv{\'a}ri, C.} (2008).
\newblock Finite-time bounds for fitted value iteration.
\newblock \textit{Journal of Machine Learning Research}, \textbf{9} 815--857.

\bibitem[{Nachum et~al.(2019{\natexlab{a}})Nachum, Chow, Dai and
  Li}]{nachum2019dualdice}
\text{Nachum, O.}, \text{Chow, Y.}, \text{Dai, B.} and \text{Li, L.}
  (2019{\natexlab{a}}).
\newblock Dual{DICE}: Behavior-agnostic estimation of discounted stationary
  distribution corrections.
\newblock In \textit{Advances in Neural Information Processing Systems}.

\bibitem[{Nachum and Dai(2020)}]{nachum2020reinforcement}
\text{Nachum, O.} and \text{Dai, B.} (2020).
\newblock Reinforcement learning via {F}enchel-{R}ockafellar duality.
\newblock \textit{arXiv preprint arXiv:2001.01866}.

\bibitem[{Nachum et~al.(2019{\natexlab{b}})Nachum, Dai, Kostrikov, Chow, Li and
  Schuurmans}]{nachum2019algaedice}
\text{Nachum, O.}, \text{Dai, B.}, \text{Kostrikov, I.}, \text{Chow, Y.},
  \text{Li, L.} and \text{Schuurmans, D.} (2019{\natexlab{b}}).
\newblock Algae{DICE}: Policy gradient from arbitrary experience.
\newblock \textit{arXiv preprint arXiv:1912.02074}.

\bibitem[{Nair et~al.(2020)Nair, Dalal, Gupta and
  Levine}]{nair2020accelerating}
\text{Nair, A.}, \text{Dalal, M.}, \text{Gupta, A.} and \text{Levine, S.}
  (2020).
\newblock Accelerating online reinforcement learning with offline datasets.
\newblock \textit{arXiv preprint arXiv:2006.09359}.

\bibitem[{Osband and Van~Roy(2014)}]{osband2014model}
\text{Osband, I.} and \text{Van~Roy, B.} (2014).
\newblock Model-based reinforcement learning and the eluder dimension.
\newblock In \textit{Advances in Neural Information Processing Systems}.

\bibitem[{Qu and Wierman(2020)}]{qu2020finite}
\text{Qu, G.} and \text{Wierman, A.} (2020).
\newblock Finite-time analysis of asynchronous stochastic approximation and
  {Q}-learning.
\newblock \textit{arXiv preprint arXiv:2002.00260}.

\bibitem[{Russo and Van~Roy(2013)}]{russo2013eluder}
\text{Russo, D.} and \text{Van~Roy, B.} (2013).
\newblock Eluder dimension and the sample complexity of optimistic exploration.
\newblock In \textit{Advances in Neural Information Processing Systems}.

\bibitem[{Russo and Van~Roy(2016)}]{russo2016information}
\text{Russo, D.} and \text{Van~Roy, B.} (2016).
\newblock An information-theoretic analysis of {T}hompson sampling.
\newblock \textit{Journal of Machine Learning Research}, \textbf{17}
  2442--2471.

\bibitem[{Russo and Van~Roy(2018)}]{russo2018learning}
\text{Russo, D.} and \text{Van~Roy, B.} (2018).
\newblock Learning to optimize via information-directed sampling.
\newblock \textit{Operations Research}, \textbf{66} 230--252.

\bibitem[{Scherrer et~al.(2015)Scherrer, Ghavamzadeh, Gabillon, Lesner and
  Geist}]{scherrer2015approximate}
\text{Scherrer, B.}, \text{Ghavamzadeh, M.}, \text{Gabillon, V.}, \text{Lesner,
  B.} and \text{Geist, M.} (2015).
\newblock Approximate modified policy iteration and its application to the game
  of {T}etris.
\newblock \textit{Journal of Machine Learning Research}, \textbf{16}
  1629--1676.

\bibitem[{Schmidhuber(1991)}]{schmidhuber1991curious}
\text{Schmidhuber, J.} (1991).
\newblock Curious model-building control systems.
\newblock In \textit{International Joint Conference on Neural Networks}.

\bibitem[{Schmidhuber(2010)}]{schmidhuber2010formal}
\text{Schmidhuber, J.} (2010).
\newblock Formal theory of creativity, fun, and intrinsic motivation
  (1990--2010).
\newblock \textit{IEEE Transactions on Autonomous Mental Development},
  \textbf{2} 230--247.

\bibitem[{Shalev-Shwartz et~al.(2016)Shalev-Shwartz, Shammah and
  Shashua}]{shalev2016safe}
\text{Shalev-Shwartz, S.}, \text{Shammah, S.} and \text{Shashua, A.} (2016).
\newblock Safe, multi-agent, reinforcement learning for autonomous driving.
\newblock \textit{arXiv preprint arXiv:1610.03295}.

\bibitem[{Siegel et~al.(2020)Siegel, Springenberg, Berkenkamp, Abdolmaleki,
  Neunert, Lampe, Hafner and Riedmiller}]{siegel2020keep}
\text{Siegel, N.~Y.}, \text{Springenberg, J.~T.}, \text{Berkenkamp, F.},
  \text{Abdolmaleki, A.}, \text{Neunert, M.}, \text{Lampe, T.}, \text{Hafner,
  R.} and \text{Riedmiller, M.} (2020).
\newblock Keep doing what worked: Behavioral modelling priors for offline
  reinforcement learning.
\newblock \textit{arXiv preprint arXiv:2002.08396}.

\bibitem[{Silver et~al.(2016)Silver, Huang, Maddison, Guez, Sifre, Van
  Den~Driessche, Schrittwieser, Antonoglou, Panneershelvam and
  Lanctot}]{silver2016mastering}
\text{Silver, D.}, \text{Huang, A.}, \text{Maddison, C.~J.}, \text{Guez, A.},
  \text{Sifre, L.}, \text{Van Den~Driessche, G.}, \text{Schrittwieser, J.},
  \text{Antonoglou, I.}, \text{Panneershelvam, V.} and \text{Lanctot, M.}
  (2016).
\newblock Mastering the game of {G}o with deep neural networks and tree search.
\newblock \textit{Nature}, \textbf{529} 484--489.

\bibitem[{Silver et~al.(2017)Silver, Schrittwieser, Simonyan, Antonoglou,
  Huang, Guez, Hubert, Baker, Lai and Bolton}]{silver2017mastering}
\text{Silver, D.}, \text{Schrittwieser, J.}, \text{Simonyan, K.},
  \text{Antonoglou, I.}, \text{Huang, A.}, \text{Guez, A.}, \text{Hubert, T.},
  \text{Baker, L.}, \text{Lai, M.} and \text{Bolton, A.} (2017).
\newblock Mastering the game of {G}o without human knowledge.
\newblock \textit{Nature}, \textbf{550} 354.

\bibitem[{Srinivas et~al.(2009)Srinivas, Krause, Kakade and
  Seeger}]{srinivas2009gaussian}
\text{Srinivas, N.}, \text{Krause, A.}, \text{Kakade, S.~M.} and \text{Seeger,
  M.} (2009).
\newblock Gaussian process optimization in the bandit setting: No regret and
  experimental design.
\newblock \textit{arXiv preprint arXiv:0912.3995}.

\bibitem[{Steinwart and Christmann(2008)}]{steinwart2008support}
\text{Steinwart, I.} and \text{Christmann, A.} (2008).
\newblock \textit{Support vector machines}.
\newblock Springer Science \& Business Media.

\bibitem[{Still and Precup(2012)}]{still2012information}
\text{Still, S.} and \text{Precup, D.} (2012).
\newblock An information-theoretic approach to curiosity-driven reinforcement
  learning.
\newblock \textit{Theory in Biosciences}, \textbf{131} 139--148.

\bibitem[{Sun et~al.(2020)Sun, Kretzschmar, Dotiwalla, Chouard, Patnaik, Tsui,
  Guo, Zhou, Chai, Caine et~al.}]{sun2020scalability}
\text{Sun, P.}, \text{Kretzschmar, H.}, \text{Dotiwalla, X.}, \text{Chouard,
  A.}, \text{Patnaik, V.}, \text{Tsui, P.}, \text{Guo, J.}, \text{Zhou, Y.},
  \text{Chai, Y.}, \text{Caine, B.} \text{et~al.} (2020).
\newblock Scalability in perception for autonomous driving: {W}aymo open
  dataset.
\newblock In \textit{Computer Vision and Pattern Recognition}.

\bibitem[{Sun et~al.(2011)Sun, Gomez and Schmidhuber}]{sun2011planning}
\text{Sun, Y.}, \text{Gomez, F.} and \text{Schmidhuber, J.} (2011).
\newblock Planning to be surprised: Optimal {B}ayesian exploration in dynamic
  environments.
\newblock In \textit{International Conference on Artificial General
  Intelligence}.

\bibitem[{Sutton and Barto(2018)}]{sutton2018reinforcement}
\text{Sutton, R.~S.} and \text{Barto, A.~G.} (2018).
\newblock \textit{Reinforcement learning: An introduction}.
\newblock MIT.

\bibitem[{Szepesv{\'a}ri(2010)}]{szepesvari2010algorithms}
\text{Szepesv{\'a}ri, C.} (2010).
\newblock \textit{Algorithms for reinforcement learning}.
\newblock Morgan \& Claypool.

\bibitem[{Tang et~al.(2019)Tang, Feng, Li, Zhou and Liu}]{tang2019doubly}
\text{Tang, Z.}, \text{Feng, Y.}, \text{Li, L.}, \text{Zhou, D.} and \text{Liu,
  Q.} (2019).
\newblock Doubly robust bias reduction in infinite horizon off-policy
  estimation.
\newblock \textit{arXiv preprint arXiv:1910.07186}.

\bibitem[{Thomas and Brunskill(2016)}]{thomas2016data}
\text{Thomas, P.} and \text{Brunskill, E.} (2016).
\newblock Data-efficient off-policy policy evaluation for reinforcement
  learning.
\newblock In \textit{International Conference on Machine Learning}.

\bibitem[{Todorov et~al.(2012)Todorov, Erez and Tassa}]{todorov2012mujoco}
\text{Todorov, E.}, \text{Erez, T.} and \text{Tassa, Y.} (2012).
\newblock Mujoco: A physics engine for model-based control.
\newblock In \textit{International Conference on Intelligent Robots and
  Systems}.

\bibitem[{Tropp(2015)}]{10.1561/2200000048}
\text{Tropp, J.~A.} (2015).
\newblock An introduction to matrix concentration inequalities.
\newblock \textit{Foundations and Trends in Machine Learning}, \textbf{8}
  1–230.

\bibitem[{Uehara et~al.(2020)Uehara, Huang and Jiang}]{uehara2020minimax}
\text{Uehara, M.}, \text{Huang, J.} and \text{Jiang, N.} (2020).
\newblock Minimax weight and {Q}-function learning for off-policy evaluation.
\newblock In \textit{International Conference on Machine Learning}.

\bibitem[{Vershynin(2018)}]{vershynin2018high}
\text{Vershynin, R.} (2018).
\newblock \textit{High-dimensional probability: An introduction with
  applications in data science}, vol.~47.
\newblock Cambridge university press.

\bibitem[{Vinyals et~al.(2017)Vinyals, Ewalds, Bartunov, Georgiev, Vezhnevets,
  Yeo, Makhzani, K{\"u}ttler, Agapiou, Schrittwieser
  et~al.}]{vinyals2017starcraft}
\text{Vinyals, O.}, \text{Ewalds, T.}, \text{Bartunov, S.}, \text{Georgiev,
  P.}, \text{Vezhnevets, A.~S.}, \text{Yeo, M.}, \text{Makhzani, A.},
  \text{K{\"u}ttler, H.}, \text{Agapiou, J.}, \text{Schrittwieser, J.}
  \text{et~al.} (2017).
\newblock Star{C}raft {II}: A new challenge for reinforcement learning.
\newblock \textit{arXiv preprint arXiv:1708.04782}.

\bibitem[{Wang et~al.(2019)Wang, Cai, Yang and Wang}]{wang2019neural}
\text{Wang, L.}, \text{Cai, Q.}, \text{Yang, Z.} and \text{Wang, Z.} (2019).
\newblock Neural policy gradient methods: Global optimality and rates of
  convergence.
\newblock In \textit{International Conference on Learning Representations}.

\bibitem[{Wang et~al.(2020{\natexlab{a}})Wang, Foster and
  Kakade}]{wang2020statistical}
\text{Wang, R.}, \text{Foster, D.~P.} and \text{Kakade, S.~M.}
  (2020{\natexlab{a}}).
\newblock What are the statistical limits of offline {RL} with linear function
  approximation?
\newblock \textit{arXiv preprint arXiv:2010.11895}.

\bibitem[{Wang et~al.(2020{\natexlab{b}})Wang, Salakhutdinov and
  Yang}]{wang2020reinforcement}
\text{Wang, R.}, \text{Salakhutdinov, R.~R.} and \text{Yang, L.}
  (2020{\natexlab{b}}).
\newblock Reinforcement learning with general value function approximation:
  Provably efficient approach via bounded eluder dimension.
\newblock In \textit{Advances in Neural Information Processing Systems}.

\bibitem[{Wang et~al.(2020{\natexlab{c}})Wang, Novikov, Zolna, Merel,
  Springenberg, Reed, Shahriari, Siegel, Gulcehre, Heess
  et~al.}]{wang2020critic}
\text{Wang, Z.}, \text{Novikov, A.}, \text{Zolna, K.}, \text{Merel, J.~S.},
  \text{Springenberg, J.~T.}, \text{Reed, S.~E.}, \text{Shahriari, B.},
  \text{Siegel, N.}, \text{Gulcehre, C.}, \text{Heess, N.} \text{et~al.}
  (2020{\natexlab{c}}).
\newblock Critic regularized regression.
\newblock In \textit{Advances in Neural Information Processing Systems}.

\bibitem[{Wu et~al.(2019)Wu, Tucker and Nachum}]{wu2019behavior}
\text{Wu, Y.}, \text{Tucker, G.} and \text{Nachum, O.} (2019).
\newblock Behavior regularized offline reinforcement learning.
\newblock \textit{arXiv preprint arXiv:1911.11361}.

\bibitem[{Xie and Jiang(2020{\natexlab{a}})}]{xie2020batch}
\text{Xie, T.} and \text{Jiang, N.} (2020{\natexlab{a}}).
\newblock Batch value-function approximation with only realizability.
\newblock \textit{arXiv preprint arXiv:2008.04990}.

\bibitem[{Xie and Jiang(2020{\natexlab{b}})}]{xie2020q}
\text{Xie, T.} and \text{Jiang, N.} (2020{\natexlab{b}}).
\newblock {$Q^{\star}$}-approximation schemes for batch reinforcement learning:
  A theoretical comparison.
\newblock \textit{arXiv preprint arXiv:2003.03924}.

\bibitem[{Xie et~al.(2019)Xie, Ma and Wang}]{xie2019towards}
\text{Xie, T.}, \text{Ma, Y.} and \text{Wang, Y.-X.} (2019).
\newblock Towards optimal off-policy evaluation for reinforcement learning with
  marginalized importance sampling.
\newblock In \textit{Advances in Neural Information Processing Systems}.

\bibitem[{Yang and Wang(2019)}]{yang2019sample}
\text{Yang, L.} and \text{Wang, M.} (2019).
\newblock Sample-optimal parametric {Q}-learning using linearly additive
  features.
\newblock In \textit{International Conference on Machine Learning}.

\bibitem[{Yang et~al.(2020{\natexlab{a}})Yang, Nachum, Dai, Li and
  Schuurmans}]{yang2020off}
\text{Yang, M.}, \text{Nachum, O.}, \text{Dai, B.}, \text{Li, L.} and
  \text{Schuurmans, D.} (2020{\natexlab{a}}).
\newblock Off-policy evaluation via the regularized {L}agrangian.
\newblock In \textit{Advances in Neural Information Processing Systems}.

\bibitem[{Yang et~al.(2020{\natexlab{b}})Yang, Jin, Wang, Wang and
  Jordan}]{yang2020bridging}
\text{Yang, Z.}, \text{Jin, C.}, \text{Wang, Z.}, \text{Wang, M.} and
  \text{Jordan, M.~I.} (2020{\natexlab{b}}).
\newblock Bridging exploration and general function approximation in
  reinforcement learning: Provably efficient kernel and neural value
  iterations.
\newblock \textit{arXiv preprint arXiv:2011.04622}.

\bibitem[{Yang et~al.(2020{\natexlab{c}})Yang, Jin, Wang, Wang and
  Jordan}]{yang2020function}
\text{Yang, Z.}, \text{Jin, C.}, \text{Wang, Z.}, \text{Wang, M.} and
  \text{Jordan, M.~I.} (2020{\natexlab{c}}).
\newblock On function approximation in reinforcement learning: Optimism in the
  face of large state spaces.
\newblock \textit{arXiv preprint arXiv:2011.04622}.

\bibitem[{Yin et~al.(2020)Yin, Bai and Wang}]{yin2020near}
\text{Yin, M.}, \text{Bai, Y.} and \text{Wang, Y.-X.} (2020).
\newblock Near optimal provable uniform convergence in off-policy evaluation
  for reinforcement learning.
\newblock \textit{arXiv preprint arXiv:2007.03760}.

\bibitem[{Yin and Wang(2020)}]{yin2020asymptotic}
\text{Yin, M.} and \text{Wang, Y.-X.} (2020).
\newblock Asymptotically efficient off-policy evaluation for tabular
  reinforcement learning.
\newblock In \textit{International Conference on Artificial Intelligence and
  Statistics}.

\bibitem[{Yu(1997)}]{yu1997assouad}
\text{Yu, B.} (1997).
\newblock Assouad, {F}ano, and {L}e {C}am.
\newblock In \textit{Festschrift for Lucien Le Cam}. Springer, 423--435.

\bibitem[{Yu et~al.(2020)Yu, Thomas, Yu, Ermon, Zou, Levine, Finn and
  Ma}]{yu2020mopo}
\text{Yu, T.}, \text{Thomas, G.}, \text{Yu, L.}, \text{Ermon, S.}, \text{Zou,
  J.}, \text{Levine, S.}, \text{Finn, C.} and \text{Ma, T.} (2020).
\newblock {MOPO}: Model-based offline policy optimization.
\newblock \textit{arXiv preprint arXiv:2005.13239}.

\bibitem[{Zanette et~al.(2021)Zanette, Cheng and
  Agarwal}]{zanette2021cautiously}
\text{Zanette, A.}, \text{Cheng, C.-A.} and \text{Agarwal, A.} (2021).
\newblock Cautiously optimistic policy optimization and exploration with linear
  function approximation.
\newblock In \textit{Conference on Learning Theory}. PMLR.

\bibitem[{Zhang et~al.(2020{\natexlab{a}})Zhang, Koppel, Bedi, Szepesv{\'a}ri
  and Wang}]{zhang2020variational}
\text{Zhang, J.}, \text{Koppel, A.}, \text{Bedi, A.~S.}, \text{Szepesv{\'a}ri,
  C.} and \text{Wang, M.} (2020{\natexlab{a}}).
\newblock Variational policy gradient method for reinforcement learning with
  general utilities.
\newblock In \textit{Advances in Neural Information Processing Systems}.

\bibitem[{Zhang et~al.(2020{\natexlab{b}})Zhang, Dai, Li and
  Schuurmans}]{zhang2019gendice}
\text{Zhang, R.}, \text{Dai, B.}, \text{Li, L.} and \text{Schuurmans, D.}
  (2020{\natexlab{b}}).
\newblock Gen{DICE}: Generalized offline estimation of stationary values.
\newblock In \textit{International Conference on Learning Representations}.

\bibitem[{Zou(2006)}]{zou2006adaptive}
\text{Zou, H.} (2006).
\newblock The adaptive {L}asso and its oracle properties.
\newblock \textit{Journal of the American Statistical Association},
  \textbf{101} 1418--1429.

\end{thebibliography}

\newpage 

\appendix


\section{Proofs of Suboptimality Decomposition }\label{sec:appendix_proof_decomp}

\begin{proof}[Proof of Lemma \ref{lem:reg_decomp}]
By the definition in Equation \eqref{eq:def_regret},  the suboptimality of the policy $\hat \pi$ given any initial state $x \in \cS$ can be decomposed as
\#\label{eq:decomp_terms}
\text{SubOpt}(\hat\pi;x) = \underbrace{\big (  V_1^{\pi^*}(x) - \hat{V}_1(x) \big ) }_{\displaystyle \text{(i)}} + \underbrace{\big ( \hat{V}_1(x) - V_1^{\hat\pi}(x) \big ) }_{\displaystyle \text{(ii)}},
\#
where $\{  \hat{V}_h \}_{h = 1}^{H}$  are the estimated   value functions constructed by the meta-algorithm.  
Term (i) in Equation \eqref{eq:decomp_terms}  is  the difference between the estimated value function    $\hat{V}_1$  and the optimal    value function $V_1^{\pi^*}$, while  term (ii) is the difference between  $\hat{V}_1$ and the value function $V_1^{\hat\pi}$ of $\hat\pi$.
To further decompose terms (i) and (ii), we  utilize the following lemma, which is obtained  from \cite{cai2020provably}, to characterize the difference between an estimated  value function   and the value function of a policy.

\begin{lemma}[Extended Value Difference \citep{cai2020provably}]
	Let $\pi = \{ \pi _h \}_{h =1}^H $ and $\pi' = \{ \pi_h' \}_{ h = 1}^H  $ be any two policies and let $\{ \hat Q_h \}_{h=1}^H $ be any estimated Q-functions. 
	For all $h \in [H]$, we define the estimated value function $\hat V_h  \colon \cS\rightarrow \RR$  by setting $\hat V_h (x) = \langle \hat Q_h (x, \cdot ), \pi_h (\cdot \given x ) \rangle_{\cA}$ for all $x \in \cS$. 
	 For all $x \in \cS$, we have 
\$
	\hat{V}_1(x) - V_1^{\pi' }(x) &= \sum_{h=1}^H \EE_{\pi' }\big[ \langle \hat{Q}_h (s_h,\cdot) , \pi_h(\cdot\given s_h) - \pi'_h(\cdot\given s_h)\rangle_{\cA } \biggiven s_1=x\big]\\
	&\qquad + \sum_{h=1}^H\EE_{\pi' }\big[     \hat{Q}_h (s_h,a_h)  - (\BB_h \hat{V}_{h+1} )(s_h,a_h)  \biggiven s_1=x \big],
\$
	where $\EE_{\pi' } $ is taken with respect to the trajectory generated by $\pi'$, while $\BB_h$ is the Bellman operator defined in Equation \eqref{eq:def_bellman_op}.
	\label{lem:value_diff}
\end{lemma}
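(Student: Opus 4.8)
The plan is to prove the identity by a one-step decomposition of the value difference at each step, followed by a telescoping sum along the trajectory generated by $\pi'$. Throughout I adopt the convention $\hat V_{H+1} \equiv 0$, so that $\BB_H\hat V_{H+1}$ is well defined, and I note that $V_{H+1}^{\pi'}\equiv 0$ as well (the reward sum in the definition of $V_{H+1}^{\pi'}$ is empty). Writing $\Delta_h(x) = \hat V_h(x) - V_h^{\pi'}(x)$ for $h\in[H+1]$, with $\Delta_{H+1}\equiv 0$, the goal is to show that $\Delta_1(x)$ equals the claimed sum of the ``policy-mismatch'' terms and the ``Bellman-error'' terms.

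The first step is the one-step identity: for any $h\in[H]$ and $x\in\cS$,
\[
\Delta_h(x) = \big\langle \hat Q_h(x,\cdot),\, \pi_h(\cdot\given x) - \pi'_h(\cdot\given x)\big\rangle_{\cA} + \big\langle \hat Q_h(x,\cdot) - Q_h^{\pi'}(x,\cdot),\, \pi'_h(\cdot\given x)\big\rangle_{\cA}.
\]
This follows by substituting $\hat V_h(x) = \langle \hat Q_h(x,\cdot),\pi_h(\cdot\given x)\rangle_{\cA}$ (the definition of $\hat V_h$ in the lemma) and $V_h^{\pi'}(x) = \langle Q_h^{\pi'}(x,\cdot),\pi'_h(\cdot\given x)\rangle_{\cA}$ (the Bellman equation for $\pi'$), and then adding and subtracting $\langle \hat Q_h(x,\cdot),\pi'_h(\cdot\given x)\rangle_{\cA}$.

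The second step rewrites the second inner product using $Q_h^{\pi'} = \BB_h V_{h+1}^{\pi'}$ (the Bellman equation) and the fact that $\BB_h$ is affine with linear part $\PP_h$, as in Equation \eqref{eq:def_bellman_op}: for each $a$,
\[
\hat Q_h(x,a) - Q_h^{\pi'}(x,a) = \big(\hat Q_h(x,a) - (\BB_h\hat V_{h+1})(x,a)\big) + (\PP_h\Delta_{h+1})(x,a).
\]
Taking the inner product against $\pi'_h(\cdot\given x)$ and using $(\PP_h\Delta_{h+1})(x,a) = \EE[\Delta_{h+1}(s_{h+1})\given s_h=x,a_h=a]$ yields a recursion: $\Delta_h(s_h)$ equals the policy-mismatch term at step $h$, plus $\hat Q_h(s_h,a_h) - (\BB_h\hat V_{h+1})(s_h,a_h)$ with $a_h\sim\pi'_h(\cdot\given s_h)$, plus $\EE[\Delta_{h+1}(s_{h+1})\given s_h,a_h]$ along the $\pi'$-dynamics.

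The final step iterates this recursion from $h=1$ and takes the expectation $\EE_{\pi'}[\,\cdot\given s_1=x]$, using the tower property of conditional expectation along the trajectory induced by $\pi'$ (so $a_h\sim\pi'_h(\cdot\given s_h)$ and $s_{h+1}\sim\cP_h(\cdot\given s_h,a_h)$), together with $\Delta_{H+1}\equiv 0$; the $\PP_h\Delta_{h+1}$ terms telescope and leave exactly the two sums in the statement. The only genuine subtlety — and the reason this is the \emph{extended} value difference lemma — is the bookkeeping of expectations: the states $s_h$ over which we average are generated by $\pi'$, yet the mismatch term evaluates $\langle \hat Q_h(s_h,\cdot), \pi_h(\cdot\given s_h) - \pi'_h(\cdot\given s_h)\rangle_{\cA}$ using the \emph{other} policy $\pi_h$. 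One must keep $\hat Q_h$ and $\hat V_{h+1}$ fixed as (non-random) functions, so that $(\PP_h\Delta_{h+1})(s_h,a_h) = \EE[\Delta_{h+1}(s_{h+1})\given s_h,a_h]$ is a bona fide conditional expectation under the $\pi'$-transition; beyond this, the argument is purely algebraic telescoping with no analytic difficulty.
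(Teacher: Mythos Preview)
Your proof is correct and is exactly the standard telescoping argument for this lemma; the paper itself does not give a proof but simply refers to Section~B.1 of \cite{cai2020provably}, whose argument is essentially the one you have written. Your bookkeeping of the convention $\hat V_{H+1}\equiv 0$, the one-step decomposition via adding and subtracting $\langle \hat Q_h(x,\cdot),\pi'_h(\cdot\given x)\rangle_{\cA}$, the use of $Q_h^{\pi'}=\BB_h V_{h+1}^{\pi'}$ together with the affine structure of $\BB_h$ to extract $\PP_h\Delta_{h+1}$, and the final tower-property telescoping under the $\pi'$-trajectory are all exactly right.
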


\begin{proof}
	See Section B.1 in \cite{cai2020provably} for a detailed proof. 
\end{proof}

Applying  Lemma \ref{lem:value_diff} with $\pi=\hat\pi$, $\pi'=\pi^*$, and $\{ \hat Q_h \}_{h= 1}^H $ being the estimated Q-functions constructed by the meta-algorithm, we have 
\$
\hat{V}_1(x) - V_1^{\pi^*}(x) &= \sum_{h=1}^H \EE_{\pi^*}\big[ \langle \hat{Q}_h(s_h,\cdot) , \hat\pi_h(\cdot\given s_h) - \pi^*_h(\cdot\given s_h) \rangle_{\cA} \biggiven s_1=x\big] \notag \\
& \qquad + \sum_{h=1}^H   \EE_{\pi^*}\big[      \hat{Q}_h(s_h,a_h)-  ( \BB_h \hat{V}_{h+1}) (s_h,a_h) \biggiven s_1=x\big],
\$ 
where $\EE_{\pi^*} $  is taken with respect to the trajectory generated by $\pi^*$.
By the definition of the model evaluation error 
  $\iota _h $  in Equation \eqref{eq:def_iota}, 
  we have 
\$
 V_1^{\pi^*}(x) -  \hat{V}_1(x) &= \sum_{h=1}^H \EE_{\pi^*}\big[ \langle \hat{Q}_h(s_h,\cdot) ,   \pi^*_h(\cdot\given s_h)- \hat\pi_h(\cdot\given s_h)  \rangle_{\cA} \biggiven s_1=x\big] +   \sum_{h=1}^H   \EE_{\pi^*}\big[  \iota_h  (s_h,a_h) \biggiven s_1=x\big].
\$ 
Similarly, applying Lemma \ref{lem:value_diff}  with $\pi=\pi'=\hat\pi$ and $\{ \hat Q_h \}_{h=1}^H$ being the  estimated Q-functions constructed by the meta-algorithm,   we  have 
\$
  \hat{V}_1(x)- V_1^{\hat\pi}(x)   &=  
  \sum_{h=1}^H\EE_{\hat\pi}\big[    \hat{Q}_h(s_h,a_h)- (  \BB_h \hat{V}_{h+1}) (s_h,a_h) \biggiven s_1=x\big] = - \sum_{h=1}^H\EE_{\hat\pi}\big[ \iota_h (s_h,a_h)  \biggiven s_1=x\big],
\$ 
where $\EE_{\hat\pi} $ is taken with respect to the trajectory generated by    $\hat\pi$. By Equation \eqref{eq:decomp_terms}, we conclude the proof of Lemma \ref{lem:reg_decomp}. 
\end{proof}

\section{Proofs of Pessimistic Value Iteration}


\subsection{Proof of Lemma \ref{lem:model_eval_err}} \label{sec:app:proof_lem_model_eval}

\begin{proof}[Proof of Lemma \ref{lem:model_eval_err}]
    We first show that on the event $\cE$ defined in Equation \eqref{eq:def_event_eval_err_general}, the model evaluation errors $\{ \iota_h \}_{h=1}^H $ are 
  nonnegative. In the sequel, we assume that $\cE$ holds. Recall the construction of $\overline Q_h $ in Line \ref{alg:general_Qbar} of Algorithm~\ref{alg:pess_greedy_general} for all $h \in [H]$. 
  For all $h \in [H]$ and all $(x,a) \in \cS \times \cA$,
   if 
 $\overline{Q}_h(x,a)<0$, we have 
 \$
 \hat Q_h (x,a)  = \min \{  \overline Q_h (x,a) , H - h + 1\} ^{+} = 0.
 \$ 
 By the definition of $\iota_h$ in Equation \eqref{eq:def_iota}, we have 
 \$ 
 \iota_h(x,a)= (\BB_h\hat{V}_{h+1})(x,a) - \hat{Q}_h(x,a) = (\BB_h\hat{V}_{h+1})(x,a)\geq 0,
 \$
 as $r_h$ and $\hat{V}_{h+1}$ are nonnegative.
 Otherwise, if $\overline {Q}_h (x,a) \geq 0$, we have
 \$ 
 \hat Q_h (x,a)  = \min \{  \overline Q_h (x,a) , H - h + 1\} ^{+} \leq  \overline Q_h (x,a).
 \$
As $\{ \Gamma_h \}_{h = 1}^H $ are $\xi$-uncertainty quantifiers, which are defined in Definition \ref{def:uncertainty_quantifier}, we have 
\begin{equation*}
\begin{split}
\iota_h(x,a) &
\geq (\BB_h\hat{V}_{h+1}) (x,a) - \overline{Q}_h(x,a) =  (\BB_h\hat{V}_{h+1}) (x,a) - (\hat{\BB}_h\hat{V}_{h+1}) (x,a) + \Gamma_{h}(x,a) \geq 0.
\end{split}
\end{equation*}
Here the last inequality follows from the definition of  $\cE$ in Equation \eqref{eq:def_event_eval_err_general}. Therefore, we conclude the proof of $\iota_h(x,a)\geq 0$ for all $h\in [H]$ and all $(x,a)\in \cS\times\cA$ on $\cE$.

It remains to establish  the upper bound in Equation \eqref{eq:model_eval_err_bound}. For all $h\in [H]$ and all $(x,a) \in \cS \times \cA$, combining the definition of event $\cE$ in Equation \eqref{eq:def_event_eval_err_general} as well as the construction of $\overline Q_h$ in Line \ref{alg:general_Qbar} of Algorithm~\ref{alg:pess_greedy_general} gives
 \$ 
 \overline{Q}_h(x,a)= (\hat\BB_h\hat{V}_{h+1}) (x,a)- \Gamma_h(x,a)\leq (\BB_h\hat{V}_{h+1} ) (x,a)\leq H-h+1, 
 \$
 where the first inequality follows from the triangle inequality, while the second inequality follows from the fact that $r_h \in [0,1]$ and $\hat V_{h+1}\in [0,H-h]$.  
Hence, we have 
\$
\hat Q_h (x,a) = \min\{ \overline  Q_h (x,a), H-h+1\}^+= \max \{ \overline  Q_h (x,a), 0  \} \geq \overline Q_h(x,a),
\$
which by the definition of $\iota_h$ in Equation \eqref{eq:def_iota} implies
\begin{equation*}
\begin{split}
\iota_h(x,a) &=  (\BB_h\hat{V}_{h+1} ) (x,a) - \hat{Q}_h(x,a)\leq (\BB_h\hat{V}_{h+1}) (x,a) - \overline{Q}_h(x,a) \\
&= (\BB_h\hat{V}_{h+1} ) (x,a) - (\hat\BB_h\hat{V}_{h+1} ) (x,a) +\Gamma_h(x,a) \leq 2\Gamma_h(x,a).
\end{split}
\end{equation*}
Here the last inequality follows from the definition of  $\cE$ in Equation \eqref{eq:def_event_eval_err_general}. Therefore, we complete the proof of $\iota_h(x,a)\leq 2\Gamma_h(x,a)$ for all $h\in [H]$ and all $(x,a)\in \cS\times\cA$ on $\cE$.

In summary, we conclude that on $\cE$,
$$
0\leq \iota_h(x,a) \leq 2\Gamma_h(x,a),\qquad \forall (x,a)\in \cS\times \cA, ~\forall h \in [H]. 
$$
Therefore, we conclude the proof of Lemma \ref{lem:model_eval_err}.
\end{proof}


\subsection{Proof of Lemma \ref{lemma:linear_MDP_uncertainty}} \label{sec:app:proof_prop_linear}

\begin{proof}[Proof of Lemma  \ref{lemma:linear_MDP_uncertainty}]
It suffices to show that under Assumption \ref{assump:data_generate}, the event $\cE$ defined in Equation \eqref{eq:def_event_eval_err_general} satisfies $\PP_{\cD}(\cE)\geq 1-\xi$  with the $\xi$-uncertainty quantifiers $\{\Gamma_h\}_{h=1}^H$ defined in Equation \eqref{eq:w05}.   
To this end, we upper bound the difference between $ (\BB_h \hat V_{h+1}) (x,a)$ and $(\hat \BB_h \hat V_{h+1}) (x,a)$ for all $h \in [H]$ and all $(x,a) \in \cS \times \cA$, where the Bellman operator $\BB_h$ is defined in Equation  \eqref{eq:def_bellman_op}, the estimated Bellman operator $\hat \BB_h$ is defined in Equation \eqref{eq:wlin}, and the estimated value function $\hat V_{h+1}$ is constructed in Line \ref{alg:linear_Vhat} of Algorithm \ref{alg:pess_greedy}.  
 
For any function $V\colon \cS \rightarrow [ 0, H]$, Definition \ref{assump:linear_mdp} ensures that $\PP_h V$ and $\BB_h V$ are linear in the feature map $\phi$ for all $h\in [H]$.
To see this, note that Equation \eqref{eq:w07} implies
\$
(\PP _h V)(x,a) = \Big\la \phi(x,a) , \int_{\cS} V(x') \mu_h (   x') \ud x' \Big \ra , \qquad \forall (x,a) \in \cS \times \cA, ~\forall h \in [H].
\$
Also, Equation  \eqref{eq:w07} ensures that the expected reward is linear in $\phi$ for all $h\in [H]$, which implies
\#\label{eq:bv_linear}
(\BB _h V)(x,a) = \la \phi(x,a) ,\theta_h\ra + \Big\la \phi(x,a) , \int_{\cS} V(x') \mu_h (   x') \ud x' \Big \ra, \qquad \forall (x,a) \in \cS \times \cA, ~\forall h \in [H].
\#
Hence, there exists an unknown vector  $w_h \in \RR^d$ such that 
\#\label{eq:www}
(\BB_h \hat V_{h+1}) (x, a ) = \phi(x, a)^\top w_h, \qquad \forall (x,a) \in \cS \times \cA, ~\forall h \in [H].
\# 
Recall the definition of $\hat w_h$ in Equation \eqref{eq:w18} and the construction of $\hat \BB_h \hat V_{h+1}$ in Equation \eqref{eq:wlin}.
The following lemma upper bounds the norms of $w_h$ and $\hat w_h$, respectively.

\begin{lemma}[Bounded Weights of Value Functions] Let $V_{\max} > 0$ be an absolute constant. For any function $V:\cS\to [0,V_{\max}]$ and any $h \in [H]$, we have 
  \$
  \|w_h\|\leq (1+V_{\max})\sqrt{d},\qquad \|\hat{w}_h\|\leq H\sqrt{Kd/\lambda},
  \$
  where $w_h$ and $\hat w_h$ are defined in Equations \eqref{eq:www} and \eqref{eq:w18}, respectively.
  \label{lem:bound_weight_of_bellman}
  \end{lemma}
  \begin{proof}[Proof of Lemma \ref{lem:bound_weight_of_bellman}]
    For all $h \in [H]$, Equations \eqref{eq:bv_linear} and \eqref{eq:www} imply
    \$
 w_h  =  \theta_h + \int_{\cS} V(x') \mu_h(  x') \ud x' .
\$
By the triangle inequality and the fact that $\|  \mu_h(\cS ) \|\leq \sqrt{d}$ in Definition \ref{assump:linear_mdp} with the notation $\|\mu_h(\cS)\| = \int_{\cS}\|\mu_h(x')\|\ud x'$, we have 
\#\label{eq:w_norm}
  \|w_h\|   &\leq \|\theta_h\| + \Big\|\int_{\cS} V(x')\mu_h(  x') \ud x' \Big\| \leq  \|\theta_h\| + \int_{\cS} \|V(x')\mu_h(  x') \|\ud x'  \notag \\
  &\leq \sqrt{d} + V_{\max} \cdot  \|  \mu_h(\cS ) \|  \leq (1 + V_{\max}) \sqrt{d},
\# 
  where the third inequality follows from 
  the fact that  $V\in [0,V_{\max}]$. 
  
  Meanwhile,  by the definition of  $\hat{w}_{h}$ in  Equation \eqref{eq:w18} and the triangle inequality, we have 
  \begin{equation*}
  \begin{split}
  \|\hat{w}_h\| &= \Big\|     \Lambda_h  ^{-1} \Big( \sum_{\tau=1}^{K} \phi(x_h^\tau,a_h^\tau) \cdot \big(r_h^\tau + \hat{V}_{h+1}(x_{h+1}^\tau)\big)  \Big)\Big\| \\
  &\leq \sum_{\tau=1}^K \big\|    \Lambda_h ^{-1}   \phi(x_h^\tau,a_h^\tau)  \cdot \big(r_h^\tau + \hat{V}_{h+1}(x_{h+1}^\tau)\big)\big\|.
  \end{split}
  \end{equation*}
Note that $|r_h^\tau + \hat{V}_{h+1}(x_{h+1}^\tau) | \leq  H $, which follows from the fact that $r_h^\tau\in [0,1]$ and $\hat{V}_{h+1}\in[0, H-1]$ by Line \ref{alg:linear_Vhat} of Algorithm \ref{alg:pess_greedy}. 
Also, note that $\Lambda_h  \succeq \lambda \cdot I$, which follows from the definition of $\Lambda_h$ in Equation \eqref{eq:w18}.
Hence, we have 
  \$
  \|\hat{w}_h\| 
  \leq H \cdot \sum_{\tau=1}^K \|    \Lambda_h ^{-1} \phi(x_h^\tau,a_h^\tau) \| &= H \cdot \sum_{\tau=1}^K \sqrt{\phi(x_h^\tau,a_h^\tau)^\top  \Lambda_h ^{-1/2} \Lambda_h^{-1}  \Lambda_h ^{-1/2} \phi(x_h^\tau,a_h^\tau) }\\
  &\leq \frac{H}{\sqrt{\lambda}} \cdot \sum_{\tau=1}^K  \sqrt{ \phi(x_h^\tau, a_h^\tau )^\top  \Lambda _h  ^{-1} \phi(x_h^\tau, a_h^\tau ) } ,
\$ 
  where the last inequality follows from the fact that  $\|\Lambda_h^{-1}\|_{\oper}\leq \lambda^{-1}$. Here  $\| \cdot \|_{\oper}$ denotes   the matrix operator norm.
  By the Cauchy-Schwarz inequality, we have 
\# \label{eq:hatw_norm}
&  \|\hat{w}_h\| 
 \leq H  \sqrt{K/\lambda} \cdot \sqrt{ \sum_{\tau=1}^K\phi(x_h^\tau,a_h^\tau)^\top  \Lambda_h^{-1}\phi(x_h^\tau,a_h^\tau) } =  H  \sqrt{K/\lambda} \cdot \sqrt{\tr\Bigl (\Lambda_h^{-1} \sum_{\tau=1}^K\phi(x_h^\tau,a_h^\tau) \phi(x_h^\tau,a_h^\tau)^\top \Bigr) } \notag \\
   & \qquad =  H  \sqrt{K/\lambda} \cdot \sqrt{\tr\bigl (\Lambda_h^{-1} ( \Lambda_h  - \lambda \cdot I )\bigr ) } \leq H  \sqrt{K/\lambda} \cdot \sqrt{\tr  (\Lambda_h^{-1}   \Lambda_h  ) } = H \sqrt{ K d / \lambda } , 
 \#
  where the second equality follows from the definition of $\Lambda_h $ in Equation \eqref{eq:w18}.

  Therefore, combining Equations \eqref{eq:w_norm} and \eqref{eq:hatw_norm}, we conclude the proof of Lemma \ref{lem:bound_weight_of_bellman}. 
  \end{proof}

We upper bound the difference between 
$\BB_h \hat V_{h+1} $ and $\hat \BB_h \hat V_{h+1}$. 
For all $h\in [H]$ and all
  $(x,a) \in \cS \times \cA$,  we have  
  \# 
    &(\BB_h\hat{V}_{h+1})(x,a) - (\hat{\BB}_h\hat{V}_{h+1})(x,a) =  \phi (x,a)^\top ( w_h - \hat w_h ) \notag \\
    &\qquad = \phi (x,a)^\top   w_h - \phi(x,a)^\top \Lambda_h^{-1} \Big( \sum_{\tau=1}^{K} \phi(x_h^\tau,a_h^\tau) \cdot \bigl (r_h^\tau + \hat{V}_{h+1}(x_{h+1}^\tau) \bigr ) \Big) \notag  \\
    & \qquad = \underbrace{\phi(x,a)^\top w_h - \phi(x,a)^\top \Lambda_h^{-1} \Big( \sum_{\tau=1}^{K} \phi(x_h^\tau,a_h^\tau) \cdot (\BB_h\hat{V}_{h+1}) (x_{h}^\tau,a_h^\tau)  \Big)}_{\displaystyle \text{(i)}} \label{eq:term1_diff}\\
    &\qquad  \qquad -  \underbrace{\phi (x,a)^\top \Lambda_h^{-1} \Big( \sum_{\tau=1}^{K} \phi(x_h^\tau,a_h^\tau) \cdot \big( r_h^\tau + \hat{V}_{h+1}(x_{h+1}^\tau) - (\BB_h\hat{V}_{h+1}) (x_{h}^\tau,a_h^\tau) \big ) \Big)}_{\displaystyle\text{(ii)}}.\notag
  \# 
  Here the first equality follows from the definition of the Bellman operator $\BB_h$ in Equation  \eqref{eq:def_bellman_op}, the decomposition of $\BB_h$ in Equation \eqref{eq:bv_linear}, and the definition of the estimated Bellman operator $\hat \BB_h$ in Equation \eqref{eq:wlin}, while 
  the second equality follows from the definition of $\hat w_h$ in Equation \eqref{eq:w18}. 
By the triangle inequality, we have 
\$
\bigl | (\BB_h\hat{V}_{h+1})(x,a) - (\hat{\BB}_h\hat{V}_{h+1})(x,a) \bigr | \leq  |\text{(i)}| + |\text{(ii)}|. 
\$

In the sequel, we upper bound terms (i) and (ii) respectively. 
By the construction of the estimated value function $\hat{V}_{h+1}$ in Line \ref{alg:linear_Vhat} of Algorithm \ref{alg:pess_greedy}, we have $\hat{V}_{h+1}\in [0,H-1]$.
By Lemma \ref{lem:bound_weight_of_bellman}, we have $\|w_h\|\leq H\sqrt{d}$.
Hence, term (i) defined in Equation \eqref{eq:term1_diff} is upper bounded by 
\#\label{eq:zzz888}
 |\text{(i)} | 
&= \bigg| \phi(x,a)^\top w_h - \phi(x,a)^\top \Lambda_h^{-1} \Big( \sum_{\tau=1}^{K} \phi(x_h^\tau,a_h^\tau) \phi(x_h^\tau,a_h^\tau)^\top w_h  \Big)\bigg| \notag\\
&= \big| \phi(x,a)^\top w_h - \phi (x,a)^\top \Lambda_h^{-1}(\Lambda_h -  \lambda\cdot I)w_h \big|  = \lambda \cdot \big| \phi(x,a)^\top \Lambda_h^{-1} w_h   \big| \notag \\
&  \leq \lambda \cdot  \|w_h \|_{ \Lambda_h^{-1}}\cdot  \|\phi(x,a) \|_{ \Lambda_h^{-1}} \leq H\sqrt{d \lambda } \cdot  \sqrt{\phi(x,a)^\top  \Lambda_h  ^{-1}\phi(x,a)}.
\#
Here the second  equality follows from the definition of $\Lambda_h$ in Equation \eqref{eq:w18}. Also, the first inequality follows from the Cauchy-Schwarz inequality, while the last inequality follows from the fact that $$\|w_h\|_{\Lambda_h^{-1}} =  \sqrt{ w_h ^\top \Lambda_h^{-1} w_h } \leq  \|  \Lambda_h ^{-1}  \|_{\oper} ^{1/2} \cdot \| w_h \| \leq H \sqrt{d/ \lambda}.$$
Here $\| \cdot \|_{\oper}$ denotes the matrix operator norm and we use the fact that $\|  \Lambda_h ^{-1}  \|_{\oper} \leq \lambda^{-1}$.

It remains  to upper bound term (ii). 
For notational simplicity, for any $h \in [H]$, any $\tau \in [K]$, and any function $V \colon \cS \rightarrow [0, H]$, 
we define the random variable
\#\label{eq:def_eps_h^tau(V)}
\epsilon_h^\tau (V) = r_h ^\tau + V(x_{h+1}^\tau ) - (\BB_h V) (x_h^\tau ,a_h ^\tau ) . 
\#
By the Cauchy-Schwarz inequality, term (ii) defined in Equation \eqref{eq:term1_diff} is upper bounded by
\# \label{eq:define_term3} 
  |\text{(ii)} | &= \bigg|  \phi (x,a)^\top \Lambda_h^{-1} \Big( \sum_{\tau=1}^{K} \phi(x_h^\tau,a_h^\tau) \cdot \epsilon_h^\tau(\hat{V}_{h+1}) \Big)    \bigg| \notag \\
  & \leq \Big\|  \sum_{\tau=1}^{K} \phi(x_h^\tau,a_h^\tau) \cdot \epsilon_h^\tau(\hat{V}_{h+1}) \Big\|_{\Lambda_h^{-1}} \cdot \|\phi(x,a) \|_{ \Lambda_h^{-1}}\notag\\
  &= \underbrace{\Big\|  \sum_{\tau=1}^{K} \phi(x_h^\tau,a_h^\tau) \cdot \epsilon_h^\tau(\hat{V}_{h+1}) \Big\|_{\Lambda_h^{-1}}}_{\displaystyle \text{(iii)} } \cdot \sqrt{\phi(x,a)^\top \Lambda_h^{-1}\phi(x,a)}.
  \# 
  
In the sequel, we upper bound term (iii) via concentration inequalities. An obstacle is that $\hat V_{h+1}$ 
depends on $\{(x_h^\tau,a_h^\tau)\}_{\tau=1}^K$ via $\{(x_{h'}^\tau,a_{h'}^\tau)\}_{\tau\in[K],h'>h}$, as it is constructed based on the dataset $\cD$. To this end, we resort to uniform concentration inequalities to upper bound
\#
\sup_{V \in \cV_{h+1}(R,B,\lambda)} \Big\|  \sum_{\tau=1}^{K} \phi(x_h^\tau,a_h^\tau) \cdot \epsilon_h^\tau(V) \Big\|\notag
\#
for each $h\in [H]$, where it holds that $\hat{V}_{h+1} \in \cV_{h+1}(R,B,\lambda)$. Here for all $h\in [H]$, we define the function class 
\#\label{eq:form_function_class}
&\cV_{h}(R,B,\lambda) =\big\{ V_h(x;\theta,\beta,\Sigma)\colon \cS\to [0,H]~\text{with}~\|\theta\|\leq R, \beta\in [0,B], \Sigma \succeq \lambda\cdot I   \big\},\notag \\
&\text{where~~}V_h(x;\theta,\beta,\Sigma) = \max_{a\in \cA}  \Bigl\{ \min\bigl \{ \phi(x,a)^\top \theta - \beta\cdot \sqrt{ \phi(x,a)^\top \Sigma^{-1}\phi(x,a) },H  - h + 1\bigr \}^+ \Bigr\}.
\#


For all $\varepsilon >0$ and all $h\in [H]$,  let $ \cN_{h} (\varepsilon; R, B, \lambda)$ be the minimal 
$\varepsilon$-cover  of 
$\cV_h   (R, B, \lambda)$ with respect to the supremum norm. 
In other words, for any function $V \in \cV_h   (R, B, \lambda)$, there exists a function $V^\dagger  \in  \cN_{h} (\varepsilon; R, B, \lambda)$ such that 
\$
\sup_{x\in \cS} \bigl| V(x) - V^\dagger (x) \bigr| \leq \varepsilon.
\$ 
Meanwhile, among all $\varepsilon$-covers of 
$\cV_h   (R, B, \lambda)$ defined by such a property, we choose $ \cN_{h} (\varepsilon; R, B, \lambda)$ as the one with the minimal cardinality.

By Lemma \ref{lem:bound_weight_of_bellman}, we have $\| \hat w_h \| \leq H \sqrt{Kd / \lambda}$. Hence, for all $h\in [H]$, we have
\#
\hat V_{h+1} \in \cV_{h+1} (R_0, B_0 , \lambda),\qquad \text{where~~}R_0=H\sqrt{Kd/\lambda},~B_0=2\beta.\notag
\#
Here $\lambda>0$ is the regularization parameter and $\beta>0$ is the scaling parameter, which are specified in Algorithm \ref{alg:pess_greedy}.
For notational simplicity, we use $\cV_{h+1} $ and $\cN_{h+1} (\varepsilon)$ to denote $\cV_{h+1} (R_0, B_0 , \lambda)$
and $\cN_{h+1} (\varepsilon; R_0, B_0, \lambda)$, respectively.
As it holds that $\hat{V}_{h+1}\in \cV_{h+1}$ and $\cN_{h+1} (\varepsilon)$ is an $\varepsilon$-cover of $\cV_{h+1}$, 
there exists a function $V^\dagger_{h+1}  \in \cN_{h+1} (\varepsilon)$ such that 
\#\label{eq:bound_sup_norm_diff}
\sup_{x\in \cS} \big| \hat V_{h+1} (x) - V^\dagger_{h+1} (x)\big| \leq \varepsilon.
\#
Hence, given $V^\dagger_{h+1} $ and $\hat{V}_{h+1}$, the monotonicity of conditional expectations implies 
\#\label{eq:bound_cond_exp}
& \big|(\PP_hV^\dagger_{h+1} )(x,a) - (\PP_h\hat{V}_{h+1})(x,a)\big|\\
&\qquad=\Big| \EE  \big[V^\dagger_{h+1} (s_{h+1})\biggiven s_h =  x, a_h = a  \big] - \EE  \big[ \hat V_{h+1} (s_{h+1})\biggiven s_h = x,a_h = a  \big]  \Big| \notag \\
&\qquad \leq  \EE \Big[ \big|V^\dagger_{h+1} (s_{h+1})- \hat V_{h+1} (s_{h+1})\big| \Biggiven s_h = x,a_h = a  \Big]   \leq \varepsilon,\qquad \forall (x,a)\in \cS\times \cA,~\forall h\in [H].\notag
\#
Here the conditional expectation is induced by the transition kernel $\cP_h(\cdot \given x,a)$. 
Combining Equation \eqref{eq:bound_cond_exp} and the definition of the Bellman operator $\BB_h$ in Equation \eqref{eq:def_bellman_op}, we have 
\# \label{eq:bound_bellman_update_diff}
\big| (\BB_h V^\dagger_{h+1} )(x,a) - (\BB_h  \hat V_{h+1}) (x,a)  \big| \leq \varepsilon,\qquad \forall (x,a)\in \cS\times \cA,~\forall h\in [H].
\#
By the triangle inequality, Equations \eqref{eq:bound_sup_norm_diff} and \eqref{eq:bound_bellman_update_diff} imply
\#\label{eq:cover_2eps}
\Big| \bigl (r_h (x,a)  + \hat V_{h+1} (x') - (\BB_h  \hat V_{h+1} )(x,a) \bigr ) -   \bigl (r_h(x,a) + V^\dagger_{h+1} (x') - (\BB_h V^\dagger_{h+1}  )(x,a)\big )  \Big| \leq 2\varepsilon   
\#
for all $h\in [H]$ and all $ (x,a,x')\in \cS\times \cA\times \cS$. Setting $(x,a,x')=(x_h^\tau,a_h^\tau,x_{h+1}^\tau)$ in Equation \eqref{eq:cover_2eps}, we have
\#\label{eq:error_2eps}
\bigl| \epsilon_h^\tau(\hat V_{h+1} )-\epsilon_h^\tau(V^\dagger_{h+1}  )\bigr| \leq 2\varepsilon, \qquad \forall \tau\in [K],~\forall h\in [H].
\#
Also, recall the definition of term (iii) in Equation \eqref{eq:define_term3}. 
By the Cauchy-Schwarz inequality, for any two vectors $a,b\in \RR^d$ and any positive definite matrix $\Lambda\in \RR^{d\times d}_+$, it holds that $\|a+b\|_{\Lambda}^2 \leq 2\cdot \|a\|_{\Lambda}^2 + 2\cdot \|b\|_\Lambda^2$. Hence, for all $h\in[H]$, we have 
\#\label{eq:bound_term3_1}
  |\textrm{(iii)}|^2 \leq   2\cdot \Big\| \sum_{\tau=1}^{K} \phi(x_h^\tau,a_h^\tau) \cdot \epsilon_h^\tau(V^\dagger_{h+1}) \Big\|_{\Lambda_h^{-1}}^2 + 2\cdot \Big\| \sum_{\tau=1}^{K} \phi(x_h^\tau,a_h^\tau) \cdot \big(\epsilon_h^\tau(\hat V_{h+1} )-\epsilon_h^\tau(V^\dagger_{h+1})\big) \Big\|_{\Lambda_h^{-1}}^2. 
\#
The second term on the right-hand side of Equation \eqref{eq:bound_term3_1} is upper bounded by 
\$
&2\cdot \Big\| \sum_{\tau=1}^{K} \phi(x_h^\tau,a_h^\tau) \cdot \big(\epsilon_h^\tau(\hat V_{h+1})-\epsilon_h^\tau(V^\dagger_{h+1})\big) \Big\|_{\Lambda_h^{-1}}^2 \\
&\qquad  =  2\cdot \sum_{\tau,\tau'=1}^K \phi(x_h^\tau,a_h^\tau)^\top   \Lambda_h  ^{-1} \phi(x_h^{\tau'},a_h^{\tau'}) \cdot  \big(\epsilon_h^\tau(\hat V_{h+1} )-\epsilon_h^\tau(V^\dagger_{h+1})\big) \cdot \big(\epsilon_h^{\tau'}( \hat V_{h+1})-\epsilon_h^{\tau'}(V^\dagger_{h+1})\big)\\
&\qquad \leq 8\varepsilon^2 \cdot \sum_{\tau,\tau'=1}^K \bigl | \phi(x_h^\tau,a_h^\tau)^\top \Lambda_h^{-1} \phi(x_h^{\tau'},a_h^{\tau'}) \big |  \leq 8\varepsilon^2 \cdot \sum_{\tau,\tau'=1}^K  \| \phi(x_h^\tau,a_h^\tau)  \| \cdot   \| \phi(x_h^{\tau'} ,a_h^{\tau'} )  \| \cdot \|  \Lambda_h ^{-1} \|_{\oper} ,
\$
where the first inequality follows from Equation \eqref{eq:error_2eps}. 
As it holds that $\Lambda _h \succeq \lambda \cdot I$ by the definition of $\Lambda_h$ in Equation \eqref{eq:w18} and $\| \phi(x,a) \| \leq 1$ for all $(x,a) \in \cS \times \cA$ by Definition \ref{assump:linear_mdp}, for all $h\in [H]$, we have
\#\label{eq:bound_term3_2}
2\cdot \Big\| \sum_{\tau=1}^{K} \phi(x_h^\tau,a_h^\tau) \cdot \big(\epsilon_h^\tau(\hat V_{h+1})-\epsilon_h^\tau(V^\dagger_{h+1})\big) \Big\|_{\Lambda_h^{-1}}^2 \leq   8 \varepsilon^2  K^2 / \lambda. 
\#
Combining Equations \eqref{eq:bound_term3_1} and \eqref{eq:bound_term3_2}, for all $h\in[H]$, we have 
\#\label{eq:bound_term3_3}
|\textrm{(iii)}|^2 \leq  2 \cdot  \sup_{V \in \cN_{h+1} (\varepsilon)}    \Big\| \sum_{\tau=1}^{K} \phi(x_h^\tau,a_h^\tau) \cdot \epsilon_h^\tau(V) \Big\|_{\Lambda_h^{-1}}^2   +  8 \varepsilon^2  K^2 / \lambda. 
\#
Note that the right-hand side of Equation \eqref{eq:bound_term3_3} does not involve the estimated value functions $  \hat Q_{h}$ and $ \hat V_{h+1}  $, which are constructed  based on the dataset $\cD$. Hence, it allows  us to upper bound the first term via uniform concentration inequalities. 
We utilize the 
following lemma to characterize the first term for any fixed function $V \in \cN_{h+1} (\varepsilon)$. Recall the definition of $\epsilon_h^\tau(V)$ in Equation \eqref{eq:def_eps_h^tau(V)}. Also recall that $\PP_{\cD}$ is the joint distribution of the data collecting process.


\begin{lemma}[Concentration of Self-Normalized Processes] \label{lem:concen_bellman_eval_singleV}
  Let $V :\cS\to [0,H -1]$ be any fixed function.
Under Assumption \ref{assump:data_generate}, for any fixed $h\in [H]$ and any $\delta  \in (0,1)$, 
we have 
\$
\PP_{\cD} \biggl(  \Big\|   \sum_{\tau=1}^{K} \phi(x_h^\tau,a_h^\tau) \cdot \epsilon_h^\tau(V) \Big\|_{\Lambda_h^{-1}}^2 > H^2\cdot \bigl (  2  \cdot \log(1/ \delta ) + d\cdot \log(1+K/\lambda)\big) 
\biggr) \leq \delta.
\$
  \end{lemma}

  \begin{proof}[Proof of Lemma \ref{lem:concen_bellman_eval_singleV}]
For the fixed $h\in [H]$ and all $\tau \in \{ 0, \ldots , K \}$, we define the $\sigma$-algebra 
  $$
  \cF_{h,\tau   } = \sigma\big( \{ (x_h^j,a_h^j)  \} _{j=1}^{(\tau+1) \wedge K }\cup \{(r_{h}^j, x_{h+1}^j) \} _{j=1}^{\tau}      \big),
  $$
  where $\sigma(\cdot)$ denotes the $\sigma$-algebra generated by a set of random variables and $(\tau + 1) \wedge K$ denotes $\min \{ \tau + 1, K \}$. For all $\tau \in [K]$, we have $\phi (x_h^\tau, a_h^\tau) \in \cF_{h, \tau - 1 }$, as $(x_h^\tau, a_h^\tau)$ is $\cF_{h, \tau-1}$-measurable.  
  Also, for the fixed function $V\colon \cS\to [0,H-1]$ and all $\tau \in [K]$, we have 
  \$
  \epsilon_h^\tau (V) =r_h^\tau  + V(x_{h+1}^\tau ) - (\BB_h V)(x_h^\tau ,a_h^\tau)\in \cF_{h, \tau },
  \$
  as $(r_h^\tau, x_{h+1}^\tau)$ is $\cF_{h, \tau}$-measurable. Hence,  
  $\{ \epsilon_{h}^\tau (V) \}_{\tau= 1}^K $ is a stochastic process adapted to the filtration $\{\cF_{h, \tau}\}_{\tau=0}^K $. By Assumption \ref{assump:data_generate}, we have 
  \begin{equation*}
  \begin{split}
  \EE_{\cD} \big[\epsilon_h^\tau (V) \biggiven \cF_{h,\tau-1}\big] &= \EE_{\cD} \bigl[r_h^\tau + V(x_{h+1}^\tau) \biggiven \{ (x_h^j,a_h^j) \} _{j=1}^{\tau},  \{(r_{h}^j, x_{h+1}^j) \} _{j=1}^{\tau-1} \bigr]  - (\BB_h V) (x_h^\tau , a_h^\tau )\\
  & = \EE \bigl [ r_h(s_h, a_h) + V(s_{h+1}) \biggiven s_h = x_h^\tau , a_h = a_h^\tau  \bigr ] -  (\BB_h V) (x_h^\tau , a_h^\tau ) = 0,
  \end{split}
  \end{equation*}
  where the second equality follows from Equation \eqref{eq:assump_data_generate} and the last equality follows from the definition of the Bellman operator $\BB_h$ in Equation \eqref{eq:def_bellman_op}.
  Here $\EE_{\cD}$ is taken with respect to $\PP_\cD$, while $\EE$ is taken with respect to the immediate reward and next state in the underlying MDP.
  
Moreover, as it holds that $r_h^\tau\in [0,1]$ and $V\in [0,H-1]$, we have $r_h^\tau + V(x_{h+1}^\tau)\in [0,H]$. Meanwhile, we have $(\BB_h V)(x_h^\tau,a_h^\tau)\in [0,H]$, which implies $|\epsilon_h^\tau (V) | \leq   H$. 
  Hence, for the fixed $h\in [H]$ and all $\tau \in [K]$, the random variable $\epsilon_h^\tau (V) $ defined in Equation \eqref{eq:def_eps_h^tau(V)} is mean-zero and $H$-sub-Gaussian conditioning on $\cF_{h, \tau - 1}$.

We invoke Lemma \ref{lem:concen_self_normalized} with $M_0=\lambda \cdot I$ and $M_k  = \lambda \cdot  I + \sum_{\tau =1}^k \phi(x_h^\tau,a_h^\tau)\ \phi(x_h^\tau,a_h^\tau)^\top$
for all $k \in [K]$. For the fixed function $V\colon \cS\to [0,H-1]$ and fixed $h\in [H]$, we have
\$
 \PP_{\cD}  \bigg( \Big\|   \sum_{\tau=1}^{K} \phi(x_h^\tau,a_h^\tau) \cdot \epsilon_h^\tau(V)  \Big\|_{\Lambda_h^{-1}}^2  >   2 H^2 \cdot  \log \Big(  \frac{\det(\Lambda_h)^{1/2}}{\delta  \cdot \det( \lambda \cdot I) ^{1/2} }  \Big)   \bigg ) \leq   \delta
  \$
  for all $\delta \in (0,1)$. Here we use the fact that $M_K=\Lambda_h$. Note that  $\|\phi(x,a)\|\leq 1$ for all $(x,a )\in \cS\times \cA$ by Definition \ref{assump:linear_mdp}.
  We have 
  \begin{equation*}
  \|\Lambda_h\|_{\oper}  =   \Big\|\lambda \cdot  I + \sum_{\tau=1}^K \phi(x_h^\tau,a_h^\tau)\phi(x_h^\tau,a_h^\tau)^\top \Big\| _{\oper} \leq  \lambda  +  \sum_{\tau = 1} ^K  \| \phi(x_h^\tau,a_h^\tau)\phi(x_h^\tau,a_h^\tau)^\top  \|_{\oper} \leq \lambda  + K,
  \end{equation*}
  where $\|\cdot\|_{\oper}$ denotes the matrix operator norm.
  Hence, it holds that
  $\det(\Lambda_h)\leq (\lambda+K)^d$ and $\det(\lambda \cdot I) = \lambda ^d $, which implies
  \#   
& \PP_{\cD}  \bigg( \Big\|   \sum_{\tau=1}^{K} \phi(x_h^\tau,a_h^\tau) \cdot \epsilon_h^\tau(V)  \Big\|_{\Lambda_h^{-1}}^2  >  H^2\cdot \bigl (  2  \cdot \log(1/ \delta ) + d\cdot \log(1+K/\lambda)\big) 
\biggr)  \notag \\
&\qquad \leq \PP_{\cD}  \bigg( \Big\|   \sum_{\tau=1}^{K} \phi(x_h^\tau,a_h^\tau) \cdot \epsilon_h^\tau(V)  \Big\|_{\Lambda_h^{-1}}^2  >   2 H^2 \cdot  \log \Big(  \frac{\det(\Lambda_h)^{1/2}}{\delta  \cdot \det( \lambda \cdot I) ^{1/2} }  \Big)   \bigg ) \leq   \delta. \notag
  \#
Therefore, we conclude the proof of Lemma \ref{lem:concen_bellman_eval_singleV}. 
  \end{proof}

Applying Lemma \ref{lem:concen_bellman_eval_singleV} and the union bound, for any fixed $h \in [H]$, 
we have 
\$
 & \PP_{\cD} \bigg( \sup_{V \in \cN_{h+1} (\varepsilon)}     \Big\| \sum_{\tau=1}^{K} \phi(x_h^\tau,a_h^\tau) \cdot \epsilon_h^\tau(V)  \Big\|_{\Lambda_h^{-1}}^2 >  H^ 2  \cdot \bigl (  2 \cdot \log(1/ \delta ) + d \cdot \log(1+K/\lambda)\big)   \biggr )  \leq   \delta \cdot | \cN_{h+1}(\varepsilon ) | .
\$
For all $\xi \in(0, 1)$ and all $\varepsilon > 0$, we set  $\delta =\xi/(H\cdot  | \cN_{h+1}(\varepsilon)| )$.  
Hence, for any fixed $h \in [H]$,
  it holds that
\# \label{eq:bound_term3_4}
&  \sup_{V \in \cN_{h+1} (\varepsilon)}  \Big\| \sum_{\tau=1}^{K} \phi(x_h^\tau,a_h^\tau) \cdot \epsilon_h^\tau(V)  \Big\|_{\Lambda_h^{-1}}^2 \notag \\
 &\qquad  \leq H^2 \cdot \Bigl ( 2\cdot \log \bigl (H \cdot  | \cN_{h+1} (\varepsilon ) | / \xi  \bigr )   + d \cdot \log (1 + K / \lambda )  \Bigr ) 
\#
with probability at least $1 - \xi / H$, which is taken with respect to $\PP_{\cD}$.
Combining Equations \eqref{eq:bound_term3_3} and \eqref{eq:bound_term3_4}, we have 
\#\label{eq:bound_term3_5}
& \PP_{\cD} \bigg(  \bigcap_{h\in [H]} \biggl \{   \Big\|  \sum_{\tau=1}^{K} \phi(x_h^\tau,a_h^\tau) \cdot \epsilon_h^\tau(\hat{V}_{h+1})  \Big\|_{\Lambda_h^{-1}}^2  \\
& \qquad\qquad  \qquad \leq  2 H^2 \cdot \Bigl ( 2 \cdot \log \bigl (H \cdot  | \cN_{h+1} (\varepsilon ) | / \xi  \bigr )   + d \cdot \log (1 + K / \lambda )  \Bigr )  + 8 \varepsilon^2 K^2 / \lambda \biggr \} \biggr ) \geq 1 - \xi, \notag
 \#
 which follows from the union bound. 
 
It remains to choose a proper $\varepsilon > 0$ and upper bound the $\varepsilon$-covering number $| \cN_{h+1} (\varepsilon)| $. 
In the sequel, we set 
$\varepsilon = dH / K$ and $\lambda = 1$. 
By Equation \eqref{eq:bound_term3_5}, for all $h\in[H]$, it holds that
\#\label{eq:bound_term3_6}
\Big\|  \sum_{\tau=1}^{K} \phi(x_h^\tau,a_h^\tau) \cdot \epsilon_h^\tau(\hat{V}_{h+1}) \Big\|_{\Lambda_h^{-1}}^2  \leq 2H^2 \cdot  
 \Bigl ( 2 \cdot \log \bigl (H \cdot  | \cN_{h+1} (\varepsilon ) | / \xi  \bigr )   + d \cdot \log (1 + K ) + 4 d^2  \Bigr )   
\#
 with probability at least $1-\xi$, which is taken with respect to $\PP_\cD$.
To upper bound $|\cN_{h+1}(\varepsilon)|$, 
 we utilize the 
  following lemma, which is obtained from \cite{jin2020provably}. Recall the definition of the function class $\cV_{h}(R,B,\lambda)$ in Equation \eqref{eq:form_function_class}. Also, recall that $\cN_{h}(\varepsilon;R,B,\lambda)$ is the minimal $\varepsilon$-cover of $\cV_h(R,B,\lambda)$ with respect to the supremum norm.

\begin{lemma}[$\varepsilon$-Covering Number \citep{jin2020provably}]
	For all $h\in [H]$ and all $\varepsilon > 0$, 
	we have 	\label{lem:covering_num}
	$$
	\log | \cN_h (  \varepsilon; R, B, \lambda)  | \leq d \cdot \log (1+ 4 R /  \varepsilon  ) + d^2  \cdot  \log\bigl(1+ 8 d^{1/2} B^2 / ( \varepsilon^2\lambda) \bigr). 	$$
\end{lemma}

\begin{proof}[Proof of Lemma \ref{lem:covering_num}]
	See Lemma D.6 in \cite{jin2020provably} for a detailed proof. 
\end{proof}

Recall that 
\$
\hat V_{h+1} \in \cV_{h+1} (R_0, B_0, \lambda),\qquad \text{where}~~  R_0 = H \sqrt{ Kd/\lambda},~  B_0 = 2\beta,~ \lambda = 1 ,~ \beta = c \cdot d H \sqrt{\zeta}.
\$ 
Here $c>0$ is an absolute constant, $\xi\in (0,1)$ is the confidence parameter, and $\zeta = \log (2 d H K / \xi) $ is specified in Algorithm \ref{alg:pess_greedy}. 
Recall that $\cN_{h+1}(\varepsilon) = \cN_{h+1}(\varepsilon; R_0,B_0,\lambda) $ is the minimal $\varepsilon$-cover of $\cV_{h+1}=\cV_{h+1}(R_0,B_0,\lambda)$ with respect to the supremum norm. Applying Lemma \ref{lem:covering_num} with $\varepsilon = d H / K$,
we have 
\#\label{eq:apply_cov_num}
\log  | \cN_{h+1}(\varepsilon) | &  \leq  d \cdot \log  ( 1 + 4 d^{-1/2}K^{3/2}  ) + d^2 \cdot \log  ( 1 + 32 c^2\cdot d^{1/2}K^2\zeta  )\notag \\
&  \leq  d \cdot \log  ( 1 + 4 d^{1/2}K^2  ) + d^2 \cdot \log  ( 1 + 32 c^2 \cdot d^{1/2}K^2\zeta  ).
\#
As it holds that $\zeta >1$, we set $c\geq 1$ to ensure that the second term on 
the right-hand side of Equation \eqref{eq:apply_cov_num} is the dominating term, where $32 c^2 \cdot d^{1/2}K^2\zeta \geq 1$. 
Hence, we have
\#\label{eq:bound_term4_7}
\log  | \cN_{h+1}(\varepsilon) |  & \leq   2 d^2 \cdot \log  ( 1 + 32c^2 \cdot d^{1/2 }  K^2  \zeta    )  \leq  2 d^2 \cdot \log  ( 64c^2 \cdot d^{1/2 }  K^2  \zeta    ).
\#
By Equations \eqref{eq:bound_term3_6} and \eqref{eq:bound_term4_7}, for all $h\in [H]$, it holds that 
\#\label{eq:www888}
& \Big\|  \sum_{\tau=1}^{K} \phi(x_h^\tau,a_h^\tau) \cdot \epsilon_h^\tau(\hat{V}_{h+1}) \Big\|_{\Lambda_h^{-1}} ^2\notag
\\&\qquad \leq 2 H^2 \cdot  
\bigl ( 2 \cdot \log (H/ \xi  ) + 4d^2 \cdot \log  ( 64c^2\cdot d^{1/2 }  K^2  \zeta    ) + d\cdot \log(1+K) + 4 d^2  \bigr ) 
\#
 with probability at least $1-\xi$, which is taken with respect to $\PP_\cD$. Note that $\log(1+K)\leq\log(2K)\leq \zeta$ and $\log\zeta\leq \zeta$. Hence, we have 
\$
&2 \cdot \log (H/ \xi  ) + 4d^2 \cdot \log  ( d^{1/2 }  K^2  \zeta    ) +d\cdot \log(1+K) +4d^2\\
&\quad \leq 2d^2\cdot \log(dHK^4/\xi)+d\zeta +8d^2\zeta\leq 18d^2 \zeta.
\$
As it holds that $\zeta> 1$ and $\log \zeta \leq \zeta$, Equation \eqref{eq:www888} implies
\#\label{eq:bound_term3_8}
\Big\|  \sum_{\tau=1}^{K} \phi(x_h^\tau,a_h^\tau) \cdot \epsilon_h^\tau(\hat{V}_{h+1}) \Big\|_{\Lambda_h^{-1}} ^2
 \leq d^2H^2\zeta \cdot \big( 36 + 8\cdot \log(64c^2) \big).
\#
We set $c\geq 1$ to be sufficiently large, which ensures that $36+8\cdot \log(64c^2)\leq c^2/4$ on the right-hand side of Equation \eqref{eq:bound_term3_8}. By Equations \eqref{eq:define_term3} and \eqref{eq:bound_term3_8}, for all $h \in [H]$, it holds that  
 \#\label{eq:rrr888}
 | \text{(ii)}| \leq c/2 \cdot d H \sqrt{ \zeta } \cdot \sqrt{ \phi(x,a) ^\top  \Lambda_h  ^{-1} \phi(x,a) }  =  \beta /2 \cdot \sqrt{ \phi(x,a) ^\top  \Lambda_h  ^{-1} \phi(x,a) } 
 \#
with probability at least $1 - \xi$, which is taken with respect to $\PP_{\cD}$.
%
 
 By Equations \eqref{eq:w05}, \eqref{eq:term1_diff}, \eqref{eq:zzz888}, and \eqref{eq:rrr888}, for all $h \in [H]$ and all $(x,a) \in \cS\times \cA$, it holds that 
  \$
\bigl |  (\BB_h \hat V_{h+1} ) (x,a) - (\hat\BB_h \hat V_{h+1} ) (x,a) \bigr |  \leq (  H \sqrt{d} +  \beta /2 ) \cdot \sqrt{ \phi(x,a) ^\top \Lambda_h  ^{-1} \phi(x,a) }  \leq \Gamma_h (x,a)
 \$
 with probability at least $1 - \xi$, which is taken with respect to $\PP_{\cD}$. In other words, $\{ \Gamma_h\}_{h=1}^H$ defined in Equation \eqref{eq:w05}  are  $\xi$-uncertainty quantifiers.  
Therefore, we conclude the proof of Lemma \ref{lemma:linear_MDP_uncertainty}.
\end{proof}
\subsection{Proof of Corollary \ref{cor:well_explore_optimal}} \label{sec:appendix:optimal-explore}

\begin{proof}[Proof of Corollary \ref{cor:well_explore_optimal}]
By the Cauchy-Schwarz inequality, we have
\#\label{eq:bound_eigen}
&\EE_{\pi^*}\Bigl[ \bigl(\phi(s_h,a_h)^\top \Lambda_h^{-1}\phi(s_h,a_h)\bigr)^{1/2} \Biggiven s_1=x\Bigr]\notag \\
&\qquad = \EE_{\pi^*}\Bigl[ \sqrt{\tr\big(\phi(s_h,a_h)^\top \Lambda_h^{-1}\phi(s_h,a_h)\big)} \Biggiven s_1=x\Bigr]\notag \\
&\qquad = \EE_{\pi^*}\Bigl[ \sqrt{\tr\big(\phi(s_h,a_h)\phi(s_h,a_h)^\top \Lambda_h^{-1}\big)} \Biggiven s_1=x\Bigr]\notag \\
&\qquad \leq  \sqrt{\tr\Big(\EE_{\pi^*}\big[\phi(s_h,a_h)\phi(s_h,a_h)^\top \biggiven s_1=x\big]\Lambda_h^{-1}\Big)}
\#
for all $x\in \cS$ and all $h\in [H]$. 
We define the event
\#\label{eq:subopt_bound}
\cE^{\ddagger} = \bigg\{\text{SubOpt}\big(\linpess(\cD);x \big) \leq 2 \beta \sum_{h=1}^H\EE_{\pi^*}\Bigl[ \bigl(\phi(s_h,a_h)^\top \Lambda_h^{-1}\phi(s_h,a_h)\bigr)^{1/2} \Biggiven s_1=x\Bigr]\text{ for all }x\in \cS\bigg\}.
\#
For notational simplicity, we define 
\$
\Sigma_h(x) = \EE_{\pi^*}\big[\phi(s_h,a_h)\phi(s_h,a_h)^\top \biggiven s_1=x\big]
\$ for all $x\in \cS$ and all $h\in [H]$.
On the event $\cE^\dagger \cap \cE^\ddagger$, where $\cE^\dagger$ and $\cE^\ddagger$ are defined in Equations \eqref{eq:event_opt_explore} and \eqref{eq:subopt_bound}, respectively, we have
\$
\text{SubOpt}\big(\linpess(\cD);x \big)&\leq 2 \beta \sum_{h=1}^H\EE_{\pi^*}\Bigl[ \bigl(\phi(s_h,a_h)^\top \Lambda_h^{-1}\phi(s_h,a_h)\bigr)^{1/2} \Biggiven s_1=x\Bigr]\\
&\leq 2 \beta \sum_{h=1}^H \sqrt{\tr\Big(\Sigma_h(x)\cdot  \big(I + c^\dagger \cdot K \cdot \Sigma_h(x) \big)^{-1}\Big)}\\
& =2 \beta \sum_{h=1}^H \sqrt{\sum_{j=1}^d \frac{\lambda_{h,j}(x)}{1+c^\dagger\cdot K \cdot \lambda_{h,j}(x)}}.
\$
Here $\{\lambda_{h,j}(x)\}_{j=1}^d$ are the eigenvalues of $\Sigma_h(x)$ for all $x\in \cS$ and all $h\in [H]$, the first inequality follows from the definition of $\cE^\ddagger$ in Equation \eqref{eq:subopt_bound}, and the second inequality follows from Equation \eqref{eq:bound_eigen} and the definition of $\cE^\dagger$ in Equation \eqref{eq:event_opt_explore}.
Meanwhile, by Definition \ref{assump:linear_mdp}, we have $\|\phi(s,a)\|\leq 1$ for all $(x,a)\in \cS \times \cA$. By Jensen's inequality, we have
\$
\|\Sigma_h(x)\|_{\oper} \leq \EE_{\pi^*}\big[ \|\phi(s_h,a_h)\phi(s_h,a_h)^\top \|_{\oper}\biggiven s_1=x  \big] \leq 1
\$
for all $x\in \cS$ and all $h\in [H]$. As $\Sigma_h(x)$ is positive semidefinite, we have
$ \lambda_{h,j}(x) \in [0,1]$ for all $x\in \cS$, all $h\in [H]$, and all $j\in [d]$. Hence, on $\cE^\dagger \cap \cE^\ddagger$, we have
\$
\text{SubOpt}\big(\linpess(\cD);x \big)&\leq 2 \beta \sum_{h=1}^H \sqrt{\sum_{j=1}^d \frac{\lambda_{h,j}(x)}{1+c^\dagger\cdot K \cdot \lambda_{h,j}(x)}} \\
&\leq 2 \beta \sum_{h=1}^H \sqrt{\sum_{j=1}^d \frac{1}{1+c^\dagger\cdot K}} \leq c'  \cdot d^{3/2} H^2 K^{-1/2} \sqrt{\zeta}
\$
for all $x\in \cS$, where the second inequality follows from the fact that $\lambda_{h,j}(x) \in [0,1]$ for all $x\in \cS$, all $h\in [H]$, and all $j\in [d]$, while the third inequality follows from the choice of the scaling parameter $\beta > 0$ in Corollary \ref{cor:well_explore_optimal}. Here we define 
the absolute constant $c'=2c/\sqrt{c^\dagger}>0$, where $c^\dagger>0$ is the absolute constant used in Equation \eqref{eq:event_opt_explore} and $c>0$ is the absolute constant used in Theorem \ref{thm:regret_upper_linear}.
By the condition in Corollary \ref{cor:well_explore_optimal}, we have $\PP_{\cD}(\cE^\dagger)\geq 1-\xi/2$. 
Also, by Theorem \ref{thm:regret_upper_linear}, we have $\PP_{\cD}(\cE^\ddagger)\geq 1-\xi/2$. 
Hence, by the union bound, we have $\PP_{\cD}(\cE^\dagger \cap \cE^\ddagger)\geq 1-\xi$, which yields Equation \eqref{eq:event_opt_explore_d}.

In particular, if $\rank(\Sigma_h(x))\leq r$ for all $x\in \cS$ and all $h\in [H]$, on $\cE^\dagger \cap \cE^\ddagger$, which satisfies $\PP_{\cD}(\cE^\dagger \cap \cE^\ddagger)\geq 1-\xi$, we have
\$
\text{SubOpt}\big(\linpess(\cD);x \big)&\leq 2 \beta \sum_{h=1}^H \sqrt{\sum_{j=1}^d \frac{\lambda_{h,j}(x)}{1+c^\dagger\cdot K \cdot \lambda_{h,j}(x)}} \\
&\leq 2 \beta \sum_{h=1}^H \sqrt{\sum_{j=1}^r \frac{1}{1+c^\dagger\cdot K}} \leq c''  \cdot d  H^2 K^{-1/2}\sqrt{\zeta},
\$
where $c'' = 2c\sqrt{r/c^\dagger}>0$ is an absolute constant. Here the second inequality follows from the fact that $\lambda_{h,j}(x) \in [0,1]$ and $\rank(\Sigma_h(x))\leq r$ for all $x\in \cS$, all $h\in [H]$, and all $j\in [d]$, while the third inequality follows from the choice of $\beta$ in Corollary \ref{cor:well_explore_optimal}. Hence, we obtain Equation \eqref{eq:event_opt_explore_r}. Therefore, we conclude the proof of Corollary \ref{cor:well_explore_optimal}.
\end{proof}

\subsection{Proof of Corollary \ref{cor:well_explore}}\label{sec:appendix:well-explored}
\begin{proof}[Proof of Corollary \ref{cor:well_explore}]

For all $h \in [H]$ and all $\tau \in [K]$, 
we define the random matrices
\#\label{eq:def_A_tau}
&Z_h=\sum_{\tau=1}^K  A_h^\tau,\qquad
A_h^\tau = \phi(x_h^\tau,a_h^\tau)\phi(x_h^\tau,a_h^\tau)^\top - \Sigma_h,\notag \\
&\text{where}~~\Sigma_h = \EE_{\bar{\pi}}\big[\phi(s_h,a_h)\phi(s_h,a_h)^\top\big].
\#
For all $h\in[H]$ and all $\tau \in [K]$, 
Equation \eqref{eq:def_A_tau} implies $\EE_{\bar\pi}[A_h^\tau]=0$. Here $\EE_{\bar\pi}$ is taken with respect to the trajectory induced by the fixed behavior policy $\bar\pi$ in the underlying MDP.
As the $K$  trajectories in the dataset $\cD$ are $\iidtext$,
for all $h\in [H]$,  $\{ (x_h^\tau,a_h^\tau, r_h^\tau )\} _{\tau=1}^K$ are also $\iidtext$. Hence, for all $h\in[H]$, $\{A_h^\tau\}_{\tau=1}^K$ are $\iidtext$ and centered.

By Definition \ref{assump:linear_mdp}, we  have $\| \phi(x,a)\| \leq 1$ for all $(x,a) \in \cS\times \cA$. 
By 
 Jensen's inequality, we have   $$\|\Sigma_h\|_{\oper} \leq \E_{\bar\pi}\big[ \|\phi(s_h,a_h)\phi(s_h,a_h)^\top \|_{\oper} \big]\leq 1.$$ 
For any vector $v\in \RR^d$ with $\| v \| = 1$, the triangle inequality implies 
$$
\|A_h^\tau  v\| \leq \|\phi(x_h^\tau,a_h^\tau)\phi(x_h^\tau,a_h^\tau)^\top v \| + \|\Sigma_h v\| \leq \|v\| + \|\Sigma_h\|_{\oper} \cdot\|v\|
\leq 2.
$$
Hence, for all $h\in[H]$ and all $\tau \in [K]$, we have 
\$
\| A_h^{\tau} \|_{\oper} \leq 2,\qquad\| A_h^\tau  (A_h^\tau) ^\top\|_{\oper}\leq \|A_h^\tau \|_{\oper} \cdot \|(A_h^\tau) ^\top\|_{\oper} \leq 4.
\$
As $\{ A_h^{\tau }\}_{\tau =1}^K $ are $\iidtext$ and centered, for all $h\in[H]$, we have 
\$ 
 \|\EE_{\bar\pi}[Z_hZ_h^\top] \|_{\oper} &= \Big \|\sum_{\tau=1}^K \EE_{\bar\pi}[A_h^\tau (A_h^\tau)^\top]\Big\|_{\oper }\notag \\
&  =  K\cdot  \| \EE_{\bar\pi}[A_h^1 (A_h^1)^\top] \|_{\oper } \leq K \cdot  \EE_{\bar\pi}\big[ \|A_h^1 (A_h^1)^\top \|_{\oper} \big] \leq 4K,
\$
where the 
 first  inequality  follows from Jensen's inequality.
  Similarly, for all $h\in[H]$ and all $\tau \in [K]$, as it holds that  
   \$
    \| (A_h^\tau)  ^\top A_h^\tau   \|_{\oper}\leq  \|(A_h^\tau)^\top  \|_{\oper} \cdot \|A_h^\tau  \|_{\oper} \leq 4,
    \$ 
  we have
  \$
   \|\E_{\bar\pi}[Z_h^\top Z_h] \|_{\oper}
  \leq 4K.
  \$
   Applying Lemma \ref{lem:matrix_concentration} to  $Z_h$ defined in Equation \eqref{eq:def_A_tau}, for any fixed $h\in[H]$ and any $t \geq 0$, we have
\#\label{eq:apply_mat_bern1}
\PP_{\cD} \big(\|Z_h\|_{\oper}  > t\big)&= \PP_{\cD}\bigg(\Big\|\sum_{\tau=1}^K   A_h^{\tau }\Big\|_{\oper} > t \bigg)\leq 2d\cdot\exp\Bigl( -\frac{t^2/2}{4K + 2t/3} \Bigr).
\#
For all $\xi\in (0,1)$, we set  $t=\sqrt{10K \cdot \log(4d H /\xi)}$.
By Equation \eqref{eq:apply_mat_bern1}, when $K$ is sufficiently large so that $K \geq 5\cdot \log (4d H /\xi)$, 
we have 
 $2t/3\leq K$. Hence,  for the fixed $h\in [H]$, we have
\#\label{eq:apply_mat_bern1+}
 \PP_{\cD} \big(\|Z_h\|_{\oper}  \leq t\big) &\geq 1 - 2d \cdot \exp\bigl (-t^2/(8K+4t/3) \bigr )\notag\\
 & \geq 1 - 2d \cdot \exp \bigl(-t^2 / (10 K) \bigr ) = 1- \xi / (2 H).
 \#
 By Equation  \eqref{eq:apply_mat_bern1+} and the union bound, for all $h\in[H]$, it holds that 
 \# \label{eq:upper_cor1}
 \|Z_h/K \|_{\oper} =  \Big\|\frac{1}{K}\sum_{\tau=1}^K  \phi(x_h^\tau,a_h^\tau)\phi(x_h^\tau,a_h^\tau)^\top - \Sigma_h\Big\|_{\oper} \leq \sqrt{10/K\cdot\log(4dH/\xi)} 
 \#
with probability at least $1-\xi/2$, which is taken with respect to $\PP_{\cD}$. 

By the definition of $Z_h$ in Equation \eqref{eq:def_A_tau}, we have
 \# \label{eq:relation_zh_sigmah}
 Z_h = \sum_{\tau = 1}^K \phi(x_h^\tau, a_h^\tau) \phi(x_h^\tau, a_h^\tau) ^\top - K \cdot \Sigma_h =(\Lambda_h - \lambda \cdot I ) - K \cdot \Sigma_h. 
 \#
Recall that
  there exists an absolute constant $\underline{c}>0$ such that  $\lambda_{\min}(\Sigma_h)\geq \underline{c}/d$,
  which implies 
  $\| \Sigma_h^{-1} \|_{\oper} \leq d/  \underline{c}$. 
  By Equations \eqref{eq:upper_cor1} and \eqref{eq:relation_zh_sigmah}, 
  when $K$ is sufficiently large so that $K  \geq 40d/\underline{c} \cdot \log (4d H / \xi)$, 
  for all $h\in[H]$, it holds that
  \$
  \lambda_{\min }  (  \Lambda_h / K  ) &=  \lambda_{\min }  (\Sigma_h + \lambda/K\cdot I + Z_h/K)\\
  &\geq \lambda _{\min} ( \Sigma_h ) - \|Z_h/K\|_{\oper} \geq  \underline{c}/d - \sqrt{ 10/K\cdot \log (4d H / \xi) } \geq \underline{c} /2.
  \$
 Hence, for all $h\in[H]$, it holds that
 $$
 \|  \Lambda _h ^{-1}  \|_{\oper} \leq \bigl  ( K\cdot  \lambda _{\min } ( \Lambda_h / K) \bigr )^{-1}
  \leq 2  d / ( K \cdot \underline{c})  $$ 
  with probability at least $1-\xi/2$ with respect to $\PP_\cD$, which implies
 \begin{equation}
 \sqrt{\phi(x,a)^\top \Lambda_h^{-1} \phi(x,a)} \leq \| \phi (x,a) \| \cdot \|  \Lambda_h  ^{-1}  \| _{\oper}^{1/2}  \leq  c'' \sqrt{d/K} ,\qquad \forall (x,a)\in \cS\times \cA,~\forall h\in [H].
 \label{eq:def_c2_bound}
 \end{equation}
 Here we define the absolute constant $c'' =\sqrt{ 2 / \underline {c}} $ 
 and use the fact that $\| \phi (x,a) 
\|\leq 1$ for all $(x,a)\in \cS\times \cA$ in Definition \ref{assump:linear_mdp}.

We define the event 
\$
\cE_1^* = \Big\{ \sqrt{\phi(x,a)^\top   \Lambda_h ^{-1} \phi(x,a)} \leq  c'' \sqrt{d/K} ~\textrm{for all}~(x,a)\in \cS\times \cA \text{ and all } h\in[H]  \Big\}. 
\$
By Equation \eqref{eq:def_c2_bound}, 
we have $\PP_{\cD} (\cE_1^*) \geq 1 - \xi /2$ for $K  \geq 40d/\underline{c} \cdot \log (4d H / \xi)$.
Also, we define the event 
\$
\cE_2^* = \bigg\{ \text{SubOpt}(\hat\pi;x)\leq  2\beta\cdot \sum_{h=1}^H \EE_{\pi^*}\Big[\sqrt{\phi(s_h,a_h)^\top \Lambda_h^{-1}\phi(s_h,a_h)} \Biggiven s_1=x \Big] ~\textrm{for all}~ x\in \cS   \bigg\}.
\$
Here we set $\beta = c\cdot d H \sqrt{ \log (4dHK / \xi) }$, where $c>0$ is the same absolute constant as in Theorem \ref{thm:regret_upper_linear}. By Theorem \ref{thm:regret_upper_linear},
we have $\PP_{\cD}(\cE_2^*)\geq 1-\xi/2$.
Hence, when $K$ is sufficiently large so that $K  \geq 40/\underline{c} \cdot \log (4d H / \xi)$, 
on the event $\cE ^* = \cE_1^*\cap \cE_2^*$, 
we have 
\$
  \text{SubOpt}(\hat\pi;x)\leq  2\beta \cdot H \cdot c''  \sqrt{d/K}  = c' \cdot d^{3/2}H^2 \sqrt{\log(4dHK/\xi)/ K },\qquad \forall x\in \cS.   
\$
By the union bound, we have $\PP_{\cD}(\cE^*)\geq 1-\xi$ with $c' = 2 c \cdot c''$, where $c'' = \sqrt{2 / \underline{c}}$ and $c>0$ is the same absolute constant as in Theorem \ref{thm:regret_upper_linear}.  
Therefore, we conclude the  proof of Corollary \ref{cor:well_explore}.
\end{proof}

\section{Proofs of Minimax Optimality}


\subsection{Proof of Lemma \ref{lem:minimax_lower_split}} \label{sec:appendix:proof_minimax_lower_split}

\begin{proof}[Proof of Lemma \ref{lem:minimax_lower_split}]
We consider two linear MDPs $\cM_1 = M (p^*, p, p)$ and $\cM_2 = M(p, p^*, p)$ in the class $\mathfrak{M}$ defined in Equation \eqref{eq:define_hard_class}. 
As we have $p ^* > p$, 
by Equations \eqref{eq:w666} and \eqref{eq:w888},
the optimal policy for $\cM_1$ satisfies $\pi_1^{*,1}(a_1\given x_0) = \ind \{a_1=b_1\}$, which always chooses the action $b_1$ at the first step $h=1$, while the optimal policy for $\cM_2$ satisfies $\pi_1^{*,2}(a_1\given x_0) = \ind\{a_1=b_2\}$, which always chooses the action $b_2$ at the first step $h=1$. 
Given the dataset $\cD$, we denote by $\pi =\{\pi_h\}_{h=1}^H = \texttt{Algo}(\cD)$ the output of any offline RL algorithm.
Recall that $\sum_{j=1}^A \pi_1(b_j\given x_0)=1$.
By Equation \eqref{eq:suboptimality_hard_instance}, 
 the suboptimality of $\pi$ for $\cM_1$ is
\#\label{eq:lower_M1}
\text{SubOpt}(\cM_1,\pi;x_0) & =  \Bigl ( p^*   - p^* \cdot \pi_1(b_1\given x_0)   - \sum_{j=2}^A p \cdot  {\pi}_1(b_j\given x_0) \Bigr ) \cdot (H-1)  \notag \\
& =  ( p^* - p ) \cdot  \bigl (1-{\pi}_1(b_1 \given x_0) \bigr )\cdot (H-1) .
\#
Similarly, the suboptimality of $\pi $ for $\cM_2$ is 
\# \label{eq:lower_M2}
\text{SubOpt}(\cM_2,\pi;x_0) =  (p^*-p) \cdot  \big (1-{\pi}_1(b_2\given x_0) \big) \cdot (H-1).
\#
Recall that we define $n_{j} = \sum_{\tau = 1}^K \ind\{ a_1^\tau  = b_j \}$ for all $j \in [A]$.
Combining Equations \eqref{eq:lower_M1} and \eqref{eq:lower_M2}, 
we have
\$
&\max_{\ell \in \{ 1,2\} }\  \sqrt{n_{\ell}}\cdot \EE_{\cD\sim \cM_{\ell} }\Big [\text{SubOpt} \big (\cM_{\ell},\texttt{Algo}(\cD);x_0 \big ) \Big ]  \\
& \qquad \geq \frac{\sqrt{n_1 n_2}}{\sqrt{n_1}+\sqrt{n_2}} \cdot \biggl (  \EE_{\cD\sim \cM_1} \Bigl [\text{SubOpt}\bigl(\cM_1,\texttt{Algo}(\cD);x_0\bigr) \Big ] +  \EE_{\cD\sim \cM_2} \Bigl [\text{SubOpt} \big (\cM_2,\texttt{Algo}(\cD);x_0 \big ) \Big ] \biggr ) \\
& \qquad = \frac{\sqrt{n_1n_2}}{\sqrt{n_1}+\sqrt{n_2}} \cdot  (p^*-p) \cdot (H-1)\cdot  \Big(\EE_{\cD\sim \cM_1} \big[1-{\pi}_1(b_1\given x_0) \big] + \EE_{\cD\sim \cM_2} \big[1- {\pi}_1(b_2\given x_0)\big]\Big)\\
&\qquad \geq \frac{\sqrt{n_1n_2}}{\sqrt{n_1}+\sqrt{n_2}} \cdot (p^*-p) \cdot (H-1) \cdot  \Big(\EE_{\cD\sim \cM_1}\big[1-{\pi}_1(b_1\given x_0)\big] + \EE_{\cD\sim \cM_2}\big[ {\pi}_1(b_1\given x_0)\big]\Big),
\$
where $\EE_{\cD\sim \cM_\ell}$ is the expectation taken with respect to the randomness of  $\cD$, which is compliant with the underlying MDP $\cM_\ell$ for all $\ell\in\{1,2\}$. 
Here the first inequality follows from the fact that $\max \{ x, y\} \geq a\cdot x + (1- a)\cdot y$, for all $a\in [0, 1] $ and all $x, y \geq 0$.
Therefore, we conclude the proof of Lemma \ref{lem:minimax_lower_split}. 
\end{proof}

\subsection{Suboptimality of PEVI on $\mathfrak{M}$}
In this section, we establish the suboptimality of PEVI for the linear MDPs in the class $\mathfrak{M}$. 
We consider any linear MDP $\cM=M(p_1,p_2,p_3)\in \mathfrak{M}$ and the dataset $\cD = \{ (x_h^\tau, a_h^\tau, r_h^\tau)\}_{  \tau, h =1}^{ K, H }$ compliant with $\cM$, which is constructed in Section \ref{sec:lower_minimax_elements}.  Recall that $n_j=\sum_{\tau=1}^K \ind \{a_1^\tau = b_j\}$ for all $j\in [A]$ and $j^*=\argmax_{j\in [A]} p_j $. We define $m_j = \sum_{\tau =1}^K \ind \{ x_2^\tau = x_j\}$ for all $j \in \{ 1, 2\}$.

\begin{lemma}[Suboptimality of PEVI]
Suppose that Assumption \ref{assump:data_generate} holds and the underlying MDP is $\cM\in \mathfrak{M}$. In Algorithm \ref{alg:pess_greedy}, we set $\lambda=1$ and $\beta = c\cdot dH\sqrt{\log(4dHK/\xi)}$. Here $c>0$ is an absolute constant and $\xi\in(0,1)$ is the confidence parameter, which are specified in Theorem \ref{thm:regret_upper_linear}.
We have
\#\label{eq:upper_expression}
&\sum_{h=1}^H \EE_{\pi^*}\Bigl[ \bigl ( \phi(s_h,a_h)^\top \Lambda_h^{-1}\phi(s_h,a_h)\bigr) ^{1/2} \Biggiven s_1=x_0\Bigr] \notag \\
&\qquad = \frac{1}{\sqrt{1 + n_{j^*}}} +(H-1)
\cdot \Bigl(  \frac{p_{j^*}}{\sqrt{1 + m_1}} +  \frac{1 - p_{j^*} }{\sqrt{ 1+ m_2 }} \Bigr),
\#
where $\EE_{\pi^*}$ is taken with respect to the trajectory induced by $\pi^*$ in $\cM$.
When $K$ is sufficiently large so that $K\geq 32\cdot \log(8/\xi)$, $\linpess(\cD)$ in Algorithm \ref{alg:pess_greedy} satisfies
\#\label{eq:hard_instance_upper_final}
\text{SubOpt}\big(\cM ,\linpess(\cD);x_0\big) 
 \leq 9 \beta H /\sqrt{ n_{j^*}}
\#
with probability at least $1-\xi$, which is taken with respect to $\PP_{\cD}$.
\label{lem:subopt_pevi_hard}
\end{lemma}

\begin{proof}[Proof of Lemma \ref{lem:subopt_pevi_hard}]
Recall that $x_1^\tau=x_0$ for all $\tau\in [K]$.
By the definition of $\Lambda_h$ in Equation \eqref{eq:w18}, we have
\#\label{eq:define_lambda1}
\Lambda_1 = \lambda \cdot I + \sum_{\tau=1}^K \phi(x_0,a_1^\tau)\phi(x_0,a_1^\tau)^\top = \diag(\lambda + n_1,\dots,\lambda +n_A,\lambda, \lambda ) \in \RR^{(A+2 )\times (A+2 )},
\#
where the second equality follows from the definition of $\phi$ in Equation \eqref{eq:lower_phi}.
Since $x_1,x_2 \in \cS$ are the absorbing
 states, 
for all $h \in\{2,\dots,H\}$, we have 
\#\label{eq:define_lambda2}
\Lambda_h = \lambda \cdot I + \sum_{\tau=1}^K \phi(x_h^\tau,a_h^\tau)\phi(x_h^\tau,a_h^\tau)^\top = \diag( \lambda,\dots,\lambda ,\lambda + m_1, \lambda + m_2)\in \RR^{(A+2 )\times (A+2 )},
\# 
where the second equality follows from the definition of $\phi$ in Equation \eqref{eq:lower_phi}.
Also, we have 
\$
\PP_{\pi^*} (s_2 = x_1) = p_{j^*},\qquad\PP_{\pi^*} (s_2 = x_2) = 1 - p_{j^*},
\$
where $\PP_{\pi^*}$ is taken with respect to the trajectory induced by $\pi^*$ in $\cM$.
Combining Equations \eqref{eq:define_lambda1} and \eqref{eq:define_lambda2}, 
we have 
\# \label{eq:lower_bound_bonus}
&  \EE_{\pi^*}\Bigl[ \bigl ( \phi(s_h,a_h)^\top \Lambda_h^{-1}\phi(s_h,a_h)\bigr) ^{1/2} \Biggiven s_1=x_0\Bigr] \notag \\
 & \qquad 
  = \begin{cases} 
(1 + n_{j^*})^{-1/2},& h=1, \\
p_{j^*} \cdot (1 + m_1)^{-1/2} + (1- p_{j^*}) \cdot (1 + m_2 )^{-1/2},& h\in\{2,\dots,H\},
 \end{cases} 
\# 
which yields Equation \eqref{eq:upper_expression}. Here we use the definition of $\phi$ in Equation \eqref{eq:lower_phi} and the regularization parameter $\lambda=1$ in Algorithm \ref{alg:pess_greedy}.

In the sequel, we lower bound $m_1$ and $m_2$ via concentration inequalities. 
By the construction of $\cD$ in Section \ref{sec:lower_minimax_elements}, for all $\tau \in [K]$ and all $j\in[A]$, given the action $a_1^\tau = b_j$, $\ind \{ x_2^\tau = x_1 \}$  is a Bernoulli random variable with the success probability $p_j$. As $p_1, p_2,p_3\in[1/4, 3/4]$, we have
\#\label{eq:mean_m1}
\EE_{\cD} [ m_1] = \sum_{\tau =1}^K \EE_{\cD}  \bigl [ \ind \{ x_2^\tau = x_1 \} \bigr ]
 = \sum_{j=1}^A p_j \cdot n_j \geq 1/4 \cdot \sum_{j=1}^A n_j = K/4 .
\#
Given the  actions $\{a_1^\tau\}_{\tau=1}^K$, $m_1$ is a sum of $K$ independent Bernoulli random variables.
By Hoeffding's inequality, for all $\xi>0$, it holds that
 \#\label{eq:apply_hoeffding}
  \bigl | m _1 - \EE_{\cD} [ m_1] \bigr | \leq \sqrt{ K /2   \cdot \log (8/ \xi)   }
 \#
 with probability at least $1- \xi /4 $, which is taken with respect to $\PP_{\cD}$.
By Equations \eqref{eq:mean_m1} and \eqref{eq:apply_hoeffding}, it holds that
\#\label{eq:apply_hoeffding1}
m_1 \geq K / 4 - \sqrt{K  /2   \cdot  \log (8/ \xi)}
\#
with probability at least $1 -  \xi / 4$, which is taken with respect to $\PP_{\cD}$. 
Similarly, we have
\$
\EE_{\cD} [ m_2 ] = \sum_{\tau = 1}^ K \EE_{\cD} \bigl [ \ind \{ x_2^\tau = x_2 \} \bigr ]  = \sum _{j = 1 }^ A ( 1- p_j) \cdot n_j \geq 1/4\cdot \sum_{j=1}^A  n_j\geq K / 4. 
\$
By Hoeffding's inequality, it holds that
\#\label{eq:apply_hoeffding2}
m_2 \geq K/4 -  \sqrt{ K/2 \cdot \log (8/\xi ) }
\#
with probability at least $1 - \xi/ 4$, which is taken with 
respect to $\PP_{\cD}$.
We define the event
\#\label{eq:def_event_K/8}
\overline \cE   = \{ m_1\geq    K/ 8,~ m_2 \geq  K / 8   \}.
\#
Combining Equations \eqref{eq:apply_hoeffding1} and \eqref{eq:apply_hoeffding2}, by the union bound,
when $K$ is sufficiently large so that $K \geq 32 \cdot  \log (8/\xi)$, we have 
$\PP_{\cD}(\overline{\cE})\geq 1-\xi/2$.

Meanwhile, by Theorem \ref{thm:regret_upper_linear} with the regularization parameter $\lambda=1$ and the confidence parameter $\xi/2$, it holds that
\begin{equation}
\text{SubOpt}\big(\cM ,\linpess(\cD);x_0\big) \leq \frac{2\beta}{\sqrt{1 + n_{j^*}}} +2\beta(H-1)
\cdot \Bigl(  \frac{p_{j^*}}{\sqrt{1 + m_1}} +  \frac{1 - p_{j^*} }{\sqrt{ 1+ m_2 }} \Bigr) 
\label{eq:minimax_upper_linear}
\end{equation}
with probability at least $1-\xi/2$, which is taken with respect to $\PP_{\cD}$. 
By the union bound, on the two events defined in Equations \eqref{eq:def_event_K/8} and \eqref{eq:minimax_upper_linear}, respectively, it holds that
\#
\text{SubOpt}\big(\cM ,\linpess(\cD);x_0\big) & \leq \frac{2\beta}{\sqrt{ n_{j^*}}} +2\beta(H-1)
\cdot \Bigl(  \frac{3/4}{\sqrt{  m_1}} +  \frac{3/4 }{\sqrt{  m_2 }} \Bigr) \notag   \\
& \leq   2 \beta  / \sqrt{n_{j^*} } + 9\beta (H-1) / \sqrt{ K }  \leq 9 \beta H /\sqrt{ n_{j^*}}\notag
\#
with probability at least $1-\xi$, which is taken with respect to $\PP_{\cD}$.
Here the first inequality follows from the fact that $p_{j^*}\in [1/4,3/4]$, the second inequality follows from the fact that $m_1,m_2\geq K/8$ on $\overline{\cE}$ defined in Equation \eqref{eq:def_event_K/8}, and the last inequality follows from the fact that $n_{j^*}\leq K$.
Therefore, we conclude the proof of Lemma \ref{lem:subopt_pevi_hard}.
\end{proof}

\subsection{Proof of Theorem \ref{thm:lower}} \label{sec:appendix:lower}

\begin{proof}[Proof of Theorem \ref{thm:lower}]
We consider two linear MDPs $\cM_1=M(p^*,p,p)$ and $\cM_2= M(p,p^*,p)$ in the class $\mathfrak{M}$ and
the dataset $\cD$ compliant with $\cM_1$ or $\cM_2$, which is constructed in Section \ref{sec:lower_minimax_elements}.
We additionally assume that $n_1,n_2\geq 4$ and $1/ \bar{c}\leq n_1/n_2\leq \bar{c}$ for an absolute constant $\bar{c}>0$.
For the policy $\pi = \{\pi_h\}_{h=1}^H =\texttt{Algo} (\cD)$ constructed by any offline RL algorithm, 
recall 
the  test function $\psi_{\texttt{Algo}} (\cD)$ defined in Equation \eqref{eq:define_algo_test}, which is constructed 
for the hypothesis testing problem defined in Equation \eqref{eq:testing}. 
By Equation \eqref{eq:risk_test_algo}, we have
\#\label{eq:proof_minimax_lower_reduce_test}
   &\EE_{\cD\sim \cM_1}\big[1- {\pi}_1(b_1\given x_0)\big] + \EE_{\cD\sim \cM_2}\big[{\pi}_1(b_1\given x_0)\big] \notag \\
   &\qquad  =  \EE_{\cD \sim \cM_1} \bigl [ \ind    \{  \psi_{\texttt{Algo}} (\cD) = 1  \} \bigr ] + \EE_{\cD \sim \cM_2} \bigl [ \ind   \{ \psi_{\texttt{Algo}} (\cD) = 0  \}   \bigr ] \notag \\
&\qquad  \geq    1-\mathrm{TV} (\PP_{\cD\sim \cM_1}  ,  \PP_{\cD\sim \cM_2}) \geq 1- \sqrt{ \text{KL}(\PP_{\cD\sim \cM_1}\,\|\,\PP_{\cD\sim \cM_2}) / 2 },
\#
where the first inequality  follows from the definition of the  total variation distance, while the second inequality follows from Pinsker's inequality. 
Here for each $\ell\in\{1,2\}$, $\PP_{\cD\sim \cM_\ell}$ is taken with respect to the randomness of $\cD$ when it is compliant with $\cM_\ell$. Also, we use $\mathrm{TV} $ and $\mathrm{KL} $ to denote the total variation distance and the Kullback-Leibler (KL-)divergence, respectively.

Recall the mapping from the rewards $\{ r_2^\tau \}_{\tau = 1}^K$ to 
the relabeled rewards 
$\{ \kappa_j^i \}_{i,j=1}^{n_j,A}$ defined in Equation \eqref{eq:w88888}.
 Also, recall that 
the actions $\{ a_h^\tau \}_{\tau \in [K], h \geq 2}$
are chosen arbitrarily but  fixed in the dataset $\cD$. Since $x_1,x_2\in \cS$ are the absorbing states, we have $\PP_{\cD\sim \cM_\ell}(\cD) = \PP_{\cD\sim \cM_\ell}(\cD_1)$ for all $\ell\in\{1,2\}$, where 
 $\cD_1 = \{(x_1^\tau, a_1^\tau, x_2^\tau, r_2^\tau) \}_{\tau = 1}^K$ is the reduced dataset.
 By Equation \eqref{eq:data_likelihood}, the probabilities of observing $\cD_1$ in $\cM_1$ and $\cM_2$ take the form
\#
\PP_{\cD\sim \cM_1}(\cD_1 )
 & =( p^*)^{\sum_{i=1}^{n_1}\kappa_1^i }\cdot (1-p^*)^{n_1-\sum_{i=1}^{n_1}\kappa _1^i}  \cdot  \prod_{\stackrel{j\in[A]}{j\neq 1}} \Big(p^{\sum_{i=1}^{n_j} \kappa _j^i}\cdot (1-p )^{n_j-\sum_{i=1}^{n_j}\kappa_j^i }\Big), \notag\\
 \PP_{\cD\sim \cM_2}(\cD_1)&  = ( p^*)^{\sum_{i=1}^{n_2}\kappa_2^i }\cdot (1-p^*)^{n_2-\sum_{i=1}^{n_2}\kappa _2^i }  \cdot  \prod_{\stackrel{j\in[A]}{j\neq 2}} \Big(p^{\sum_{i=1}^{n_j} \kappa _j^i }\cdot (1-p )^{n_j-\sum_{i=1}^{n_j}\kappa_j ^i}\Big), \label{eq:compute_likelihood2}
\# 
respectively.
Here we use the fact that $p_1=p^*$ and $p_2=p$ in $\cM_1=M(p^*,p,p)$, while $p_1=p$ and $p_2=p^*$ in $\cM_2=M(p,p^*,p)$, where $p^*>p$.
By Equation \eqref{eq:compute_likelihood2}, we have
\$
\text{KL}(\PP_{\cD\sim \cM_1}\,\|\,\PP_{\cD\sim \cM_2}) &= \EE_{\cD\sim \cM_1}\bigg[ \Big( \sum_{i=1}^{n_1}\kappa _1^i  - \sum_{i=1}^{n_2}\kappa _2^i  \Big)\cdot \log \frac{p^*\cdot (1-p)}{p\cdot (1-p^*)} + (n_1-n_2)\cdot \log \frac{1-p^*}{1-p}\bigg]\\
&= (n_1p^*-n_2p)\cdot \log \frac{p^*\cdot(1-p)}{p\cdot(1-p^*)}+ (n_1-n_2)\cdot \log \frac{1-p^*}{1-p}\\
&= n_1\cdot \Big(p^*\cdot \log\frac{p^*}{p} +(1-p^*)\cdot \log \frac{1-p^*}{1-p} \Big) +n_2\cdot \Big(p \cdot  \log\frac{p}{p^*} +(1-p)\cdot \log \frac{1-p}{1-p^*} \Big),
\$
where the second equality follows from Equation \eqref{eq:bernoulli_kappa}. 
Note that for all $x\in (-1,1)$, it holds that $\log(1+x)\leq x$.
Hence, when $p^*-p < \min\{p, 1-p\}$, we have
\$
p^*\cdot \log\frac{p^*}{p} +(1-p^*)\cdot \log \frac{1-p^*}{1-p} &= p^*\cdot \log\Big(1+\frac{p^*-p}{p}  \Big)+(1-p^*)\cdot \log\Big(1+\frac{p-p^*}{1-p}  \Big)\\
&\leq p^* \cdot \frac{p^*-p}{p} + (1-p^*)\cdot \frac{p-p^*}{1-p} = \frac{(p^*-p)^2}{p\cdot(1-p)}.
\$ 
Similarly, when 
 $p^*-p < \min\{p^*, 1-p^*\}$, we have
 \$
 p \cdot \log\frac{p}{p^*} +(1-p) \cdot \log \frac{1-p}{1-p^* }  \leq  \frac{(p^*-p)^2}{p^* \cdot(1-p^*)}.
 \$ 
Recall that  $n_1,n_2\geq 4$ and $ 1/ \bar{c} \leq n_1 / n_2 \leq \bar{c}$ for an absolute constant $\bar{c}>0$. We set
\# \label{eq:set_diff_p} 
p^* =\frac{1}{2} + \frac{1}{8}\cdot \sqrt{\frac{3}{2\cdot(n_1+n_2)}},\qquad p =\frac{1}{2} - \frac{1}{8}\cdot \sqrt{\frac{3}{2\cdot(n_1+n_2)}}
\# 
such that $p^*,p\in[1/4,3/4]$, $0\leq p^*-p\leq 1/4$, and $p^*-p < \min\{p, 1-p,p^*, 1-p^*\}$.
Hence, the KL-divergence is upper bounded as
\#\label{eq:kl_bound}
\text{KL}(\PP_{\cD\sim \cM_1}\,\|\,\PP_{\cD\sim \cM_2}) &\leq \frac{n_1\cdot (p^*-p)^2}{p\cdot(1-p)} + \frac{n_2\cdot (p^*-p)^2}{p^*\cdot(1-p^*)} \notag\\
&\leq 16/3\cdot (n_1+n_2) \cdot (p^*-p)^2 \leq 1/2, 
\#
  where the second inequality follows from the fact that $p,p^*\in [1/4, 3/4]$ and the last inequality follows from  Equation \eqref{eq:set_diff_p}.
By Equations \eqref{eq:proof_minimax_lower_reduce_test} and \eqref{eq:kl_bound}, we have
\#\label{eq:kl_bound_final}
&\EE_{\cD\sim \cM_1}\big[1- {\pi}_1(b_1\given x_0)\big] + \EE_{\cD\sim \cM_2}\big[{\pi}_1(b_1\given x_0)\big]  \geq 1- \sqrt{ \text{KL}(\PP_{\cD\sim \cM_1}\,\|\,\PP_{\cD\sim \cM_2}) / 2 } \geq 1/2.
\#
Combining Equations \eqref{eq:set_diff_p} and \eqref{eq:kl_bound_final}, for the output $\pi = \{\pi_h\}_{h=1}^H = \texttt{Algo}(\cD)$  of any offline RL algorithm, we have
\# \label{eq:bound_exp_subopt}
&\max_{\ell \in \{ 1,2\} }  \sqrt{n_\ell}\cdot \EE_{\cD\sim \cM_\ell} \Bigl  [\text{SubOpt} \bigl (\cM_\ell,\texttt{Algo}(\cD);x_0 \big) \Big ]  \notag \\
&\qquad \geq \frac{\sqrt{n_1n_2}}{\sqrt{n_1}+\sqrt{n_2}} \cdot  (p^*-p) \cdot (H-1) \cdot  \Big(\EE_{\cD\sim \cM_1}\big[1- {\pi}_1(b_1\given x_0)\big] + \EE_{\cD\sim \cM_2}\big[{\pi}_1(b_1\given x_0)\big]\Big) \notag   \\
&\qquad \geq \frac{\sqrt{n_1 n_2}}{\sqrt{n_1}+\sqrt{n_2}} \cdot \frac{1}{4}\cdot \sqrt{\frac{3}{2\cdot(n_1+n_2)}} \cdot \frac{ H-1} {2}  \notag \\
& \qquad =  \frac{ \sqrt{ n_1 / n_2 }  }{( \sqrt{n_1/ n_2 } + 1) \cdot \sqrt{ 1 + n_1/ n_2 }} \cdot \frac{ \sqrt{3}}{ 8\sqrt{2}} \cdot (H-1)\notag\\
&\qquad\geq  C'\cdot(H-1)
\#
 for an absolute constant 
 \# \label{eq:constant_Cprime}
C' = \frac{\sqrt{3}}{8\sqrt{2}}\cdot \frac{1}{ (\sqrt{\bar{c}} + 1 ) \cdot \sqrt{  \bar{c} \cdot (\bar{c}+1)}  }  > 0.
\#
Here the first inequality follows from Lemma \ref{lem:minimax_lower_split} and the last inequality follows from the fact that $1/\bar{c}\leq n_1/n_2\leq \bar{c}$ for an absolute constant $\bar{c}>0$.

By the definition of $\cM_1$ and $\cM_2$ in Equation \eqref{eq:w999}, at the first step $h = 1$, the optimal policy $\pi^{*,1}$ for $\cM_1$ always chooses the action $b_1$, while the optimal policy $\pi^{*,2}$ for $\cM_2$ always chooses the action $b_2$.
Recall that $n_j = \sum_{\tau=1}^K \ind\{a_1^\tau=b_j\}$ for all $j\in [A]$ and $m_j = \sum_{\tau=1}^K \ind\{x_2^\tau=x_j\}$ for all $j\in \{1,2\}$. Also, recall that $j^*=\argmax_{j\in [A]}p_j=1$ for $\cM_1$, while $j^*=\argmax_{j\in[A]}p_j = 2$ for $\cM_2$.  Hence, we have $n_{j^*}=n_\ell$ in $\cM_\ell$ for all $\ell\in\{1,2\}$.
 By  Lemma \ref{lem:subopt_pevi_hard}, for all $\ell \in \{ 1, 2\}$, we have 
\$
& \sum_{h=1}^H \EE_{\pi^{*,\ell}, \cM_{\ell} }\Bigl[ \bigl ( \phi(s_h,a_h)^\top \Lambda_h^{-1}\phi(s_h,a_h)\bigr) ^{1/2} \Biggiven s_1=x_0\Bigr]  = 
\frac{1}{\sqrt{1 + n_{\ell }}} + (H-1) \cdot  \Bigl(  \frac{p^*}{\sqrt{1 + m_1}} +  \frac{ 1 - p^* }{\sqrt{ 1+ m_2 }} \Bigr),   
\$ 
where $\EE_{\pi^{*,\ell},\cM_{\ell}}$ is taken with respect to the trajectory induced by  $\pi^{*,\ell}$ in $\cM_{\ell}$ for all $\ell\in\{1,2\}$.
Given the actions $\{a_1^\tau\}_{\tau=1}^K$, $m_1$ and $m_2$ are two sums of $K$ independent Bernoulli random variables. 
We define the event
\#\label{eq:lower_bound_event} 
\overline \cE   = \{ m_1\geq    K/ 8,~ m_2 \geq  K / 8   \}.
\#
On $\overline{\cE}$, we have
\#\label{eq:info_quantity_upper}
& \sum_{h=1}^H \EE_{\pi^{*,\ell}, \cM_{\ell} }\Bigl[ \bigl( \phi(s_h,a_h)^\top \Lambda_h^{-1}\phi(s_h,a_h)\bigr) ^{1/2} \Biggiven s_1=x_0\Bigr] \notag \\
& \qquad  \leq  
\frac{1}{\sqrt{  n_{\ell }}} + (H-1) \cdot  \Bigl(  \frac{3/4 }{\sqrt{  m_1}} +  \frac{ 3/4 }{\sqrt{  m_2 }} \Bigr) \leq 6 (H-1) / \sqrt{ n_{\ell} } .
\#
Hence,  we have 
\# \label{eq:lower_bound_final1}
 &\max_{\ell \in \{1, 2\}}  \EE_{\cD\sim \cM_\ell}\Bigg[ \frac{\text{SubOpt}\big(\cM_{\ell},\texttt{Algo}(\cD);x_0\big)}{\sum_{h=1}^H\EE_{\pi^{*,\ell},\cM_\ell}\Bigl[ \bigl(\phi(s_h,a_h)^\top \Lambda_h^{-1}\phi(s_h,a_h)\bigr) ^{1/2} \Biggiven s_1=x_0\Bigr]}  \Bigg] \notag \\
 & \qquad  \geq \max_{\ell \in \{1, 2\}}   \EE_{\cD\sim \cM_\ell}\Bigg[ \frac{\text{SubOpt}\big(\cM_{\ell},\texttt{Algo}(\cD);x_0\big)}{\sum_{h=1}^H\EE_{\pi^{*,\ell},\cM_\ell}\Bigl[ \bigl( \phi(s_h,a_h)^\top \Lambda_h^{-1}\phi(s_h,a_h)\bigr) ^{1/2} \Biggiven s_1=x_0\Bigr]}  \cdot \ind _{\overline \cE }\Bigg] \notag \\
 & \qquad  \geq  \max_{\ell \in \{1, 2\}}  \biggl \{  \frac{ \sqrt{n_{\ell} }} {6(H-1)} \cdot \EE_{\cD\sim \cM_\ell}\Big[  \text{SubOpt}\big(\cM_{\ell},\texttt{Algo}(\cD);x_0\big)   \cdot \ind _{\overline \cE }\Big]  \biggr \} ,
\#
where the last inequality follows from Equation \eqref{eq:info_quantity_upper} and the definition of $\overline \cE$ in Equation  \eqref{eq:lower_bound_event}.
We use $\overline \cE^c$ to denote the complement of $\overline{\cE }$. 
By Equation \eqref{eq:w888}, we have $r_h\in [0,1]$ for all $h\in[H]$ and $r_1(x_0,a)=0$ for all $a\in \cA$. Hence, the suboptimality of any policy is upper bounded by $H-1$. For all $\ell\in\{1,2\}$, we have  
\#\label{eq:lower_bound_final2}
& \sqrt{n_{\ell} } \cdot \EE_{\cD\sim \cM_\ell}\Big[  \text{SubOpt}\big(\cM_{\ell},\texttt{Algo}(\cD);x_0\big)   \cdot \ind _{\overline \cE }\Big] \notag \\
& \qquad =\sqrt{n_{\ell} } \cdot  \EE_{\cD\sim \cM_\ell}\Big[  \text{SubOpt}\big(\cM_{\ell},\texttt{Algo}(\cD);x_0\big)  \Big]  - \sqrt{n_{\ell} } \cdot  \EE_{\cD\sim \cM_\ell}\Big[  \text{SubOpt}\big(\cM_{\ell},\texttt{Algo}(\cD);x_0\big) \cdot \ind _{\overline \cE ^c } \Big]   \notag \\
& \qquad \geq \sqrt{n_{\ell} } \cdot  \EE_{\cD\sim \cM_\ell}\Big[  \text{SubOpt}\big(\cM_{\ell},\texttt{Algo}(\cD);x_0\big)  \Big]  - (H-1) \cdot \sqrt{ n_{\ell}} \cdot \PP_{\cD\sim \cM_\ell}(\overline \cE^{c} ). 
\# 
We invoke the same argument as in Equations \eqref{eq:apply_hoeffding1} and \eqref{eq:apply_hoeffding2}.
For all $\delta>0$ and all $\ell\in\{1,2\}$, as it holds that $p,p^*\geq 1/4$, 
when $K$ is sufficiently large so that $K \geq 32 \cdot  \log (4/\delta)$, we have $\PP_{\cD\sim \cM_\ell}(\overline{\cE})\geq 1-\delta$.
Hence, setting $\delta=1/K^2$, when $K$ is sufficiently large so that $K \geq 32 \cdot \log (4 K^2) = 64\cdot \log(2K)$,
we have $\PP_{\cD\sim \cM_\ell} ( \overline \cE) \geq 1 - 1/ K^2$ for all $\ell \in \{ 1, 2\}$. By Equation \eqref{eq:lower_bound_final2}, for all $\ell\in \{1,2\}$, we have
\# \label{eq:lower_bound_final3}
&\sqrt{n_{\ell} } \cdot \EE_{\cD\sim \cM_\ell}\Big[  \text{SubOpt}\big(\cM_{\ell},\texttt{Algo}(\cD);x_0\big)   \cdot \ind _{\overline \cE }\Big] \notag \\
&\qquad  \geq \sqrt{n_{\ell} } \cdot  \EE_{\cD\sim \cM_\ell}\Big[  \text{SubOpt}\big(\cM_{\ell},\texttt{Algo}(\cD);x_0\big)  \Big] - (H-1)\cdot \sqrt{ n_{\ell}}/K^2.
\#

By Equations \eqref{eq:bound_exp_subopt}, \eqref{eq:lower_bound_final1}, and \eqref{eq:lower_bound_final3}, when $K$ is sufficiently large so that $K\geq 64\cdot \log(2K)$ and $K\geq 2/C'$, we have
\# \label{eq:lower_bound_final4}
 &\max_{\ell \in \{1, 2\}}  \EE_{\cD\sim \cM_\ell}\Bigg[ \frac{\text{SubOpt}\big(\cM_{\ell},\texttt{Algo}(\cD);x_0\big)}{\sum_{h=1}^H\EE_{\pi^{*,\ell},\cM_\ell}\Bigl[ \bigl(\phi(s_h,a_h)^\top \Lambda_h^{-1}\phi(s_h,a_h)\bigr) ^{1/2} \Biggiven s_1=x_0\Bigr]}  \Bigg] \notag \\
 & \qquad \geq \max_{\ell \in \{1, 2\}}  \biggl \{  \frac{ \sqrt{n_{\ell} }} {6(H-1)} \cdot \EE_{\cD\sim \cM_\ell}\Big[  \text{SubOpt}\big(\cM_{\ell},\texttt{Algo}(\cD);x_0\big)   \cdot \ind _{\overline \cE }\Big]  \bigg \}\notag \\
 & \qquad  \geq  \max_{\ell \in \{1, 2\}}  \biggl \{  \frac{ \sqrt{n_{\ell} }} {6(H-1)} \cdot \EE_{\cD\sim \cM_\ell}\Big[  \text{SubOpt}\big(\cM_{\ell},\texttt{Algo}(\cD);x_0\big)  \Big]  - \frac{n_\ell}{6K^2} \bigg \}\notag \\
 &\qquad \geq  \frac{1}{6(H-1)}\cdot \max_{\ell \in \{1, 2\}}     \sqrt{n_{\ell} }\cdot \EE_{\cD\sim \cM_\ell}\Big[  \text{SubOpt}\big(\cM_{\ell},\texttt{Algo}(\cD);x_0\big)  \Big]  - \frac{1}{6K}\notag \\
  &\qquad \geq  C'/6 - 1/(6K)\notag\\
  &\qquad\geq C'/12.
\#
Here $C' > 0$ is the absolute constant defined in Equation \eqref{eq:constant_Cprime}, 
the first inequality follows from Equation \eqref{eq:lower_bound_final1}, the second inequality follows from Equation \eqref{eq:lower_bound_final3}, the third inequality follows from the fact that $n_\ell\leq K$ for all $\ell\in\{1,2\}$, and the fourth inequality follows from Equation \eqref{eq:bound_exp_subopt}. As $\mathfrak{M}$ defined in Equation \eqref{eq:define_hard_class} is a subclass of linear MDPs, 
Equation \eqref{eq:lower_bound_final4} implies 
the lower bound given in Equation \eqref{eq:minimax_lower} with $c = C'/ 12$ for $K$ sufficiently large  so that $K\geq 64\cdot \log(2K)$ and $K\geq 2/C'$.
 Therefore, we conclude the proof of  Theorem  \ref{thm:lower}.
\end{proof}

\subsection{Locally Refined Upper Bounds}\label{sec:minimax_refined_upper}

Since $\mathfrak{M}$ is a class of linear MDPs, 
  Theorem \ref{thm:regret_upper_linear} yields an upper bound on the suboptimality of $\linpess(\cD)$ constructed by Algorithm \ref{alg:pess_greedy}, which  is minimax optimal up to $\beta$ and absolute constant.
Focusing on $\mathfrak{M}$, with a different choice of the $\xi$-uncertainty 
quantifier tailored for $\cM$, 
PEVI achieves a more refined local minimax optimality. 

Specifically, for  any $\cM  = M(p_1, p_2, p_3) \in \mathfrak{M}$, recall that $x_1,x_2\in\cS$ are the absorbing states. 
By the  construction of the reward function in Equation \eqref{eq:w888}, for any function $V$ and any $h \geq 2$, 
the Bellman operator $\BB_h$ defined in Equation \eqref{eq:def_bellman_op} takes the form
\$
(\BB_h V) (x_1, a) = r_h( x_1, a) + V(x_1) = V(x_1 ) + 1 ,  \qquad (\BB_h V) (x_2, a ) = r_h (x_2, a) + V(x_2) = V(x_2),
\$
for all $a \in \cA$. 
Recall that the initial state is fixed to $x_0$. 
For any $ j \in [A]$, we have 
\#\label{eq:bellman_first_step} 
(\BB_1 V) (x_0, b_j) = p_j  \cdot V(x_1) + (1- p_j) \cdot V(x_2).
\#
Based on the dataset $\cD = \{ (x_h^\tau, a_h^\tau, r_h^\tau )\}_{\tau, h =1}^{K,H}$ that is compliant with $\cM$,
we construct the estimated Bellman operator $\hat B_h $ and value function $\hat V_h$ for all $h\in[H]$ as follows.  
To begin with, we define $\hat V_{H+1}$ as a zero function. For all $h \geq 2$,  we define 
\#\label{eq:def_hatV_h_refined}
 \hat V_h (x_1) = H-h+1,\quad \hat V_{h} (x_2) = 0.
 \#
For all $h\geq 2$ and all $a \in \cA$, we define the estimated Bellman update $\hat \BB_h\hat V_{h+1}$ as
\begin{equation}
\begin{split}
&( \hat\BB_h \hat{V}_{h+1} )(x_1,a)= (\BB_h \hat{V}_{h+1}) (x_1,a)  = \hat V_{h+1} (x_1) + 1= H- h+1, \\
 &(\hat\BB_h \hat{V}_{h+1}) (x_2,a) = (\BB_h \hat{V}_{h+1}) (x_2,a)= \hat{V}_{h+1} (x_2)=0. 
\end{split}
\label{eq:minimax_fitted_value_later}
\end{equation} 
For 
  $h=1$ 
  and all $j \in [A]$ with $n_j>0$, we define 
\#
(\hat\BB_1 \hat{V}_{2} )(x_0,   b  _j) &= \frac{1}{n_j}\sum_{\tau=1}^K\ind \{a_1^\tau=b_j\} \cdot  \hat{V}_2(x_2^\tau) \notag \\
& = \frac{H-1}{n_j}\sum_{\tau=1}^K\ind \{(a_1^\tau, x_2^\tau) =(b_j, x_1) \}   =  \frac{H-1}{n_j}\sum_{\tau=1}^K\ind \{a_1^\tau=b _j\} \cdot  r_2^\tau.
\label{eq:minimax_fitted_value_1}
\# 
For all $j\in [A]$ with $n_j = 0$, we simply set $(\hat \BB_1 \hat V_2 )(x_0, b_j) = 0$. 
 Furthermore, for any $\xi > 0$, 
 we define 
 \begin{equation}
 \Gamma_1^{\xi}(x_0,b_j) = (H-1) \cdot \sqrt{ \log(2A/\xi)  \big / (1+n_j) } , \quad \forall  j \in [A],  \qquad \Gamma_h^\xi(\cdot,\cdot)\equiv 0,\quad \forall h\geq 2.
 \label{eq:gamma_minimax}
 \end{equation}
 Thus, employing the empirical Bellman update $\hat \BB_h \hat V_{h+1}$ given in Equations \eqref{eq:minimax_fitted_value_later} and \eqref{eq:minimax_fitted_value_1}, and function $\Gamma_h^\xi$ defined in Equation \eqref{eq:gamma_minimax} for all $h\in[H]$, we obtain an instantiation of PEVI specified in Algorithm~\ref{alg:pess_greedy_general}.
 We use $\pess^*(\cD)$ to denote the output policy, whose suboptimality is established in the 
  following proposition.

\begin{prop}[Local Optimality of PEVI]
For any $\cM  \in \mathfrak{M}$ and any dataset $\cD$ that is compliant with $\cM $,   
we assume that $n_{j^*} = \sum_{\tau =1}^K \ind\{ a_1^\tau = b_j^*\} \geq 1$ for the optimal action $b_{j^*}$, where $j^*=\argmax_{j\in\{1,2\}}\{p_j\}$.
Then the   following statements hold: (i) $\{\Gamma_h^\xi \}_{h=1}^H$ defined in Equation \eqref{eq:gamma_minimax}   are    $\xi$-uncertainty quantifiers satisfying Equation \eqref{eq:def_event_eval_err_general}; (ii) we have
$$
\text{SubOpt}\big(\pess^*(\cD);x_0\big) \leq c  \cdot H   \sqrt{\log(A/\xi)}\cdot\sum_{h=1}^H  \EE_{\pi^*}\Bigl[ \bigl ( \phi(s_h,a_h)^\top \Lambda_h^{-1}\phi(s_h,a_h)\bigr) ^{1/2} \Biggiven s_1=x_0\Bigr] 
$$
with probability at least least $1-\xi$, which is taken with respect to $\PP_{\cD}$.
   Here $c>0$ is an absolute constant and $\Lambda_h$ is defined in Equation \eqref{eq:w18}  with   $\lambda=1$. $\EE_{\pi^*}$ is taken with respect to the trajectory induced by the optimal policy $\pi^*$ in the underlying MDP $\cM $.
\label{prop:minimax_upper}
\end{prop}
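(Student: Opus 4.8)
The plan is to establish the two claims separately and then combine claim~(i) with Theorem~\ref{thm:regret_upper_bound_general} together with the exact expression for the information quantity recorded in Lemma~\ref{lem:subopt_pevi_hard}.

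\emph{Claim (i).} The crucial structural feature of $\mathfrak{M}$ is that $x_1,x_2$ are absorbing states with known rewards, so the value functions produced by running Algorithm~\ref{alg:pess_greedy_general} with the update in Equations~\eqref{eq:minimax_fitted_value_later}--\eqref{eq:minimax_fitted_value_1} and the penalty $\Gamma_h^\xi$ of Equation~\eqref{eq:gamma_minimax} are pinned down deterministically: one checks directly that $\hat V_h(x_1)=H-h+1$ and $\hat V_h(x_2)=0$ for all $h\geq 2$ (the truncation step being inactive). Hence for $h\geq 2$ the identities in Equation~\eqref{eq:minimax_fitted_value_later} say that $(\hat\BB_h\hat V_{h+1})(x,a)=(\BB_h\hat V_{h+1})(x,a)$ \emph{exactly} for $(x,a)\in\{x_1,x_2\}\times\cA$, so the event $\cE$ in Equation~\eqref{eq:def_event_eval_err_general} holds deterministically at those steps with $\Gamma_h^\xi\equiv 0$. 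For $h=1$, Equation~\eqref{eq:bellman_first_step} gives $(\BB_1\hat V_2)(x_0,b_j)=p_j(H-1)$, whereas Equation~\eqref{eq:minimax_fitted_value_1} gives $(\hat\BB_1\hat V_2)(x_0,b_j)=(H-1)\hat p_j$, with $\hat p_j$ the empirical mean of the $n_j$ i.i.d.\ $\mathrm{Bernoulli}(p_j)$ rewards associated with $b_j$ (by Assumption~\ref{assump:data_generate}, exactly as in the derivation of Equation~\eqref{eq:bernoulli_kappa}). For $j$ with $n_j\geq 1$, Hoeffding's inequality gives $|\hat p_j-p_j|\leq\sqrt{\log(2A/\xi)/(2n_j)}\leq\sqrt{\log(2A/\xi)/(1+n_j)}$ with probability at least $1-\xi/A$; for $j$ with $n_j=0$, the convention $(\hat\BB_1\hat V_2)(x_0,b_j)=0$ together with $p_j\leq 3/4$ gives $|(\hat\BB_1\hat V_2)(x_0,b_j)-(\BB_1\hat V_2)(x_0,b_j)|\leq H-1\leq\Gamma_1^\xi(x_0,b_j)$ deterministically, using $\log(2A/\xi)>1$. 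A union bound over $j\in[A]$ then yields $\PP_\cD(\cE)\geq 1-\xi$, which is claim~(i).

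\emph{Claim (ii).} Given claim~(i), Theorem~\ref{thm:regret_upper_bound_general} applies and gives, with probability at least $1-\xi$ with respect to $\PP_\cD$,
\[
\text{SubOpt}\bigl(\pess^*(\cD);x_0\bigr)\leq 2\sum_{h=1}^H\EE_{\pi^*}\bigl[\Gamma_h^\xi(s_h,a_h)\,\big|\,s_1=x_0\bigr].
\]
Since $\Gamma_h^\xi\equiv 0$ for $h\geq 2$ and, under $\pi^*$, $s_1=x_0$ and $a_1=b_{j^*}$ with $j^*=\argmax_{j\in\{1,2\}}p_j$ by Equation~\eqref{eq:w996}, the right-hand side equals $2\Gamma_1^\xi(x_0,b_{j^*})=2(H-1)\sqrt{\log(2A/\xi)/(1+n_{j^*})}$. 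On the other hand, Equation~\eqref{eq:upper_expression} of Lemma~\ref{lem:subopt_pevi_hard} is the pointwise identity
\[
\sum_{h=1}^H\EE_{\pi^*}\Bigl[\bigl(\phi(s_h,a_h)^\top\Lambda_h^{-1}\phi(s_h,a_h)\bigr)^{1/2}\,\big|\,s_1=x_0\Bigr]=\frac{1}{\sqrt{1+n_{j^*}}}+(H-1)\Bigl(\frac{p_{j^*}}{\sqrt{1+m_1}}+\frac{1-p_{j^*}}{\sqrt{1+m_2}}\Bigr)\geq\frac{1}{\sqrt{1+n_{j^*}}}.
\]
Combining the two displays and using $\sqrt{\log(2A/\xi)}\leq\sqrt 2\,\sqrt{\log(A/\xi)}$ (valid for $A\geq 3$, $\xi<1$) yields $\text{SubOpt}(\pess^*(\cD);x_0)\leq c\,H\sqrt{\log(A/\xi)}\sum_{h=1}^H\EE_{\pi^*}[(\phi(s_h,a_h)^\top\Lambda_h^{-1}\phi(s_h,a_h))^{1/2}\,|\,s_1=x_0]$ for an absolute constant $c$, which is claim~(ii).

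\emph{Main obstacle.} The only genuinely delicate step is claim~(i): one must confirm that the tailored Bellman update reproduces $\BB_h\hat V_{h+1}$ \emph{exactly} at the absorbing states for $h\geq 2$, since this is precisely what permits the choice $\Gamma_h^\xi\equiv 0$ there and thereby removes the extra $\sqrt d$ and covering-number factors present in Lemma~\ref{lemma:linear_MDP_uncertainty}; relatedly, one must check the self-consistency of the construction so that the $\hat V_{h+1}$ entering Equation~\eqref{eq:def_event_eval_err_general} is the deterministic function of Equation~\eqref{eq:minimax_fitted_value_1} (so that the $h=1$ bound reduces to a single Hoeffding estimate rather than a uniform-concentration argument), and dispatch the $n_j=0$ edge case together with the minor slack between Hoeffding's $1/(2n_j)$ and the stated $1/(1+n_j)$.
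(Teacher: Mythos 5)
Your proposal is correct and follows essentially the same route as the paper: exactness of the tailored Bellman update at the absorbing states for $h\geq 2$ (so $\Gamma_h^\xi\equiv 0$ is valid there), a Hoeffding-type bound with a union bound over $j\in[A]$ at $h=1$ (the paper phrases this via Azuma--Hoeffding for the martingale differences $\kappa_j^i-p_j$, which is the same estimate), the $n_j=0$ edge case handled deterministically, and then Theorem \ref{thm:regret_upper_bound_general} combined with the explicit expression \eqref{eq:upper_expression} lower-bounded by $1/\sqrt{1+n_{j^*}}$ and $\log(2A/\xi)\leq 2\log(A/\xi)$.
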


We remark that based on the $\xi$-uncertainty quantifiers tailored for linear MDPs in $\mathfrak{M}$, 
Proposition \ref{prop:minimax_upper} establishes a tighter upper bound than that in Theorem \ref{thm:regret_upper_linear}.
Specifically, Equation \eqref{eq:hard_instance_upper_final} shows that directly applying Theorem \ref{thm:regret_upper_linear} yields  
 an $\tilde\cO(  H ^2 A /\sqrt{n_{j^*}  } ) $ suboptimality upper bound, 
 where $\tilde \cO(\cdot )$ omits logarithmic terms and absolute constants. 
In contrast, as shown in Equation \eqref{eq:loc_upper1},
$\pess^*(\cD)$ achieves an improved $\tilde\cO(  H   /\sqrt{n_{j^*}  } )$ suboptimality upper bound. 
Thus, neglecting logarithmic terms and absolute constants, 
although both being minimax optimal algorithms, 
$\pess^*(\cD)$ is superior over $\pess(\cD)$ given in Algorithm \ref{alg:pess_greedy} by a factor of $HA$, and $\pess^*(\cD)$ is minimax optimal up to a factor of $H$.

\begin{proof}[Proof of Proposition \ref{prop:minimax_upper}]
	The proof consists of two steps. 
	In the first step, we prove   that 
	$\{\Gamma_h^\xi \}_{h=1}^H$  given in Equation \eqref{eq:gamma_minimax} are $\xi$-uncertainty quantifiers. 
	In the second step, we apply Theorem \ref{thm:regret_upper_bound_general} and establish the upper bound. 
	
	\vspace{4pt} 
	{\noindent \bf Step (i).}
	In the sequel, we show that $\{\Gamma_h^\xi\}_{h= 1}^H $ are  $\xi$-uncertainty quantifiers. 
	For all $h\geq 2$, by the definition of estimated Bellman updates $\hat{\BB}_h\hat{V}_{h+1}$ in Equation \eqref{eq:minimax_fitted_value_later}  and the definition of $\Gamma_h$ in Equation \eqref{eq:gamma_minimax},
	we have 
	\#\label{eq:uncertainty_nj=0}
 \big |	(\hat \BB_h \hat V_{h+1}) (x,a) - ( \BB_h \hat V_{h+1}) (x, a) \bigr | = 0 = \Gamma_h^{\xi} (x,a) 
	\#
	for all $(x,a) \in \cS\times \cA$. 

	Now we fix $h=1$. Recall that we define $n_j = \sum_{\tau = 1}^K \ind\{ a_1^\tau = b_j\}$ for all $j\in[A]$. 
	For any $j\in[A]$ such that $n_j=0$, recall the definition that $ (\hat \BB_1 \hat V_2) (x_0, b_j) = 0$. 
	By Equation \eqref{eq:bellman_first_step},  we have 
 $(\BB_1\hat V_2 )(x_0, b_j ) =  p_j \cdot (H-1)$.
 By the definition of $\Gamma_1^\xi$ in Equation \eqref{eq:gamma_minimax}, we have
 \$
 \Gamma_1^{\xi} (x_0, b_j ) = (H-1) \cdot \sqrt{ 2 \log (2 A / \xi) } \geq  H - 1  \geq (\BB_1\hat V_2 )(x_0, b_j ) 
 \$
 for all $j\in[A]$.
 Thus, we have 
 \$
 \big| (\hat\BB_1 \hat{V}_{2} ) (x_0,b_j) - (\BB_1 \hat{V}_{2}) (x_0,b_j) \big| \leq \Gamma_1^{\xi}(x_0,b_j)
 \$
 for all $j\in[A]$.
 Recall the mapping from the rewards $\{r_2^\tau\}_{\tau=1}^K$ to the relabeled rewards $\{ \kappa _j^i\}_{i,j=1}^{n_j,A}$ defined in Equation \eqref{eq:w88888}. For any $j\in[A]$ such that $n_j \geq 1$, we
  consider the $\sigma$-algebras
  \$
  \cF_{\tau}^j=\sigma \big(\{\kappa_{j}^i \}_{i=1}^{\tau} \big),\quad \tau\in [n_j].
  \$
	Since $\cD$ is compliant with $\cM$, by Equation \eqref{eq:bernoulli_kappa}, 
		 $\{ \kappa_{j}^i  - p_j\}_{i=1}^{n_j} $ is a martingale difference sequence adapted to filtration $\{\cF_{i}\}_{i=1}^{n_j}$. 
		 Applying  Azuma-Hoeffding's inequality,  we have
	\#\label{eq:uncertainty_lower}
& 	\PP_{\cD}\bigg(\Big|\frac{1}{n_j}\sum_{i=1}^{n_j} (\kappa_j ^i  - p_j ) \Big| \geq \sqrt{   \log ( 2A / \xi ) / ( 1+ n_j )}    \bigg)  \notag \\
&\qquad \leq  2\exp\bigl(-2n_j / (1+ n_j) \cdot \log ( 2A /\xi)  \bigr ) \leq    2\cdot (2A/\xi)^{-2n_j/(1+n_j)} \leq \xi / A,
	\#
	where the last inequality follows from the fact that $n_j \geq 1$.
  By Equation \eqref{eq:bellman_first_step}, we have
  \#\label{eq:b1v2_refined}
(  \BB_1 \hat V_2) (x_0, b_j ) = p_j \cdot \hat{V}_2(x_1) + (1-p_j) \cdot \hat{V}(x_2) = p_j\cdot (H-1),
  \#
  where the second equality follows from the definition of $\hat{V}_h$ for all $h\geq 2$ in Equation \eqref{eq:def_hatV_h_refined}.
	Thus, for any fixed $j \in [A]$ such that $n_j \geq 1$, 
	we have 
	\$
	\big | ( \hat \BB_1 \hat V_2) (x_0, b_j ) -  (  \BB_1 \hat V_2) (x_0, b_j )  \big|  = ( H - 1) \cdot \Big|\frac{1}{n_j}\sum_{i=1}^{n_j} (\kappa_j ^i  - p_j ) \Big|     \leq \Gamma_h^\xi (x, b_j)
	\$
	with probability at least $1- \xi / A$, which is with respect to $\PP_{\cD}$. Here the first equality follows from Equations \eqref{eq:minimax_fitted_value_1} and \eqref{eq:b1v2_refined} and the inequality follows from Equation \eqref{eq:uncertainty_lower}. 
	Combining Equation \eqref{eq:uncertainty_nj=0} and taking the union bound over all  $j\in[A]$ such that $n_j\geq 1$,
  we have 
   \$
  \PP_{\cD}\Big( \big|(\hat\BB_h \hat{V}_{h+1})(x,a) - (\BB_h \hat{V}_{h+1}) (x,a) \big| \leq \Gamma_h^{\xi}(x,a),~~\forall (x,a)\in \cS\times \cA, \forall h\in[H]\Big) \geq 1-\xi. 
  \$
	Thus, $\{\Gamma_h^\xi\}_{h=1}^H$ defined in Equation \eqref{eq:gamma_minimax}  are  $\xi$-uncertainty quantifiers. 
	
	\vspace{4pt}
	{\noindent \bf Step (ii).} 
	In the sequel, we apply Theorem \ref{thm:regret_upper_bound_general} to 
	$\pess^*(\cD)$ and establish the suboptimality upper bound in Proposition \ref{prop:minimax_upper}. 
	Specifically, by 
	Theorem \ref{thm:regret_upper_bound_general}, it holds that 
	\# \label{eq:loc_upper1}
	\text{SubOpt}\big(\pess^*(\cD);x_0\big) \leq 2 \sum_{h=1}^H \EE_{\pi^*}\big[\Gamma_h^\xi(s_h,a_h)\biggiven s_1=x_0\big] = 2(H-1) \cdot \sqrt{\frac{ \log(2A/\xi)}{1+n_{j^*}}},
	\#
  with probability at least $1- \xi$, which is with respect to $\PP_{\cD}$.
	Here the last equality follows from Equation \eqref{eq:gamma_minimax}. 
  By Equation \eqref{eq:lower_bound_bonus}, 
	 it  holds that
	\# \label{eq:loc_upper2}
	\sum_{h=1}^H \EE_{\pi^*}\Bigl[ \bigl( \phi(s_h,a_h)^\top \Lambda_h^{-1}\phi(s_h,a_h)\bigr)^{1/2} \Biggiven s_1=x_0\Bigr] \geq  \frac{1}{\sqrt{1+n_{j^*}}} .
	\#
	Note that $\log(2A/\xi)=\log 2 + \log(A/\xi) \leq 2\log(A/\xi)$ for $A\geq 2$. 
	Combining Equations \eqref{eq:loc_upper1} and \eqref{eq:loc_upper2}, we have
	\# 
	\text{SubOpt}\big(\pess^*(\cD);x_0\big) \leq c\cdot H \sqrt{\log(A/\xi)} \cdot \sum_{h=1}^H \EE_{\pi^*}\Bigl[ \bigl ( \phi(s_h,a_h)^\top \Lambda_h^{-1}\phi(s_h,a_h)\bigr) ^{1/2} \Biggiven s_1=x_0\Bigr]\notag
	\#
	with probability at least $1 - \xi$, which is taken with respect to $\PP_{\cD}$.
	Here $c>0$ is an absolute constant that can be chosen as $c=4$. 
Therefore, we 
	 conclude the proof of Proposition \ref{prop:minimax_upper}.
\end{proof}

\section{Proofs of PEVI with Kernel Function Approximation}

\subsection{Proof of Theorem~\ref{thm:rkhs}} \label{app:proof_thm_rkhs}
In this part, we provide the proof of Theorem~\ref{thm:rkhs}
and the related supporting lemmas. 

\begin{proof}[Proof of Theorem~\ref{thm:rkhs}]
By Lemma~\ref{lem:rkhs_uc}, we know that $\{\Gamma_h\}_{h\in[H]}$ 
constructed in Algorithm~\ref{alg:pess_rkhs}
is a $\xi$-uncertainty quantifier. 
Applying the general conclusion of Theorem~,  
with probability at least $1-\xi$ with repsect to $\PP_\cD$, 
it holds simultaneously 
for all $x\in \cS$ and all $h\in[H]$ that
\$
 \textrm{SubOpt}\big( \pess (\cD); x \big)  
 \leq 
2  \sum_{h=1}^H \EE_{\pi^*}\Big[ \Gamma_h(s_h,a_h)\Biggiven s_1=x\Big]
\$
for $\{\Gamma_h\}_{h\in[H]}$ defined in Equation~\eqref{eq:gamma_rkhs}. 
Here $\EE_{\pi^*}$ is taken with respect to the trajectory induced by $\pi^*$
in the underlying MDP given the fixed Gram matrix $K_h\in \RR^{N_h\times N_h}$ 
and the fixed operator $k_h\colon \cZ\to \RR^{N_h}$ defined 
in Equation~\eqref{eq:def_Kkh}. 

We now proceed to obtain an upper bound of  $\Gamma_h(s_h,a_h)$ 
for all $h\in[H]$ to yield the simpler expression in Theorem~\ref{thm:rkhs}. 
In the following, we adopt an equivalent set of notations for ease of presentation. 
We formally write the inner product in $\cH$ as 
$\langle f,f'\rangle_\cH = f^\top f' = f'^\top f$ for any $f,f'\in \cH$, 
so that 
$f(z) = \langle \phi(z),f\rangle_\cH = f^\top \phi(z)$
for any $f\in \cH$ and any $z\in \cZ$. 
Moreover, letting $\cI_h = \{\tau_{h,1},\dots,\tau_{h,N_h}\}\subset [K]$, 
we denote the operators $\Phi_h\colon \cH\to \RR^K$ 
and $\Lambda_h\colon \cH \to \cH$ as 
\#\label{eq:rkhs_def_Phi}
\Phi_h = \begin{pmatrix}
  \phi(z_h^{\tau_{h,1}})^\top \\
  \vdots \\
  \phi(z_h^{\tau_{h,N_h}})^\top
\end{pmatrix},\quad 
\Lambda_h = \lambda\cdot I_\cH + \sum_{\tau \in \cI_h} \phi(z_h^\tau)\phi(z_h^\tau)^\top  
= \lambda\cdot I_\cH + \Phi_h^\top \Phi_h,
\#
where $I_\cH$ is the identity mapping in $\cH$
and all the formal matrix multiplications follow the same rules 
as those for real-valued matrix. 
In this way, these operators are well-defined. 
Also, $\Lambda_h$ 
is a self-adjoint operator eigenvalues 
no smaller than $\lambda$, in the sense that 
$\langle f, \Lambda g\rangle = \langle \Lambda f,g\rangle$
for any $f,g\in \cH$. 
Therefore, there exists a positive definite operator $\Lambda^{1/2}_h$ 
whose eigenvalues are no smaller than $\lambda^{1/2}$ 
and $\Lambda_h = \Lambda_h^{1/2} \Lambda_h^{1/2}$. 
We denote the inverse of $\Lambda_h^{1/2}$ as $\Lambda_h^{-1/2}$, so that 
$\Lambda_h^{-1} = \Lambda_h^{-1/2}\Lambda_h^{-1/2}$
and $\|\Lambda_h^{-1/2}\|_\cH\leq \lambda^{-1/2}$. 
For any $z\in \cZ$, we denote 
$
\Lambda_h(z) = \Lambda_h + \phi(z)\phi(z)^\top.
$
Then we have 
\$
\Lambda_h(z) = \Lambda_h^{1/2}\big(  I_\cH + \Lambda_h^{-1/2} \phi(z)\phi(z)^\top \Lambda_h^{-1/2}\big)\Lambda_h^{1/2},
\$
which indicates 
\$
\log\det\big(\Lambda_h(z)\big) &= \log\det(\Lambda_h) + \log\det\big(  I_\cH + \Lambda_h^{-1/2} \phi(z)\phi(z)^\top \Lambda_h^{-1/2}\big) \\
&= \log\det(\Lambda_h) + \log \big(1+  \phi(z)^\top \Lambda_h^{-1 }\phi(z)\big).
\$
Therefore, since $\phi(z)^\top \Lambda_h^{-1}\phi(z)\leq 1$ for $\lambda\geq 1$, we have 
\#\label{eq:rewrite_penalty}
\phi(z)^\top \Lambda_h^{-1 }\phi(z) &\leq 2 \log \big(1+  \phi(z)^\top \Lambda_h^{-1 }\phi(z)\big)\notag \\
& = 2 \cdot \big[\log\det\big(\Lambda_h(z)\big) - \log\det(\Lambda_h) \big]\notag \\
&= 2 \cdot \big[\log\det\big( I+K_h(z)/\lambda \big) - \log\det(I+K_h/\lambda )\big],
\#
where the last equality follows from the fact that 
\$
\det(\Lambda_h) = \det(\lambda I + K_h),\quad \det(\Lambda_h(z)) = \det\big(\lambda I + K_h(z)\big),
\$
and we define $K_h(z)$ as the Gram matrix for $\{z_h^\tau\}_{\tau\in[K]}\cup\{z\}$. On the other hand, under these notations $k_h(z) = \Phi_h \phi(z)$ for any $z\in \cZ$
and $k_h(\cdot)$ defined in Equation~\eqref{eq:def_Kkh}, hence 
the penalty function $\Gamma_h$ defined in Equation~\eqref{eq:gamma_rkhs}
can be written as  
\#\label{eq:rewrite_gamma}
\Gamma_h(x,a) &= \beta \cdot \lambda^{-1/2} \cdot 
\big( \phi(x,a)^\top \phi(x,a) - \phi(x,a)^\top \Phi_h^\top (K_h+\lambda I)^{-1} \Phi_h \phi(x,a) \big)^{1/2} \notag \\
&=  \beta \cdot \lambda^{-1/2} \cdot 
\big( \phi(x,a)^\top \phi(x,a) - \phi(x,a)^\top (\Phi_h^\top \Phi_h + \lambda \cdot I_\cH)^{-1}\Phi_h^\top \Phi_h \phi(x,a) \big)^{1/2} \notag\\
&= \beta \cdot \lambda^{-1/2} \cdot \Big( \phi(x,a)^\top
\big( I_\cH  -  ( \Phi_h^\top \Phi_h + \lambda \cdot I_\cH)^{-1}\Phi_h^\top \Phi_h  \big) \phi(x,a) \Big)^{1/2}\notag \\
&= \beta \cdot \lambda^{-1/2} \cdot \Big( \phi(x,a)^\top  ( \Phi_h^\top \Phi_h + \lambda \cdot I_\cH)^{-1}
\big(( \Phi_h^\top \Phi_h + \lambda \cdot I_\cH)  -   \Phi_h^\top \Phi_h  \big) \phi(x,a) \Big)^{1/2} \notag\\
&= \beta \cdot \big( \phi(x,a)^\top ( \Phi_h^\top \Phi_h + \lambda \cdot I_\cH)^{-1} \phi(x,a) \big)^{1/2}. 
\#
Combining Equations~\eqref{eq:rewrite_penalty} and~\eqref{eq:rewrite_gamma}, 
we have 
\$
\Gamma_h(x,a) \leq \sqrt{2} \beta \cdot \big[\log\det\big( I+K_h(z)/\lambda \big) - \log\det(I+K_h/\lambda) \big]^{1/2},
\$
which completes the proof of Theorem~\ref{thm:rkhs}.
\end{proof}

\begin{lemma}\label{lem:rkhs_uc}
Suppose Assumption~\ref{assump:rkhs_bound} holds. We set 
$\beta=B$ in Algorithm~\ref{alg:pess_rkhs} where $\lambda\geq 1+1/K$ 
and $B>0$ satisfies
\$
2\lambda R_Q^2   + 
2   G(K/H,1+1/K) + 2 K/H  \cdot \log (1+1/K)+ 4   \log\big(H /\xi\big) 
 \leq (B/H)^2.
\$
Then $\{\Gamma_h\}_{h\in[H]}$ constructed in Equation~\eqref{eq:gamma_rkhs}
is a $\xi$-uncertainty quantifier, where 
$\{\hat\BB_h\}_{h\in[H]}$ and $\{\hat{V}_{h}\}_{h\in[H]}$ are obtained from Algorithm~\ref{alg:pess_rkhs}. 

\end{lemma}

\begin{proof}[Proof of Lemma~\ref{lem:rkhs_uc}]

We first establish the closed form of $\hat{f}_h$ in Equation~\eqref{eq:hatf_form} 
for any $h\in[H]$.
Since $\cH$ is an RKHS, 
there exists a feature map $\phi\colon \cZ\to \cH$ such that 
$f(z) = \langle f,\phi(z)\rangle_{\cH}$ for all $f\in \cH$ and all $z\in \cZ$
and $K(z,z') = \langle \phi(z), \phi(z')\rangle_\cH$ for all $z,z'\in \cZ$. 
Therefore, for each step $h\in [H]$, 
the solution $\hat{f}_h$ of the kernel ridge regression could be written as 
\$
\hat{f}_h  = \argmin_{f\in \cH}~ \sum_{\tau \in \cI_h} \Big( r_h^\tau + \hat{V}_{h+1}(x_{h+1}^\tau) - \big\langle  \phi(x_h^\tau,a_h^\tau), f \big\rangle_\cH \Big)^2 + \lambda \cdot \|f\|_\cH^2.
\$
By the property of Hilbert spaces, $\cH$ admits the orthogonal decomposition $\cH = \cM\oplus \cM^\bot$, where $\cM$ is the span of $\big\{\phi(x_h^\tau,a_h^\tau)\colon \tau \in [K]\big\}$. 
In light of this decomposition, we claim that $\hat{f}_h\in \cM$.
To see this, if $\hat{f}_h \notin \cM$, 
we could write $\hat{f}_h = f_1 + f_2$, where $f_1\in \cM$ and $f_2\in \cM^\bot$ with $f_2\neq 0$, 
so that $\big\langle  \phi(x_h^\tau,a_h^\tau), \hat{f}_h \big\rangle_\cH = \big\langle  \phi(x_h^\tau,a_h^\tau), f_1 \big\rangle_\cH $ for all $\tau\in[K]$ 
but $\|\hat{f}_h\|_\cH^2 = \|f_1\|_\cH^2 + \|f_2\|_\cH^2 > \|f_1\|_\cH^2$, 
a contradiction to the definition of $\hat{f}_h$. 
Therefore, we have the equivalent representation 
\$
&\hat{f}_h = \sum_{\tau \in \cI_h}  \phi(x_h^\tau,a_h^\tau) [\hat\theta_h]_\tau,\\
  \text{where}~~&
\hat\theta_h = \argmin_{\theta\in \RR^{N_h}} ~ \sum_{\tau \in \cI_h}
\Big( r_h^\tau + \hat{V}_{h+1}(x_{h+1}^\tau) - K_h\theta \Big)^2 
+ \lambda \cdot \theta^\top K_h \theta,
\$
in which all multiplications are for real-valued vectors and matrices. 
With the above reduced ridge regression problem, 
we obtain the closed form 
\#\label{eq:theta_h_rkhs}
\hat\theta_h = (K_h + \lambda \cdot I)^{-1} y_h
\#
for the Gram matrix $K_h$ and response vector $y_h$ defined in Equation~\eqref{eq:def_Kkh}.
Combining Equation~\eqref{eq:theta_h_rkhs} with 
the feature representation $\hat{f}_h(z) = \langle \hat{f},\phi(z)\rangle_{\cH} = k_h(z)^\top \hat\theta_h$ where $k_h(z)$ is defined in Equation~\eqref{eq:def_Kkh}, 
we obtain the original closed form in Equation~\eqref{eq:hatf_form}.

Also,  
for all $h\in[H]$, it holds that
$K_h = \Phi_h\Phi_h^\top$  and 
\$
\hat{f}_h = \Phi_h^\top \hat\theta_h = \Phi_h^\top \big( \Phi_h\Phi_h^\top + \lambda \cdot I \big)^{-1} y_h.
\$
In particular, it holds that 
\$
(\Phi_h^\top \Phi_h + \lambda \cdot I_\cH) \Phi_h^\top  =  \Phi_h^\top (\Phi_h\Phi_h^\top + \lambda \cdot I).
\$
Since both the matrix $\Phi_h\Phi_h^\top + \lambda \cdot I$  
and the operator $\Phi_h^\top \Phi_h + \lambda \cdot I_\cH$ are strictly positive definite, 
we have 
\#\label{eq:rkhs_equiv}
  \Phi_h^\top (\Phi_h\Phi_h^\top + \lambda \cdot I)^{-1} =  (\Phi_h^\top \Phi_h + \lambda \cdot I_\cH)^{-1}\Phi_h^\top,
\#
hence the fitted value function admits the form 
\#\label{eq:hatf_rkhs_form_matrix}
\hat{f}_h(z) = \phi(z)^\top \hat{f}_h = \phi(z)^\top \Lambda_h^{-1} \Phi_h^\top y_h. 
\#

We are now ready to bound the uncertainty in the empirical Bellman update 
$|\hat\BB_h \hat{V}_{h+1}(x,a) - \BB_h \hat{V}_{h+1}(x,a)|$ 
for all $(x,a)\in \cS\times \cA$ and all $h\in[H]$. 
By Assumption~\ref{assump:rkhs_bound}, there exists some $f_h\in \cH$ 
such that $\BB_h \hat{V}_{h+1} = f_h$ and 
$(\BB_h\hat{V}_{h+1})(x,a) = f_h(x,a) = \langle \phi(x,a), f_h\rangle_\cH$ 
under the feature representation. 
Here due to the boundedness in Assumption~\ref{assump:rkhs_bound},
we also have $\|f_h\|_\cH \leq R_Q H$. 
For any $h\in[H]$, recalling Equation~\eqref{eq:hatf_rkhs_form_matrix}
and utilizing the matrix representation, 
it follows from the triangular inequality that 
\$
& \big| (\hat\BB_h \hat{V}_{h+1})(x,a) - (\BB_h \hat{V}_{h+1})(x,a) \big| \\
&= \big| \phi(x,a)^\top ( \Phi_h^\top \Phi_h + \lambda \cdot I_\cH)^{-1} \Phi_h^\top y_h - \phi(x,a)^\top f_h \big|\\
&=\big| \underbrace{ \phi(x,a)^\top ( \Phi_h^\top \Phi_h + \lambda \cdot I_\cH)^{-1} \Phi_h^\top  \Phi_h f_h -  \phi(x,a)^\top f_h }_{\textrm{(i)}} \big| \\
&\qquad  +\big| \underbrace{\phi(x,a)^\top ( \Phi_h^\top \Phi_h + \lambda \cdot I_\cH)^{-1} \Phi_h^\top \big(y_h - \Phi_h f_h\big) }_{\textrm{(ii)}}\big|.
\$
In the sequel, we bound terms (i) and (ii) separately. 
By the Cauchy-Schwarz inequality, 
\$
\big|\text{(i)}\big| &= \big| \phi(x,a)^\top ( \Phi_h^\top \Phi_h + \lambda \cdot I_\cH)^{-1} \Phi_h^\top  \Phi_h f_h -  \phi(x,a)^\top f_h \big| \\
&= \big| \phi(x,a)^\top ( \Phi_h^\top \Phi_h + \lambda \cdot I_\cH)^{-1} \big[  \Phi_h^\top  \Phi_h  -  ( \Phi_h^\top \Phi_h + \lambda \cdot I_\cH) \big] f_h \big| \\
&= \lambda \cdot \big| \phi(x,a)^\top \Lambda_h^{-1}  f_h \big| \\
&\leq \lambda \cdot \big\| \Lambda_h^{-1} \phi(x,a) \big\|_\cH \cdot \|f_h\|_{\cH},
\$
where $\Lambda_h = \Phi_h^\top \Phi_h + \lambda \cdot I_\cH$. 
Therefore, it holds that
\#\label{eq:rkhs_bd_i}
\big|\text{(i)}\big| \leq 
\lambda \cdot  \|\Lambda_h^{-1/2} \Lambda_h^{-1/2} \phi(x,a)\|_\cH \cdot \|f_h\|_{\cH} 
&\leq \lambda^{1/2} \cdot  \|\Lambda_h^{-1/2} \phi(x,a)\|_\cH \cdot \|f_h\|_{\cH} \notag \\
& \leq R_Q H\cdot \lambda^{1/2} \cdot \|\phi(x,a)\|_{\Lambda_h^{-1}},
\#
where we write $\|\phi(x,a)\|_{\Lambda_h^{-1}}^2 := \big\langle \phi(x,a), \Lambda_h^{-1} \phi(x,a)\big\rangle = \|\Lambda_h^{-1/2} \phi(x,a)\|_\cH^2$. 
On the other hand, recalling the definition of $\Phi_h$ in Equation~\eqref{eq:rkhs_def_Phi}, we have 
\#
\big|\text{(ii)}\big| &= \big| \phi(x,a)^\top \Lambda_h^{-1} \Phi_h^\top \big(y_h - \Phi_h f_h\big) \big| \notag \\
&= \Big| \phi(x,a)^\top \Lambda_h^{-1} \sum_{\tau \in \cI_h} \phi(x_h^\tau,a_h^\tau)\big[ r_h^\tau + \hat{V}_{h+1}(x_{h+1}^\tau) - \phi(z_h^\tau)^\top f_h   \big] \Big| \notag\\
&= \Big| \phi(x,a)^\top \Lambda_h^{-1} \sum_{\tau\in \cI_h} \phi(x_h^\tau,a_h^\tau)\big[ r_h^\tau + \hat{V}_{h+1}(x_{h+1}^\tau) - (\BB_h \hat{V}_{h+1})(z_h^\tau)  \big]\Big| \notag\\
&\leq \big\|\phi(x,a)\big\|_{\Lambda_h^{-1}} \cdot \bigg\|\sum_{\tau\in \cI_h} \phi(x_h^\tau,a_h^\tau)\big[ r_h^\tau + \hat{V}_{h+1}(x_{h+1}^\tau) - (\BB_h \hat{V}_{h+1})(z_h^\tau)  \big]\bigg\|_{\Lambda_h^{-1}}, 
\label{eq:rkhs_mtg_diff}
\#
where the last inequality follows from the Cauchy-Schwarz inequality.  

In the sequel, we aim to bound the RHS of Equation~\eqref{eq:rkhs_mtg_diff}
by martingale concentration inequalities. 
To this end, we note that for $h\in[H-1]$, $\hat{V}_{h+1}$ is constructed 
with  $\cD_{h+1}\cup \cdots\cup \cD_{H}$. 
Recall the inverse order of the sample splitting, and  
$\cI_h = \{K/H\cdot(h-1)+1,\dots, K/H \cdot h\}$. 
We then define the filtration 
\$
\cF_{h,\tau} = \sigma\big( \{(x_h^j, a_h^j)\}_{j=1}^{(\tau+1)\vee K} \cup \{ (r_h^j, x_{h+1}^j)  \}_{j=1}^\tau   \big) 
\$
for $\tau\in \cI_h$, 
where $\sigma(\cdot)$ denotes the $\sigma$-algebra 
generated by the set of random variables and $(\tau+1)\vee k =\min\{\tau+1,K\}$. 
The inverse order in the data splitting implies that 
for any $\tau \in \cI_h$,  
\$
\sigma \big(\cup_{h'=h+1}^H \cD_{h'} \big) \subseteq \sigma\big(\{(x_h^j, a_h^j, r_h^j)\}_{j=1}^{\tau-1} \big),
\$
hence $\hat{V}_{h+1}\in \cF_{h,\tau}$. 
Therefore, the stochastic process 
\$
\epsilon_h^\tau = r_h^\tau + \hat{V}_{h+1}(x_{h+1}^\tau) - (\BB_h \hat{V}_{h+1})(z_h^\tau)
\$
is adapted to the filtration $\{\cF_{h,\tau}\}_{\tau\in \cI_h}$. 
Meanwhile,  the compliance assumption of 
dataset $\cD$ imply 
\$
\EE\big[ r_h^\tau + \hat{V}_{h+1}(x_{h+1}^\tau) \biggiven \cF_{h,\tau-1} \big] 
= \EE\big[ r_h(s_h,a_h) + \hat{V}_{h+1}(s_{h+1})\biggiven s_h = x_h^\tau, a_h=a_h^\tau \big]
= (\BB_h \hat{V}_{h+1})(z_h^\tau) ,
\$
where the second expectation is induced by  
the underlying MDP, and $\hat{V}_{h+1}$ is viewed as a fixed function. 
We thus have $\EE[\epsilon_h^\tau \given \cF_{h,\tau-1}]=0$. 
Applying Lemma~\ref{lem:rkhs_concentration} to 
$\epsilon_\tau = \epsilon_h^\tau $ and $\sigma^2 = H^2$ 
as $\epsilon_h^\tau  \in [-H,H]$,  
for any $\eta>0$ and any $\xi>0$, it holds with probability at least 
$1-\xi/H$ 
that 
\# \label{eq:rkhs_single_3}
&E_h^\top \big[(K_h + \eta\cdot I)^{-1} + I\big]^{-1} E_h \notag  \\
&\leq H^2 \cdot \log\det \big[ (1+\eta)\cdot I + K_h   \big] 
+ 2H^2 \cdot \log\big(H /\xi\big),
\#
where $E_h = (\epsilon_h^\tau)_{\tau\in\cI_h}^\top$ for $\epsilon_k = \epsilon_h^\tau(V)$. 

We now translate the bound in Equation~\eqref{eq:rkhs_single_3} 
to the desired form. 
We note that 
\#
\bigg\|\sum_{\tau \in \cI_h} \phi(x_h^\tau,a_h^\tau)\epsilon_h^\tau \bigg\|_{\Lambda_h^{-1}}^2 
&
= E_h^\top \Phi_h \big( \Phi_h^\top \Phi_h + \lambda \cdot I_\cH\big)^{-1} \Phi_h^\top E_h \notag \\
&= E_h^\top \Phi_h \Phi_h^\top \big(  \Phi_h \Phi_h^\top+ \lambda \cdot I \big)^{-1}  E_h \notag \\
&= E_h^\top K_h \big(  K_h+ \lambda \cdot I \big)^{-1}  E_h \notag \\
&=  E_h^\top E_h - \lambda \cdot E_h^\top  \big(  K_h+ \lambda \cdot I \big)^{-1}  E_h \notag \\
&= E_h^\top E_h -  E_h^\top  \big(  K_h/\lambda +  I \big)^{-1}  E_h, \label{eq:rkhs_single_1}
\#
where the first equality follows from the definition of $\Lambda_h$, 
the second equality follows from Equation~\eqref{eq:rkhs_equiv},
and the third equality follows from the fact that $K_h = \Phi_h\Phi_h^\top$. 
Since 
the operator norm of $(K_h/\lambda +I)^{-1}\in \RR^{K/H\times K/H}$ is 
increasing in $\lambda >0$, 
the right hand side of Equation~\eqref{eq:rkhs_single_1} 
is non-increasing in $\lambda >0$. Thus,  
for any $\underline{\lambda}>1$ such that  $\lambda \geq \underline{\lambda}$,
it holds that  
\$
\bigg\|\sum_{\tau \in \cI_h} \phi(x_h^\tau,a_h^\tau)\epsilon_h^\tau \bigg\|_{\Lambda_h^{-1}}^2 
\leq E_h^\top K_h \big(  K_h+ \underline{\lambda} \cdot I \big)^{-1}  E_h.
\$
For any $\eta>0$, noting that 
$
\big((K_h + \eta\cdot I)^{-1}+I\big)(K_h + \eta\cdot I) = K_h + (1+\eta)\cdot I,
$
we have 
\#\label{eq:rkhs_equiv2}
\big((K_h + \eta\cdot I)^{-1}+I\big)^{-1} = (K_h + \eta\cdot I)\big(K_h + (1+\eta)\cdot I\big)^{-1} .
\#
Meanwhile, taking $\eta = \underline{\lambda} -1 >0$, we have 
\#
E_h^\top K_h \big(  K_h+ \underline{\lambda} \cdot I \big)^{-1}  E_h
&\leq E_h^\top (K_h +\eta \cdot I)\big(  K_h+ \underline{\lambda} \cdot I \big)^{-1}  E_h \notag \\
&= E_h^\top \big[(K_h + \eta\cdot I)^{-1}+I\big]^{-1}  E_h,\label{eq:rkhs_single_2}
\#
where the second line follows from Equation~\eqref{eq:rkhs_equiv2}. 
For any fixed $\xi>0$, combining Equations~\eqref{eq:rkhs_single_3},~\eqref{eq:rkhs_single_1} and~\eqref{eq:rkhs_single_2}, 
we know that
\#\label{eq:rkhs_union_2}
& \bigg\|\sum_{\tau \in \cI_h} \phi(x_h^\tau,a_h^\tau)\epsilon_h^\tau \bigg\|_{\Lambda_h^{-1}}^2 
\leq H^2 \cdot \log\det \big[ \underline{\lambda} \cdot I + K_h   \big] 
+ 2H^2 \cdot \log\big(H /\xi\big)
\#
holds with probability at least $1-\xi/H$. 
Also, note that $\underline{\lambda}\cdot I + K_h =\underline{\lambda}\cdot (I+K_h/\underline{\lambda})$, hence 
$\log\det \big[ \underline{\lambda}\cdot I + K_h   \big] = N_h\log\underline{\lambda} + \log\det \big[ I + K_h  /\underline{\lambda} \big]$, where $N_h = |\cI_h| = K/H$.  
As a result, 
for any $\varepsilon>0$ and any $1<\underline{\lambda}\leq \lambda$, 
\#\label{eq:rkhs_bd_ii}
& \bigg\|\sum_{\tau \in \cI_h} \phi(x_h^\tau,a_h^\tau)\epsilon_h^\tau \bigg\|_{\Lambda_h^{-1}}^2 
\leq   H^2 \cdot \log\det \big[ I + K_h / \underline{\lambda} \big] +  H^2 N_h\log \underline{\lambda}
+ 2H^2 \cdot \log\big(H/\xi\big) 
\#
holds simultaneously for all $h\in[H]$ with probability at least $1-\xi$. 
Combining Equations~\eqref{eq:rkhs_bd_i},~\eqref{eq:rkhs_mtg_diff} 
and~\eqref{eq:rkhs_bd_ii}, 
with probability at least $1-\xi$, it holds simultaneously for all $h\in[H]$ that 
\$
&\big| (\hat\BB_h \hat{V}_{h+1})(x,a) - (\BB_h \hat{V}_{h+1})(x,a) \big| \\
& \leq \big\|\phi(x,a)\big\|_{\Lambda_h^{-1}}  \Bigg[ R_QH\cdot \sqrt{\lambda} +  \bigg\|\sum_{\tau\in \cI_h} \phi(x_h^\tau,a_h^\tau)\epsilon_h^\tau \bigg\|_{\Lambda_h^{-1}} \Bigg] \\
&\leq \big\|\phi(x,a)\big\|_{\Lambda_h^{-1}}  \Bigg[ 2\lambda  R_Q^2 H^2 +  2  \bigg\|\sum_{\tau\in \cI_h} \phi(x_h^\tau,a_h^\tau)\epsilon_h^\tau(V)\bigg\|_{\Lambda_h^{-1}}^2 \Bigg]^{1/2} \\
&\leq  \big\|\phi(x,a)\big\|_{\Lambda_h^{-1}}
\Big[ 2\lambda R_Q^2 H^2 + 
2H^2 \cdot G(K/H,\underline{\lambda}) + 2 KH  \cdot \log \underline{\lambda}+ 4H^2 \cdot \log\big(H /\xi\big)  \Big]^{1/2} \\ 
&\leq B \cdot \big\|\phi(x,a)\big\|_{\Lambda_h^{-1}} = \Gamma_h(x,a),
\qquad \forall~(x,a)\in \cS\times \cA,
\$
where the second inequality follows from the Cauchy-Schwarz inequality, 
the third inequality follows from the definition of $G(K,\lambda)$ in Equation~\eqref{eq:def_Gklambda}, 
and the last equality follows from the definition 
of $\{\Gamma_h\}_{h\in[H]}$ in Equation~\eqref{eq:gamma_rkhs}. 
Therefore, $\{\Gamma_h\}_{h\in[H]}$ is a $\xi$-uncertainty quantifier for $\PP_{\cD}$
with the construction of $\{\hat\BB_h\}_{h\in[H]}$ and $\{\hat{V}_{h}\}_{h\in[H]}$
in Algorithm~\ref{alg:pess_rkhs}. 
Finally, choosing $\underline{\lambda} = 1+1/K$, 
we conclude the proof of Lemma~\ref{lem:rkhs_uc}.
\end{proof}

\subsection{Proof of Proposition~\ref{prop:rkhs_decay}} 
\label{app:proof_prop_rkhs_decay}
\begin{proof}[Proof of Proposition~\ref{prop:rkhs_decay}]
The condition of $B$ in Theorem~\ref{thm:rkhs} translates to 
\$
B \geq H\cdot \big[2 \lambda R_Q^2 + 
2 \cdot G(K/H,1+1/K) + 2 K/H \cdot \log(1+1/K) + 4 \cdot \log (H /\xi )\big]^{1/2}.
\$ 
Note that $2K/H \cdot \log(1+1/K)\leq 2/H\leq 1 \leq \log(H/\xi)$. 
Then it suffies to have  
\$
B/2\geq 2\lambda R_Q^2\cdot H,\quad \text{and}\quad 
B/2 \geq H\cdot \big[  
2 \cdot G(K/H,1+1/K) + 5 \cdot \log (H /\xi )\big]^{1/2}.
\$
We now proceed to upper bound the right-handed side above. 

\paragraph{(i): $\gamma$-finite spectrum.}
In this case, since $1+1/K\in[1,2]$, 
by Lemma~\ref{lem:RKHS_dim}, there exists some absolute constant $C $ 
that only depends on $d,\gamma$ such that 
\$
G(K/H,1+1/K) \leq C  \cdot \gamma \cdot \log (K/H).
\$
Hence we could set  
$B = c \cdot  H \cdot \sqrt{ \gamma \cdot \log(  KH/\xi)}$ for some sufficiently large 
constant $c >0$.

\paragraph{(ii): $\gamma$-exponential decay.}
By Lemma~\ref{lem:RKHS_dim}, there exists some absolute constant $C$ 
that only depends on $d,\gamma$ such that 
\$
G(K/H,1+1/K) \leq C  \cdot (\log (K/H))^{1+1/\gamma}.
\$
We can thus choose 
$B = c \cdot H\cdot (\log (KH/\xi))^{1/2+1/(2\gamma)}$
for some sufficiently large absolute constant $c>0$ depending on $d,\gamma,C_1,C_2$ and $C_\psi$.

\paragraph{(iii): $\gamma$-polynomial decay.}
By Lemma~\ref{lem:RKHS_dim}, there exists some absolute constant $C $ 
that only depends on $d,\gamma$ such that 
\$
G(K/H,1+1/K) \leq C \cdot  (K/H)^{\frac{d+1}{\gamma +d }}\cdot \log (K/H).
\$
Thus, it suffices to choose 
$B = c  \cdot K^{\frac{d+1}{2(\gamma +d) }}  H^{1-{\frac{d+1}{2(\gamma +d) }}} \cdot \sqrt{\log(KH/\xi)}$, 
where $c>0$ is a sufficiently large absolute constant depending on $d,\gamma$.  
\end{proof}

\subsection{Proof of Corollary~\ref{cor:rkhs_iid}}
\label{app:subsec_cor_rkhs_iid}

\begin{proof}[Proof of Corollary~\ref{cor:rkhs_iid}]
Firstly, by Jensen's inequality, 
\$
&\sum_{h=1}^H \EE_{\pi^*}\Big[  \big\{\log\det\big( I+K_h(s_h,a_h)/\lambda \big) - \log\det(I+K_h/\lambda) \big\}^{1/2}\Biggiven s_1=x\Big] \\
&\leq \sum_{h=1}^H \Big\{ \EE_{\pi^*}  \big[   \log\det\big( I+K_h(s_h,a_h)/\lambda \big) - \log\det(I+K_h/\lambda) \biggiven s_1=x\big] \Big\}^{1/2} \\ 
&= \sum_{h=1}^H \Big\{ \EE_{\pi^*}  \big[ \phi(s_h,a_h)^\top (\Phi_h^\top \Phi_h + \lambda \cI_{\cH})^{-1} \phi(s_h,a_h)  \biggiven s_1=x\big] \Big\}^{1/2} = d_{\textrm{eff}}^{\text{sample}},
\$
where $\Phi_h^\top \Phi_h = \sum_{\tau\in \cI_h}\phi(z_h^\tau)\phi(z_h^\tau)^\top$, 
and the last line uses the feature map representation 
in~\eqref{eq:rewrite_gamma}. 
Therefore, fixing any $\xi\in(0,1)$, 
we set $B>0$ as in Proposition~\ref{prop:rkhs_decay} 
with a sufficiently large constant $C>0$ and 
some $\lambda\geq 1+1/K$ to be specified later. 
Then Theorem~\ref{thm:rkhs} and Proposition~\ref{prop:rkhs_decay} 
indicate that with probability at least $1-\xi/2 $, 
it holds simultaneously for all $x\in \cX$ that 
\$
&\textrm{SubOpt}\big( \pess (\cD); x \big) 
\leq 
2\sqrt{2}B \cdot d_{\textrm{eff}}^{\text{sample}}.
\$

In the sequel, we relate $d_{\textrm{eff}}^{\text{sample}}$ 
to $d_{\text{eff}}^{\text{pop}}$ by properly 
setting $\lambda>0$ under the eigenvalue decay 
conditions in Assumption~\ref{assump:rkhs_decay}.
Recalling $\Lambda_h = \Phi_h^\top \Phi_h + \lambda \cI_\cH$, 
the operator norm of $\Lambda_h^{-1}$ 
is lower bounded as $\|\Lambda_{h}^{-1}\|_{\oper}\geq 1/\lambda$. 
Furthermore, as stated in Section~\ref{subsec:rkhs_intro}, 
the feature mapping $\phi\colon \cZ\to \cH$ 
can be expanded with respect to 
the orthogonal basis $\{\sqrt{\sigma_j}\cdot \psi_j \}_{j\geq 0}$ as
\#\label{eq:phi_z_decomp}
\phi(z) = \sum_{j=1}^\infty \sigma_j \cdot \psi_j(z)\cdot \psi_j
= \sum_{j=1}^\infty \sqrt{\sigma_j} \cdot \psi_j(z)\cdot (\sqrt{\sigma_j}\psi_j) 
\#
  for any $z\in \cZ$. 
For any $m \in \NN$, we define 
\#\label{eq:def_rm}
R_m = \Bigg(\sum_{j=m+1}^\infty \sigma_j \cdot \|\psi_j\|_\infty^2\Bigg)^{1/2},
\#
where $\|\psi_j\|_\infty = \sup_{z\in \cZ}|\psi_j(z)|$. 
The following lemma establishes the concentration of $\Lambda_h$ to 
certain population quantities, whose proof is in Appendix~\ref{app:subsec_rkhs_concentration_Lambda_h}. 

\begin{lemma}\label{lem:rkhs_concentration_Lambda_h}
Fix any $\delta\in(0,1)$, any $\varepsilon>0$, and any $m\in \NN$. 
Let $n=K/H$. 
In Algorithm~\ref{alg:pess_rkhs}, we set 
\#\label{eq:rkhs_lambda_general}
\lambda \geq 16\log(2/\delta) + 16m \log(1+2/\varepsilon).
\#
Then with probability at least $1-\delta$, it holds that 
\#\label{eq:rkhs_con}
\frac{n_0\Sigma_h + \lambda \cI_\cH}{2}  - \big((3n+\lambda)\varepsilon + 5n  R_m \big)\cdot \cI_\cH \preceq \Lambda_h \preceq  \frac{3(n \Sigma_h + \lambda \cI_\cH)}{2}  + \big((3n+\lambda)\varepsilon + 5n  R_m \big)\cdot \cI_\cH.
\#
\end{lemma}

We now specify $m$ and $\varepsilon$ 
in Lemma~\ref{lem:rkhs_concentration_Lambda_h} 
to establish the error bounds 
for each eigenvalue decay condition in Assumption~\ref{assump:rkhs_decay} 
and compute the constant $B>0$ accordingly. 
Throughout, we set $\delta=\xi/2$ and show that 
\#\label{eq:bound_sample_rkhs}
d_{\text{eff}}^{\text{sample}} 
\leq  4 \sum_{h=1}^H  \tr \big( (K/H\cdot \Sigma_h + \lambda \cI_\cH)^{-1} \Sigma_h^* \big)^{1/2} = 4 d_{\text{eff}}^{\text{pop}}
\#
with probability at least $1-\xi/2$. 
Taking a union bound, we know that 
with probability at least $1-\xi$, 
it holds simultaneously for all $x\in \cX$ that 
\$
&\textrm{SubOpt}\big( \pess (\cD); x \big) 
\leq 
8\sqrt{2}B \cdot d_{\textrm{eff}}^{\text{pop}}.
\$

\paragraph{(i) $\gamma$-finite spectrum.}
In this case, we could simply take $m=\gamma$ and $R_m=0$. 
We also take 
\$ 
\lambda = 32 \gamma \log (2n /\delta),\quad \varepsilon = \frac{\lambda}{24n } = \frac{4\gamma \log(2n /\delta)}{3n }.
\$
We first verify this choice satisfies Equation~\eqref{eq:rkhs_lambda_general}. 
Note that $\lambda/2 \geq 16\log(2/\delta)$, and 
\$
16\gamma \log(1+2/\varepsilon) \leq 16\gamma \cdot \log\Big(1+\frac{3n}{2\gamma\log(2n /\delta)}\Big) \leq 16\gamma \log(2n /\delta) =  \lambda/2,
\$
hence Equation~\eqref{eq:rkhs_lambda_general} holds. 
Meanwhile, 
this choice ensures 
$
(3n +\lambda)\varepsilon \leq \lambda/4  
$
when $n \geq 256\gamma \cdot \log(2n /\delta)$, 
which further leads to 
\$
\Lambda_h \succeq \frac{n \Sigma_h + \lambda \cI_\cH}{2}  - (3n+\lambda)\varepsilon  \cdot \cI_\cH
\geq \frac{n \Sigma_h + \lambda \cI_\cH}{2} - \frac{\lambda}{4}\cI_{\cH}
\succeq \frac{n \Sigma_h + \lambda \cI_\cH}{4}
\$
with probability at least $1-\delta$. Consequently, on the same event, 
\$
d_{\text{eff}}^{\text{sample}} 
& = \sum_{h=1}^H \Big\{ \EE_{\pi^*}  \big[ \phi(s_h,a_h)^\top \Lambda_h^{-1} \phi(s_h,a_h)  \biggiven s_1=x\big] \Big\}^{1/2} \notag \\ 
&\leq 4 \sum_{h=1}^H \Big\{ \EE_{\pi^*}  \big[ \phi(s_h,a_h)^\top (K/H\cdot \Sigma_h + \lambda \cI_\cH)^{-1} \phi(s_h,a_h)  \biggiven s_1=x\big] \Big\}^{1/2} \notag \\
&= 4 \sum_{h=1}^H  \tr \big( (K/H\cdot \Sigma_h + \lambda \cI_\cH)^{-1} \Sigma_h^* \big)^{1/2},
\$
which is exactly Equation~\eqref{eq:bound_sample_rkhs}. 
Meanwhile, this choice of $\lambda$ leads to 
\$
B \leq C\cdot H  \big\{\sqrt{\lambda} + \sqrt{\gamma\cdot \log (KH/\xi)}\big\} 
\leq C' \cdot H    \sqrt{\gamma\cdot \log(KH/\xi)}
\$
for some sufficiently large constant $C'>0$ that does not 
depend on $K$ or $H$. 

\paragraph{(ii) $\gamma$-exponential decay.}
We follow the computation in~\citet{yang2020function} to compute $R_m$ for 
$\gamma$-exponential decay, where we assume that $\|\psi_j\|_\infty \leq C_\psi \cdot \sigma^{-\tau}$ for all $j\geq 1$. Thus, 
\$
\sum_{j=m+1}^\infty \sigma_j \cdot \|\psi_j\|_\infty^2 
&\leq 
C_\psi^2 \sum_{j=m+1}^\infty \sigma^{1-2\tau} 
\leq C_\psi^2 \cdot C_1^{1-2\tau}\cdot \sum_{j=m+1}^\infty \exp\big(-C_2\cdot (1-2\tau)\cdot j^\gamma\big),
\$
where $\tau\in[0,1/2)$. 
For notational simplicity, we denote the constants 
$C_{1,\tau} = C_\psi\cdot C_1^{1/2-\tau}$
and $C_{2,\tau} = C_2\cdot(1-2\tau)$, both of which are positive. 
We thus have 
\$
R_m^2 \leq C_{1,\tau}^2  \sum_{j=m+1}^\infty \exp (-C_{2,\tau}\cdot j^\gamma )  
\leq C_{1,\tau}^2 \int_{m}^\infty \exp ( -C_{2,\tau} \cdot u^\gamma ) \ud u
\$
by the monotonicity of $\exp(-C_{2,\tau}\cdot u^\gamma)$ in $u\in[m,\infty)$. 
In the following, 
we bound $R_m$ for two cases $\gamma\geq 1$ and $\gamma\in(0,1)$, separately; 
this follows exactly the same calculations as in~\citet[Section E.2]{yang2020function}, 
while we include the details here for completeness. 
When $\gamma\geq 1$, for $u\geq m\geq 1$ it holds that $u^{\gamma-1}\geq 1$. 
Hence with a change of variable $v = u^\gamma$, one has 
\$
\int_{m}^\infty \exp ( -C_{2,\tau} \cdot u^\gamma ) \ud u 
&\leq \int_m^\infty \gamma \cdot u^{\gamma-1} \cdot  \exp ( -C_{2,\tau} \cdot u^\gamma ) \ud u \\
&= \int_{m^\gamma}^\infty e^{-C_{2,\tau} v}\ud v = C_{2,\tau}^{-1} \cdot \exp ( -C_{2,\tau} \cdot m^\gamma ).
\$
When $\gamma\in(0,1)$, with a change of variable $v=u^\gamma$ one has 
\#\label{eq:rkhs_exp_1}
&\int_{m}^\infty \exp ( -C_{2,\tau} \cdot u^\gamma ) \ud u 
= \frac{1}{\gamma} \int_{m^\gamma}^\infty v^{1/\gamma-1}\cdot e^{-C_{2,\tau}  v} \ud v \notag \\ 
&\qquad = \frac{m^{1-\gamma}}{\gamma\cdot C_{2,\tau}}  \exp(-C_{2,\tau}\cdot m^\gamma) + \frac{1-\gamma}{\gamma^2\cdot C_{2,\tau}} 
\int_{m^\gamma}^\infty v^{1/\gamma-2} e^{-C_{2,\tau} v} \ud v,
\#
where the last equality is integration by parts. Furthermore, 
since $1/v\leq 1/m^\gamma$ for all $v\geq m$, 
the second integral in Equation~\eqref{eq:rkhs_exp_1} 
can be bounded as 
\#\label{eq:rkhs_exp_2}
\int_{m^\gamma}^\infty v^{1/\gamma-2} \cdot e^{-C_{2,\tau} v} \ud v
&\leq \frac{1}{m^\gamma} \int_{m^\gamma}^\infty v^{1/\gamma-1} \cdot e^{-C_{2,\tau} v} \ud v 
= \frac{1}{m^\gamma} \int_{m }^\infty   e^{-C_{2,\tau} u^\gamma} \ud u,
\#
where the last equality uses a change of variable 
$u = v^\gamma$. Combining Equations~\eqref{eq:rkhs_exp_1}
and~\eqref{eq:rkhs_exp_2} leads to
\$
\int_{m}^\infty \exp ( -C_{2,\tau} \cdot u^\gamma ) \ud u 
\leq \frac{m^{1-\gamma}}{\gamma\cdot C_{2,\tau}}  \exp(-C_{2,\tau}\cdot m^\gamma) + \frac{1/\gamma-1}{C_{2,\tau}\cdot m^\gamma} \int_{m}^\infty \exp ( -C_{2,\tau} \cdot u^\gamma ) \ud u .
\$
Then for sufficiently large $m$ satisfying $m^\gamma \cdot C_{2,\tau} > 2(1/\gamma-1)$, 
solving the above inequality leads to 
\$
\int_{m}^\infty \exp ( -C_{2,\tau} \cdot u^\gamma ) \ud u 
\leq \frac{2m^{1-\gamma}}{\gamma\cdot C_{2,\tau}} \exp(-C_{2,\tau}\cdot m^\gamma).
\$
To summarize, when $\gamma\geq 1$, we have 
\$
R_m \leq C_{1,\tau} C_{2,\tau}^{-1/2} \cdot \exp(-C_{2,\tau}/2\cdot m^\gamma),
\$
while when $\gamma\in (0,1)$, we have 
\$
R_m \leq C_{1,\tau} C_{2,\tau}^{-1/2} \cdot \sqrt{2/\gamma} \cdot m^{(1-\gamma)/2} \cdot \exp(-C_{2,\tau}/2 \cdot m^\gamma).
\$

We now specify a proper set of 
$(m, \lambda, \varepsilon)$ 
such that Equation~\eqref{eq:rkhs_lambda_general} holds, 
and $\Lambda_h \succeq (n\Sigma_h + \lambda \cI_\cH)/4$ 
with high probability. 
We consider $\gamma\geq 1$ and $\gamma\in(0,1)$ separately. 

When $\gamma\geq 1$, we choose 
the smallest $m\in \NN$ such that  
$C_{1,\tau} C_{2,\tau}^{-1/2} \cdot \exp(-C_{2,\tau}/2\cdot m^\gamma) \leq \lambda/(40n) $, 
where we recall that $C_{1,\tau}$ and $C_{2,\tau}$ 
are absolute constants. Equivalently, we choose $m \in \NN$ such that 
\$
m \geq \bigg\{\frac{2}{C_{2,\tau}} \log\bigg( \text{\small $\frac{40n \cdot C_{1,\tau}}{\lambda \cdot C_{2,\tau}^{1/2}}$} \bigg)\bigg\}^{1/\gamma}.
\$
We now let $\lambda = \max\{C_3 (\log (n/\delta))^2,1\}$ 
for some sufficiently large absolute constant $C_3$, 
and $\varepsilon = \lambda/(36n)$. 
We first verify the condition~\eqref{eq:rkhs_lambda_general} holds 
with this choice. 
When $n$ is sufficiently large, we have 
$\lambda/2 \geq 16\log (2/\delta)$. 
Meanwhile, there exists absolute constants $C_4,C_5,C_6>0$ 
(i.e., only depending on $C_{1,\tau}$ and $C_{2,\tau}$) such that 
\$
16 m \log(1+2/\varepsilon) 
&\leq C_4 \cdot \bigg\{\frac{2}{C_{2,\tau}} \log\bigg( \text{\small $\frac{40n \cdot C_{1,\tau}}{\lambda \cdot C_{2,\tau}^{1/2}}$} \bigg)\bigg\}^{1/\gamma} \log\bigg( 1 + \text{$\frac{72n}{\lambda }$} \bigg) \\ 
&\leq C_4 \cdot \max\bigg\{1, \frac{2}{C_{2,\tau}} \log\bigg( \text{\small $\frac{40n \cdot C_{1,\tau}}{\lambda \cdot C_{2,\tau}^{1/2}}$} \bigg) \bigg\} \cdot \log\bigg( 1 + \text{$\frac{72n}{ \lambda }$} \bigg) \\
&\leq C_5 \cdot \max\Big\{ \log(C_6\cdot n/\lambda) , \big[ \log(C_6\cdot n/\lambda) \big]^2 \Big\},
\$
where the second inequality uses $1/\gamma\leq 1$. 
Therefore, we can choose some sufficiently large 
absolute constant $C_3>0$, which only depends on 
$C_\psi$, $C_1$, $C_2$ and $\tau$, such that 
$16m \log(1+2/\varepsilon) \leq \lambda/8$ 
for sufficiently large $n$. 
Thus, we know that Equation~\eqref{eq:rkhs_lambda_general} 
holds. 
When $n$ is sufficiently large such that 
$n\geq 2\lambda/3$, we have $(3n+\lambda)\varepsilon \leq \lambda/8$. 
On the other hand, 
$
5n  R_m \leq \lambda/8 
$
since $R_m \leq \lambda/(40n)$. 
Together with Equation~\eqref{eq:rkhs_con}, 
such choice leads to 
\$
\Lambda_h \succeq \frac{n \Sigma_h + \lambda \cI_\cH}{2}  - (3n+\lambda)\varepsilon  \cdot \cI_\cH - 5nR_m \cdot \cI_\cH
\geq \frac{n \Sigma_h + \lambda \cI_\cH}{2} - \frac{\lambda}{4}\cI_{\cH}
\succeq \frac{n \Sigma_h + \lambda \cI_\cH}{4}
\$
with probability at least $1-\delta$. 
On the same event, 
we similarly have Equation~\eqref{eq:bound_sample_rkhs}.

When $\gamma\in(0,1)$, we choose 
$m\in \NN$ such that 
$C_{1,\tau} C_{2,\tau}^{-1/2} \cdot \sqrt{2/\gamma} \cdot m^{(1-\gamma)/2} \cdot \exp(-C_{2,\tau}/2 \cdot m^\gamma) \leq \lambda/(40n)$, 
and set $\varepsilon = \lambda/(36n)$. 
Since $\gamma\in(0,1)$, it suffices to choose $m$ such that 
\$
C_{1,\tau} C_{2,\tau}^{-1/2} \cdot \sqrt{2/\gamma} \cdot \sqrt{m} \cdot \exp(-C_{2,\tau}/2 \cdot m^\gamma) \leq \lambda/(40n),
\$
or equivalently, 
\$
\frac{C_{2,\tau}}{2} \cdot m^\gamma \geq \log\bigg( \text{\small $ \frac{40n \cdot C_{1,\tau}}{\lambda  \sqrt{\gamma \cdot C_{2,\tau}/2}}$} \bigg)
+ \frac{\log m}{2}. 
\$
To this end, it suffices to choose 
some sufficiently large $m$ (larger than 
an absolute constant that only depends on $C_{2,\tau}$ and $\gamma$) such that $C_{2,\tau} \cdot m^\gamma 
\geq 2\log m$, and 
\$
m \geq  \bigg\{ \frac{2}{C_{2,\tau}}\log\bigg( \text{\small $ \frac{40n \cdot C_{1,\tau}}{\lambda  \sqrt{\gamma \cdot C_{2,\tau}/2}}$} \bigg) \bigg\}^{1/\gamma}.
\$
With a slight abuse of notations for absolute constants, 
we now show that we could choose $\lambda = C_4 \cdot \big[ \log (n/\delta)\big]^{1 +1/\gamma }$ for some sufficiently large 
absolute constant $C_4$ that only depends on 
$C_\psi$, $C_1$, $C_2$ and $\tau$. 
Without loss of generality we always have  
and $\lambda/2 \geq 16\log (2/\delta)\geq 16$. 
Also, there exists an absolute constant $C_5, C_6$ such that
\$
16m \log(1+2/\varepsilon) 
&\leq C_5 \cdot \bigg\{  \log\bigg( \text{\small $ \frac{40n \cdot C_{1,\tau}}{\lambda  \sqrt{\gamma \cdot C_{2,\tau}/2}}$} \bigg) \bigg\}^{1/\gamma} \log\bigg( 1 + \text{$\frac{72n}{\lambda }$} \bigg) \\ 
&\leq C_5 \cdot \bigg\{  \log\bigg( \text{\small $ \frac{40n \cdot C_{1,\tau}}{ \sqrt{ \gamma\cdot  C_{2,\tau}/2}}$} \bigg) \bigg\}^{1/\gamma} \log\big( 1 + 5n \big) \\ 
&\leq C_5 \cdot \big[ \log(C_6\cdot n)\big]^{1+1/\gamma}.
\$
Therefore, we can choose a sufficiently large 
absolute constant $C_4>0$ such that 
$2 C_5 \cdot \big[ \log(C_6\cdot n)\big]^{1+1/\gamma} 
\leq C_4 \cdot \big[ \log (n/\delta)\big]^{1+1/\gamma } = \lambda/2$. 
This verifies Equation~\eqref{eq:rkhs_lambda_general}. 
At the same time, such choice of $(\lambda, m, \varepsilon)$ 
ensures  $5nR_m \leq \lambda/8$ and 
$n\geq 2\lambda/3$ for sufficiently large $n$, hence 
$
(3n+\lambda) \varepsilon + 5nR_m \leq  \frac{\lambda}{4}.
$
Therefore, as we've verified the 
condition~\eqref{eq:rkhs_lambda_general}, 
from Lemma~\ref{lem:rkhs_concentration_Lambda_h}
we know that~\eqref{eq:bound_sample_rkhs} 
holds with probability at least $1-\delta$. 

Summarizing these two cases, we let 
$\lambda = C_4 \cdot \big[ \log (n/\delta)\big]^{1 +1/\gamma }$ for some sufficiently large 
absolute constant $C_4$ that only depends on 
$C_\psi$, $C_1$, $C_2$, $\gamma$ and $\tau$. 
When $n$ is sufficiently large, 
Equation~\eqref{eq:bound_sample_rkhs} 
holds with probability at least $1-\delta$. 
Finally, such choice of $\lambda$ leads to 
\$
B &= C \cdot H\cdot \big\{ \sqrt{\lambda} + (\log(KH/\xi))^{1/2 + 1/(2\gamma)}\big\}
\leq C' \cdot H \cdot \sqrt{(\log(KH/\xi))^{1  + 1 \gamma }}
\$
for some sufficiently large absolute constant $C'>0$ 
that does not depend on $K$ or $H$. 

\paragraph{(iii) $\gamma$-polynomial decay.}
In this case, we have 
\$
R_m^2 \leq C_\psi^2 \cdot C_1^{1-2\tau}  \sum_{j=m+1}^\infty 
j^{-\gamma(1-2\tau)}
\leq C_{1,\tau}^2 \int_{m}^\infty u^{-\gamma(1-2\tau)} \ud u
= C_{1,\tau}^2 \cdot \frac{m^{1- \gamma(1-2\tau)  }}{\gamma(1-2\tau)-1},
\$
where we define $C_{1,\tau} = C_\psi \cdot C_1^{1/2-\tau}$ 
for simplicity, and the second inequality uses the 
monotonicity of $u^{-\gamma(1-2\tau)}$ in $u\in [m,\infty)$. 
We also denote the absolute constant 
$C_{\gamma,\tau} = [ \gamma(1-2\tau)-1]/2 >0$, such that 
$R_m \leq 2C_{1,\tau} C_{\gamma,\tau }^{-1/2}\cdot m^{-C_{\gamma,\tau}}$ 
for $m\in \NN$.  
Without loss of generality, we always have 
$\lambda/2\geq 16\log(2/\delta)\geq 16$. 
We now let $\lambda = C_5\cdot n^{1/C_{\gamma,\tau}} \log(n/\delta)$
for some sufficently large absolute constant $C_5>0$ 
that only depends on $C_\psi$, $C_1$, $C_2$, $\gamma$ and $\tau$. 
We also choose $m\in \NN$ such that 
\#\label{eq:rkhs_m_poly}
2C_{1,\tau} C_{\gamma,\tau}^{-1/2} \cdot m^{-C_{\gamma,\tau}} \leq \frac{\lambda}{40 n},
\#
as well as $\varepsilon = \lambda/(36 n)$; 
as a result, we have 
$
(3n+\lambda) \varepsilon + 5nR_m \leq  \frac{\lambda}{4} 
$
once $n\geq 2\lambda/3$. 
We then verify that such choice satisfies Equation~\eqref{eq:rkhs_lambda_general}. 
Since we already have $\lambda/2 \geq 16\log(2/\delta)$, 
we only need to show $16m \log(1+2/\varepsilon) \leq \lambda/2$. 
By the choice of $m$ in Equation~\eqref{eq:rkhs_m_poly}, 
there exsists some absolute constants 
$C_6, C_7>0$ such that 
\$
16m \log(1+2/\varepsilon) 
\leq C_6\cdot n^{1/C_{\gamma,\tau}} \cdot \log(1+ 72n/\lambda) 
\leq C_6\cdot n^{1/C_{\gamma,\tau}} \cdot \log n\leq \lambda/2, 
\$
where we set
$\lambda = C_5\cdot n^{1/C_{\gamma,\tau}} \log(n/\delta)$ 
for some sufficiently large absolute constant $C_5>0$. 
Thus Equation~\eqref{eq:rkhs_lambda_general} holds, 
and Lemma~\ref{lem:rkhs_concentration_Lambda_h} implies that 
when $n$ is sufficiently large, 
Equation~\eqref{eq:bound_sample_rkhs} 
holds with probability at least $1-\delta$. 
Such choise of $\lambda$ leads to 
\$
B &= C  \cdot   H\cdot\big\{\sqrt{\lambda} + K^{\frac{d+1}{2(\gamma+d)}} H^{-\frac{d+1}{2(\gamma+d)}} \cdot \sqrt{\log(KH/\xi)} \big\} \\ 
&\leq C'' \cdot H \cdot \big\{ (K/H)^{-\frac{1}{1-\gamma(1-2\tau)}} 
\sqrt{\log(KH/\xi)}
+ K^{\frac{d+1}{2(\gamma+d)}} H^{-\frac{d+1}{2(\gamma+d)}} \cdot \sqrt{\log(KH/\xi)}\big\} \\ 
&\leq C' \cdot K^{\kappa^*} H^{\nu^*} \cdot \sqrt{\log(KH/\xi)}
\$
for some absolute constant $C'>0$ that does not 
depend on $K$ or $H$, where 
\$
\kappa^* = \frac{d+1}{2(\gamma +d)} + \frac{1}{\gamma(1-2\tau)-1},\quad 
\nu^* = 1 - \frac{d+1}{2(\gamma+d)} - \frac{1}{\gamma(1-2\tau)-1}.
\$
Therefore, we conclude the proof of Corollary~\ref{cor:rkhs_iid}. 
\end{proof}

\subsection{Proof of Lemma~\ref{lem:rkhs_concentration_Lambda_h}}
\label{app:subsec_rkhs_concentration_Lambda_h}

\begin{proof}[Proof of Lemma~\ref{lem:rkhs_concentration_Lambda_h}]

Recall that the opreator norm of some positive-definite operator
$\Lambda\colon \cH\to \cH$ is defined as 
$\|\Lambda\|_{\oper} = \sup_{\|x\|_{\cH}=1} x^\top \Lambda x
= \sup_{\|x\|_\cH=1} = \langle x, \Lambda x\rangle_{\cH}$. 
For any $m \in \NN$, we define $\Pi_m\colon \cH\to \cH$ 
as the projection operator onto the subspace spanned by 
$\{\psi_j\}_{j\in[m]}$. 

\begin{lemma}\label{lem:rkhs_proj}
For any $m\in \NN$, let $R_m$ be defined in Equation~\eqref{eq:def_rm}. 
Then for any $z\in \cZ$,  
\$
\big\|\phi(z) - \Pi_m\big[ \phi(z) \big] \big\|_\cH \leq R_m.
\$
Furthermore, for any $n\in \NN$ and any $\phi_i = \phi(z_i)$ 
for $z_i\in \cZ$, $i\in[n]$, it holds that 
\$
\bigg\|\frac{1}{n}\sum_{i=1}^n \phi_i \phi_i^\top - \frac{1}{n}\sum_{i=1}^n \Pi_m[\phi_i]\big(\Pi_m[\phi_i]\big)^\top \bigg\|_{\oper} \leq 2R_m.
\$
Also, for any random variable $z\in \cZ$, it holds that  
\$
\bigg\|\EE\big[  \phi (z) \phi(z)^\top\big] - \EE\Big[ \Pi_m[\phi(z)]\big(\Pi_m[\phi (z)]\big)^\top \Big] \bigg\|_{\oper} \leq 2R_m.
\$
\end{lemma}

\begin{proof}[Proof of Lemma~\ref{lem:rkhs_proj}]
Recalling the decomposition of $\phi(z)$ with respect to the 
orthogonal basis $\{\sqrt{\sigma_j}\cdot\psi_j\}_{j\geq 1}$ of $\cH$ in Equation~\eqref{eq:phi_z_decomp}, we know that 
\$
\big\|\phi(z) - \Pi_m\big[ \phi(z) \big] \big\|_\cH 
&= \bigg\| \sum_{j=m+1}^\infty \sqrt{\sigma_j}\cdot \psi_j(z) \cdot \sqrt{\sigma_j} \cdot \psi_j  \bigg\|_{\cH} 
= \bigg\{ \sum_{j=m+1}^\infty  \sigma_j \cdot \psi_j(z)^2  \bigg\}^{1/2} 
\leq R_m,
\$
where the second equality follows from $\|\sqrt{\sigma_j}\cdot \psi_j\|_\cH=1$ for all $j\geq 1$. 
Thus, for any fixed $x\in \cH$ with $\|x\|_\cH=1$ and any $i\in[n]$, we  have 
\$
x^\top \Big(\phi_i \phi_i^\top - \Pi_m[\phi_i]\big(\Pi_m[\phi_i]\big)^\top \Big)x
&= \big\langle \phi_i + \Pi_m[\phi_i], x \big\rangle_{\cH} 
\cdot \big\langle \phi_i - \Pi_m[\phi_i], x \big\rangle_{\cH} \\ 
&\leq \|x\|_\cH^2 \cdot \big\| \phi_i + \Pi_m[\phi_i] \big\|_{\cH} 
\cdot \big\| \phi_i - \Pi_m[\phi_i] \big\|_{\cH}  
  \leq 2 R_m,
\$
where we uses 
the fact that $\big\| \phi_i + \Pi_m[\phi_i] \big\|_{\cH}  \leq \big\| \phi_i  \big\|_{\cH}  + \big\|  \Pi_m[\phi_i] \big\|_{\cH} \leq 2$. 
Averaging over $i\in[n]$ or taking expectation over a random variable $z\sim \PP$, 
by the definition of operator norm, 
we complete the proof of Lemma~\ref{lem:rkhs_proj}.
\end{proof}

For notational simplicity, in the following we denote  $\phi_m(z):= \Pi_m\big[\phi(z)\big]$ 
for any $z\in \cZ$, $\Lambda_{m,h} = \sum_{\tau\in \cI_h}\phi_m(z_h^\tau)\phi_m(z_h^\tau)^\top + \lambda \cI_h$, and 
\$
\Sigma_{m,h} = \EE_{\pi^b}\big[ \phi_m(z_h)\phi_m(z_h)^\top  \biggiven s_1=x\big],\quad 
\Sigma_{m,h}^* = \EE_{\pi^*}\big[ \phi_m(z_h)\phi_m(z_h)^\top  \biggiven s_1=x\big]
\$
for any $h\in[H]$ and $m\in \NN$. 
Then Lemma~\ref{lem:rkhs_proj} implies 
\$
\big\|\Sigma_{m,h} - \Sigma_h \big\|_{\oper} \leq 2R_m,\quad 
\big\| \Lambda_{m,h} - \Lambda_h \big\|_{\oper} \leq 2n\cdot R_m. 
\$

The following lemma relates the sample and population effective dimensions 
projected to the subspace spanned by $\{\psi_j\}_{j\in[m]}$, 
adapted from \citet[Lemma 39]{zanette2021cautiously}. Let $n =K/H$ 
denote the size of the fold $\cI_h$ for any $h\in[H]$. 
Define the operators 
$\hat\Sigma_{m,h} = n^{-1} (\Lambda_{m,h} - \lambda \cI_\cH)=  n^{-1}\sum_{\tau\in \cI_h}\phi_m(z_h^\tau)\phi_m(z_h^\tau)^\top$ for any $h\in [H]$ and any $m\in\NN$. 

\begin{lemma}
\label{lem:inv_cov_concentration}
Fix any $x\in \cH$ with $\|x\|_{\cH}\leq 1$  
and any $\delta\in(0,1)$. 
Suppose $\lambda \geq 24\log(2/\delta)$. 
Then for any fixed $h\in[H]$ and $m\in\NN$, it holds with probability at least $1-\delta$ that 
\$
\big|x^\top \hat\Sigma_{m,h} x - x^\top \Sigma_{m,h} x \big| \leq \frac{1}{2}\Big(x^\top \Sigma_{m,h} x + \frac{\lambda}{n }\Big).
\$
\end{lemma}

\begin{proof}[Proof of Lemma~\ref{lem:inv_cov_concentration}]
Firstly, we note that 
\$
x^\top \hat\Sigma_{m,h} x = \frac{1}{n } \sum_{\tau\in \cI_h} x^\top \phi_m(z_h^\tau) \phi_m(z_h^\tau)^\top x = \frac{1}{n} \sum_{\tau\in \cI_h} \big(x^\top \phi_m(z_h^\tau)\big)^2,
\$
where $\big\{(x^\top \phi_m(z_h^\tau))^2\big\}_{\tau\in \cI_h}$ are i.i.d.~random variables whose expectation is given by  
\$
x^\top  \Sigma_{m,h} x =  \EE_{\pi^b}\big[ x^\top \phi_m(z_h ) \phi_m(z_h )^\top x \big]= \EE_{\pi^b}\left[ \big(x^\top \phi_m(z_h )\big)^2\right],
\$
and the expectation is with respect to 
the distribution of $z_h = (x_h,a_h)$ induced 
by the behavior policy $\pi^b$. 
Meanwhile, since $\|x\|_{\cH}\leq 1$, we know $|x^\top \phi_m(z)|\leq \|x\|_{\cH}\cdot\|\phi_m(z)\|_{\cH}\leq \|x\|_{\cH}\cdot\|\phi(z)\|_{\cH}\leq 1$ for any $z\in \cZ$. Hence 
\$
\Var\left(\big(x^\top \phi_m(z_h^\tau)\big)^2\right)\leq \EE_{\pi^b}\left[ \big(x^\top \phi_m(z_h )\big)^4\right] \leq \EE_{\pi^b}\left[ \big(x^\top \phi_m(z_h )\big)^2\right]  =  x^\top \Sigma_{m,h} x.
\$
Applying Lemma~\ref{lem:matrix_concentration}
to $\{(x^\top \phi_m(z_h^\tau))^2\}_{\tau\in \cI_h} \subset \RR$ implies that for any $t\in \RR$, 
\$
\PP\Big( \big| x^\top \hat\Sigma_{m,h} x
-  x^\top \Sigma_{m,h} x\big|  \geq t \Big) \leq 2\exp \bigg(  -\frac{n  t^2/2}{ x^\top \Sigma_{m,h} x +  t/3 }  \bigg).
\$
For any fixed $\delta\in (0,1)$, 
taking $t=\sqrt{\frac{4x^\top \Sigma_{m,h} x}{n }\log \frac{2}{\delta}} + \frac{4}{3n }\log\frac{2}{\delta}$ leads to 
\$
\big| x^\top \hat\Sigma_{m,h} x
-  x^\top \Sigma_{m,h} x\big|  \leq \sqrt{\frac{4x^\top \Sigma_{m,h} x}{n }\log \frac{2}{\delta}} + \frac{4}{3n }\log\frac{2}{\delta}
\$
with probability at least $1-\delta$. 
Now recall that $\lambda \geq 24 \log (2/\delta)$. 
Firstly, if $x^\top \Sigma_{m,h} x \leq \lambda/n $, then 
\$
\sqrt{\frac{4x^\top \Sigma_{m,h} x}{n }\log \frac{2}{\delta}} + \frac{4}{3n }\log\frac{2}{\delta}\leq 
\sqrt{\frac{4 \lambda}{n^2}\log \frac{2}{\delta}} + \frac{\lambda}{18n } \leq \frac{\lambda}{2n}
\leq \frac{1}{2}\Big(x^\top \Sigma_{m,h} x + \frac{\lambda}{n}\Big).
\$
Otherwise if $x^\top \Sigma_{m,h}x >\lambda/n$, then $x^\top \Sigma_{m,h}x > 24\log(2/\delta)/n$, hence 
\$
\sqrt{\frac{ (x^\top \Sigma_{m,h} x)^2}{6} } + \frac{x^\top \Sigma_{m,h}x}{18} 
\leq \frac{x^\top \Sigma_{m,h}x}{2} \leq \frac{1}{2}\Big(x^\top \Sigma_{m,h} x + \frac{\lambda}{n}\Big).
\$
Combining the two cases, we complete the proof of Lemma~\ref{lem:inv_cov_concentration}.
\end{proof}

For any fixed $m\in\NN$, 
we define 
$\cB(m,\cH) := \{x\in \Pi_m[\cH]\colon \|x\|_{\cH}\leq 1\}$, 
i.e., the unit RKHS norm ball in 
the subspace spanned by $\{\sqrt{\sigma_j}\cdot \psi_j\}_{j\in[m]}$. 
Then for any $x \in \cB(m,\cH)$, we have the decomposition 
$
x = \sum_{j=1}^m w_j \cdot \sqrt{\sigma_j} \cdot \psi_j
$
where $\sum_{j=1}^m w_j^2 = \|x\|_{\cH}^2 \leq 1$. 
On the other hand, for any $w\in \RR^m$, there exists some 
$x = \sum_{j=1}^m w_j \cdot \sqrt{\sigma_j} \cdot \psi_j$ 
that satisfies $x\in \cB(m,\varepsilon)$. 
Thus,  $\Pi\colon x\mapsto (w_j)_{j\in[m]}$ 
is a one-to-one mapping from $\cH$ to the Euclidean space $\RR^m$. 
Also, $\|x\|_{\cH} = \|\Pi(x)\|$ for any $x\in \cZ$ 
where $\|\cdot\|$ is the Euclidean norm in $\RR^m$. 
Consequently, the $\varepsilon$-covering number of $\cB(m,\varepsilon)$ 
equals the $\varepsilon$-covering number of a unit Euclidean ball in $\RR^m$. 
By \citet[Corollary 4.2.13]{vershynin2018high}, we can take 
an $\varepsilon$-covering of $\cB(m,\cH)$ under the RKHS norm,  
denoted as $\cN_\cH(m,\varepsilon) = \{x_1,\dots,x_{N_\cH(m,\varepsilon)}\}\subset \cB(m,\cH)$, 
which satisfies
\#\label{eq:rkhs_cover}
\log N_\cH(m,\varepsilon) := \log \big|\cN_\cH(m,\varepsilon)\big| \leq m \log(1+2/\varepsilon).
\#
Taking a union bound over $x_i/\|x_i\|_\cH$ for all $x_i\in\cN_\cH(m,\varepsilon)$, we know from 
Lemma~\ref{lem:rkhs_concentration} that if $\lambda \geq 16\log(2 N_\cH(m,\varepsilon)/\delta)$, 
then with probability at least $1-\delta$, 
\#\label{eq:unif_concen_rkhs}
\big|x_i^\top \hat\Sigma_{m,h} x_i - x_i^\top \Sigma_{m,h} x_i \big| \leq \frac{1}{2}\Big(x_i^\top \Sigma_{m,h} x_i + \frac{\lambda}{n } x_i^\top x_i\Big).
\# 
holds 
simultaneously for all $x_i\in \cN_{\cH}(m,\varepsilon)$.

For any $x\in \cH$ with $\|x\|_{\cH}\leq 1$, 
since $\phi_m(z)\in \Pi_m(\cH)$ for any $z\in \cZ$, we know 
\$
x^\top \phi_m(z ) \phi_m(z)^\top x  
= \big(\Pi_m[x]\big)^\top \phi_m(z ) \phi_m(z)^\top \big(\Pi_m[x]\big).
\$
Therefore, the definition of $\hat\Sigma_{m,h}$ and $\Sigma_{m,h}$ implies 
\$
x^\top \hat\Sigma_{m,h}x = \big(\Pi_m[x]\big)^\top \hat\Sigma_{m,h} \big(\Pi_m[x]\big), \quad 
x^\top  \Sigma_{m,h}x = \big(\Pi_m[x]\big)^\top \Sigma_{m,h} \big(\Pi_m[x]\big)
\$
Meanwhile, we have $\Pi_m[x]\in \cB(m,\cH)$ since $\|\Pi_m[x]\|_\cH\leq \|x\|_\cH\leq 1$. 
Therefore, there exists some $x_i \in \cN_\cH(m,\varepsilon)$ such that 
$\|\Pi_m[x]- x_i\|_\cH\leq \varepsilon$. 
Meanwhile, $\|\hat\Sigma_{m,h}\|_{\oper}\leq 1$ and 
$\|\Sigma_{m,h}\|_{\oper}\leq 1$ as $\sup_{z\in \cZ}\|\phi_m(z)\|_{\cH}\leq 1$. 
Thus by Cauchy Schwarz inequality, 
\$
\big|x ^\top \hat\Sigma_{m,h} x  - x ^\top \Sigma_{m,h} x  \big| 
&\leq \big|x_i^\top \hat\Sigma_{m,h} x_i - x_i^\top \Sigma_{m,h} x_i \big| 
+ \big( \|\Sigma_{m,h}\|_{\oper}   + 
\|\hat\Sigma_{m,h}\|_{\oper} \big) \cdot \|\Pi_m[x]- x_i\|_\cH^2 \\ 
&\leq \big|x_i^\top \hat\Sigma_{m,h} x_i - x_i^\top \Sigma_{m,h} x_i \big| 
+ 2\varepsilon^2.
\$
Therefore, on the event that Equation~\eqref{eq:unif_concen_rkhs} holds, 
we have 
\$
\big|x ^\top \hat\Sigma_{m,h} x  - x ^\top \Sigma_{m,h} x  \big| 
&\leq \frac{1}{2}\Big(x_i^\top \Sigma_{m,h} x_i + \frac{\lambda}{n } x_i^\top x_i\Big) + 2\varepsilon^2 \notag\\ 
& \leq \frac{1}{2}\Big(x ^\top \Sigma_{m,h} x  + \frac{\lambda}{n } x ^\top x \Big) 
+  \Big( \|\Sigma_{m,h}\|_{\oper} + \frac{\lambda}{n }    \Big) \cdot \big\|x_i-\Pi[x]\big\|_\cH^2
+ 2\varepsilon^2 \notag\\
& \leq \frac{1}{2}\Big(x ^\top \Sigma_{m,h} x  + \frac{\lambda}{n } x ^\top x \Big)  + (3+\lambda/n ) \varepsilon^2 
\$
for all $x\in \cH$ such that $\|x\|_{\cH}\leq 1$, which further implies 
\#\label{eq:rkhs_proj_bd2}
& \big|x ^\top \hat\Sigma_{ h} x  - x ^\top \Sigma_{ h} x  \big| \notag \\
& \leq \big|x ^\top \hat\Sigma_{m,h} x  - x ^\top \Sigma_{m,h} x  \big| + 
  \big\|\Sigma_{m,h} - \Sigma_h \big\|_{\oper} + \big\|\hat\Sigma_{m,h} - \hat\Sigma_h \big\|_{\oper} \notag \\ 
&\leq \frac{1}{2}\Big(x ^\top \Sigma_{m,h} x  + \frac{\lambda}{n } x ^\top x \Big)  + (3+\lambda/n) \varepsilon^2  +  \big\|\Sigma_{m,h} - \Sigma_h \big\|_{\oper} + \big\|\hat\Sigma_{m,h} - \hat\Sigma_h \big\|_{\oper} \notag \\ 
&\leq \frac{1}{2}\Big(x ^\top \Sigma_{h} x  + \frac{\lambda}{n } x ^\top x \Big) 
+ \frac{3}{2} \big\|\Sigma_{m,h} - \Sigma_h \big\|_{\oper}  + (3+\lambda/n ) \varepsilon^2  + \big\|\hat\Sigma_{m,h} - \hat\Sigma_h \big\|_{\oper} .
\#
Finally,  
combining Equation~\eqref{eq:rkhs_proj_bd2} and Lemma~\ref{lem:rkhs_proj}, we have 
\$
 \big|x ^\top \hat\Sigma_{ h} x  - x ^\top \Sigma_{ h} x  \big|   
&\leq \frac{1}{2}\Big(x ^\top \Sigma_{h} x  + \frac{\lambda}{n } x ^\top x \Big)  + (3+\lambda/n ) \varepsilon^2  + 5R_m 
\$
for all $x\in \cH$ such that $\|x\|_{\cH}\leq 1$, 
where we use the fact that $\big\|\Sigma_{m,h} - \Sigma_h \big\|_{\oper}\leq 2R_m$ 
and $\big\|\hat\Sigma_{m,h} - \hat\Sigma_h \big\|_{\oper} \leq 2R_m$. 
Therefore, recalling the definition  $\Lambda_h = n \hat\Sigma_h + \lambda \cI_\cH$, we 
know that it holds with probability at least $1-\delta$ that 
\$
\frac{n \Sigma_h + \lambda \cI_\cH}{2}  - \big((3n +\lambda)\varepsilon + 5n  R_m \big)\cdot \cI_\cH \preceq \Lambda_h \preceq  \frac{3(n \Sigma_h + \lambda \cI_\cH)}{2}  + \big((3n +\lambda)\varepsilon + 5n  R_m \big)\cdot \cI_\cH
\$
as long as 
$
\lambda \geq 16\log(2N_\cH(m,\varepsilon)/\delta).
$
By Equation~\eqref{eq:rkhs_cover}, it suffices to take 
\$
\lambda \geq 16\log(2/\delta) + 16m \log(1+2/\varepsilon).
\$
Therefore, we complete the proof of Lemma~\ref{lem:rkhs_concentration_Lambda_h}.
\end{proof}


\section{Supporting Lemmas}
The following lemma characterizes the deviation of the sample mean of a random matrix. See, e.g., Theorem 1.6.2 of \cite{10.1561/2200000048} and the references therein.

\begin{lemma}[Matrix Bernstein Inequality \citep{10.1561/2200000048}] \label{lem:matrix_concentration}
Suppose that  $\{ A_k\}_{k=1}^n  $ are independent and  centered random matrices in $\RR^{d_1\times d_2}$, that is, $\EE[ A_k] = 0$ for all $k \in [n]$. 
Also, suppose that such random matrices are uniformly upper bounded in the matrix operator norm, that is, $\|A_k\|_{\oper} \leq L$ for all  $k \in [n]  $. Let $Z=\sum_{k=1}^n A_k$ and 
\$
v(Z) = \max\big\{ \|\E[ZZ^\top] \|_{\oper},\|\E[Z^\top Z]\|_{\oper} \big\}= \max\bigg\{ \Big\|\sum_{k=1}^n \E[A_k A_k^\top] \Big\|_{\oper},   \Big\|\sum_{k=1}^n \E[A_k^\top A_k]\Big\|_{\oper}\bigg\}.
\$ 
For all $t\geq 0$, we have 
\begin{equation*}
\PP \big (\|Z\|_{\oper} \geq t \big )\leq (d_1+d_2)\cdot  \exp\Bigl(-\frac{t^2/2}{v(Z)+L / 3 \cdot t }\Bigr).
\end{equation*}
%
\end{lemma}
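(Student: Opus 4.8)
This lemma is the classical rectangular matrix Bernstein inequality, so rather than anything tailored to the offline RL setting, the plan is to run the matrix Laplace transform (``matrix Chernoff'') method. First I would pass from the rectangular to the Hermitian case via the Hermitian dilation $\mathcal{H}(A) = \left(\begin{smallmatrix} \mathbf 0 & A \\ A^\top & \mathbf 0 \end{smallmatrix}\right) \in \RR^{(d_1+d_2)\times(d_1+d_2)}$, which is linear, satisfies $\|\mathcal{H}(A)\|_{\oper} = \|A\|_{\oper}$, and has $\mathcal{H}(A)^2$ block-diagonal with diagonal blocks $AA^\top$ and $A^\top A$. Hence $\|Z\|_{\oper} = \lambda_{\max}(\mathcal{H}(Z))$ with $\mathcal{H}(Z) = \sum_k \mathcal{H}(A_k)$ a sum of independent, centered, Hermitian matrices each bounded by $L$ in operator norm, and, using independence and $\EE[A_k]=\mathbf 0$ to kill the cross terms, $\|\sum_k \EE[\mathcal{H}(A_k)^2]\|_{\oper} = \max\{\|\EE[ZZ^\top]\|_{\oper},\|\EE[Z^\top Z]\|_{\oper}\} = v(Z)$. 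So it is enough to show that for $X = \sum_k X_k$ with $X_k$ independent centered Hermitian, $\|X_k\|_{\oper}\le L$, and $\|\sum_k\EE[X_k^2]\|_{\oper}\le v$, one has $\PP(\lambda_{\max}(X)\ge t)\le (d_1+d_2)\exp\bigl(-(t^2/2)/(v+Lt/3)\bigr)$.

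Next I would invoke the Laplace transform / Markov step: for any $\theta>0$, monotonicity of $x\mapsto e^{\theta x}$ together with $e^{\theta X}\succeq\mathbf 0$ give $\PP(\lambda_{\max}(X)\ge t) \le e^{-\theta t}\,\EE[\lambda_{\max}(e^{\theta X})] \le e^{-\theta t}\,\EE\,\tr e^{\theta X}$. The crucial ingredient is the subadditivity of matrix cumulant generating functions, $\EE\,\tr\exp(\sum_k \theta X_k) \le \tr\exp(\sum_k \log\EE\,e^{\theta X_k})$, which follows from Lieb's concavity theorem (the map $M\mapsto\tr\exp(H+\log M)$ is concave on positive definite $M$) applied iteratively, conditioning on $X_1,\dots,X_{k-1}$ and using Jensen's inequality at each step. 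Then I would bound a single summand's matrix MGF: applying the scalar inequality $e^z\le 1+z+\tfrac{z^2/2}{1-|z|/3}$ (valid for $|z|<3$) to the eigenvalues of $\theta X_k$ with $0<\theta<3/L$ and using $\EE[X_k]=\mathbf 0$ yields $\EE\,e^{\theta X_k}\preceq I + g(\theta)\,\EE[X_k^2]$ with $g(\theta)=\tfrac{\theta^2/2}{1-\theta L/3}$, and then operator concavity/monotonicity of $\log$ together with $\log(I+M)\preceq M$ give $\log\EE\,e^{\theta X_k}\preceq g(\theta)\,\EE[X_k^2]$.

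Summing these semidefinite bounds, $\sum_k\log\EE\,e^{\theta X_k}\preceq g(\theta)\sum_k\EE[X_k^2]=:g(\theta)V$ with $V\succeq\mathbf 0$ and $\|V\|_{\oper}\le v$; by Weyl's monotonicity the eigenvalues of the left side are dominated by those of $g(\theta)V$, so $\tr\exp(\sum_k\log\EE\,e^{\theta X_k})\le\tr\exp(g(\theta)V)\le(d_1+d_2)\exp(g(\theta)\lambda_{\max}(V))\le(d_1+d_2)\exp(g(\theta)v)$. Collecting, $\PP(\|Z\|_{\oper}\ge t)\le(d_1+d_2)\exp(-\theta t+g(\theta)v)$ for every $0<\theta<3/L$. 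Finally I would optimize in $\theta$: the choice $\theta=t/(v+Lt/3)\in(0,3/L)$ makes $1-\theta L/3 = v/(v+Lt/3)$, hence $-\theta t + g(\theta)v = -\tfrac{t^2}{v+Lt/3}+\tfrac{t^2/2}{v+Lt/3} = -\tfrac{t^2/2}{v+Lt/3}$, which is exactly the claimed tail bound with $v=v(Z)$.

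The only genuinely hard step is the subadditivity of matrix cumulants: this is the one place where non-commutativity truly bites, and it relies on Lieb's concavity theorem (alternatively one can route through the Golden--Thompson inequality, at the cost of worse constants). Everything else --- the dilation, the scalar MGF estimate, the scalar optimization --- is routine. Since this inequality is entirely standard (see, e.g., Theorem~1.6.2 of \cite{10.1561/2200000048}), in the paper one would simply cite it rather than reproduce the argument.
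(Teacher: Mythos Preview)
Your proposal is correct and, in fact, goes well beyond what the paper does: the paper does not prove this lemma at all but simply cites Theorem~1.6.2 of \cite{10.1561/2200000048}. Your sketch is a faithful outline of Tropp's standard proof (Hermitian dilation, Lieb's concavity for the subadditivity of matrix cumulants, the Bernstein MGF bound, and the scalar optimization), and you correctly anticipate in your final paragraph that the paper would just cite the result rather than reproduce the argument.
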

\begin{proof}
See, e.g.,  Theorem 1.6.2 of \cite{10.1561/2200000048} for a detailed proof.
\end{proof} 

The following lemma, which is obtained from \cite{abbasi2011improved},  establishes the concentration of 
 self-normalized processes.

\begin{lemma}[Concentration of Self-Normalized Processes \citep{abbasi2011improved}]
Let $\{\cF_t \}^\infty_{t=0}$ be a filtration and $\{\epsilon_t\}^\infty_{t=1}$ be an $\RR$-valued stochastic process such that $\epsilon_t$ is $\cF_{t} $-measurable for all $t\geq 1$.
Moreover, suppose that conditioning on $\cF_{t-1}$, 
 $\epsilon_t $ is a  zero-mean and $\sigma$-sub-Gaussian random variable for all $t\geq 1$, that is,  
 \$
  \EE[\epsilon_t\given \cF_{t-1}]=0,\qquad \EE\bigl[ \exp(\lambda \epsilon_t) \biggiven \cF_{t-1}\bigr]\leq \exp(\lambda^2\sigma^2/2) , \qquad \forall \lambda \in \RR. 
  \$
 Meanwhile, let $\{\phi_t\}_{t=1}^\infty$ be an $\RR^d$-valued stochastic process such that  $\phi_t $  is $\cF_{t -1}$-measurable for all $ t\geq 1$. 
Also, let  $M_0 \in \RR^{d\times d}$ be a  deterministic positive-definite matrix and 
\$
M_t = M_0 + \sum_{s=1}^t \phi_s\phi_s^\top
\$ for all $t\geq 1$. For all $\delta>0$, it holds that
\begin{equation*}
\Big\| \sum_{s=1}^t \phi_s \epsilon_s \Big\|_{ M_t ^{-1}}^2 \leq 2\sigma^2\cdot  \log \Bigl( \frac{\det(M_t)^{1/2}\cdot \det(M_0)^{- 1/2}}{\delta} \Bigr)
\end{equation*}
for all $t\ge1$ with probability at least $1-\delta$.
\label{lem:concen_self_normalized}
\end{lemma}
\begin{proof}
	See Theorem 1 of \cite{abbasi2011improved} for a detailed proof. 
\end{proof}


The following lemma from~\cite{abbasi2011improved} 
and~\cite{yang2020function}
establishes the bounds on 
self-normalized processes.

\begin{lemma}[\cite{abbasi2011improved}, Lemma E.3 in~\cite{yang2020function}]
\label{lem:rkhs_norm_inner}
Let $\{\phi_t\}_{t\geq 1}$ be a sequence in the RKHS $\cH$.
Let $\Lambda_0 = \lambda\cdot \cI_\cH\colon \cH\to \cH$ for $\lambda \geq 1$
and $\cI_\cH$ is the identity mapping on $\cH$. 
For any $t\geq 1$, we define a self-adjoint and positive-definite operator 
$\Lambda_t = \Lambda_0 + \sum_{j=1}^t \phi_j\phi_j^\top$, so that 
$\Lambda_t f = \Lambda_0 f + \sum_{j=1}^t \langle \phi_j, f\rangle \phi_j$
for any $f\in \cH$.
Then for any $t\geq 1$, it holds that 
\$
\sum_{j=1}^t \min\big\{1, \phi_j^\top \Lambda_{j-1}^{-1} \phi_j \big\}
\leq 2\log\det (I + K_t/\lambda),
\$
where $K_t\in \RR^{t\times t}$ is the Gram matrix 
whose $(j,j')$-th element is given by $[K_t]_{j,j'} = \langle \phi_j, \phi_{j'}\rangle_\cH$
for any $j,j'\in [t]$. 
Moreover, if it further holds that $\sup_{t\geq 0}\{\|\phi_t\|_\cH\}\leq 1$, 
then 
\$
\log\det (I + K_t/\lambda) 
\leq \sum_{j=1}^t \min\big\{1, \phi_j^\top \Lambda_{j-1}^{-1} \phi_j \big\}
\leq 2\log\det (I + K_t/\lambda).
\$
\end{lemma}

\begin{proof}[Proof of Lemma~\ref{lem:rkhs_norm_inner}]
See Lemma E.3 in \cite{yang2020function} for a detailed proof. 
\end{proof}

\begin{lemma}[Concentration of Self-Normalized Processes in RKHS, \cite{chowdhury2017kernelized}]\label{lem:rkhs_concentration}
Let $\cH$ be an RKHS defined over $\cX\subseteq \RR^d$ 
with kernel function $K(\cdot,\cdot)\colon \cX\times \cX\to \RR$. 
Let $\{x_\tau\}_{\tau=1}^\infty \subset \cX$ be a discrete time stochastic process 
that is adapted to the filtration $\{\cF_t\}_{t=0}^\infty$. 
Let $\{\epsilon_\tau\}_{\tau=1}^\infty$ be a real-valued stochastic process 
such that (i) $\epsilon_\tau\in \cF_\tau$ and 
(ii) $\epsilon_\tau$ is zero-mean and $\sigma$-sub-Gaussian conditioning on $\cF_{\tau-1}$, i.e., 
\$
\EE[\epsilon_\tau \given \cF_{\tau-1}] = 0,\qquad 
\EE\big[e^{\lambda\epsilon_\tau}\biggiven \cF_{\tau-1} \big] \leq e^{\lambda^2\sigma^2/2},\quad \forall \lambda\in \RR. 
\$
Moreover, for any $t\geq 2$, let $E_t = (\epsilon_1,\dots,\epsilon_{t-1})^\top \in \RR^{t-1}$
and $K_t\in \RR^{(t-1)\times (t-1)}$ be the Gram matrix of $\{x_\tau\}_{\tau\in[t-1]}$.
Then for any $\eta>0$ and any $\delta\in (0,1)$, with probability at least $1-\delta$, 
it holds simultaneously for all $t\geq 1$ that 
\$
E_t^\top \big[(K_t+ \eta\cdot I)^{-1} + I\big]^{-1} E_t 
\leq \sigma^2 \cdot \log\det\big[ (1+\eta)\cdot I + K_t \big] + 2\sigma^2 \cdot \log(1/\delta).
\$
\end{lemma}

\begin{proof}[Proof of Lemma~\ref{lem:rkhs_concentration}]
See Theorem 1 in \cite{chowdhury2017kernelized} for a detailed proof. 
\end{proof}

\begin{lemma}[Lemma D.5 in~\cite{yang2020function}]\label{lem:RKHS_dim}
Let $\cZ$ be a compact subset of $\RR^d$ and $K\colon \cZ\times\cZ\to \RR$
be the RKHS kernel of $\cH$. We assume $K$ is a bounded kernel so that 
$\sup_{z\in\cZ}K(z,z)\leq 1$, 
and $K$ is continuously differentiable on $\cZ\times\cZ$.
Moreover, let $T_K$ be the integral operator induced by $K$ and the 
Lebesgue measure on $\cZ$, defined in Equation~\eqref{eq:def_itg_op}. 
Let $\{\sigma_j\}_{j\geq 1}$ be the non-increasing sequence of eigenvalues of $T_K$. 
Recall the definition of maximal information gain in Equation~\ref{eq:def_Gklambda}. 
We assume $\{\sigma_j\}_{j\geq 1}$ satisfies one of the following eigenvalue decay conditions:
\begin{enumerate}[(i)]
  \item $\gamma$-finite spectrum: $\sigma_j = 0$ for all $j>\gamma$, where $\gamma$ is a positive integer.
  \item $\gamma$-exponential decay: there exists some constants $C_1,C_2>0$ 
  such that $\sigma_j \leq C_1 \cdot \exp(-C_2 \cdot j^\gamma)$ 
  for all $j\geq 1$, where $\gamma>0$ is a positive constant. 
  \item $\gamma$-polynomial decay: there exists some constants $C_1>0$ 
  such that $\sigma_j \leq C_1\cdot j^{-\gamma}$ 
  for all $j\geq 1$, where $\gamma\geq 2+1/d$ is a constant. 
\end{enumerate}
Suppose $\lambda\in [c_1,c_2]$ for absolute constants $c_1,c_2$. Then we have 
$$
G(K,\lambda) \leq 
\begin{cases}
  C \cdot \gamma \cdot \log K \quad &\gamma\textrm{-finite spectrum},\\
  C \cdot (\log K)^{1+1/\gamma}\quad  & \gamma\textrm{-exponential decay},\\
  C \cdot K^{(d+1)/(\gamma+d)}  \cdot \log K \quad  & \gamma\textrm{-polynomial decay},
\end{cases}
$$
where $C $ is an absolute constant that only depends on $d,\gamma, C_1,C_2,C,c_1$ and $c_2$. 
\end{lemma}
\begin{proof}[Proof  of Lemma~\ref{lem:RKHS_dim}]
See Lemma D.5 of \cite{yang2020function} for a detailed proof. 
\end{proof}



\end{document}